\documentclass{article}

\usepackage{microtype}
\usepackage{graphicx}
\usepackage{subcaption}
\usepackage{enumitem}
\usepackage{multicol}
\usepackage{multirow}
\usepackage{wrapfig}
\usepackage{booktabs}
\usepackage{arydshln}
\usepackage{placeins}
\makeatletter
\def\blfootnote#1{\begingroup\renewcommand\thefootnote{}\footnote{#1}\endgroup}
\makeatother

\usepackage{hyperref}

\usepackage[accepted]{icml2026}

\usepackage{amsmath}
\usepackage{amssymb}
\usepackage{mathtools}
\usepackage{amsthm}

\usepackage[capitalize,noabbrev,nameinlink]{cleveref}

\theoremstyle{plain}
\newtheorem{claim}{Claim}[section]
\crefname{claim}{claim}{claims}
\Crefname{claim}{Claim}{Claims}
\newtheorem{theorem}{Theorem}[section]
\newtheorem{proposition}[theorem]{Proposition}
\newtheorem{lemma}[theorem]{Lemma}

\theoremstyle{definition}

\newtheorem{assumption}[theorem]{Assumption}
\theoremstyle{remark}

\renewcommand{\thetable}{\thesection.\arabic{table}}
 \DeclareMathOperator*{\argmax}{arg\,max}
 \DeclareMathOperator*{\argmin}{arg\,min}

\usepackage[textsize=tiny]{todonotes}

\icmltitlerunning{Multi-Level Strategic Classification}

\begin{document}

\twocolumn[
  \icmltitle{Multi-Level Strategic Classification: Incentivizing Improvement through Promotion and Relegation Dynamics }

  \icmlsetsymbol{equal}{*}

    \begin{icmlauthorlist}
        \icmlauthor{Ziyuan Huang}{equal,umich}
        \icmlauthor{Lina Alkarmi}{equal,umich}
        \icmlauthor{Mingyan Liu}{umich}
    \end{icmlauthorlist}

    \icmlaffiliation{umich}{Electrical and Computer Engineering Department, University of Michigan, Ann Arbor, MI 48109, USA}
    \icmlcorrespondingauthor{Ziyuan Huang}{ziyuanh@umich.edu}

  \icmlkeywords{strategic classification, multi-level systems, mechanism design, Markov decision process, promotion-relegation}

  \vskip 0.3in
]

\ifodd 0
    \newcommand{\rev}[1]{{\color{blue}#1}}
    \newcommand{\revv}[1]{{\color{brown}#1}}
    \newcommand{\com}[1]{\textbf{\color{red}(COMMENT: #1)}}
    \newcommand{\resp}[1]{\textbf{\color{orange} RESPONSE: #1}}
    \newcommand{\ziyuan}[1]{\textbf{\color{orange} ZH: #1}}
    \newcommand{\lina}[1]{\textbf{\color{orange} LA: #1}}
\else
    \newcommand{\rev}[1]{#1}
    \newcommand{\com}[1]{}
    \newcommand{\resp}[1]{}
    \newcommand{\ziyuan}[1]{}
    \newcommand{\lina}[1]{}
\fi

\printAffiliationsAndNotice{\icmlEqualContribution}
\vspace{2pt}
\blfootnote{Code available at \href{https://github.com/alkarmilina/multi-level-strategic-classification}{github.com/alkarmilina/multi-level-strategic-classification}}

\begin{abstract}
  Strategic classification studies the problem where self-interested individuals or agents manipulate their response to obtain favorable decision outcomes made by classifiers, typically turning to dishonest actions when they are less costly than genuine efforts.
  While existing studies on sequential strategic classification primarily focus on optimizing dynamic classifier weights, we depart from these weight-centric approaches by analyzing the design of classifier thresholds and difficulty progression within a multi-level promotion-relegation framework. Our model captures the critical inter-temporal incentives driven by an agent's farsightedness, skill retention, and a ``leg-up'' effect where qualification and attainment can be self-reinforcing. We characterize the agent’s optimal long-term strategy and demonstrate that a principal can design a sequence of thresholds to effectively incentivize honest effort. Crucially, we prove that under mild conditions, this mechanism enables agents to reach arbitrarily high levels solely through genuine improvement efforts.

\end{abstract}

\section{Introduction}
\label{intro}

Knowledge of a decision process (e.g., what type of input it uses and how the input determines the decision outcome) is empowering in two opposing directions for individuals subject to the decision and desiring a certain outcome: they can use that knowledge to strategically equip themselves with the ``right'' type of input/qualifications either through honest means (making an effort to attain those qualifications) or dishonest means (faking the appearance of those qualifications), or a mixture of both.
For example, a student may focus all their effort on a particular chapter if they know that chapter carries a disproportionate weight on an exam (honest), or a job-seeker may artificially alter their resume to meet a hiring criterion (dishonest).
The rapid rise of algorithmic decision making fueled by advances in machine learning and AI, and its increasingly common adoption in a multitude of domains such as loan approval, admissions, and hiring \cite{hardt_strategic_2015,haghtalab_maximizing_2020}, has only amplified this problem: on one hand, consumers and regulatory agencies rightfully demand algorithm transparency, especially for those used to make high-stake decisions; on the other hand, more transparency invites more strategic response/manipulation, for better and worse.

This has given rise to the growing field of strategic classification that studies utility-maximizing agents who respond strategically to a (decision-making) classifier \cite{hardt_strategic_2015,miller_strategic_2020}. Such an agent is often characterized by an {\em observable feature}, an {\em unobservable attribute} which is its private information, and the ability to take actions to {\em improve} both \cite{kleinberg_how_2019,milli_social_2019,ahmadi_classification_2022,jin_incentive_2022}, or to {\em game}, which only improves the former  \cite{milli_social_2019,miller_strategic_2020,jin_collaboration_2023}. This is often formulated as a one-shot interaction between the decision maker and the agent, and the most significant result from this literature is that a strategic agent takes the least costly action so as to cross a decision boundary \cite{hardt_strategic_2015,braverman_role_2020,jin_incentive_2022,jin_collaboration_2023}. Combining this with the natural assumption that cheating/gaming is in general less expensive than making an honest effort leads to the fundamental challenge that incentivizing improvement is infeasible \cite{milli_social_2019,jin_network_2021} without external instruments such as subsidies \cite{jin_incentive_2022}.

The key to overcoming this challenge without relying on external mechanisms is to consider repeated interactions with a sequential formulation, which introduces what's known as {\em inter-temporal incentives}, whereby past actions can impact future outcomes, which does not exist in one-shot settings. Within this context, this paper introduces a novel sequential model of {\em multi-level} strategic classification that captures how agents behave as they move through a set of classifiers with increasing difficulty and increasing reward. Each level is modeled as a {\em selective classifier} ~\cite{ortner_learning_2016,geifman_selective_2017,lee_fair_2021, alkarmi_when_2025}, that can abstain from making a prediction when confidence is low. A positive (resp. negative) decision leads to promotion (resp. relegation) of the agent to the next higher (resp. lower) level, while abstention keeps the agent at the same level. Such a sequential, multi-level setting is very common in practice: e.g., a first midterm is followed by a second midterm and a final exam, whereby how much one prepares for the midterms may well impact one's posture heading into the final; skill certifications require increasing breadth/depth of knowledge; other examples abound.

This model is thus a dynamic one, where the agent's state, consisting of its (unobservable) attributes, (observable) features, and its level (of attainment), evolves over time, driven by the agent's action as well as three effects coupling successive stages of the interaction: (i) the agent's {\em farsightedness}, given by a discount factor that defines how much the agent cares about future reward relative to instantaneous reward, a common feature in sequential problems, (ii) the {\em retention} of attribute, given by a retention factor that reflects depreciation or degradation of skills or knowledge over time without sustained effort, another common feature in the dynamic decision-making literature~\cite{alston_dynamics_1998,dohmen_reinforcement_2023,hastings_voluntary_2025}, and (iii) a {\em leg-up} effect, where an agent's attribute is boosted by higher level of attainment. We characterize the agent's optimal long-term strategy under various conditions, and examine the type of classifier design that effectively incentivizes a strategic agent to attain an arbitrarily high level solely through honest means, even though gaming remains the cheaper, instantaneous option.

There have been other studies exploring the use of inter-temporal incentives in a similar context. The most relevant to our work is \citet{harris_stateful_2021}, where a designer designs a set of regression models and a strategic agent aims to maximize the sum of its regression scores while taking advantage of the cumulation of effort. In contrast, our work focuses on classification and, more distinctively, the sequentiality inherent in a progression of classifiers. Furthermore, we model the passage of time, reflected not only in the cumulation of effort but also in the discounting of reward as well as the degradation of the agent's attribute over time, features that are absent in \citet{harris_stateful_2021}. Additional literature review is provided in \Cref{app:related-work}.

Our main contributions are summarized as follows.
\begin{enumerate}
    \item We formulate a sequential strategic classification problem with a progression of classifiers with increasing difficulty and reward. We show how the incentivizable region is strictly larger than that under a one-shot formulation, and interpret the role of each of the three effects (discount, retention, and leg-up).
    \item We fully characterize the agent's optimal long-term strategy in the two-level case, analyze the agent's trajectory in different situations, and pinpoint the importance of the leg-up effect in incentivizing higher attributes.
    \item We examine the feasibility of the designer/principal's problem and derive conditions under which pure improvement actions are incentivizable up to some target attribute. We design a sequence of classifiers and show that, under mild conditions, it can induce an agent to attain an arbitrarily high level through honest efforts.
\end{enumerate}

The remainder of this paper is organized as follows. \Cref{sec:problem} formulates the problem. \Cref{sec:two-level} characterizes the agent's optimal long-term strategy in the two-level (single-classifier) case. \Cref{sec:multilevel} analyzes the optimal classifier sequence design in the multi-level setting. \Cref{sec:numeric} presents numerical results, including ablation studies and real-data simulations. \Cref{sec:discussion} discusses the findings.

\section{Model and Preliminaries}
\label{sec:problem}
\begin{figure*}[!ht]
    \centering
    \includegraphics[width=\linewidth]{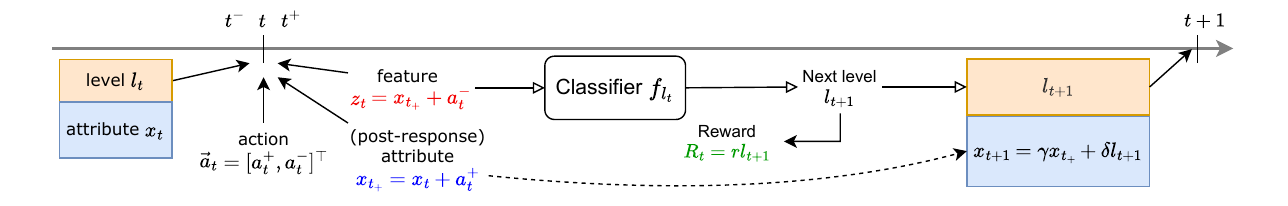}
    \vspace{-10pt}
    \caption{Dynamics of the Agent's Decision Process.}
    \label{fig:dynamics}
    \vspace{-10pt}
\end{figure*}

\paragraph{The model}
We consider an agent repeatedly interacting with a progression of $L\in\mathbb{N}$ classifiers, each representing a {\em level}, starting at the lowest. Each level is effectively a test uniquely associated with a classifier: passing the test allows the agent to move up to the next (higher) level (a promotion); failing it sends the agent back to the previous (lower) level (a relegation); an inconclusive (less confident) test result leaves the agent at the same level (no change in status).  A higher level comes with a higher reward. The agent is reward-seeking, and can exert a costly \emph{improvement} effort or a less costly \emph{gaming} (cheating) effort, either of which (or a mixture of both) in sufficient quantity will lead to its passing a test but only the former results in an actual improvement in the agent's true qualification.
The system designer (also referred to as the principal)'s objective is to design the sequence of classifiers to incentivize the agent to remain honest (i.e., choose improvement over gaming) over time. Below we detail the components of this sequential decision problem as well as the underlying dynamics illustrated in \Cref{fig:dynamics}.

\paragraph{{The agent}} Let $t\in\{0\}\cup\mathbb{N}$ denote the discrete \emph{time} (or \emph{stage}) index.
Consider an agent who possesses an (unobservable) \emph{attribute} $x_t$ and an (observable) feature $z_t$, both in $\in\mathbb{R}_{\geq0}$\footnote{This is extendable to multi-dimensional settings (\Cref{sec:discussion}).}
Only the latter is taken as input to the classifiers, while the former is the agent's private knowledge.
The agent can exert an {\em effort} $\vec{a}_t:=[a_t^+,a_t^-]^\top\in\mathbb{R}^2_{\geq0}$ in order to secure a favorable classifier outcome, where $a_t^+$ (resp. $a_t^-$) denotes an \emph{improvement} (resp. \emph{gaming}) action, with a unit cost $c^+\geq0$ (resp. $c^-\geq0$).
We adopt the common assumption that cheating is cheaper than improvement, i.e., $0<c^-<c^+<\infty$.

Only the improvement action leads to an inherent change to the agent's attribute. That is, following the action $\vec{a}_t$ chosen at time $t$, the agent attribute becomes $x_{t_+}=x_t+a_t^+$.\footnote{By convention, $t_+$ (resp.\ $t_-$) denotes the time immediately after (resp.\ before) $\vec{a}_t$. We write $x_{t_+}$ explicitly when needed, but abbreviate $x_t := x_{t_-}$} and $z_t:=z_{t_+}$ throughout; in particular, the post-action feature is $z_t$ (not $z_{t+1}$).

In contrast, the feature ${z}_t:={x}_t+{a}^+_t+{a}_t^-$ is driven equally by improvement and gaming actions, reflecting their indistinguishable effect on the immediate outcome.

After its action at time $t$, the agent's attribute $x_{t_{+}}$ then undergoes two further modifications before the next time step, following two critical assumptions.
(1) We assume the agent's attribute depreciates over time, so that only a $\gamma\in(0,1)$ fraction of $x_{t_+}$ remains by $t+1$; $\gamma$ is referred to as the \emph{retention factor}. This models the fact that learned skills can degrade over time and more effort is required to keep them up to date.
(2) We assume the agent receives a boost to its attribute in the amount $\delta (l_{t+1}-1)$; this models an opportunity gain the agent receives via more/better learning and other resources often correlated with higher levels (e.g., a student scoring high on a previous exam may attract the teacher's attention and be called upon to answer questions more frequently, leading to enhanced learning). The constant $\delta \geq 0$ will be referred to as the {\em leg-up factor}.
Note that $l_{t+1}$ is the level the agent attains after its effort at time $t$ and is thus the level that determines the reward it collects during the $t$-th time step as detailed shortly below. The choice of $(l_{t+1}-1)$ is designed to rule out a boost at the lowest, entry level of $l=1$ while allowing a constant boost of $\delta$ with each increasing level.\footnote{This is equivalent to indexing the lowest level by $l=0$ rather than $l=1$ as we have done.}
The overall transition of the agent's attribute is thus given by ${x}_{t+1}=\gamma x_{t_+} + \delta (l_{t+1}-1)$.

\paragraph{{The principal}}
The principal deploys a sequence of $L$ ternary classifiers, which, in addition to the common binary decisions, can also \emph{abstain} if uncertainty is deemed high. This abstention option can reduce decision error, albeit at the expense of lower coverage \cite{ortner_learning_2016,franc_optimal_2023}. It turns out this classifier choice fits nicely in our sequential setting, where an agent can be promoted, relegated, or remain at the same level in accordance with the ternary decision outcome.
Specifically, the principal  designs a sequence $\vec{\mu}:=\{\mu_l\}_{l=1}^{L}$ representing the desired qualifications at each level. These are the thresholds that determine the classifier output $f_l(\cdot)$ and the movement of the agent:
\begin{eqnarray}\label{eq:classifier}
    f_l(z_t)=\left\{\begin{aligned}
    1~~ & \textrm{if }\theta z_t\geq \mu_{l+1} & \textrm{(pass/promotion)} \\
    0~~  & \textrm{if }\mu_l\leq \theta z_t < \mu_{l+1} &\textrm{(abstain/staying)} \\
    -1~~ & \textrm{if }\theta z_t\leq \mu_l & \textrm{(fail/relegation)}
\end{aligned}\right.
\end{eqnarray}
where $\theta$ is the model weight shared across levels. For agents at the boundary level $1$ or $L$, it remains at that level whenever $f_1(z_t)<1$ or $f_{L}(z_t)>-1$, respectively. This transition is illustrated in \cref{fig:mc}.
By default, we set $\mu_1\equiv0$.

\begin{figure}[!htbp]
    \centering
        \vspace{-10pt}
    \resizebox{.45\textwidth}{.12\textwidth}{
        \begin{tikzpicture}[
            ->,>=stealth,shorten >=1pt,auto,thick,
            state/.style={circle,draw,fill=blue!20,thick,minimum size=3em}
        ]
            \node[state] (state_1) {$1$};
            \node (text_1) [right=of state_1] {$\dots$};
            \node[state] (state_i) [right=of text_1] {$l$};
            \node[state] (state_i_next) [right=of state_i] {${l+1}$};
            \node (text_2) [right=of state_i_next] {$\dots$};
            \node[state] (state_I) [right=of text_2] {$L$};

            \path[every node/.style={fill=white,sloped,font=\sffamily\small}]
                (state_1) edge [bend right=30] node[below=1mm] {$f_0=1$} (text_1)
                (state_i) edge [bend right=30] node[below=1mm] {} (state_i_next)
                (state_i_next) edge [bend right=30] node[below=1mm] {$f_{l+1}=1$} (text_2)
                (state_i_next) edge [bend right=30] node [above=1mm] {} (state_i)
                (state_i) edge [bend right=30] node[above=1mm] {$f_l=-1$} (text_1)
                (state_I) edge [bend right=30] node[above=1mm] {$f_{L-1}=-1$} (text_2)
                (state_1) edge [loop above] node[above=1mm] {$f_{0}<1$} (state_1)
                (state_i) edge [loop above] node[above=1mm] {$f_l=0$} (state_i)
                (state_i_next) edge [loop above] node[above=1mm] {$f_{l+1}=0$} (state_i_next)
                (state_I) edge [loop above] node[above=1mm] {$f_{L-1}>-1$} (state_I);
        \end{tikzpicture}
    }
    \caption{The evolution of the agent's level.}
    \vspace{-10pt}
    \label{fig:mc}
\end{figure}
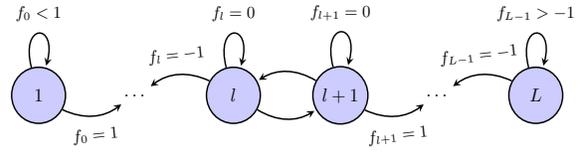

\paragraph{The agent's problem}
Each level $l$ is associated with a reward $R_l:=r (l-1)$ with $r>0$. The agent receives a reward of $R_{l_{t+1}}$ following its action $\vec{a}_t$ and attaining the level $l_{t+1}$. It is easy to see that $\{(l_t,x_t)\}_{t\geq0}$ is a discrete-time Markov decision process (MDP) with continuous state and action spaces. The dynamics of this process are illustrated in \cref{fig:dynamics}.
The agent's optimal strategy is the sequence of actions $\{\vec{a}^*_t\}_{t\geq0}$ that maximizes its discounted total reward over an infinite horizon given the state $(l_t, x_t)$:
\begin{equation}\label{eq:agent-problem}
    \{\vec{a}^*_t\}_{t\geq0}\in\argmax_{\vec{a}_0,\vec{a}_1,\dots}\;\sum_{t=0}^\infty\beta^t(R_{l_{t+1}}-\vec{c}^\top\vec{a}_t)~,
\end{equation}
where $\beta\in(0,1)$ is the discount factor. We will assume ties are broken in favor of improvement actions.

\paragraph{The principal's problem}
Given a target attribute  $M\geq0$ and the reward rate $r>0$, the principal aims to design the shortest possible sequence $\vec{\mu}$ that satisfies three goals: (1) the agent acts honestly (never games); (2) the agent's long-term attribute exceeds the target $M$; and (3) the agent eventually attains the highest level. This is formulated as the following constrained optimization problem $\mathbf{P}(M,r)$:
\begin{equation}\label{eq:principal}
    \begin{aligned}
        \min_{L,\vec{\mu}}\;&\;L\\
        \textrm{s.t.}\;&\;(a_t^-)^*\equiv 0,\;\;\forall t\geq 0\\
        &\;\liminf_{t\to\infty}\;x_{t_+}\geq M,\quad\forall (l_0,x_0)\in \{0\}\times\mathbb{R}_{\geq0} \\
        &\;\lim_{t\to\infty}\;l_t=L,\quad\forall (l_0,x_0)\in \{0\}\times\mathbb{R}_{\geq0}
    \end{aligned}
\end{equation}
where the limits are defined under the agent's optimal strategy, best-responding to the principal's design variable $(L,\vec{\mu})$ according to \Cref{eq:agent-problem}. The solution $(L^*,\vec{\mu}^*,\{\vec{a}_t^*\}_{t\geq0})$ constitutes a \emph{subgame-perfect equilibrium} (SPE).

We note that $\theta$ is not a decision variable in $\mathbf{P}(M,r)$, and is  assumed to be pre-determined (e.g., via estimation on non-strategic data). Without loss of generality, we let $\theta = 1$. The threshold inequality $\theta z\geq\mu_l$ is scale-invariant in $\theta$, and since $\theta$ is shared across all levels, any $\theta>0$ is equivalent up to a common rescaling of thresholds.

Finally, note that the principal need not explicitly/separately optimize for strategic robustness, defined as the consistency of classifier accuracy under manipulation, e.g., $\mathbf{1}\{f_{l_t}(z_t)=f_{l_t}(x_{t_+})\}$. As the constraints of $\mathbf{P}(M,r)$ eliminate gaming actions (i.e., $(a_t^-)^*\equiv 0$), under any solution to this problem the observed feature $z_t$ coincides with the true attribute $x_{t_+}$, thereby guaranteeing the accuracy of the decision.

\paragraph{An immediate result: a reduction in Non-Incentivizable Region}
A direct consequence of our formulation is a critical reduction in the range of problem instances where genuine improvement is impossible to incentivize, compared to a one-shot setting\footnote{All proofs can be found in the appendix.}.
\begin{proposition}\label{prop:two-level-impossibility}
    The agent never chooses an improvement action if $(1-\beta\gamma)c^+>c^-$.
\end{proposition}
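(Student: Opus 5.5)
Fix any strategy that uses improvement, $a_t^+>0$ at some stage $t$. We build a strategy that does at least as well and uses strictly less improvement, by a local exchange. The core observation is that a unit of improvement at $t$ raises the feature $z_t$ by one unit, same as gaming. It also raises all future attributes: $x_{t+k}$ increases by $\gamma^k$, since the leg-up term $\delta(l-1)$ depends only on levels, and levels depend only on features.

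\textbf{The exchange.} Replace $\epsilon$ units of $a_t^+$ by gaming: set $a_t^+\leftarrow a_t^+-\epsilon$, $a_t^-\leftarrow a_t^-+\epsilon$, and $a_{t+1}^+\leftarrow a_{t+1}^++\gamma\epsilon$. Then:
\begin{itemize}
\item $z_t$ is unchanged, so $l_{t+1}$ and the reward at stage $t$ are unchanged.
\item The lost attribute $\gamma\epsilon$ at time $t+1$ is restored by the extra improvement at $t+1$. So $x_{(t+1)_+}$, $z_{t+1}$, and every subsequent state, feature and level are identical to before.
\end{itemize}
Hence all rewards coincide, and the change in discounted cost is
\begin{equation*}
\epsilon\bigl(-c^+ + c^- + \beta\gamma c^+\bigr)=\epsilon\bigl(c^- -(1-\beta\gamma)c^+\bigr)<0
\end{equation*}
under the hypothesis. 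So the modified plan is strictly better.

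\textbf{Why this is not yet a proof.} The exchange only pushes improvement from stage $t$ to stage $t+1$. Iterating it does not by itself eliminate improvement; it just defers it. The clean conclusion should come from comparing with a \emph{pure gaming} plan that reproduces the same features $z_s$ for all $s$.

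\textbf{Comparison with pure gaming.} Any feasible plan yields a feature sequence $\{z_s\}$ and hence a level sequence. Let $x^0_s$ be the attribute under zero improvement with those same levels. Gaming $a_s^-=\max(z_s-x^0_s,0)$ reproduces the levels, since exceeding a feature only helps promotion here; I would check that the classifier is monotone in $z$ for this.

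The improvement plan's attribute satisfies $x_s=x^0_s+\sum_{k\le s}\gamma^{s-k}a_k^+$. Its total cost is at least
\begin{equation*}
\sum_s\beta^s\bigl[c^+a_s^+ + c^-(z_s-x_s)_+\bigr].
\end{equation*}
The gaming plan costs $\sum_s\beta^s c^-(z_s-x^0_s)_+$, and
\begin{equation*}
(z_s-x^0_s)_+\le (z_s-x_s)_+ + \sum_{k\le s}\gamma^{s-k}a_k^+ .
\end{equation*}
Summing with discount and swapping the order of summation gives
\begin{equation*}
\sum_s\beta^s\sum_{k\le s}\gamma^{s-k}a_k^+=\sum_k \frac{\beta^k a_k^+}{1-\beta\gamma}.
\end{equation*}
So the gaming plan's cost is at most
\begin{equation*}
\sum_s\beta^s c^-(z_s-x_s)_+ + \sum_k\beta^k\frac{c^-}{1-\beta\gamma}\,a_k^+ .
\end{equation*}
This is strictly below the improvement plan's cost whenever some $a_k^+>0$, since $\frac{c^-}{1-\beta\gamma}<c^+$. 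The rewards are identical, so an optimal strategy never improves. The tie-breaking rule is irrelevant because the inequality is strict.

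The main obstacle is making this infinite-sum comparison rigorous. I need convergence of the sums, which follows from bounded optimal effort and $\beta<1$. I also need to confirm that the boundary-level conventions keep the classifier monotone in $z$.
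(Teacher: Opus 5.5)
Your second argument, the pure-gaming plan that replicates the feature path, is correct, and it takes a genuinely different route from the paper's.

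\textbf{The paper's route.} The paper argues entirely inside the Bellman equation. After the change of variables $W_l(x)=V(l,x)+c^+x$ and $\tilde x=x+a^+$, every branch $\Phi_l^{(1)},\dots,\Phi_l^{(6)}$ has slope $(1-\beta\gamma)c^+$ or $(1-\beta\gamma)c^+-c^-$, plus a non-decreasing term $\beta W_{l'}(\gamma\tilde x+\cdot)$. Under the hypothesis each branch is therefore strictly increasing, so $\min_{\tilde x\ge x}\Phi_l(\tilde x)$ is attained at $\tilde x=x$. Hence the optimal Markov policy has $a^+\equiv 0$ state by state.

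\textbf{Your route.} You argue at the level of whole trajectories. The replicating pure-gaming plan has exactly the same levels and rewards. Its cost is lower by $\sum_k\beta^k a_k^+\bigl(c^+-\tfrac{c^-}{1-\beta\gamma}\bigr)$.

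\textbf{What each buys.} Your argument needs no value function and no monotonicity or regularity of $W$. It shows strict domination of every improving plan, Markov or not. It also makes literal the paper's informal reading of $(1-\beta\gamma)c^+$: a unit of improvement at time $k$ is exactly replicated by the gaming stream $\gamma^j$ at times $k+j$, whose present cost is $c^-/(1-\beta\gamma)$. The paper's proof is shorter given the $\Phi$-machinery it needs anyway for the later theorems. Because it only uses a local slope bound, it also transfers directly to the nonlinear dynamics in the Discussion, with $\gamma$ replaced by a Lipschitz constant. Your argument instead relies on the exact additive decomposition $x_{s_+}=x^0_s+\sum_{k\le s}\gamma^{s-k}a_k^+$ (your $x_s$ is the post-action attribute), and would have to be redone there.

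\textbf{Minor clean-ups.}
\begin{itemize}
\item The $\max(\cdot,0)$ and the classifier-monotonicity check are unnecessary. Since $a^+,a^-\ge 0$, you have $z_s\ge x_{s_+}\ge x^0_s$, so the gaming plan's feature equals $z_s$ exactly. A one-line induction on $s$ then shows its levels are $l_{s+1}$ and its attributes are $x^0_s$, boundary conventions included. Monotonicity would not be the right justification anyway: a strictly larger feature could change the level and hence all subsequent attributes.
\item For convergence you do not need bounded optimal effort. Rewards lie in $[0,r(L-1)]$, so a plan with infinite discounted cost is beaten by the zero plan. For finite-cost plans, $\sum_k\beta^k a_k^+<\infty$, and the interchange of sums is Tonelli on non-negative terms.
\item The local exchange in your first part can simply be dropped.
\end{itemize}
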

\cref{prop:two-level-impossibility} essentially says that with farsightedness and attribute retention, the agent's effective long-term unit cost of improvement is $(1-\beta \gamma)c^+$, which is lower than $c^{+}$. So long as this is greater than the unit cost of gaming ($c^-$), it will never choose improvement effort, regardless of the classifier design $\vec{\mu}$. This is the limit of any mechanism in deterring gaming actions.
\Cref{fig:cost-pattern} illustrates this ``impossibility region'' (shaded red).
Crucially, since $\beta, \gamma <1$, this region is strictly smaller than the one-shot impossibility region given by $c^+ > c^-$~\cite{jin_network_2021,jin_incentive_2022} (solid orange in \Cref{fig:cost-pattern}). Effectively, the sequential dynamics  reduce the true unit cost of improvement, thereby making it possible for the system to incentivize improvement in scenarios where a one-shot formulation would fail.
The remainder of this paper will exclusively focus on the incentivizable region and assume that the following holds:
\begin{assumption}\label{asm:global}
    $(1-\beta\gamma)c^+<c^-$.
\end{assumption}

\section{The Agent's Optimal Strategy in the Two-Level Case}\label{sec:two-level}
\begin{figure*}[!htbp]
\centering
\begin{subfigure}[b]{0.33\textwidth}
    \centering
    \includegraphics[width=.98\textwidth]{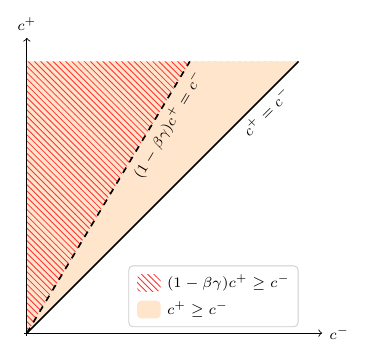}
    \caption{Change in impossibility region.}
    \label{fig:cost-pattern}
\end{subfigure}
\hfill
\begin{subfigure}[b]{0.33\textwidth}
    \includegraphics[width=\textwidth]{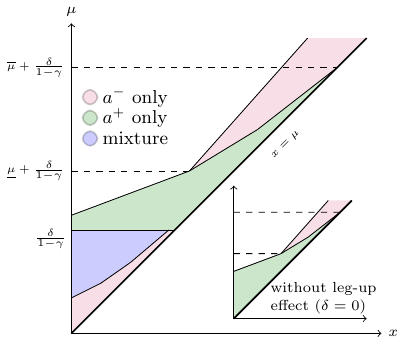}
    \caption{Agent's optimal strategy.}
    \label{fig:regions-strategy}
\end{subfigure}
\hfill
\begin{subfigure}[b]{0.33\textwidth}
    \includegraphics[width=\textwidth]{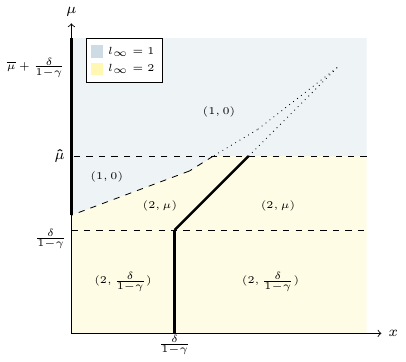}
    \caption{Agent's steady state $(l_\infty,x_\infty)$ per region.}
    \label{fig:regions-attribute}
\end{subfigure}
\caption{General impossibility region and the agent's optimal long-term strategy in the two-level case. (a) Reduction of the non-incentivizable region under the multilevel mechanism compared to the single-shot problem. (b) The agent's optimal (Markov) strategy for given threshold $\mu$ and attribute $x$. (c) The agent's steady state distribution for given initial state under the optimal strategy.}
\label{fig:regions}
\vspace{-10pt}
\end{figure*}

In this section, we characterize the agent's optimal strategy in \Cref{eq:agent-problem} for $L=2$, writing $\mu\equiv\mu_2$. Since $\mu_1\equiv0$, the agent interacts repeatedly with a single classifier at threshold $\mu$; the optimal Markov strategy is independent of $(l_0, x_0)$. Beyond capturing the one-classifier sub-case, this two-level analysis is the mathematical backbone of \Cref{thm:greedy_alg} (each greedy step reduces to a two-level subproblem) and can also be interpreted as a bounded-rationality model in which the agent plans one step ahead.

\subsection{Agent Strategy for Small Threshold $\mu$}\label{sec:two-level-small}

\begin{theorem}\label{thm:two-level-agent-strategy-small}
    When $\mu<\frac{\delta}{1-\gamma}$, $r\geq \frac{(1-\beta)c^-\delta}{(1-\gamma)(1-\beta\gamma)}$, and $c^-\geq  \max\{\beta\gamma c^+, (1-\beta\gamma)c^+\}$,  there exists $x^\circ\in[0,\mu]$ such that the agent's optimal strategy is the following:
    \begin{enumerate}[topsep=-1pt, itemsep=-2pt]
        \item[(1)] (gaming only) $\vec{a}_t^*=[0,\mu-x_t]^\top$, if $x_t\in[x^\circ,\mu]$;
        \item[(2)] (mixture of improvement and gaming) $\vec{a}_t^*=[x^\circ-x_t, \mu-x^\circ]$, if $x_t\in[0,x^\circ)$.
    \end{enumerate}
\end{theorem}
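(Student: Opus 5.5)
The plan is a guess-and-verify argument on the Bellman equation of the two-level MDP. Because $\mu_1\equiv 0$, the level matters only through the reward and the leg-up term. I would therefore first reduce the problem to a one-dimensional dynamic program in $x$. The first step is to show that, under $r\geq \frac{(1-\beta)c^-\delta}{(1-\gamma)(1-\beta\gamma)}$, the agent always prefers to reach the feature $z_t\geq\mu$ (i.e.\ to be at level $2$) over staying at level $1$. The comparison I would make is a one-shot deviation against the worst-case per-period cost of staying at level $2$. That cost is at most $c^-\mu<c^-\frac{\delta}{1-\gamma}$ by pure gaming, while the deviation forfeits $r$ plus the discounted leg-up $\delta$, whose value compounds through $\gamma$ and is worth at most $c^-$ per unit of attribute. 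This should produce exactly the stated threshold on $r$.

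Second, I would show that the optimal feature is exactly $z_t=\max\{x_t,\mu\}$, i.e.\ the agent never overshoots $\mu$. Overshooting by gaming is pure waste. Overshooting by improvement costs $c^+$ per unit and returns at most $\beta\gamma$ times the marginal value of next-period attribute. I will show that marginal value is at most $c^-$ (an extra unit of attribute can at best save one unit of future gaming), and then $c^-\geq\beta\gamma c^+$ makes overshooting unprofitable. In particular, for $x_t\geq\mu$ the action is $\vec a_t^*=0$, and the relevant Bellman problem is, for $x<\mu$,
\[
V(x)=\max_{a^+\in[0,\mu-x]}\Bigl\{r-c^+a^+-c^-(\mu-x-a^+)+\beta V\bigl(\gamma(x+a^+)+\delta\bigr)\Bigr\}.
\]

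Third, I would conjecture a concave, piecewise-linear $V$ with slope $c^-$ on $[0,\mu)$ and a smaller slope above, and verify it. The first-order condition for $a^+$ is $c^--c^++\beta\gamma V'(\gamma(x+a^+)+\delta)\gtrless 0$. The optimal post-action attribute is therefore a fixed cutoff $x^\circ$ independent of $x_t$, which yields exactly cases (1) and (2) of the statement. I would define $x^\circ$ as the largest $y\in[0,\mu]$ at which this marginal value is still nonnegative. The natural candidate is the point where the next-period attribute $\gamma y+\delta$ leaves the region in which further attribute saves gaming, which gives something like $x^\circ=\min\{\mu,\max\{0,(\mu-\delta)/\gamma\}\}$, possibly adjusted by the slope above $\mu$. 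Here $(1-\beta\gamma)c^+\le c^-$ (Assumption~\ref{asm:global}) ensures that improvement is worthwhile whenever the next state still lies in the gaming region. Concavity of $V$ makes the cutoff unique, and the condition $\mu<\frac{\delta}{1-\gamma}$ guarantees that the orbit from $x^\circ$ stays at or above the cutoff forever, so the stationary policy is self-consistent.

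The main obstacle is this third step: pinning down $V'$ on both sides of $\mu$ and proving that the resulting policy is a fixed point of the Bellman operator. Here I would first write $V$ explicitly along the deterministic closed-loop trajectory, whose attribute converges monotonically toward $\delta/(1-\gamma)>\mu$. I would then check the one-step optimality conditions region by region. Monotonicity and concavity would be propagated by an induction on value iteration, since the Bellman operator preserves concavity in $x$ for linear costs and affine dynamics.
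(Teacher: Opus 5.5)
\textbf{There is a genuine gap in Step 3.} Your reduction's conclusions are sound: the level is irrelevant, $V\equiv r/(1-\beta)$ on $[\mu,\infty)$, the agent reaches $\mu$, and it never overshoots. The step you make load-bearing, however, is guessing $V$ with slope $c^-$ on all of $[0,\mu)$ and verifying it, and that fails. An extra unit of attribute at $x<\mu$ saves gaming not only now but in every period until the leg-up map $x\mapsto\gamma x+\delta$ lifts the agent above $\mu$.

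Take $s_0=\mu$ and $s_k=(s_{k-1}-\delta)/\gamma$, the backward orbit the paper uses. On $[s_k,s_{k-1})$ the marginal value of attribute is $\frac{1-(\beta\gamma)^k}{1-\beta\gamma}c^-$, until it is capped at $c^+$ where improvement takes over. So your conjecture is a fixed point only when $\mu\le\delta$, where $x^\circ=0$ and case (2) is vacuous. The improvement condition $c^+\le c^-+\beta\gamma V'(\gamma y+\delta)$ holds exactly for $y<s_{K-1}$, where $K$ is the first $k$ with $\frac{1-(\beta\gamma)^k}{1-\beta\gamma}c^-\ge c^+$. The correct cutoff is therefore $x^\circ=\max\{0,s_{K-1}\}$, not $\max\{0,(\mu-\delta)/\gamma\}$.

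This is also the real role of $(1-\beta\gamma)c^+<c^-$: it makes $K$ finite. It does not make improvement worthwhile whenever the next state is in the gaming region; that would need $(1+\beta\gamma)c^-\ge c^+$, which does not follow. For instance, $\beta\gamma=\frac12$, $c^+=1$, $c^-=0.55$ is compatible with every hypothesis, yet $K=4$. There your conjectured $V$ predicts no improvement anywhere, while the true agent improves up to $s_3$ whenever $s_3>0$. The paper's proof is precisely a guess-and-verify of this multi-piece $W$. It adds $c^->\beta\gamma c^+$ to show the reach-$\mu$ region is an upper interval, and $\xi_{K-1}<0$ (from the $r$-condition) to handle the flat part.

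\textbf{Steps 1 and 2 need different justifications, and a shorter route exists.} In Step 2, ``marginal value at most $c^-$'' is false by the above: it is $(1+\beta\gamma)c^-$ on $[s_2,s_1)$ and $c^+$ below $s_{K-1}$. The conclusion holds for a simpler reason: $V$ is constant on $[\mu,\infty)$, since $\gamma x+\delta>\mu$ there.

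In Step 1, bounding the per-period cost by $c^-\mu$ only yields $r\ge c^-\delta/(1-\gamma)$, and an upper bound on the forfeited leg-up value points the wrong way. Also, for a fixed post-improvement attribute, not reaching can be strictly better (the paper's $\neg(\star)$ interval $[0,\omega)$), so the claim holds only for the jointly optimized action. A correct version:
\begin{itemize}
\item Perpetual pure gaming from any $x$ has gap $(\mu-x_t)^+<\delta\gamma^t/(1-\gamma)$. Hence $V(x)>\frac{r}{1-\beta}-\frac{c^-\delta}{(1-\gamma)(1-\beta\gamma)}\ge 0$, which is exactly the stated $r$-condition.
\item With $V(y)\le V(x)+c^+(y-x)$ for $y\ge x$, any non-reaching action yields at most $\beta V(x)<V(x)$.
\end{itemize}

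Once Steps 1--2 are in place, you do not need the explicit $V$, because the theorem only asserts existence of $x^\circ$. The objective in the post-improvement attribute, $\psi(y)=-(c^+-c^-)y+\beta V(\gamma y+\delta)$, depends on $x$ only through $y\ge x$. Consider value iteration on the always-reach problem, whose value equals $V$ by Step 1. Its objective is jointly concave and its feasible set is convex, so it preserves concavity. That gives $y^*(x)=\max\{x,x^\circ\}$, with $x^\circ$ the largest maximizer of $\psi$ on $[0,\mu]$. This would be a valid and shorter route than the paper's explicit construction, but it is not the argument your proposal commits to.
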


The above behavior is illustrated in \Cref{fig:regions-strategy}, lower-left corner.
\Cref{thm:two-level-agent-strategy-small} indicates that gaming actions cannot be fully discouraged when the threshold is too small. This is because a small threshold means more gain in attribute from leg-up than loss due to depreciation: $\delta>(1-\gamma)\mu$. This leads the agent to apply at most an insufficient level of improvement and use (less costly) gaming to cover the remaining gap to the threshold, relying on the guaranteed future leg-up benefit.
We will subsequently call this $\mu<\frac{\delta}{1-\gamma}$ region the \emph{leg-up region}.
From \Cref{fig:regions-strategy}, this is the only region where a mixture of non-zero improvement and gaming can be optimal. The optimal strategy outside this region dictates only \emph{one} of these actions exclusively.

\subsection{Agent Strategy for Large Threshold $\mu$}\label{sec:two-level-large}

\begin{theorem}\label{thm:two-level-agent-strategy-large}
    There exists $\underline{\mu},\;\overline{\mu}\geq 0$ independent of $\delta$, and $\underline{x}, ,\overline{x}\in[0,\mu]$ dependent on  $\mu$ and $\delta$, such that
    \begin{enumerate}[topsep=-1pt, itemsep=-2pt]
        \item[(1)] if $\mu\in [\frac{\delta}{1-\gamma}, \underline{\mu}+\frac{\delta}{1-\gamma})$, then $\vec{a}_t^*=[\mu-x_t,0]^\top$ if $x_t\in[\underline{x},\mu)$, and $[0,0]^\top$ otherwise;
        \item[(2)] if $\mu\in [\underline{\mu}+\frac{\delta}{1-\gamma},  \overline{\mu}+\frac{\delta}{1-\gamma})$, then $\vec{a}_t^*=[0,\mu-x_t]^\top$ if $x_t\in\left[\mu-\frac{r}{c^-},\overline{x}\right)$, $\vec{a}_t^*=[\mu-x_t,0]^\top$ if $x_t\in\left[\overline{x}, \mu\right)$, and $[0,0]^\top$ otherwise;
        \item[(3)] if $\mu\in [\overline{\mu}+\frac{\delta}{1-\gamma}, \infty)$, then $\vec{a}_t^*=[0,\mu-x_t]^\top$ if $x_t\in[\underline{x},\mu]$, and $[0,0]^\top$ otherwise.
    \end{enumerate}
\end{theorem}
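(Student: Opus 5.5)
We work with the two-level MDP on $(l,x)$ and describe the agent's optimal stationary policy at level-independent states; the threshold and $\mu_1=0$ do not depend on the level, so only $x$ matters. The first step is to reduce each period's choice to three candidate actions. Reaching $z\geq\mu$ at the cheapest mix of improvement and gaming, any interior effort that does not reach $\mu$ is dominated. Outside the leg-up region, partial improvement without passing is dominated because the attribute decays back under $\gamma$ without collecting reward. Combining the gaming-vs-improvement split with \Cref{asm:global} and the linearity of costs, the candidates at state $x<\mu$ are: \emph{idle} $[0,0]$, \emph{pure improvement} $[\mu-x,0]$, or \emph{pure gaming} $[0,\mu-x]$. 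The mixture from \Cref{thm:two-level-agent-strategy-small} is excluded because $\mu\geq\frac{\delta}{1-\gamma}$. Under this condition, an agent that passes every period and sits at $x=\mu$ next period receives attribute $\gamma\mu+\delta\le\mu$. Carrying extra improvement above what the threshold needs therefore never pays, since the leg-up alone cannot keep it above $\mu$.

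Next we evaluate stationary policies in closed form. Under \emph{perpetual improvement} started from $x$, the first step costs $c^+(\mu-x)$. Thereafter the agent re-tops from $\gamma\mu+\delta$, at per-period cost $c^+((1-\gamma)\mu-\delta)$. This gives
\[
V^{+}(x)=r-c^+(\mu-x)+\tfrac{\beta}{1-\beta}\bigl(r-c^+((1-\gamma)\mu-\delta)\bigr).
\]
Under \emph{perpetual gaming} the attribute drifts toward the fixed point $\delta/(1-\gamma)$, and the value $V^{-}(x)$ is a geometric sum in $\gamma^t x$ plus the leg-up terms. Idle yields $\gamma x$ decay and value $V^0(x)=\beta V(\gamma x)$.

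Write $\mu=\mu'+\frac{\delta}{1-\gamma}$. Then $(1-\gamma)\mu-\delta=(1-\gamma)\mu'$, and every comparison between $V^+$ and $V^-$ depends on $\mu'$ only, not on $\delta$. This is why $\underline\mu,\overline\mu$ are $\delta$-independent.

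Then we compare the values in two stages.
\begin{enumerate}
\item \emph{Improvement vs.\ gaming among passing actions.} The difference $V^+(x)-V^-(x)$ is affine in $x$ with positive slope. The slope comes from $c^+$ against $c^-\sum(\beta\gamma)^t$, and is positive by \Cref{asm:global}. Setting it to zero defines $\overline{x}$, the threshold above which improvement wins. Whether $\overline x\le$ the passing region's lower edge, lies inside it, or exceeds $\mu$ is decided by comparing $\mu'$ with two roots, which are the solutions in $\mu'$ of $\overline x(\mu')=\underline x$ and of $\overline x(\mu')=\mu$. These roots define $\underline\mu$ and $\overline\mu$.
\item \emph{Passing vs.\ idle.} For gaming, passing is worthwhile exactly when the one-shot surplus $r-c^-(\mu-x)\ge0$, which gives the lower boundary $\mu-\frac{r}{c^-}$ in case (2). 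For improvement, the analogous indifference gives $\underline{x}$.
\end{enumerate}
Finally we verify optimality of the resulting piecewise policy by checking the Bellman equation. We guess $V$ piecewise from the formulas above and show that no one-step deviation improves it. The idle region is absorbing toward $0$, and once the agent passes it stays in the passing region, which keeps the verification finite.

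The main obstacle is the verification and ordering of the thresholds: showing that $\underline\mu\le\overline\mu$ and that the regions are nested exactly as stated (idle, then gaming, then improvement in case (2)). This needs monotonicity of $\overline x$ in $\mu'$. I would first prove that $\overline x(\mu')$ is increasing and linear in $\mu'$ using the explicit $V^\pm$, then derive the ordering from that. I would also check that mixed switching policies, such as gaming now and improving later, are dominated. For this I would use a single-crossing argument on $V^+-V^-$ along the decaying trajectory.
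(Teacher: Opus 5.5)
Your comparison of perpetual improvement with idling does recover $\underline x=\max\{0,G(\mu)/c^+\}$ in case (1). However, the step that is supposed to produce $\overline x$, $\underline\mu$ and $\overline\mu$ fails.

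From your own closed forms, $\partial_x(V^+-V^-)=c^+-c^-\sum_t(\beta\gamma)^t=c^+-\frac{c^-}{1-\beta\gamma}$. Under \Cref{asm:global} this is \emph{negative}, not positive. With $\mu=\mu'+\frac{\delta}{1-\gamma}$ one also computes $V^+(\mu)-V^-(\mu)=\frac{\beta(1-\gamma)\mu'}{1-\beta}\bigl(\frac{c^-}{1-\beta\gamma}-c^+\bigr)\ge 0$. So perpetual improvement beats perpetual gaming from every $x\in[0,\mu]$ and for every $\mu\ge\frac{\delta}{1-\gamma}$. The root you call $\overline x$ never lies in $[0,\mu]$, and this comparison cannot generate the gaming band of case (2) or the all-gaming regime of case (3).

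The underlying error is the choice of continuations. For large $\mu'$ both perpetual policies are worse than dropping out, because their per-period costs $c^+(1-\gamma)\mu'$ and $c^-(\mu-x_t)\to c^-\mu'$ exceed $r$. Relatedly, your claim that a passing agent stays in the passing region is false. Under passing, the attribute drifts to $\frac{\delta}{1-\gamma}$, which lies strictly below $\mu-\frac{r}{c^-}$ once $\mu\ge\underline\mu+\frac{\delta}{1-\gamma}$. Even an agent who improves to $\mu$ can land at $\gamma\mu+\delta$ outside the improvement region.

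The missing idea is that the optimal gaming value is a \emph{finite} sum. The agent games while $x_t\ge s_1:=\mu-\frac{r}{c^-}$ and then idles at level $1$ forever. Let $s_{k+1}=\frac{s_k-\delta}{\gamma}$ for $k\ge 1$ be the backward orbit of the passing dynamics. From $x\in[s_k,s_{k+1})$ the agent games exactly $k$ times, so its marginal value of attribute is $\frac{1-(\beta\gamma)^k}{1-\beta\gamma}c^-$. This is below $c^+$ exactly for $k<K:=\lceil\log_{\beta\gamma}(1-(1-\beta\gamma)c^+/c^-)\rceil$. That is what places $\overline x=s_K$ and produces the term $\frac{r}{(1-\gamma^{K-1})c^-}$ in $\overline\mu$.

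Moreover, when $\gamma\mu+\delta<s_K$, the value of improving to $\mu$ must include the continuation of falling to $\gamma\mu+\delta\in[s_{K-1},s_K)$, then gaming $K-1$ times and dropping out. This is why the paper's plateau value is $\xi_K=\min\{G(\mu),(1-\beta)G(\mu)+\beta g(\gamma\mu+\delta)\}$.

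The paper gets all of this as follows: it guesses the piecewise-linear fixed point $W=V+c^+x$ separately in Cases A--D, checks that the running minimum of $\Phi$ reproduces it, and invokes uniqueness. Because $W$ has decreasing slopes, $\Phi$ rises then falls on the $(\star)$ region, so its running minimum sits at an endpoint. That concavity, not linearity of the one-period costs, is what excludes improve-and-game mixtures.
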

The upper portion of \Cref{fig:regions-strategy} (i.e., $\mu\geq \frac{\delta}{1-\gamma}$) visualizes the agent behavior detailed in \Cref{thm:two-level-agent-strategy-large}. The three disjoint sub-intervals on the $\mu$-axis in this region (separated by $\underline{\mu}+\frac{\delta}{1-\gamma}$ and $\overline{\mu}+\frac{\delta}{1-\gamma}$), in increasing order, correspond to cases (1)-(3) of \Cref{thm:two-level-agent-strategy-large}, respectively.

\paragraph{Interpreting the Optimal Long-Term Strategy}\Cref{thm:two-level-agent-strategy-large} implies that the agent's optimal action depends on both the threshold and the proximity of its attribute to the threshold. In the low-threshold regime (case (1)), the agent exerts improvement efforts if the threshold is attainable (the green region) or else does nothing. In the intermediate regime (case (2)), the behavior is stepwise: the agent makes an improvement effort if the attribute is already close enough to the threshold, cheats if it's further away, or does nothing if it's too far away. In the higher regime (case (3)), the agent will cheat if the threshold is within reach or else do nothing, indicating the lack of improvement incentives. This occurs when the threshold is set above $\overline{\mu} + \frac{\delta}{1-\gamma}$, where maintenance costs outweigh the benefits of improvement.

Comparing the main figure to the small inlaid (lower right) in \Cref{fig:regions-strategy}, we see that the leg-up effect shifts all these regions up by $\frac{\delta}{1-\gamma}$: as leg-up makes attaining the threshold more rewarding, it allows the principal to set a higher threshold. However, the net region in which effort can be incentivized (height of the green region, $\overline\mu$) remains constant.

\paragraph{Trajectories and Steady State}

\Cref{fig:regions-attribute} depicts the agent's steady state (i.e., $(l_\infty,x_\infty)$) under the  optimal strategy for each $x_0$ and $\mu$. The solid black lines mark the steady-state attribute, and the gray and yellow colors highlight the regions where $l_\infty=1$ and $l_\infty=2$, respectively. Notice that the dotted wedge in \Cref{fig:regions-attribute} is the same as the boundary of the green region of \Cref{fig:regions-strategy}.

When $\mu\geq \frac{\delta}{1-\gamma}$, an agent follows one of two trajectories:
improving to $(2, \mu)$ (as agents with $x_0>\mu$ initially coast without effort, exerting improvement only when depreciation makes the threshold binding), or dropping to $(1, 0)$.

Notably, \Cref{fig:regions-attribute} identifies a threshold $\hat{\mu}<\overline{\mu}+\frac{\delta}{1-\gamma}$, above which the steady-state attribute collapses to 0 even if the agent may have initially improved (if starting in the green region of \Cref{fig:regions-strategy}). This is because the depreciation dominates: Even when an agent reaches $\mu$, the attribute in the subsequent time step ($\gamma\mu + \delta$) falls outside the improvement region.
Conversely, for $\mu<\frac{\delta}{1-\gamma}$, the steady state converges to (2, $\frac{\delta}{1-\gamma}$) because $\mu$ is set low enough for the leg-up effect alone to sustain an attribute above $\mu$.
In this sense, $\frac{\delta}{1-\gamma}$ may be viewed as a ``natural equilibrium'' of the attribute dynamics, independent of the principal's decision. We will see in \Cref{sec:multilevel} that this equilibrium is critical in designing high-achieving systems (incentivizing very high attributes while deterring gaming).

\paragraph{Role of Retention and Discounting}

$\underline{\mu}$ and $\overline{\mu}$ increase in $\beta$ and $\gamma$ (see \Cref{sec:find-a}), but with a key asymmetry: $\gamma\to1$ pushes both to $\infty$, fully eliminating the all-gaming region (case (3)), while $\beta\to1$ only reaches the finite limit $\frac{r}{(1-\gamma)c^+}$, leaving that region intact.

\section{The Principal's Optimal Decision}\label{sec:multilevel}
We now consider the principal's problem $\mathbf{P}(M,r)$ as formulated in \Cref{eq:principal}, and examine to what extent it can be rigorously characterized, utilizing analysis similar to that established in \Cref{sec:two-level}.
We begin by examining the \emph{feasibility} of $\mathbf{P}(M,r)$, i.e., whether the problem's constraints can be simultaneously satisfied, and the necessary conditions to achieve the principal's three goals outlined in \Cref{sec:problem}.

\subsection{Without Leg-up Effect}

\begin{theorem}\label{thm:multilevel-impossibility}
    When $\delta=0$, $\mathbf{P}(M,r)$ is infeasible if $M\geq \frac{r}{(1-\beta)(1-\gamma)^2c^+}$.
\end{theorem}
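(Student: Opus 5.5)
The plan is a contradiction argument built on a budget comparison: the total reward the agent can ever collect against the minimum improvement cost of keeping its attribute at $M$. Suppose $\delta=0$ and some design $(L,\vec{\mu})$ is feasible for $\mathbf{P}(M,r)$. Then the induced optimal strategy never games, so $z_t=x_{t_+}$ for all $t$. Moreover, along the optimal trajectory we have $\liminf_t x_{t_+}\geq M$ and $l_t\to L$. Since the agent always has the option of doing nothing from any state, we will compare the value of its optimal plan with that of the null plan $\vec{a}_t\equiv 0$ from a suitably chosen late state. The goal is to show that once the attribute is near $M$, sustaining it costs more than any reward the agent could possibly gain.

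First, we bound the cost of maintenance. With $\delta=0$ the dynamics are $x_{t+1}=\gamma(x_t+a_t^+)$. If the agent keeps $x_{s_+}\geq M-\epsilon$ for all $s\geq t$, then $a_s^+\geq (M-\epsilon)-\gamma x_{(s-1)_+}$. Summing with discount gives a telescoping bound:
\[
\sum_{s\geq t}\beta^{s-t}a_s^+\;\geq\;\frac{(1-\beta\gamma)(M-\epsilon)}{1-\beta}-x_t .
\]
Starting from $x_t\approx \gamma M$, this is about $M\bigl(\frac{1-\beta\gamma}{1-\beta}-\gamma\bigr)=\frac{M(1-\gamma)}{1-\beta}$. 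The discounted improvement cost is therefore at least $\frac{c^+(1-\gamma)M}{1-\beta}$, up to $O(\epsilon)$.

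Second, we bound the reward side. Relative to doing nothing, the extra reward is at most $\sum_{s}\beta^{s-t}(R_{l_{s+1}}-R_{\text{null}})$. This is bounded by $\frac{r(L-1)}{1-\beta}$, which depends on $L$ and is too crude. To get a bound free of $L$, we use the level structure: one level is gained per step at most, and relegation under the null plan is also at most one level per step. So the level gap between the two plans after $k$ steps is at most $k$ (and at most $L-1$). The reward advantage is then at most $r\sum_k \beta^k\min(k+1,L-1)\leq \frac{r}{(1-\beta)^2}$.

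Here is where the obstacle lies: that crude bound has $(1-\beta)^2$, whereas the target has $(1-\beta)(1-\gamma)^2$. I expect the intended argument is sharper and ties levels to attribute. Under honesty and $\delta=0$, remaining at level $\ell$ requires $x_{s_+}\geq\mu_\ell$, which incurs a maintenance flow $c^+(1-\gamma)\mu_\ell$. Sustaining level $\ell$ instead of $\ell-1$ yields only $r$ per step. So I will first apply the two-level analysis (\Cref{thm:two-level-agent-strategy-large} with $\delta=0$, i.e., the greedy reduction) to each adjacent pair of levels. This should show that, for an agent not to prefer gaming or dropping, consecutive thresholds must satisfy $\mu_{\ell+1}-\mu_\ell\leq\overline{\mu}$. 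It should also show that the highest sustainable threshold obeys $(1-\gamma)c^+\mu\lesssim r$ per level of marginal gain, discounted.

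Concretely, I will show that at the top level $L$ with threshold $\mu_L\geq M$, the one-step deviation is better. That deviation stops improving and drifts down, losing at most one level's reward $r$ per step of drift. It takes roughly $\frac{\mu_L-\mu_{L-1}}{(1-\gamma)\mu_L}$ steps to drop, with a saving of order $c^+(1-\gamma)M$ per step. Comparing $\frac{r}{(1-\gamma)}$ (the drift-time-weighted loss) with $\frac{(1-\beta)(1-\gamma)c^+M}{\cdot}$ should yield exactly the stated threshold $M\geq\frac{r}{(1-\beta)(1-\gamma)^2c^+}$. I will then conclude that $\liminf x_{t_+}\geq M$ contradicts optimality. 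Getting the factor $(1-\gamma)^2$ exactly, through the drift-time estimate, is the main step I would expect to need care.
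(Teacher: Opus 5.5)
The gap is in the route you commit to after your first two steps, and it comes from misjudging what those steps already give. The drift-time argument does not work as written. The two-level characterization (Theorem~\ref{thm:two-level-agent-strategy-large}) rests on $W_1=W_2$ when $L=2$. For $L>2$ the functions $W_l$ in the Bellman system are coupled through $W_{l-1}$ and $W_{l+1}$, so that theorem does not directly give a pairwise constraint like $\mu_{\ell+1}-\mu_\ell\le\overline{\mu}$. The drift time $\frac{\mu_L-\mu_{L-1}}{(1-\gamma)\mu_L}$ is only a linearization of a geometric decay. The final comparison is never carried out (its denominator is left blank), so nothing there actually produces $(1-\gamma)^2$. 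Aiming for that exponent exactly is also unnecessary, because the stated threshold is not tight.

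Your first two steps already finish the proof. From a late state at level $L$, doing nothing forever loses at most $r\sum_{k\ge0}\beta^k(k+1)=\frac{r}{(1-\beta)^2}$ in reward. It saves at least $\frac{c^+(1-\gamma)M}{1-\beta}-O(\epsilon)$ in cost. So it strictly beats the optimal continuation whenever $M>\frac{r}{(1-\beta)(1-\gamma)c^+}$. Since $0<1-\gamma<1$, the hypothesis $M\ge\frac{r}{(1-\beta)(1-\gamma)^2c^+}$ implies this strictly. You only need to beat the stated constant, not reproduce it.

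The paper's proof uses exactly your reward bound. Its factor $(1-\gamma)^2$ comes from a weaker deviation. From the absorbing state $(L,\gamma\mu_L)$, the optimal agent pays $c^+(1-\gamma)\mu_L$ per step. The paper compares this with a strategy that keeps restoring the attribute to $\gamma\mu_L$ at cost $c^+(1-\gamma)\gamma\mu_L$ per step, which saves only $c^+(1-\gamma)^2\mu_L$ per step. Your do-nothing deviation saves the full $c^+(1-\gamma)\mu_L$, suffers the same worst-case loss of one level per step, and does not need the absorbing-state structure.

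To make your version rigorous:
\begin{enumerate}
\item Let $m:=\liminf_s x_{s_+}\ge M$. It is finite: $\Phi_l$ is strictly increasing beyond the promotion threshold, so the agent never improves past $\max\{x_s,\max_k\mu_k\}$.
\item Pick $t$ with $l_t=L$, $x_{s_+}\ge m-\epsilon$ for all $s\ge t$, and $x_{(t-1)_+}\le m+\epsilon$.
\item Telescope the exact identity $a_s^+=x_{s_+}-\gamma x_{(s-1)_+}$, using $x_t=\gamma x_{(t-1)_+}$. This gives $\sum_{s\ge t}\beta^{s-t}a_s^+=(1-\beta\gamma)\sum_{s\ge t}\beta^{s-t}x_{s_+}-x_t\ge\frac{(1-\gamma)m}{1-\beta}-O(\epsilon)$.
\item Switching to $\vec a\equiv\vec 0$ from time $t$ on then strictly raises the agent's total utility, contradicting optimality.
\end{enumerate}
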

\Cref{thm:multilevel-impossibility} indicates a limit to how high a level the agent can reach without the leg-up effect. This is because without the leg-up effect to counter the attribute depreciation, the latter dominates as a multiplicative discount. This means that the loss eventually outweighs the gain when the agent reaches a certain level, above which the mechanism is no longer able to sustain.
The bound in \Cref{thm:multilevel-impossibility} increases in both $\beta$ and $\gamma$, and its sensitivity to $\gamma$ is significantly higher (quadratic in $\gamma$ but linear in $\beta$). This implies that the impossibility region shrinks more rapidly in response to rises in attribute retention than the agent's farsightedness. This is consistent with that in \Cref{sec:two-level-large}, suggesting the retention rate is more effective in incentivizing improvement.

\subsection{With Leg-up Effect}\label{sec:multilevel-legup}
\begin{theorem}\label{thm:multilevel-legup}
Suppose $\delta > 0$. Then,
\begin{enumerate}[topsep=-1pt, itemsep=-2pt]
    \item[(1)] if  $r< \frac{1-\beta}{1-\gamma}c^+\delta$, $\mathbf{P}(M,r)$ is infeasible for any $M$.
    \item[(2)] if $r\geq \frac{1-\beta}{1-\gamma}c^+\delta$ and $c^-\geq\max\{(1+\frac{\beta\gamma}{2})(1-\beta\gamma)c^+,\;\beta\gamma(1-\beta^2\gamma^2)c^+\}$, $\mathbf{P}(M,r)$ is feasible for all $M$, and a feasible sequence is given by $\mu_l=\frac{\delta l}{1-\gamma}$, $\forall l\in[L]$, with  $L=\left\lceil(1-\gamma)M/\delta\right\rceil$. Furthermore, this sequence is optimal if $r=\frac{1-\beta}{1-\gamma}c^+\delta$.
\end{enumerate}
\end{theorem}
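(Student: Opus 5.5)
The proof splits into an infeasibility argument for part (1) and a constructive, induction-on-levels argument for part (2); the key economic quantity throughout is the net value of holding one extra level forever while paying to sustain the attribute against depreciation.

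\emph{Part (1).} Suppose a feasible design exists with some $L\geq 2$. Feasibility forces the agent to eventually sit at level $L$ with zero gaming, so $x_{t_+}\geq\mu_L$ for all large $t$. By \Cref{eq:classifier}, staying at level $L$ requires $z_t\geq\mu_L$, and reaching it required $\mu_L\geq\mu_{L-1}$ to have been crossed.

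Compare two plans in the tail. Under the honest plan, the attribute at the start of each step is $\gamma x_{t_+}+\delta(L-1)$, so maintaining $x_{t_+}\geq\mu_L$ costs at least $c^+\big((1-\gamma)\mu_L-\delta(L-1)\big)^+$ per step. Under the deviation, the agent stops improving, lets the attribute decay to the leg-up equilibrium of a lower level, and loses at most $r$ per step per level dropped. Here I will use the fact that the leg-up earned per level, $\delta$, sustains only $\frac{\delta}{1-\gamma}$ of attribute per level. I would then aggregate with discounting as in the proof of \Cref{thm:multilevel-impossibility}: the present value of sustaining one more level's worth of attribute $\Delta\mu$ beyond its natural equilibrium is at least $\frac{c^+(1-\gamma)\Delta\mu}{1-\beta}$, against a reward gain of $\frac{r}{1-\beta}$. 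The surplus of any promotion must be nonnegative for the agent to pursue it honestly.

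Two cases follow. If $\mu_{l+1}-\mu_l\leq\frac{\delta}{1-\gamma}$, the agent reaches level $l+1$ for free and the next threshold falls inside the leg-up region, so by \Cref{thm:two-level-agent-strategy-small} gaming is used. Otherwise the marginal cost $\frac{c^+\delta}{1-\beta}$ of the first unit step exceeds $\frac{r}{(1-\gamma)(1-\beta)}\cdot(1-\gamma)$, i.e.\ exceeds $r/(1-\beta)$ precisely when $r<\frac{1-\beta}{1-\gamma}c^+\delta$. In that case the agent prefers doing nothing or gaming at the top, which contradicts either the honesty constraint or $\lim l_t=L$ for some initial state (take $x_0=0$).

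\emph{Part (2).} I will verify the construction $\mu_l=\frac{\delta l}{1-\gamma}$ by a greedy, level-by-level reduction to the two-level analysis, as the paper announces for \Cref{thm:greedy_alg}. The key observation is that at level $l$ the leg-up equilibrium $\frac{\delta(l-1)}{1-\gamma}$ equals $\mu_{l-1}$. So relative to the shifted origin, each step is a two-level problem with threshold gap $\frac{\delta}{1-\gamma}$, sitting exactly at the boundary $\mu=\frac{\delta}{1-\gamma}$ of case (1) of \Cref{thm:two-level-agent-strategy-large} (after translating the attribute by $\mu_{l-1}$).

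I will show that $\underline\mu>0$ under the stated $c^-$ bounds and that $\underline x$ is small enough that the relevant starting attributes lie in $[\underline x,\mu)$. Then the agent improves to exactly $\mu_{l+1}$; with $\delta$ leg-up and retention, $\gamma\mu_{l+1}+\delta l=\mu_{l+1}-\delta\cdot$(stuff) stays at or above $\mu_l$, so there is no relegation. The condition $r\geq\frac{1-\beta}{1-\gamma}c^+\delta$ guarantees the promotion surplus $\frac{r}{1-\beta}-\frac{c^+\delta}{1-\gamma}\cdot\frac{1}{\ldots}$ is nonnegative. The two $c^-$ inequalities rule out gaming: the one-step gaming deviation is handled by $(1+\frac{\beta\gamma}{2})(1-\beta\gamma)c^+\le c^-$, and the two-step deviation (game now, improve later) by $\beta\gamma(1-\beta^2\gamma^2)c^+\le c^-$. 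I will check that the one- and two-step deviations suffice, via a one-shot-deviation principle on the MDP.

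By induction the agent climbs honestly to level $L$ and holds $x_{t_+}=\mu_L\geq M$ with $L=\lceil(1-\gamma)M/\delta\rceil$. For optimality at equality $r=\frac{1-\beta}{1-\gamma}c^+\delta$, the part (1) argument shows that any gap exceeding $\frac{\delta}{1-\gamma}$ yields strictly negative surplus, so no sequence reaches $M$ in fewer levels.

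The main obstacle is the gaming-deviation check in part (2): I must show that no multi-step mixed deviation beats the honest path. This is where the specific constants $(1+\frac{\beta\gamma}{2})$ and $(1-\beta^2\gamma^2)$ must emerge. I would attempt it by writing the value function along the honest path in closed form and bounding the Bellman residual of each deviation.
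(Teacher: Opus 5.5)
Both parts have genuine gaps. \emph{Part (1).} Your cost accounting does not produce the theorem's threshold. With $\Delta\mu=\frac{\delta}{1-\gamma}$, your perpetual maintenance cost $\frac{c^+(1-\gamma)\Delta\mu}{1-\beta}$ equals $\frac{c^+\delta}{1-\beta}$. Comparing it with $\frac{r}{1-\beta}$ gives $r<c^+\delta$, not $r<\frac{1-\beta}{1-\gamma}c^+\delta$. Your middle expression $\frac{r}{(1-\gamma)(1-\beta)}\cdot(1-\gamma)$ is just $\frac{r}{1-\beta}$, so the ``precisely when'' is an algebra slip. Take $\beta=0.9$, $\gamma=0.5$, $c^-=0.8c^+$, $r=\frac{1}{2}c^+\delta$: part (2) certifies the equilibrium-threshold design, yet your per-level accounting says no level is worth climbing.

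The conceptual error is that attribute gained by climbing a level is not sustained at a perpetual cost; the new level's leg-up sustains it. Let $e_l:=\frac{\delta(l-1)}{1-\gamma}$ be the fixed point of $x\mapsto\gamma x+\delta(l-1)$. Moving from $(l,e_l)$ to $(l+1,e_{l+1})$ costs $\frac{c^+\delta}{1-\gamma}$ once and nothing afterwards, against a gain of $\frac{r}{1-\beta}$. In the $W$-normalization of \Cref{eq:w}, the per-level net is $(1-\beta\gamma)c^+\frac{\delta}{1-\gamma}-(r+\beta c^+\delta)=\frac{1-\beta}{1-\gamma}c^+\delta-r$.

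Your case split is also on the wrong quantity. What matters is $\mu_l$ versus $e_l$, not the gaps: a gap of at most $\frac{\delta}{1-\gamma}$ puts $\mu_{l+1}$ below $e_{l+1}$ only if $\mu_l\le e_l$. Your non-strict inequality even covers the part-(2) design itself. Moreover, \Cref{thm:two-level-agent-strategy-small} is a two-level result whose hypotheses on $r$ and $c^-$ are not available here.

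The paper instead proceeds as follows. \Cref{lemma:feasible-cond} shows feasibility forces $\mu_L\ge e_L$ and forces $W_l$ to be flat at the honest-promotion value on $[\max\{\mu_l,e_l\},\mu_{l+1}]$. A backward induction (\Cref{lemma:stepwise-larger}) then shows: if $\mu_{l-1}\le e_{l-1}$, the agent at $(l-1,e_{l-1})$ strictly prefers inaction forever to climbing, because every higher threshold is at least its equilibrium and each level carries negative net value. Hence $\mu_l>e_l$ for all $l\le L-1$, contradicting $\mu_1=0=e_1$. That anchor is the missing idea. Your instinct to compare against inaction is right, and your optimality argument at equality, once the accounting is corrected, essentially matches the paper's (take the highest $k$ with $\mu_k=e_k$).

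\emph{Part (2).} With $\mu_1\equiv0$, the intended sequence is $\mu_l=e_l=\frac{\delta(l-1)}{1-\gamma}$; the statement's $\frac{\delta l}{1-\gamma}$ is off by one. Its key property is $\gamma\mu_l+\delta(l-1)=\mu_l$: an agent entering level $l$ at $\mu_l$ stays there forever at zero cost. You instead compare the post-promotion attribute with $\mu_l$, but relegation from level $l+1$ is triggered by $z<\mu_{l+1}$. Under your indexing the attribute falls to $\mu_{l+1}-\delta$ each period, and the agent must pay $c^+\delta$ per period to avoid relegation.

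More seriously, the reduction to \Cref{thm:two-level-agent-strategy-large} is not valid. Shifting by $e_l$ does make the one-step dynamics between levels $l$ and $l+1$ match the two-level model at $\mu=\frac{\delta}{1-\gamma}$. The continuation values do not match, however: after promotion the agent faces further levels, and below $\mu_l$ it can be relegated. So $W_l$ is not a translate of the two-level $W$. Tellingly, at $\mu=\frac{\delta}{1-\gamma}$ the two-level problem is in Case A (\Cref{claim:case-A}) exactly when $r>\frac{1-\beta}{1-\gamma}c^+\delta$, with no requirement on $c^-$ beyond \Cref{asm:global}. So the reduction cannot be where the two $c^-$ bounds come from.

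Those bounds, i.e.\ the gaming check you defer, are the real content. Your assignment of them to ``one-step'' and ``two-step'' deviations is asserted, not derived. A one-shot-deviation argument would also need the candidate value at every state, including low attributes at high levels, which you never specify.

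The paper supplies exactly this via an invariant class:
\begin{enumerate}
\item It takes the class $\mathcal{F}$ of nondecreasing continuous $F(l,x)$ that are $(1-\beta^k\gamma^k)c^+$-Lipschitz on $[\mu_{l-k-1},\mu_{l-k}]$, with $F(l,\mu_l)$ nonincreasing in $l$.
\item It shows $\mathcal{F}$ is closed and invariant under the Bellman operator $T$, and that $TF(l,\cdot)\equiv\Phi_l^{(5)}(\mu_{l+1})$ on $[\mu_{l-1},\mu_{l+1}]$.
\item The reward condition gives $\Phi_l^{(2)}(\mu_l)\le\Phi_l^{(1)}(\mu_{l-1})$.
\item The bound $c^-\ge(1+\frac{\beta\gamma}{2})(1-\beta\gamma)c^+$ rules out gaming from $[\mu_{l-1},\mu_l]$ straight over $\mu_{l+1}$, via the Lipschitz bound on $F(l+1,\cdot)$.
\item The bound $c^-\ge\beta\gamma(1-\beta^2\gamma^2)c^+$ keeps that gaming branch's slope within $(1-\beta^k\gamma^k)c^+$, so the invariant propagates.
\end{enumerate}
Banach's theorem then places the true $W$ in $\mathcal{F}$.
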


\paragraph{Feasibility condition} \Cref{thm:multilevel-legup} says that the problem is feasible only if the reward rate is sufficiently high relative to the leg-up effect. Note that the sufficient condition in (2) implies \Cref{asm:global}; empirically, it is at most $\approx\!2\times$ as strict (see \Cref{app:bounds}), so the principal retains 30--50\% tolerance for parameter misspecification. The core tension lies in the spacing between thresholds: when the reward rate is low, smaller gaps are required to maintain the agent's incentive to climb the ladder (i.e., if the promotion is minor, do not make people jump through hoops); however, if the gaps become too small, the agent can rely only on the leg-up effect to navigate through the levels without improvement efforts (i.e., promotion begets promotion).
The minimum reward rate required for the feasibility of $\mathbf{P}(M,r)$ is decreasing in the discount factor $\beta$, but increasing in both the retention rate $\gamma$ and the leg-up effect $\delta$. Intuitively, a less farsighted agent requires larger future rewards to sustain the incentive for improvement, thereby raising the minimum required reward rate.

\paragraph{Optimal threshold sequence} To understand the feasible threshold sequence identified in \Cref{thm:multilevel-legup}-(2), note that each $\mu_l$ in this sequence equals the steady state of the attribute if the agent were to stay in level $l$ indefinitely without further effort. At this threshold, attribute depreciation and the leg-up effect cancel out. Once the agent reaches $\mu_l$ from a previous level, the attribute transition guarantees that its next-state attribute remains the same: $\gamma\mu_{l}+\delta(l-1)=\gamma\frac{\delta(l-1)}{1-\gamma}+\delta(l-1)=\frac{\delta(l-1)}{1-\gamma}=\mu_l$. That is, reaching a level by improvement from a lower level effectively resets the agent's ``intrinsic endowment'' to $\mu_l$, as its attribute will not fall below this value in the future.

By setting the threshold at the steady state, the principal harnesses the system's natural equilibrium rather than fighting against it. In fact, such a ``natural'' solution is very close to the optimal sequence as $M$ grows large. If the threshold were set such that $\mu_l > \frac{\delta(l-1)}{1-\gamma}$ (an inevitable situation if $L < \lceil(1-\gamma)M/\delta\rceil$), this level is not ``stable'' enough because the attribute dynamics would exert a downward pressure toward the steady state $\frac{\delta(l-1)}{1-\gamma}$, forcing the agent to keep exerting efforts to maintain their current level. This backward drift becomes particularly detrimental at large $M$, where the multiplicative depreciation dominates. Conversely, if the threshold were set lower than $\frac{\delta(l-1)}{1-\gamma}$, an agent with attribute $x_t\in((\mu_l-\delta(l-1))/\gamma,\;\mu_l]$ at level $l_t=l-1$ would find it optimal to just game up for a promotion, where the attribute dynamic from the next level ($l_{t+1}=l$) would then push their attribute upward above $\mu_l$ (i.e., $x_{t+1}\geq \gamma x_t+\delta (l-1)\geq \mu_l$).

While \Cref{thm:multilevel-legup}-(2) offers a powerful result, it requires a higher threshold for the minimum gaming cost, $c^-$. This reflects a critical trade-off: If gaming is too inexpensive, the agent will tend to continuously rely on cheating to gain promotion. Although cheating does not improve attributes directly, the ``leg-up'' effect provided by higher levels facilitates rapid attribute cumulation, even in the absence of genuine improvement efforts. This type of gaming behavior is more thoroughly discussed in \Cref{sec:practice}.

\section{Numerical Results}\label{sec:numeric}

As shown in \Cref{sec:multilevel}, the principal's problem $\mathbf{P}(M,r)$ is not always feasible. In this section we propose and analyze a greedy heuristic algorithm and numerically solve a relaxed principal's problem.
In all experiments, we employ a subroutine to compute the agent's optimal response to the principal's decisions. To this end, we develop {\sc ValueIterate}, a value iteration algorithm that utilizes attribute-space discretization and linear interpolation of the Bellman equation. We show that the algorithm exhibits linear convergence with rate $O\big(\frac{\log(1/\varepsilon)}{|\log\beta|}\big)$ and approximates the true value function with an error bound of $\frac{c^+\Delta x}{2(1-\beta)}$. The details and theoretical proofs are provided in \Cref{app:valueiterate}.

\subsection{The Greedy Heuristic}\label{sec:greedy}
We propose a greedy algorithm that iteratively builds the threshold sequence via searching for the largest next threshold such that the first and third conditions of $\mathbf{P}(M,r)$ are satisfied under the current number of thresholds. More details of the algorithm are provided in \Cref{app:greedy-alg}.

\begin{algorithm}[!h]
\caption{Greedy Heuristic for Optimal Thresholds}
\label{alg:greedy_thresholds}
\begin{algorithmic}[1]
\REQUIRE target $M$, params $\Theta = \{\beta, \gamma, c^+, c^-, r, \delta\}$, discretization step $\Delta x$, max attribute $X_{\max}$, precision $\varepsilon$
\REQUIRE \textsc{ValueIterate} that approximates the agent's optimal strategy on the discretized attribute space $\mathcal{X}$

\STATE $\mathcal{X}\gets$ discretized $[0,X_{\max}]$ with gap $\Delta x$
\STATE $\mu_1 \leftarrow 0$, $x_0 \leftarrow 0$ \COMMENT{post-response attribute}, $l \leftarrow 2$
\WHILE{$x_{l-1} < M$}
    \STATE $a \gets \max\{\mu_{l-1},\frac{\delta(l-1)}{1-\gamma}\}$, $b \gets X_{\max}$
    \STATE $m \leftarrow \mu_{l-1}$\COMMENT{candidate for $\mu_l$}
    \WHILE{$(b - a) > \varepsilon$}
        \STATE $m \leftarrow (a + b) / 2$
        \STATE $(a^+,a^-) \gets \textsc{ValueIterate}(\vec{\mu} \cup \{m\},\mathcal{X}|\Theta)$
        \STATE $x_l\gets x_{l-1}+a^+(l-1,\gamma x_{l-1}+\delta(l-2))$
        \STATE $z=x_l+a^-(l-1,\gamma x_{l-1}+\delta(l-2))$
        \IF{$x_l=z$, $z\geq m$, and $a^+(l,\gamma m+\delta l)>0$}
            \STATE $a\gets m$
        \ELSE
            \STATE $b\gets m$
        \ENDIF

    \ENDWHILE
    \IF{$(a-\mu_{l-1})<\varepsilon$}
        \STATE \textbf{break} \COMMENT{not found within numerical tolerance}
    \ELSE
        \STATE $\mu_l \gets a$, $l \leftarrow l + 1$
    \ENDIF

\ENDWHILE
\STATE Return $\vec{\mu}$
\end{algorithmic}
\end{algorithm}

\begin{theorem}
\label{thm:greedy_alg}
    Let \Cref{alg:greedy_thresholds} return $\vec{\mu}$ of length $L$. Then, $\vec{\mu}$ is a feasible solution to $\mathbf{P}(M,r)$ for $M\leq \mu_L$.
\end{theorem}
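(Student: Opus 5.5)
The plan is to argue by induction on the number of thresholds the algorithm has accepted, using the acceptance test in the inner bisection as the inductive invariant. The algorithm only sets $\mu_l\gets a$ after some candidate $m=a$ has passed the check on line 11. Passing the check means three things for the current prefix $\vec\mu\cup\{m\}$, evaluated at the post-response attribute $x_{l-1}$ carried over from the previous step:
\begin{enumerate}
\item $x_l=z$, so the gaming component of the agent's best response vanishes;
\item $z\ge m$, so the agent is promoted from level $l-1$ to level $l$;
\item $a^+(l,\gamma m+\delta l)>0$, so once at the new level the agent still prefers improvement to gaming or inaction.
\end{enumerate}
I would state the invariant for a returned prefix $(\mu_1,\dots,\mu_l)$ as follows: starting from $(1,0)$, the agent's optimal trajectory climbs level by level with $a^-_t\equiv 0$, reaching level $j$ with post-action attribute $x_j\ge\mu_j$.

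The base case $l=2$ is exactly the two-level subproblem. By the discussion after \Cref{thm:two-level-agent-strategy-large}, an accepted $m$ lies in the region where the honest action $[\mu-x_t,0]^\top$ is optimal, so the agent improves to $(2,m)$.

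For the inductive step I would use the reduction alluded to before \Cref{sec:two-level-small}: each greedy step is a two-level subproblem. The lower bound $a\ge\max\{\mu_{l-1},\delta(l-1)/(1-\gamma)\}$ in the bisection does the key work here. Because the new threshold sits at or above the natural equilibrium of level $l-1$, the leg-up alone cannot carry the agent across $\mu_l$. This is the same reasoning that rules out gaming in \Cref{thm:multilevel-legup}, and it is also why the analogue of the leg-up region of \Cref{thm:two-level-agent-strategy-small} is avoided.

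Next I would argue that appending a higher level does not break honesty at lower levels. With the value iteration solution computed on the full prefix $\vec\mu\cup\{m\}$ each time, the check is performed against the agent's true optimal policy for the extended game, not a myopic one, so the invariant transfers directly. Alternatively, by monotonicity in $R_l$ of the value function, adding a higher reward only increases the incentive to climb.

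With the invariant in hand, the three constraints of $\mathbf P(M,r)$ follow in turn:
\begin{itemize}
\item \emph{No gaming.} This comes from the $x_l=z$ checks along the trajectory.
\item \emph{Reaching the top level.} At level $L$ the agent stays whenever $f_L>-1$. The condition $a^+(L,\gamma\mu_L+\delta L)>0$ at the last accepted step ensures it keeps restoring its attribute to $\mu_L$ rather than dropping, so $l_t\to L$.
\item \emph{Attribute target.} Since the agent stays at level $L$ by improving up to $\mu_L$ each period, $\liminf x_{t_+}\ge\mu_L\ge M$.
\end{itemize}

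The main obstacle is the claim that the single-trajectory check, made from $x_0=0$ only, certifies behavior for all $x_0\in\mathbb R_{\ge0}$, together with the persistence of honesty at the top level for all future times. For initial states with $x_0$ large, I would use the two-level structure of \Cref{thm:two-level-agent-strategy-large}: such agents coast with zero action until depreciation brings them to a threshold, then improve. This monotone structure, in which gaming regions lie below improvement regions in $x$, is how I would handle general $x_0$. For the top level, I would argue that the state $(L,\gamma\mu_L+\delta(L-1))$ recurs, so the one-step check guarantees a stationary honest cycle.

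Finally, the statement should be read modulo the discretization error bound $\frac{c^+\Delta x}{2(1-\beta)}$ of {\sc ValueIterate}.
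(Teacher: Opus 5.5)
The gap is in your inductive step, where you assert that appending a level ``does not break honesty at lower levels'' because the check is run against the extended game's optimal policy. That misreads what line 11 of \Cref{alg:greedy_thresholds} certifies. Line 9 queries the new policy only at a single stored state of level $l-1$, namely $(l-1,\gamma x_{l-1}+\delta(l-2))$, where $x_{l-1}$ was computed in the previous iteration under the shorter prefix. Line 11 then adds one probe at level $l$. Nothing re-examines levels $1,\dots,l-2$.

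Yet appending $\mu_l$ changes every $W_j$, since the levels are coupled through the continuation terms of \Cref{eq:Wl-fp}. So the agent's optimal actions lower down can change, and it may never arrive at the state that was checked. For instance, it could now game up an earlier rung, or enter level $l-1$ with a lower attribute.

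Your fallback, monotonicity of the value in the top reward, cannot repair this. A more valuable ladder raises the payoff of climbing by \emph{any} means, and gaming is the cheaper means. In case (2) of \Cref{thm:two-level-agent-strategy-large}, the lower end $\mu-r/c^-$ of the gaming region moves down as the reward grows, turning inaction into gaming. \Cref{fig:traj_naive} likewise shows an agent gaming its way up a lucrative ladder.

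Similarly, ``each greedy step is a two-level subproblem'' is only a heuristic. \Cref{thm:two-level-agent-strategy-small,thm:two-level-agent-strategy-large} are proved for $L=2$ and do not describe level $l-1$ of an $l$-level system. They therefore supply neither your inductive step nor the ``gaming regions lie below improvement regions'' structure you invoke for general $x_0$.

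The missing idea is the paper's shortest-path lemma: in $\Theta^{(l)}$, the agent starting at $(1,0)$ reaches level $l$ in the minimum number of steps using improvement only. The proof splits the value into two parts. The first is a path cost $C(\pi)$ up to the entry state $s^*=(l,\gamma\mu_l+\delta(l-1))$, which does not depend on any threshold above $l$. The second is a discounted continuation from the entry state.

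Consider any policy of the extended system that enters level $l$ later, or with a lower attribute $x_T<\gamma\mu_l+\delta(l-1)$ (the signature of gaming at the preceding step). It is dominated by the honest shortest path of $\pi^{(l)}$ followed by the new optimal continuation. This uses two facts: $V_l=W_l-c^+x$ is non-increasing in $x$ because $W_l$ is $c^+$-Lipschitz, and $V(s^*)<0$, since otherwise permanent inaction would be better.

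Only with this in place does the one-state check certify the whole trajectory. The paper then handles general $x_0$ by noting that level-1 dynamics are pure depreciation, so every agent either climbs directly or drifts into the region covered by the lemma.

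Finally, your premise that $\mu_l$ is set only after some candidate passed line 11 fails when $\delta>0$. If every bisection candidate fails, $a$ keeps its initial value $\max\{\mu_{l-1},\delta(l-1)/(1-\gamma)\}$. That value is accepted whenever it exceeds $\mu_{l-1}$ by at least $\varepsilon$, without ever passing the check. The paper's proof makes the same assumption implicitly, so you should state it explicitly.
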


While the analytical bounds in \Cref{thm:multilevel-impossibility,thm:multilevel-legup}(2) guarantee feasibility, \Cref{alg:greedy_thresholds} provides an efficient numerical tool, consistently recovering a  feasible sequence, especially when $\delta = 0$ and \Cref{thm:multilevel-legup} does not apply. In practice, the search algorithm consistently recovers a feasible sequence whenever the problem parameters satisfy the sufficiency conditions derived in \Cref{sec:multilevel}.

\Cref{thm:greedy_alg} enables a complementary analysis of the theoretical results established in \Cref{sec:multilevel}.
In \Cref{app:ablation,app:param-sensitivity}, we use this greedy heuristic to assess how problem parameters affect the highest threshold of a feasible sequence. Our results indicate that higher retention rates ($\gamma$) and agent farsightedness ($\beta$) can expand the range of incentivizable attributes (i.e., the range of $M$), allowing the principal to induce a significantly higher attribute from the agent (see \Cref{fig:appendix_sensitivity_ablation,fig:exp_2_heatmaps}). Furthermore, \Cref{app:max-attr} empirically validates the bound in \Cref{prop:two-level-impossibility}, demonstrating a sharp phase transition where incentivizability vanishes if the gaming cost falls below the critical threshold $(1-\beta\gamma)c^+$ (see \Cref{fig:exp_4}).

\subsection{A Relaxed Objective Function and Experiment
}\label{sec:practice}

To evaluate the practical implications of our model, we simulate a population of agents using the FICO credit score dataset \cite{hardt_equality_2016}.
We model exclusively the initial attribute $x_0$ using the FICO raw score distribution normalized to $[0,10]$. We simulate  a multi-level credit building system where agents seek to move from entry-level credit products to high-reward loans (e.g., mortgages). By designing a progression of thresholds, our model demonstrates how the principal can incentivize customers from subprime to prime status through genuine financial/credit posture improvement rather than temporary ``gaming'' of credit score formulas.

For the principal's utility, we introduce the following utility function by relaxing the hard constraint in $\mathbf{P}(M,r)$:
\begin{align}
    \label{eqn:relaxed_util}
    &U(r,L,\vec{\mu}):=\\&\;\mathbb{E}_{x_0}\sum_{t=0}^\infty\alpha^t\big(\mathbf{1}\{f_{l_t}(z_t^*)=f_{l_t}(x^*_{t_+})\}+\lambda x^*_{t+1} - \xi rl_{t+1}\big)\nonumber
\end{align}
where $\alpha$ is the principal's discount factor and $\lambda,\xi > 0$ are constants. This utility represents a weighted trade-off among the three most desired properties for the principal: classifier robustness against strategic manipulation, agent qualification (as reflected by the limiting attribute), and the cost of implementation. The reward rate, $r$, is explicitly included as a design variable and added as the cost of implementation.

\begin{figure*}[!h]
    \centering
    \begin{subfigure}[b]{0.32\textwidth}
        \centering
        \includegraphics[width=\textwidth]{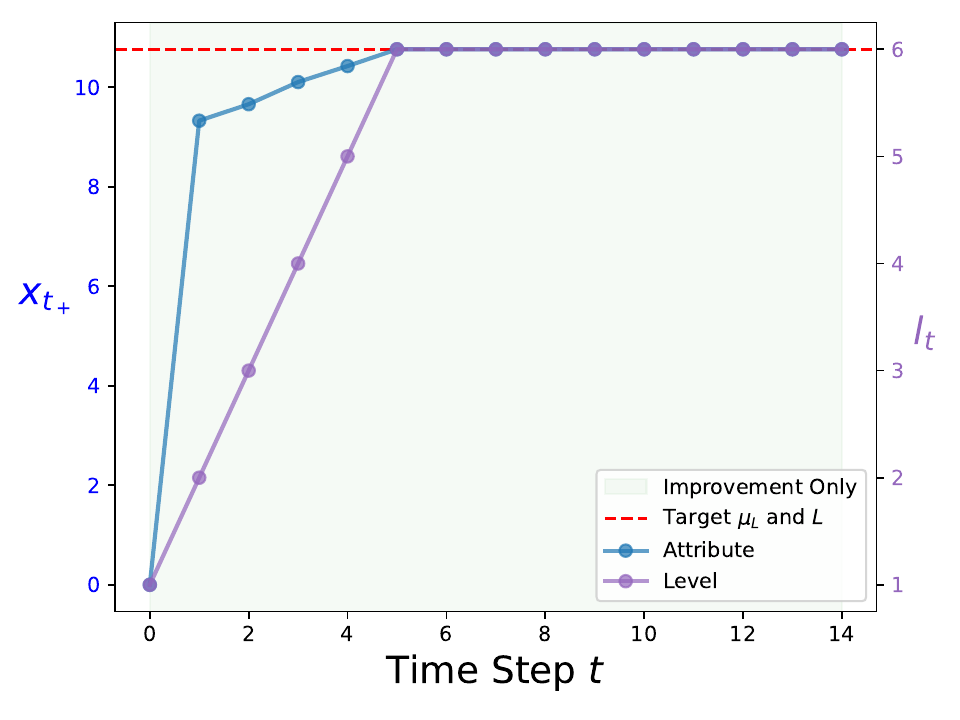}
        \caption{The ideal case.}
        \label{fig:traj_ideal}
    \end{subfigure}
    \hfill
    \begin{subfigure}[b]{0.32\textwidth}
        \centering
        \includegraphics[width=\textwidth]{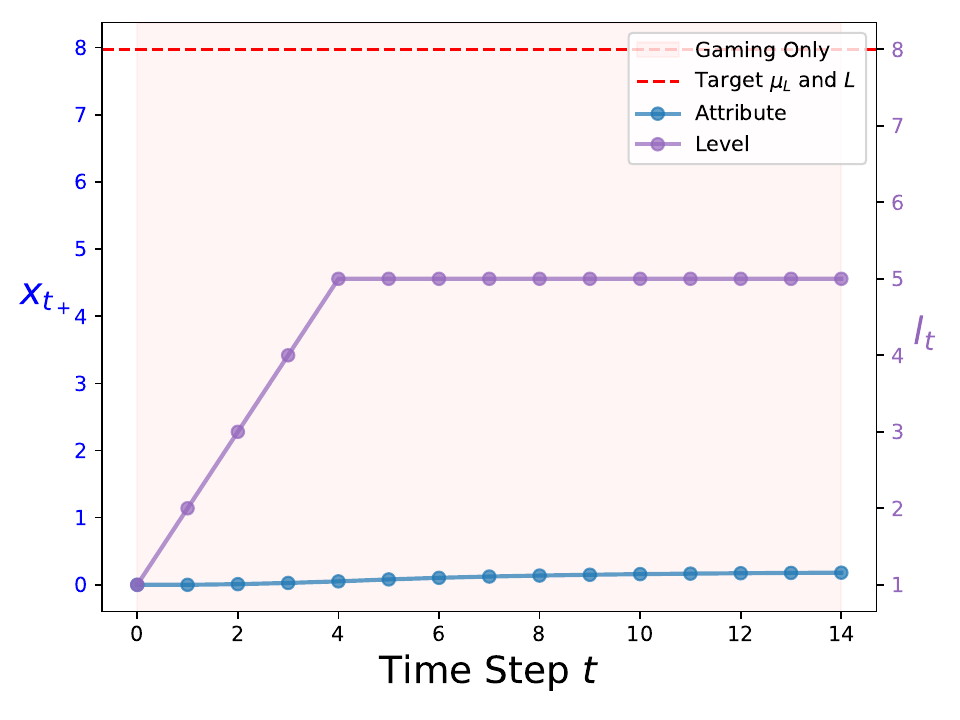}
        \caption{The failure case.}
        \label{fig:traj_failure}
    \end{subfigure}
    \hfill
    \begin{subfigure}[b]{0.32\textwidth}
        \centering
        \includegraphics[width=\textwidth]{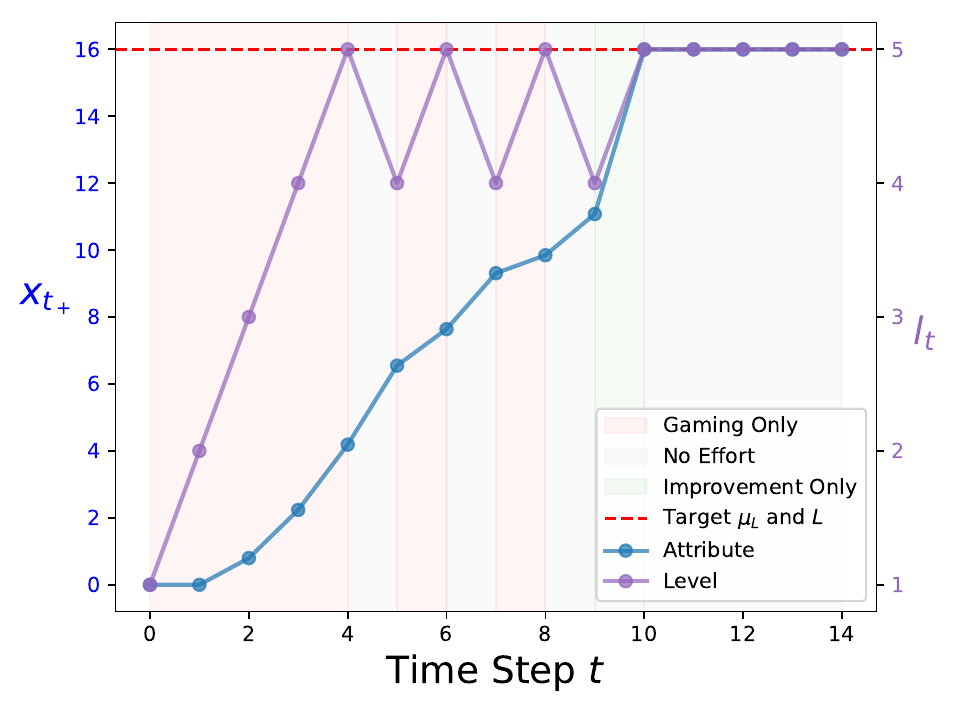}
        \caption{The partially failed case.}
        \label{fig:traj_naive}
    \end{subfigure}
    \caption{Agent state trajectories and strategy types. The dashed red lines mark the largest level ($L$) and threshold ($\mu_L$). (a) Successful incentive alignment leads to monotonic honest improvement. (b) Total failure, where high costs and low rewards prevent genuine improvement efforts despite advancement in levels.  (c) Partial failure where the agent achieves the target, yet mainly through gaming.
    }
    \label{fig:trajectory_outcomes}
    \vspace{-10pt}
\end{figure*}

Similar to the derivation presented in \Cref{sec:two-level}, the features ($z_t^*$) and attributes ($x_{t_+}^*$ and $x_{t+1}^*$) are solutions to the agent's optimal strategy in response to a given design $(r,L,\vec{\mu})$. The solution similarly constitutes a subgame perfect equilibrium and is inherently non-analytical and non-convex. Thus, we employ the Covariance Matrix Adaptation Evolution Strategy (CMA-ES) \cite{hansen_cma-espycma_2025} for black-box optimization of the principal's problem (see \Cref{app:detail-cma-es} for details).

\textbf{Optimal Sequence Design} In this experiment, we determine the principal's optimal classifier sequence across four cost cases $(c^+,c^-)$, shown in \Cref{tab:optimal_policy}. We fix the remaining parameters at $\beta = 0.8, \gamma = 0.8, \delta = 0.01, \alpha =0.95, \xi = 0.01,$ and $\lambda = 5$.
We test levels between $L=2$ and $L=8$ and choose the one yielding the highest principal's utility.
Case I represents the ideal case where $c^+$ is low, and $c^-$ is high. This could be for a task like learning a language, where genuine effort is accessible, but faking fluency is difficult. In this case, the principal successfully incentivizes with 6 levels and moderate reward $r^*=1.80$, achieving a high long-term agent attribute. Case II is when both $c^+$ and $c^-$ are high. This is analogous to a task where both qualifying and faking are difficult. To compensate for the high cost of effort, the principal must provide significant rewards $r^*= 2.51$, to sustain long-term motivation. Case III is when $c^+$ is low, but $c^-$ is much lower, typical of simple tasks that can be easily manipulated. In this case, the principal collapses the levels into two and sets a very high threshold to discourage gaming over repeated time steps. Finally, Case IV represents a worst-case scenario where learning is hard, and cheating is cheap. In this case, the condition in \Cref{asm:global}
is not met, and the mechanism becomes highly inefficient, with the agent best responding by cheating, yielding the lowest utility (107.9), and requiring a long 8-level structure.

\begin{table}[h]
    \centering
    \caption{Optimal design under different cost conditions. }
    \label{tab:optimal_policy}
    \begin{tabular}{rcccccc}
        \toprule
        \textbf{Case} & \textbf{ ($c^+, c^-$)} & \textbf{$L^*$} & \textbf{$r^*$} & \textbf{$\mu_L^*$} & $U^*$  \\
        \midrule
        I & (0.8, 0.7) & 6 & 1.80 & 10.76 & 630.4  \\
        II & (1.5, 1.2) & 7 & 2.51 & 11.92 & 629.9  \\
        III & (0.8, 0.4) & 2 & 4.48 & 11.98 & 628.8  \\
        IV & (1.5, 0.4) & 8 & 0.63 & 7.98 & 107.9  \\
        \bottomrule
    \end{tabular}
    \vspace{-5pt}
\end{table}

\paragraph{Agent's Strategic Behavior under the Optimal Design}
We now examine the agent's state trajectory in the practical scenario where the principal strategically maximizes \Cref{eqn:relaxed_util}.
\Cref{fig:trajectory_outcomes} summarizes the trajectories for an agent with initial state $(l_0, x_0) = (1, 0)$ across different parameter settings (details see \Cref{app:agent-traj-param}).

\Cref{fig:traj_ideal} illustrates the ideal scenario (Case I in \Cref{tab:optimal_policy}), where the agent consistently exerts genuine effort, advancing both its attribute and level at every time step. Here, a non-equally spaced threshold sequence is used, corresponding to the optimization results for Case I in \Cref{tab:optimal_policy}. Notably, trajectories in Cases II and III exhibit patterns similar to \Cref{fig:traj_ideal} (see \Cref{app:agent-traj-param}), suggesting that the principal maximizes utility by inducing consistent improvement up to the highest level and attribute. These outcomes satisfies the hard constraints of $\mathbf{P}(M,r)$ in \Cref{eq:agent-problem}.

In contrast, \Cref{fig:traj_failure} illustrates Case IV in the low gaming cost setting.
Here, the agent fails to reach the target $L=8$, instead plateauing at $l=5$.  Because the principal strategically sets the final threshold significantly higher (i.e., $\mu_L\gg \frac{\delta(L-1)}{1-\gamma}$), the attribute stabilizes at $\frac{\delta(L-1)}{1-\gamma}$, far below $\mu_L$. Consequently, the plateau in this case persists indefinitely without any improvement effort. Recall that Case IV of \Cref{tab:optimal_policy} corresponds to the impossibility regime where $c^-<(1-\beta\gamma)c^+$. This implies that when the system cannot sustain improvement at all (e.g., due to extremely low gaming costs), a strategic principal accepts an ``inferior'' equilibrium where improvement incentives are absent.

Finally, \Cref{fig:traj_naive} presents an instructive yet suboptimal (to the principal's relaxed utility) example where the agent reaches the target state but relies heavily on gaming. Here, the low gaming cost condition in \Cref{thm:multilevel-legup}-(2) is violated, but \Cref{asm:global} holds. Using the ``natural'' threshold sequence $\mu_l = \frac{\delta(l-1)}{1-\gamma}$ for $l \in [5]$ in \Cref{thm:multilevel-legup}-(2), the agent exclusively utilizes low-cost gaming to ascend rapidly, reaching the ceiling ($L=5$) at $t=4$. During this ascent, the agent’s attribute accumulates at an accelerated rate due to the stronger leg-up effect at higher levels. However, this proves insufficient, and thus during $t \in [4, 9]$,
the agent enters an oscillatory pattern: dropping to $L-1$ (via inaction) and immediately gaming back to $L$. This cycle persists until $t=9$, when the attribute has accumulated sufficiently via the leg-up effect to make genuine improvement viable. The agent then exerts a single improvement effort, stabilizing indefinitely at the maximum state $(L, \mu_L)$.

\begin{table*}[!t]
\centering
\caption{Univariate misspecification: utility loss (\%) relative to oracle ($U^*=630.4$) and terminal gaming fraction (in parentheses).}
\label{tab:exp8_misspec}
\setlength{\tabcolsep}{5pt}
\begin{tabular}{lccccc}
\toprule
\textbf{Parameter} & $\varepsilon=-30\%$ & $\varepsilon=-10\%$ & $\varepsilon=0\%$ & $\varepsilon=+10\%$ & $\varepsilon=+30\%$ \\
\midrule
$\delta$ (leg-up)      & 0.0\% (0\%)   & 0.0\% (0\%)  & 0.0\% (0\%) & 0.0\% (0\%)            & 0.0\% (0\%)   \\
$\gamma$ (retention)   & 0.2\% (0\%)   & 0.4\% (0\%)  & 0.0\% (0\%) & 82.4\% (100\%)         & 81.3\% (100\%) \\
$c^-$ (gaming cost)    & 0.1\% (0\%)   & 0.0\% (0\%)  & 0.0\% (0\%) & 2.6\% (0\%)            & 12.0\% (0\%)  \\
$c^+$ (impr.\ cost)    & 15.2\% (99\%) & 1.3\% (0\%)  & 0.0\% (0\%) & 0.0\% (0\%)            & 0.0\% (0\%)   \\
$\beta$ (farsight.)    & 0.1\% (0\%)   & 0.0\% (0\%)  & 0.0\% (0\%) & 32.8\% (28\%)          & 50.9\% (100\%) \\
\bottomrule
\end{tabular}
\end{table*}

\paragraph{Ablation on the Agent Behavior}
We investigate how system parameters influence the agent's trajectory and strategic choice between improvement and gaming (see  \Cref{app:agent-traj-ablation}). In short, while both retention ($\gamma$) and farsightedness ($\beta$) raise steady-state attributes, retention proves to be the dominant factor (\Cref{fig:exp_6_ablation_plots}). Notably, the leg-up parameter ($\delta$) has a two-sided impact: While a moderate leg-up encourages improvement, a strong leg-up ($\delta=0.5$) reduces the fraction of honest effort, as the natural boost provided by the level itself becomes sufficient to sustain the agent's position without active investment (\Cref{fig:exp_6_action}).

\paragraph{Robustness to parameter misspecification}

A practical concern is the principal's potential uncertainty regarding the agent's true parameters $(\beta,\gamma,c^+,c^-,\delta)$. We examine the mechanism's robustness when the principal's belief about each parameter $\kappa\in(\beta,\gamma,c^+,c^-,\delta)$ deviates by a relative error of $\varepsilon\in(-1,1)$, i.e., $\hat{\kappa}=\kappa\cdot(1+\varepsilon)$. The agent responds according to the true parameters (Case~I: $\beta=0.8$, $\gamma=0.8$, $c^+=0.8$, $c^-=0.7$, $\delta=0.01$). For each (parameter, $\varepsilon$) pair, we (i) use CMA-ES under $\hat{\kappa}$ over $L\in\{2,\ldots,8\}$ to obtain $(r^*,\boldsymbol{\mu}^*)$, and (ii) simulate the agent population under the true parameters to measure the realized utility loss and terminal gaming fraction. The results are reported in \Cref{tab:exp8_misspec}.

The oracle at $\varepsilon=0$ achieves $U=630.4$ with $L=6$ levels and $0\%$ gaming. First, $\delta$ misspecification is irrelevant. The mechanism is immune across the full $\pm 30\%$ range. Second, $c^-$ exhibits moderate one-sided sensitivity: overestimating gaming cost by up to $30\%$ incurs at most $12\%$ utility loss with $0\%$ gaming, while underestimation is harmless. Third, three directions are dangerous: (a) overestimating $\gamma$ by as little as $10\%$ causes $82\%$ utility loss and $100\%$ gaming; (b) underestimating $c^+$ by $30\%$ causes the principal to set thresholds unreachable by honest effort at the true cost, collapsing to $99\%$ gaming; (c) overestimating $\beta$ by as little as $10\%$ induces $32.8\%$--$50.9\%$ utility loss and near-full gaming at $30\%$. All three failures share the same root cause, which is that the principal designs an overly ambitious mechanism calibrated to an (imaginary) agent who, compared to the real agent, is better at retention, cheaper to improve, or more patient; such a mechanism causes the real agent to revert to gaming. These results suggest a conservative bias: underestimating $\gamma$, overestimating $c^+$, and underestimating $\beta$ is robust, while the opposite directions cause harm.

\section{Discussion}
\label{sec:discussion}

\paragraph{More General Attribute Dynamics}
Consider a general model $x_{t+1}=g(x_{t_+}, l_{l+1})$, where $g$ captures the combination of depreciation and the leg-up effect. Let $H_l$ denote the Lipschitz constant of $g(\cdot,l)$. This constant represents the \emph{sensitivity} of the attribute dynamics: $H_l<1$ implies contractive dynamics dominated by depreciation, while $H_l>1$ implies dominance of the leg-up effect. Under this setup, \Cref{prop:two-level-impossibility} generalizes as follows

\begin{proposition}\label{prop:impossibility-general-x}
    The agent never chooses an improvement action at level $l$ if $(1-\beta H_l)c^+>c^-$.
\end{proposition}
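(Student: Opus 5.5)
We argue by an exchange (perturbation) argument on the value function, generalizing the reasoning behind \Cref{prop:two-level-impossibility}. Fix any state $(l,x_t)$ with $l_t=l$, and suppose the agent's optimal action has $a_t^+=\epsilon>0$. We build an alternative strategy that weakly dominates, and under the hypothesis strictly dominates.

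\textbf{Step 1: the deviation.} The alternative strategy plays $[a_t^+-\epsilon,\,a_t^-+\epsilon]$ at time $t$. The feature $z_t$ is unchanged, so $f_l(z_t)$, the level $l_{t+1}$ and the reward $R_{l_{t+1}}$ are all unchanged. The time-$t$ cost falls by $(c^+-c^-)\epsilon$.

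The post-action attribute is lower by $\epsilon$. Since $l_{t+1}$ is unchanged, the Lipschitz property of $g(\cdot,l_{t+1})$ bounds the next-period attribute deficit by $H\epsilon$. Here we take the relevant constant to be $H_l$, reading $H_l$ as the constant governing the transition out of a decision made at level $l$; I would adopt whatever indexing the paper intends in $g(x_{t_+},l_{t+1})$.

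At time $t+1$ the deviator copies the original continuation, except that it adds improvement $\Delta\le H_l\epsilon$ to restore exactly the original attribute $x_{t+1}$. From then on every state, feature and level coincide with the original trajectory.

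\textbf{Step 2: comparing payoffs.} The deviation saves $(c^+-c^-)\epsilon$ now and costs at most $\beta c^+H_l\epsilon$ next period. The net gain is therefore at least
\[
\epsilon\bigl[(c^+-c^-)-\beta H_l c^+\bigr]=\epsilon\bigl[(1-\beta H_l)c^+-c^-\bigr]>0
\]
by hypothesis. This contradicts optimality, even after accounting for tie-breaking in favor of improvement, since the inequality is strict. Hence $a_t^{+*}=0$ at level $l$.

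\textbf{Main obstacle.} The delicate point is Step 1's restoration step. Two issues need checking.

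First, if $g(\cdot,l)$ is non-monotone, the deficit could have either sign. If the deviator's attribute is higher than the original, it can simply match the original by using gaming (or doing nothing) to reproduce the same feature. The only remaining concern is that attributes cannot be reduced, but the future value is monotone in the attribute (more attribute never hurts, since it only lowers the effort needed for any feature). So I would first prove monotonicity of the value function in $x$ and then compare values rather than trajectories. Only the case where the deviator's attribute is lower requires the restoring improvement, costing at most $c^+H_l\epsilon$.

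Second, the restoring effort must be feasible, meaning nonnegative improvement. It is, since we only add improvement.

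With value monotonicity in hand, the bound $V(l',x')\ge V(l',x)-c^+(x-x')^+$ follows directly from the "top up by improvement" strategy. This yields the claim cleanly and recovers \Cref{prop:two-level-impossibility} when $H_l=\gamma$.
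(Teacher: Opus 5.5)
Your exchange argument is correct, provided $g(\cdot,l)$ is non-decreasing in the attribute (see the second paragraph). It takes a different and more local route than the paper.

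\textbf{Comparison with the paper.} The paper proves the $\gamma$-case (\Cref{prop:two-level-impossibility}) and gives no separate proof of the general statement; presumably it is meant to follow the same template. That template passes to $W_l(x)=V(l,x)+c^+x$ and writes the one-step objective as a function of the post-improvement attribute $\tilde x$ through the branches $\Phi_l^{(1)},\dots,\Phi_l^{(6)}$. Each branch has slope at least $(1-\beta\gamma)c^+-c^->0$ because each $W_{l'}$ is non-decreasing, so the running minimum over $\tilde x\ge x$ sits at $\tilde x=x$. Your perturbation uses the same two ingredients in primal form. Your top-up bound is exactly the statement that $W_{l'}$ is non-decreasing, and the Lipschitz constant of $g(\cdot,l_{t+1})$ replaces $\gamma$. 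Your route avoids the reparameterization and the six-case bookkeeping, so the role of $H$ is transparent and the argument carries over directly to general $g$. The paper's route instead gives a global picture of the objective in $\tilde x$, which it reuses for the finer characterizations in \Cref{sec:two-level}.

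\textbf{Two points to tighten.} First, indexing. Your argument uses $H_{l_{t+1}}$, the constant of the destination level, so it proves the statement for actions that land the agent in level $l$. If ``at level $l$'' means the current level, you need the hypothesis at every reachable destination $l-1,l,l+1$. The paper's branches involve $W_{l\pm1}$ composed with the level-$(l\pm1)$ transition, so it faces the same issue; in the linear model $H_l\equiv\gamma$ and the distinction disappears.

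Second, your treatment of non-monotone $g$ does not work as written. The value is not monotone in the attribute in general. If $g(\cdot,l)$ decreases on an interval, an agent that makes no improvement can be strictly worse off starting from a slightly higher attribute, because it inherits a lower next-period attribute with no offsetting saving. The floor $z_t\ge x_{t_+}$ can also force a different classification outcome. Your justification (``it only lowers the effort needed for any feature'') overlooks that the attribute itself enters $g$.

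You do not need this lemma. Assume $g(\cdot,l)$ is non-decreasing. The paper does so implicitly: its $W$-argument, transplanted to general $g$, needs $W_{l'}\circ g(\cdot,l')$ to be non-decreasing, and all its examples satisfy this. Then the deviator's next-period attribute is always the lower one, only the top-up case occurs, and your proof is complete.
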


\Cref{prop:impossibility-general-x} reveals that the ``impossibility region'' depends fundamentally on the sensitivity of the attribute within its temporal transition rule $g$, which depends on both depreciation and the leg-up effect---the latter influences $H_l$ through the level index $l$. This is in contrast to \Cref{prop:two-level-impossibility} where the leg-up factor does not appear in the condition.
\Cref{prop:impossibility-general-x} suggests that higher sensitivity implies higher chances of a system being improvement incentivizable.

This sensitivity derives from both depreciation and the leg-up effect, the latter of which influences $H_l$ only through the level $l$ and is absent in \Cref{prop:two-level-impossibility}.
, where the latter factor is absent in \Cref{prop:two-level-impossibility}

Consider, for instance, a transition rule where the leg-up effect is multiplicative, and depreciation is additive, i.e., $g(x,l)=\max\{0,(1+\sqrt{{\delta} l}) x -{\gamma}\}$.
The resulting Lipschitz constant is $H_l=1+\sqrt{{\delta} l}$, which increases in the level $l$ with diminishing returns and is independent of the depreciation.
Thus, as the leg-up effect becomes stronger (i.e., larger $\delta$), the system is more likely to be improvement incentivizable.

Specializing to a level-dependent leg-up $\delta_l$, one can derive infeasibility and feasibility conditions analogous to \Cref{thm:multilevel-legup} (with a closed-form optimal threshold), as well as a concave/saturating $\delta_l$ analysis (\Cref{app:delta-l}).

\paragraph{Scalar vs. Vector Settings}
We have so far focused on the one-dimensional (scalar) setting. However, many of the insights extend naturally to the multi-dimensional (vector) case, in which the attribute becomes a vector $\mathbf{x}_t \in \mathbb{R}^d$, and the transition function $g$ operates component-wise or jointly across dimensions. The sensitivity of the system can be characterized by the largest singular value of the Jacobian of $g$ with respect to $\mathbf{x}_t$. Thus, key concepts such as the leg-up effect and the impossibility region can be similarly analyzed. In particular, the impossibility condition can be generalized by replacing the scalar Lipschitz constant $H_l$ with its vector analogue, capturing the maximal amplification across all directions. This suggests that systems with higher inter-dimensional sensitivity are more likely to be improvement incentivizable, paralleling the scalar case.

\section*{Impact Statement}

Our findings reveal how human or societal factors influence strategic classification outcomes, aiding the design of robust algorithms that account for long-term strategic adaptation. Similar to most strategic classification systems, our model carries the risks of inadvertently disadvantaging groups with high effort costs. However, our definition of ``feasibility'' in \Cref{eq:agent-problem} enforces equal long-term outcomes across groups with disparate endowment (i.e., initial attribute $x_0$).

\bibliography{references}
\bibliographystyle{icml2026}

\newpage
\appendix
\onecolumn
\section{Related Works}\label{app:related-work}
\paragraph{Strategic classification.}
Strategic classification aims at enhancing the robustness of machine learning models under data providers' strategic manipulation~\cite{hardt_strategic_2015,kleinberg_how_2019,milli_social_2019,braverman_role_2020,chen_learning_2020,haghtalab_maximizing_2020,miller_strategic_2020,bechavod_gaming_2021,jin_incentive_2022}. \citet{hardt_strategic_2015} pioneered the research of strategic classification and formulated the problem as a Stackelberg game, where the decision maker leads by publishing a classifier and the agents best respond by strategically manipulating their features. Derivative works have included optimal classifier design in anticipation of the agent's strategic response~\cite{braverman_role_2020,miller_strategic_2020,zrnic_who_2022}, discussions on social welfare~\cite{milli_social_2019,haghtalab_maximizing_2020}, mechanism design that limits manipulations~\cite{kleinberg_how_2019,jin_incentive_2022}, and learning algorithms under imperfect information~\cite{dong_strategic_2017,chen_learning_2020,bechavod_gaming_2021}.

\paragraph{Improvement incentivization.} While a large body of the literature has focused on malicious manipulation/gaming actions~\cite{hardt_strategic_2015,braverman_role_2020}, a comparably large amount of works also consider the existence of both improvement (causal to both true attribute and observable feature) and gaming (causal only to the observable feature) actions~\cite{miller_strategic_2020,bechavod_gaming_2021,ahmadi_classification_2022,jin_incentive_2022}. \citet{jin_incentive_2022} points out that improvement can only be incentivized when there exists an improvement direction that is more cost efficient than all gaming directions and \citet{bechavod_gaming_2021} shows the possibility of learning such direction when it exists. Improvement incentivization becomes much more challenging in a s practical situation when such direction does not exist. \citet{jin_incentive_2022,jin_collaboration_2023} propose to leverage external instruments, such as transfer mechanisms, collaborations, or recourse policies, to address this problem. In contrast, the present paper shows the possibility of incentivizing improvement in the long run, through classifier design only.

\paragraph{Long-term strategic classification.} Long-term impacts of machine learning systems have been extensively studied in the literature. Besides \citet{harris_stateful_2021} discussed in \Cref{intro}, a very different type of sequentiality is considered in \citet{cohen_sequential_2023}, where an agent can bypass a set of classifiers (half planes) sequentially without entering the intersecting region. This study does not invoke inter-temporal incentives in the conventional sense, and more importantly, it does not distinguish between honest and dishonest efforts. \citet{zhang_group_2019,zhang_how_2020} examine the distribution dynamics of an agent under indefinite and repeated interactions with fair classifiers, where agents' attribute distribution is updated each time as a result of responding to the classification outcome. Performative prediction~\cite{perdomo_performative_2021,jin_addressing_2024,jin_performative_2024} further abstracts the agent's response as the sensitivity of the population distribution and studies the model and population dynamics in the long run. However, long-term properties with explicit best-response actions are less commonly studied, with the exception of \citet{xie_automating_2024} that examines the evolving dynamics between machine learning models and the population where human best responses are taken as annotated samples in the next round of interaction and \citet{zrnic_who_2022} that characterizes the equilibrium outcome of the repeated Stackelberg game between the decision maker and the strategic agent.
All works discussed above assume a singular effect of the agent's actions (i.e., either causal/improvement or non-causal/gaming to the agent's attribute), while our work models both improvement and gaming actions explicitly in the study of long-term properties. \rev{In a companion work, \citet{huang_sequential_2026} extend the present model to a stochastic setting where classification outcomes are random.}

\paragraph{Self-selection in a sequence of classifiers.} This stream of literature also involves the notion of sequencing and level advancement, but focuses primarily on the voluntary participation or ``self-selection'' problem: whether the agent is incentivized to complete the sequence of tests rather than opting out in the middle of the process. In particular, \citet{zhang_classification_2021} study the optimal design of test sequences and acceptance rules to incentivize agents to participate. In their model, the agent's actions are binary choices—taking the next test or dropping out—rather than the exertion of effort to change their attributes. \citet{horowitz_classification_2024} focus on the learning perspective of this self-selection problem, while the agent's actions remain binary.

\section{Preliminary: Formulation of the Bellman Equation}\label{app:prelim}

Let $V_l(x)=V(l,x)$ denote the agent's minimum \emph{negative utility} (i.e., the negative of the objective in \Cref{eq:agent-problem}) if it starts at level $l$ with attribute $x$. By the Markov property, $V(l,x)$ must satisfy the following Bellman equation
\begin{equation*}
    V(l,x) = \min_{\vec{a}\geq\vec{0}}\;\vec{c}^\top\vec{a}-r\big(l+f_l(x+a^++a^-)-1\big)+\beta V\bigg(l+f_l(x+a^++a^-),\;\gamma (x+a^+)+\delta\big(l+f_l(x+a^++a^-)-1\big)\bigg)
\end{equation*}
where $\vec{a}:=[a^+,a^-]^\top$ is the action vector. We introduce the following reparameterization: $\tilde{x}:=x+a^+$ and $u:=a^-$, and thus rewrite the Bellman equation as
\begin{equation}\label{eq:v}
    V(l,x) = -c^+x + \min_{\tilde{x}\geq x,\;u\geq 0}\;c^+\tilde{x} + c^-u - r\big(l+f_l(\tilde{x}+u)-1\big) + \beta V\bigg(l+f_l(\tilde{x}+u),\;\gamma \tilde{x}+\delta\big(l+f_l(\tilde{x}+u)-1\big)\bigg).
\end{equation}
Define $W_l(x):=V(l,x)+c^+x$. The Bellman equation is equivalent to the following fixed-point problem in $W$.
\begin{equation}\label{eq:w}
    W_l(x) = \min_{\tilde{x}\geq x,\;u\geq 0}\;(1-\beta\gamma)c^+\tilde{x} + c^-u - (r+\beta c^+\delta)\big(l+f_l(\tilde{x}+u)-1\big) + \beta W_{l+f_l(\tilde{x}+u)}\bigg(\gamma \tilde{x}+\delta\big(l+f_l(\tilde{x}+u)-1\big)\bigg).
\end{equation}
It is easy to verify that $\{W_l\}_{l\in[L]}$ are \textbf{continuous} and \textbf{non-decreasing}.

Inserting the expression of $f_l$ and expand \Cref{eq:w}, we obtain the detailed expression for $W_l$ as follows:
\begin{equation}\label{eq:Wl-fp}
W_l(x)=
    \begin{cases}
        \min\limits_{\substack{\tilde{x}\geq x,u\geq 0 \\ \tilde{x}+u< \mu_l}}\;(1-\beta\gamma)c^+\tilde{x} + c^-u - (r+\beta c^+\delta)(l-2) + \beta W_{l-1}(\gamma x+\delta(l-2)) \\

        \min\limits_{\substack{\tilde{x}\geq x,u\geq 0 \\ \mu_l\leq \tilde{x}+u< \mu_{l+1}}}\;(1-\beta\gamma)c^+\tilde{x} + c^-u - (r+\beta c^+\delta)(l-1) + \beta W_{l}(\gamma x+\delta (l-1)) \\

        \min\limits_{\substack{\tilde{x}\geq x,u\geq 0 \\  \tilde{x}+u\geq \mu_{l+1}}}\;(1-\beta\gamma)c^+\tilde{x} + c^-u - (r+\beta c^+\delta)l + \beta W_{l+1}(\gamma x+\delta l) \\
    \end{cases}
\end{equation}
where the first and third cases do not exist for $l=1$ and $l=L$, respectively.

We first minimize each case in \Cref{eq:Wl-fp} w.r.t. the gaming action $u\geq 0$ and obtain $W_l(x)=\min_{\tilde{x}\geq x}\Phi_{l}(\tilde{x}\mid W)$ where
\begin{multline}\label{eq:phi-l}
\Phi_l(\tilde{x}\mid W) = \textrm{for each case of $\tilde{x}$, the minimum of the corresponding block} \\
    \left\{\begin{array}{lrl}
        \Phi_l^{(1)}(\tilde{x}):= & (1-\beta\gamma)c^+\tilde{x} - (r+\beta c^+\delta)(l-2)+\beta W_{l-1}(\gamma\tilde{x}+\delta(l-2)) & \multirow{5}{*}{\textrm{if } $\tilde{x}<\mu_l$} \\
        && \\
        \Phi_l^{(2)}(\tilde{x}):= &  \left[(1-\beta\gamma)c^+-c^-\right]\tilde{x} +c^-\mu_l - (r+\beta c^+\delta)(l-1)+\beta W_{l}(\gamma\tilde{x}+\delta (l-1))   & \\
        && \\
        \Phi_l^{(3)}(\tilde{x}):=& \left[(1-\beta\gamma)c^+-c^-\right]\tilde{x} +c^-\mu_{l+1} - (r+\beta c^+\delta) l+\beta W_{l+1}(\gamma\tilde{x}+\delta l) &  \\
        && \\
        \hdashline \\
        \Phi_l^{(4)}(\tilde{x}):= & (1-\beta\gamma)c^+\tilde{x} - (r+\beta c^+\delta)(l-1) + \beta W_l(\gamma \tilde{x}+\delta (l-1))  & \multirow{3}{*}{\textrm{if } $\mu_l\leq \tilde{x}<\mu_{l+1}$} \\
        && \\
        \Phi_l^{(5)}(\tilde{x}):= & \left[(1-\beta\gamma)c^+-c^-\right]\tilde{x} +c^-\mu_{l+1} - (r+\beta c^+\delta) l+\beta W_{l+1}(\gamma\tilde{x}+\delta l) & \\
        && \\
        \hdashline  \\
        \Phi_l^{(6)}(\tilde{x}):= & (1-\beta\gamma)c^+\tilde{x} - (r+\beta c^+\delta)l + \beta W_{l+1}(\gamma \tilde{x}+\delta l) & \textrm{if }\tilde{x}\geq \mu_{l+1}
    \end{array}\right.
\end{multline}
    where $\Phi_1$ does not have $\Phi_1^{(1)}$, $\Phi_1^{(2)}$, and $\Phi_1^{(3)}$, and $\Phi_L$ does not have $\Phi_l^{(3)}$, $\Phi_l^{(5)}$, and $\Phi_l^{(6)}$. Each of the functions $\Phi_l^{(1)}$ to $\Phi_l^{(6)}$ represents a behavior pattern of the agent. For example, for $x<\mu_l$, $\Phi_l^{(2)}(x)$ represents the agent's value if it improves (insufficiently) to a point $\tilde{x}$ less than $\mu_l$, and uses gaming actions to bridge the remaining gap to the threshold $\mu_l$. In some cases, we omit the dependence of $\Phi$ on $W$ without ambiguity.

    For $x<\mu_l$, we sometimes exchange the two minimum operators in $W_l(x)$ ($\Phi_l^{(3)}$ and $\Phi_l^{(5)}$ are combined as they are the same expression):
\begin{equation}\label{eq:W-combined}
    W_l(x)=\min\left\{\min\limits_{x\leq \tilde{x}<\mu_l}\Phi_l^{(1)}(\tilde{x}),\;\min\limits_{x\leq \tilde{x}<\mu_l}\Phi_l^{(2)}(\tilde{x}),\;\min\limits_{x\leq \tilde{x}<\mu_{l+1}}\Phi_l^{(3)}(\tilde{x}),\;\min\limits_{\mu_l\leq \tilde{x}<\mu_{l+1}}\Phi_l^{(4)}(\tilde{x}),\;\min\limits_{\tilde{x}\geq\mu_{l+1}}\Phi_l^{(6)}(\tilde{x})\right\}.
\end{equation}

The optimal improvement action is determined by
\begin{equation}\label{eq:a-plus-l}
    (a^+)^*(l,x)=\sup\left(\argmin_{\tilde{x}\geq x}\;W_l(\tilde{x}) - x\right) = \sup\left(\argmin_{\tilde{x}\geq x}\;\Phi_l(\tilde{x}\mid W) - x\right).
\end{equation}
The optimal gaming action depends on which one of $\Phi_l^{(1)}$ to $\Phi_l^{(6)}$ equals $W_l$. For instance, if $W_l(x)=\Phi_l^{(5)}(\mu_{l+1}-\varepsilon)$ for some $\varepsilon>0$, the optimal improvement action is $\mu_{l+1}-\varepsilon-x$ and the gaming action is $\varepsilon$. In this scenario, the agent always reaches the promotion threshold $\mu_{l+1}$, partitioning the total effort between $\mu_{l+1}-\varepsilon-x$ in improvement and $\varepsilon$ in gaming.

\subsection{Proof of \Cref{prop:two-level-impossibility}}
When $(1-\beta\gamma)c^+>c^-$, it directly follows that all functions from $\Phi_l^{(1)}$ to $\Phi_l^{(6)}$ in \Cref{eq:phi-l} are strictly increasing. This implies $\argmin_{\tilde{x}\geq x}\;W_l(\tilde{x})=x$ for any $x$, and, according to \Cref{eq:a-plus-l}, $(a^+)^*(l,x)\equiv0$ for any $l$ and $x$.
\section{The Agent's Optimal Strategy in the Two-Level Case}\label{app:proof-3}
\subsection{Notation Simplification}
For $L=2$, we have $\Phi_1(x\mid W) = \Phi_2(x\mid W)$ for all $x \geq 0$, implying $W_1 = W_2$, $a^+(1,\cdot)=a^+(2,\cdot)$, and $a^-(1,\cdot)=a^-(2,\cdot)$. For brevity, we drop the dependence on the level in this section and adopt the notations $W(x) := W_1(x)$, $a^+(x):=a^+(1,x)$, and $a^-(x):=a^-(1,x)$.
From \Cref{app:prelim}, we obtain the fixed-point relation regarding $W$ as follows
\begin{equation}\label{eq:two-level-w}
    W(x) = \min_{\tilde{x}\geq x,\;u\geq 0}\;(1-\beta\gamma)c^+\tilde{x} + c^-u - (r+\beta c^+\delta )\mathbf{1}\{\tilde{x}+u\geq \mu\} + \beta W(\gamma\tilde{x} + \delta\mathbf{1}\{\tilde{x}+u\geq\mu\}),
\end{equation}
and the $\Phi$ function can be written as
\begin{equation}\label{eq:two-level-min-u}
    \Phi(\tilde{x})=(1-\beta\gamma)c^+\tilde{x} + \begin{cases}
        -(r+\beta c^+\delta) + \beta W(\gamma\tilde{x} + \delta ) & \tilde{x}\geq \mu \\
        c^-(\mu-\tilde{x})-(r+\beta c^+\delta) + \beta W(\gamma\tilde{x} + \delta ) & \tilde{x}<\mu\;\&\;(\star) \\
        \beta W(\gamma\tilde{x}) & \tilde{x}<\mu\;\&\;\neg (\star),
    \end{cases}
\end{equation}
where $(\star)$ stands for the condition
\begin{equation}\label{eq:two-level-cond}
    c^-(\mu-\tilde{x}) + \beta\left[W(\gamma\tilde{x}+\delta)-W(\gamma\tilde{x})\right] \leq r + \beta c^+\delta.
\end{equation}
Notice that \Cref{eq:two-level-min-u} also implies that
\begin{equation}\label{eq:find-u-minus}
    u^*=(a^-)^*=(\mu-\tilde{x})\mathbf{1}\{\tilde{x}<\mu\;\&\;(\star)\},
\end{equation}
meaning only in the second case, the agent would select a non-zero gaming action whose magnitude equals $\mu-\tilde{x}$.

\subsection{Overview of the Proofs of \Cref{thm:two-level-agent-strategy-small,thm:two-level-agent-strategy-large}}\label{app:two-level:preproof}
Both proofs of \Cref{thm:two-level-agent-strategy-small,thm:two-level-agent-strategy-large} involve two steps: first identifying the closed-form expression for $W$, and then finding the optimal action from the expression of $W$.

Define
\begin{equation}\label{eq:G}
    G(x):=\frac{1-\beta\gamma}{1-\beta}c^+x+\frac{c^-(\mu-x)-(r+\beta c^+\delta)}{1-\beta}\mathbf{1}\{(\star)\},\quad x\in[0,\mu].
\end{equation}
Then, it is easy to see that $\Phi(x)=(1-\beta)G(x)+\beta\widetilde{W}(x)$, where $\widetilde{W}(x):=  W(\gamma x+\delta)\mathbf{1}\{(\star)\} + W(\gamma x)\mathbf{1}\{\neg(\star)\}$. The function $\widetilde{W}$ can be viewed as a transformation of $W$ whose domain is {\bf first scaled by $\gamma$ and then partially shifted by $\delta/\gamma$} --- the shifted part corresponds to the region $x\in\{(\star)\}$. \textbf{If the running minimum $x\mapsto \min_{\tilde{x}\geq x}\Phi(x)$ coincides with the curve of $W$, then $W$ must be the solution to the fixed point equation.} \cref{fig:pic-explain} illustrates the steps to verify whether a function $W$ satisfies the fixed-point equation.

\begin{figure}[!htbp]
    \centering
    \begin{subfigure}[b]{0.24\textwidth}
        \includegraphics[width=\textwidth]{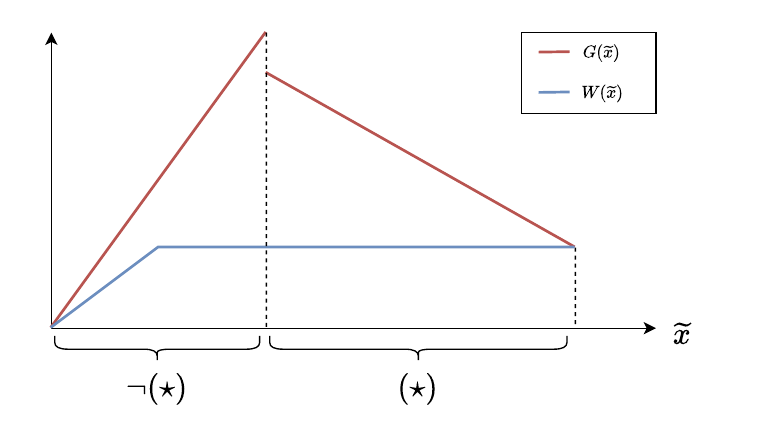}
        \caption{$G(x)$ and $W(x)$ over $[0,\mu]$; the regions partitioned by the condition $(\star)$ are marked as disjoint intervals.}
    \end{subfigure}
    \hfill
    \begin{subfigure}[b]{0.24\textwidth}
        \includegraphics[width=\textwidth]{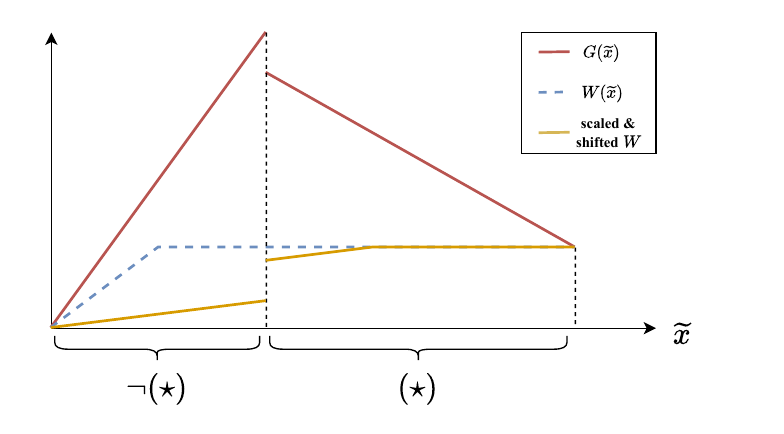}
        \caption{Scale $W(x)$ by $\gamma$ and the curve corresponding to $(\star)$ is shifted to the left by $\delta/\gamma$, yielding the yellow curve.}
    \end{subfigure}
    \hfill
    \begin{subfigure}[b]{0.24\textwidth}
        \includegraphics[width=\textwidth]{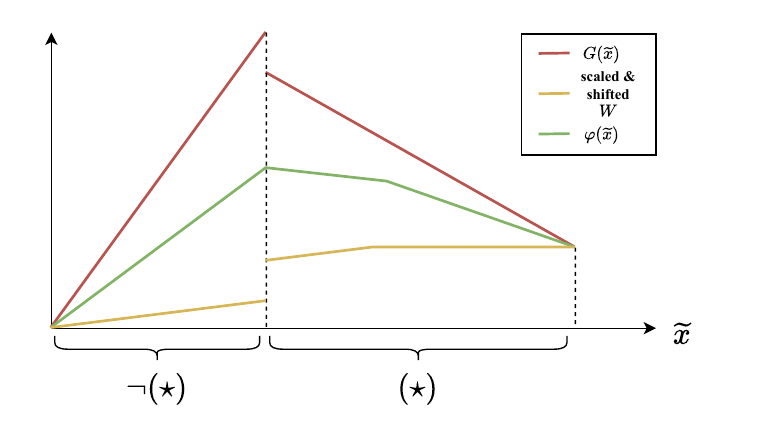}
        \caption{Compute the weighted average between $G$ and scaled \& shifted $W$, yielding the green curve.}
    \end{subfigure}
    \hfill
    \begin{subfigure}[b]{0.24\textwidth}
        \includegraphics[width=\textwidth]{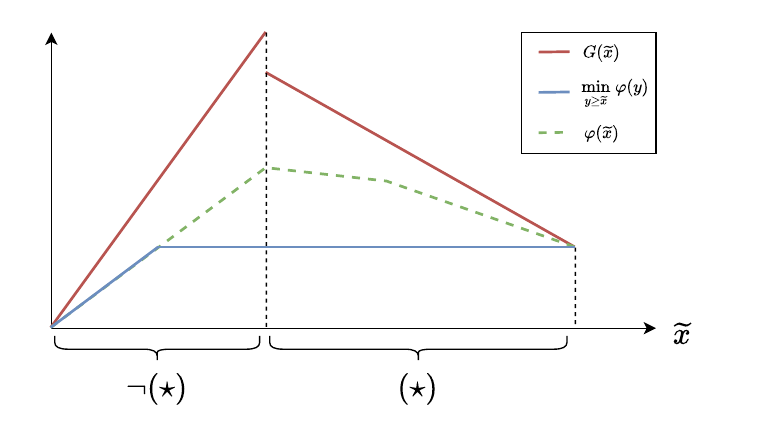}
        \caption{Compute the running minimum of the green curve ($\min_{y\geq x}\Phi(y)$), yielding a curve that coincides with $W(x)$.}
    \end{subfigure}
    \caption{Pictorial interpretation of $W$ that satisfies the fixed-point equation.}
    \label{fig:pic-explain}
\end{figure}

The ``scaled-and-shift'' perspective divides our analysis into two disjoint cases: $\mu<\frac{\delta}{1-\gamma}$ and $\mu\geq \frac{\delta}{1-\gamma}$. In the former case, corresponding to \Cref{thm:two-level-agent-strategy-small}, there exists a small $\varepsilon>0$ such that $\gamma x+\delta > \mu$, $\forall x\in(\mu-\varepsilon,\mu)$. This means $\widetilde{W}$ on $(\mu-\varepsilon,\mu)$ equals to values of $W$ outside of $[0,\mu]$. The other case $\mu\geq \frac{\delta}{1-\gamma}$, corresponding to \Cref{thm:two-level-agent-strategy-large}, however, does not have this issue.

\subsection{Proof of \Cref{thm:two-level-agent-strategy-small}}\label{proof:two-level-mu-small}
\subsubsection{Identifying the fixed-point solution}
\begin{claim}\label{claim:small-mu}
    Define the sequence $\{s_k,\xi_k\}_{k=0}^{K-1}$ where $K:=\big\lceil\log_{\beta\gamma}(1-(1-\beta\gamma)c^+/c^-\big\rceil$ by $s_0=\mu$, $\xi_0=c^+\mu-\frac{r}{1-\beta}$, $s_k=\frac{s_{k-1}-\delta}{\gamma}$, and $\xi_k=\big(c^+-\frac{1-\beta^k\gamma^k}{1-\beta\gamma}c^-\big)(s_{k}-s_{k-1})+\xi_{k-1}$, $\forall k=1,\dots,K-1$. Then, we have
\begin{multline}\label{eq:w-small-mu}
    W(x) = \xi_{K-1}\mathbf{1}\{x<s_{K-1}\} + \sum_{k=1}^{K-1}\left[\left(c^+-\frac{1-\beta^k\gamma^k}{1-\beta\gamma}c^-\right)(x-s_{k})+\xi_k\right]\mathbf{1}\{s_k\leq x < s_{k-1}\} \\
    + \left[c^+(x-s_0) + \xi_0\right]\mathbf{1}\{x\geq s_0\}.
\end{multline}
\end{claim}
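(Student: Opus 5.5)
The plan is to verify that the candidate $W$ in \Cref{eq:w-small-mu} solves the fixed-point equation \Cref{eq:two-level-w}. By contraction, this fixed point is unique: the operator is a $\beta$-contraction in sup norm on functions with the same linear growth as $c^+x$. Since the candidate is continuous and non-decreasing, it lies in this class. Following the pictorial recipe of \Cref{app:two-level:preproof}, I will compute $\Phi(\tilde x)=(1-\beta)G(\tilde x)+\beta\widetilde W(\tilde x)$ for the candidate $W$, take its running minimum $\min_{y\ge x}\Phi(y)$, and check that it reproduces the candidate piece by piece.

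First, the region $x\ge\mu=s_0$. Since $\mu<\frac{\delta}{1-\gamma}$, we have $\gamma x+\delta\ge\gamma\mu+\delta>\mu$, so the next state again lies in the linear piece $c^+(\cdot-s_0)+\xi_0$. Plugging this into the first case of \Cref{eq:two-level-min-u} gives a function of slope $(1-\beta\gamma)c^++\beta\gamma c^+=c^+>0$. Hence the running minimum is attained at $\tilde x=x$ (no action). The constant then solves $\xi_0=(1-\beta\gamma)c^+\mu-(r+\beta c^+\delta)+\beta(c^+(\gamma\mu+\delta-\mu)+\xi_0)$, which yields $\xi_0=c^+\mu-\frac{r}{1-\beta}$, as claimed.

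Second, the region $x<\mu$. The key observation is that the preimage map $s\mapsto(s-\delta)/\gamma$ generates the breakpoints $s_k$. If $\tilde x\in[s_k,s_{k-1})$ and $(\star)$ holds, then $\gamma\tilde x+\delta\in[s_{k-1},s_{k-2})$. Note that $s_k$ is decreasing in $k$ precisely because $\mu<\delta/(1-\gamma)$. I will prove by induction on $k$ that, on $[s_k,s_{k-1})$, the gaming branch of $\Phi$ (second case of \Cref{eq:two-level-min-u}) has slope
\[
(1-\beta\gamma)c^+-c^-+\beta\gamma\Bigl(c^+-\tfrac{1-\beta^{k-1}\gamma^{k-1}}{1-\beta\gamma}c^-\Bigr)=c^+-\tfrac{1-\beta^k\gamma^k}{1-\beta\gamma}c^-,
\]
which matches the claimed slopes. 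These slopes decrease in $k$. Positivity is guaranteed for $k\le K-1$ by the definition $K=\lceil\log_{\beta\gamma}(1-(1-\beta\gamma)c^+/c^-)\rceil$, while for $k\ge K$ the slope would be $\le0$. Consequently, below $s_{K-1}$ the running minimum is flat at the value $\xi_{K-1}$, attained by improving up to $x^\circ:=s_{K-1}$ and gaming the rest, which matches the constant piece. On each $[s_k,s_{k-1})$ with $k\le K-1$, the positive slope makes $\tilde x=x$ optimal, and continuity at each breakpoint yields the recursion for $\xi_k$. One must also check that the overall minimum on $[s_k,s_{k-1})$ is not undercut by a point further to the right, which follows from monotonicity of the assembled piecewise-linear function.

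The main obstacle is showing that $(\star)$ holds on all of $[0,\mu)$, i.e.\ that the non-gaming branch $\beta W(\gamma\tilde x)$ is never better. Equivalently, one needs $c^-(\mu-\tilde x)+\beta[W(\gamma\tilde x+\delta)-W(\gamma\tilde x)]\le r+\beta c^+\delta$. I would bound the left side using the Lipschitz bound $W'\le c^+$ and the fact that $W$ is bounded by $\xi_0$-type constants below $\mu$. Then I would invoke the hypothesis $r\ge\frac{(1-\beta)c^-\delta}{(1-\gamma)(1-\beta\gamma)}$, using $\mu-\tilde x<\frac{\delta}{1-\gamma}$. The hypothesis $c^-\ge\beta\gamma c^+$ is what I expect will make the slope comparisons and the interval-wise monotonicity go through, and I would consult it whenever the induction requires comparing adjacent pieces. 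Finally, reading off the minimizers gives the actions of \Cref{thm:two-level-agent-strategy-small} via \Cref{eq:find-u-minus}. In addition, $x^\circ$ is $\max\{s_{K-1},0\}\in[0,\mu]$, with truncation when $s_{K-1}<0$.
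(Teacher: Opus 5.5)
Your overall architecture is the same as the paper's: verify the candidate through the running minimum of $\Phi$, with breakpoints generated by $s\mapsto(s-\delta)/\gamma$ and the slope recursion on the gaming branch. However, the step you single out as the main obstacle is false, so the proof cannot be completed as planned.

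Condition $(\star)$ need not hold on all of $[0,\mu)$. Because $W$ is non-decreasing, the left side of \Cref{eq:two-level-cond} at $\tilde x=0$ is at least $c^-\mu$, and $c^-\mu>r+\beta c^+\delta$ is compatible with every hypothesis. For instance, take $\beta=\gamma=0.8$, $c^+=1$, $c^-=0.7$, $\delta=1$, $r=0.7/0.36$ (exactly the lower bound) and $\mu=4.9<\frac{\delta}{1-\gamma}$. These satisfy all assumptions, yet $c^-\mu=3.43>r+\beta c^+\delta\approx 2.74$. Here $K=2$, $s_1=4.875$, and $(\star)$ fails on all of $[0,0.97]$.

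More generally, let $\mu\uparrow\frac{\delta}{1-\gamma}$ with $r$ at its lower bound. Then failure of $(\star)$ at $0$ is equivalent to $c^->(1-\beta\gamma)c^+$, which is \Cref{asm:global} itself. This is also why your planned bound cannot close. From $W'\le c^+$ and $\mu-\tilde x<\frac{\delta}{1-\gamma}$ you would need $r\ge\frac{c^-\delta}{1-\gamma}$. The hypothesis only provides $r\ge\frac{1-\beta}{1-\beta\gamma}\cdot\frac{c^-\delta}{1-\gamma}$.

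What is missing is the paper's treatment of the region where $(\star)$ fails, which has three parts.

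First, the left side $h(x)=c^-(\mu-x)+\beta[W(\gamma x+\delta)-W(\gamma x)]$ has right derivative at most $-c^-+\beta\gamma c^+\le 0$. This, rather than the slope comparisons, is where $c^-\ge\beta\gamma c^+$ is used. Consequently $(\star)$ holds exactly on an upper interval $[\omega,\mu]$.

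Second, evaluate $h$ at $\max\{0,s_{K-1}\}$ and use the $r$ hypothesis. This shows that the non-flat part $[\max\{0,s_{K-1}\},\mu]$ of the candidate lies in $[\omega,\mu]$, so your slope induction is legitimate there.

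Third, suppose $\omega>0$; the second step then forces $s_{K-1}\ge\omega>0$, so $W(0)=\xi_{K-1}$. On $[0,\omega)$ one has $\Phi(\tilde x)=(1-\beta\gamma)c^+\tilde x+\beta W(\gamma\tilde x)$, which is increasing with minimum $\Phi(0)=\beta\xi_{K-1}$. This exceeds $\xi_{K-1}=\Phi(s_{K-1})$ because $\xi_{K-1}\le\xi_0=c^+\mu-\frac{r}{1-\beta}<c^+\bigl(\mu-\frac{\delta}{1-\gamma}\bigr)<0$, again by the $r$ hypothesis together with $c^->(1-\beta\gamma)c^+$.

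Without this last comparison, your assertion that the running minimum is flat at $\xi_{K-1}$ below $s_{K-1}$ is unjustified whenever $\omega>0$, and the example above shows that this case does occur.
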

\vspace{-10pt}
\begin{proof}
    Given $\mu<\frac{\delta}{1-\gamma}$, it is easy to see that $\{s_k\}_k$ is a decreasing sequence.

    Let $h(x):=c^-(\mu-x)+\beta\big[W(\gamma x+\delta)-W(\gamma x)\big]$ be the LHS of \Cref{eq:two-level-cond}. With a slight abuse of notation, we use $h'$ and $W'$ to denote the \emph{right} derivative of the respective functions. Then, we have $h'(x) = -c^-+\beta\gamma\big[W'(\gamma x+\delta)-W(\gamma x)\big]$. Using \Cref{eq:w-small-mu}, we obtain
    \begin{equation*}
        \sup_{x\in[0,\mu]}\;h'(x)\leq -c^-+\beta\gamma c^+ < 0
    \end{equation*}
    where the inequality results from the condition $c^->\max\{\beta\gamma c^+,(1-\beta\gamma)c^+\}$. Thus, the LHS of \Cref{eq:two-level-cond} is strictly decreasing, which implies the existence of $\omega\in\mathbb{R}$ such that $\{x\leq \mu\;\&\;\neg(\star)\}=[0,\omega)$ and $\{x\leq \mu\;\&\;(\star)\}=[\omega,\mu]$.

    Next, we establish that the region where $W$ is strictly increasing (or "non-flat") is contained in the second case of \Cref{eq:two-level-min-u}, specifically, $[s_{K-1},\mu]\cap\mathbb{R}_{\geq0}\subseteq[\omega,\mu]$. Let $\tilde{K}$ be the \textbf{first index} in $[K-1]$ such that $s_{\tilde{K}} \leq 0$. If $s_k > 0$ for all $k \in [K-1]$, we set $\tilde{K} = K-1$.
    \begin{equation}
    \begin{aligned}
        h(\max\{0,s_{\tilde{K}}\}) &\leq c^-(\mu-s_{\tilde{K}}) + \beta(\xi_{\tilde{K}-1}-\xi_{\tilde{K}}) \\
        & = c^-(\mu-s_{\tilde{K}}) + \beta\left(c^+-\frac{1-\beta^{\tilde{K}}\gamma^{\tilde{K}}}{1-\beta\gamma}c^-\right)(s_{\tilde{K}-1}-s_{\tilde{K}}) \\
        &= c^-\frac{1-\gamma^{\tilde{K}}}{\gamma^{\tilde{K}}}\left(\frac{\delta}{1-\gamma}-\mu\right) + \beta\left(c^+-\frac{1-\beta^{\tilde{K}}\gamma^{\tilde{K}}}{1-\beta\gamma}c^-\right)\frac{1-\gamma}{\gamma^{\tilde{K}}}\left(\frac{\delta}{1-\gamma}-\mu\right) \\
        &= \underbrace{\left[\frac{\beta(1-\gamma)}{\gamma^{\tilde{K}}}c^++\frac{(1-\beta)(1-\gamma^{\tilde{K}}) - \beta\gamma^{\tilde{K}}(1-\gamma)(1-\beta^{\tilde{K}})}{\gamma^{\tilde{K}}(1-\beta\gamma)}c^-\right]}_{=:\;(A)}\left(\frac{\delta}{1-\gamma}-\mu\right)
    \end{aligned}
    \end{equation}
    We claim that the term $(A)$ is positive. To see that, it is sufficient to show that $(1-\beta)(1-\gamma^{\tilde{K}}) - \beta\gamma^{\tilde{K}}(1-\gamma)(1-\beta^{\tilde{K}})>0$.
    \begin{equation*}
        \begin{aligned}
            (1-\beta)(1-\gamma^{\tilde{K}}) - \beta\gamma^{\tilde{K}}(1-\gamma)(1-\beta^{\tilde{K}}) &= \gamma^{\tilde{K}}(1-\beta)(1-\gamma)\left( \frac{1-\gamma^{\tilde{K}}}{\gamma^{\tilde{K}}(1-\gamma)} - \frac{\beta(1-\beta^{\tilde{K}})}{1-\beta} \right) \\
            &=\gamma^{\tilde{K}}(1-\beta)(1-\gamma)\left(\sum_{i=1}^{\tilde{K}}\frac{1}{\gamma^i}-\sum_{i=1}^{\tilde{K}}\beta^i\right) \\
            &= \gamma^{\tilde{K}}(1-\beta)(1-\gamma)\left(S(1/\gamma)-S(\beta)\right)
        \end{aligned}
    \end{equation*}
    where $S(x)=\sum_{i=1}^{\tilde{K}}x^i$ the partial sum function. Since the partial sum is increasing in $x$, we have $S(1/\gamma)>S(\beta)$ as $\beta,\gamma\in(0,1)$. This implies that $(A)>0$ for any $\tilde{K}$.

    By definition, $s_{\tilde{K}-1}>0$, which is equivalent to $\mu > \frac{1-\gamma^{\tilde{K}-1}}{1-\gamma}\delta$. Then, as $(A)$ is a positive coefficient, we have
    \begin{equation*}
        \begin{aligned}
            h(\max\{0,s_{\tilde{K}}\}) &< \left[\beta c^++\frac{(1-\beta)(1-\gamma^{\tilde{K}}) - \beta\gamma^{\tilde{K}}(1-\gamma)(1-\beta^{\tilde{K}})}{(1-\gamma)(1-\beta\gamma)}c^-\right]\delta \\
            & < \left[\beta c^++\frac{1-\beta}{(1-\gamma)(1-\beta\gamma)}c^-\right]\delta \\
            &\leq r+\beta c^+\delta
        \end{aligned}
    \end{equation*}
    where the last inequality follows from \Cref{thm:two-level-agent-strategy-small}'s condition. Then, with the monotonicity of $h$ we established at the beginning of this proof, we conclude that the region where $W$ is strictly increasing lies entirely in $\{x\leq \mu\;\&\;(\star)\}$.

    Lastly, we show that the running minimum of $\Phi(x)$ in \Cref{eq:two-level-min-u} coincides exactly with $W$.

    \paragraph{Analysis of the case when $x\in[\mu,\infty)$}
    For $x\geq \mu$, we know $\gamma x+\delta \geq \mu$ due to $\mu<\frac{\delta}{1-\gamma}$. Then,
    \begin{equation*}
        \Phi(x) = (1-\beta\gamma)c^+x - (r+\beta c^+\delta) + \beta c^+(\gamma x+\delta-s_0) + \beta \xi_0  = c^+x - \frac{r}{1-\beta} = W(x).
    \end{equation*}
    As $\Phi(x)$ is strictly increasing on $[\mu,\infty)$, we immediately have $\min_{\tilde{x}\geq x}\Phi(\tilde{x})=\Phi(x)=W(x)$, $\forall x\geq \mu$.

    \paragraph{Analysis of the case when $x\in[s_{K-1},\mu]\cap\mathbb{R}_{\geq0}$}
    For $x\in[s_{K-1},\mu]\cap\mathbb{R}_{\geq0}$, we have $x\in[\omega,\mu]$ as we established just now. Then,
    \begin{align}\label{eq:phi-small-mu-star}
        \Phi(x) &= (1-\beta)G(x) + \beta W(\gamma x+\delta)\nonumber \\
            &= \beta\sum_{k=1}^{K-1}\left[\left(c^+-\frac{1-\beta^k\gamma^k}{1-\beta\gamma}c^-\right)(\gamma x+\delta-s_k)+\xi_k\right]\mathbf{1}\left\{s_k\leq \gamma x+\delta<s_{k-1}\right\}  \nonumber \\
            &\quad \quad + \beta\big[c^+(\gamma x + \delta-s_0)+\xi_0\big]\mathbf{1}\{\gamma x + \delta\geq s_0\} + \left[(1-\beta\gamma)c^+-c^-\right]x+c^-\mu - (r+\beta c^+\delta)  \nonumber \\
            &\overset{(a)}{=}\sum_{k=0}^{K-1}\bigg[\bigg(\beta\gamma c^+-\beta\gamma\frac{1-\beta^k\gamma^k}{1-\beta\gamma}c^-\bigg)(x-s_{k+1})+\beta \xi_k+\big[(1-\beta\gamma)c^+-c^-\big](x-s_{k+1}) \nonumber\\
            &\quad\quad +[(1-\beta\gamma)c^+-c^-]s_{k+1}+c^-\mu - (r+\beta c^+\delta)\bigg]\mathbf{1}\left\{s_{k+1}\leq x<s_k\right\}  \nonumber \\
            &\overset{(b)}{=}\sum_{k=0}^{K-1}\left[\left(c^+-\frac{1-\beta^{k+1}\gamma^{k+1}}{1-\beta\gamma}c^-\right)(x-s_{k+1})+\xi_{k+1}\right]\mathbf{1}\left\{s_{k+1}\leq x<s_k\right\},
    \end{align}
    where in (a) and (b) we set temporary variables $s_K:=\frac{s_{K-1}-\delta}{\gamma}$ and $\xi_K:=\left(c^+-\frac{1-\beta^K\gamma^K}{1-\beta\gamma}c^-\right)(s_{K}-s_{K-1})+\xi_{K-1}$. In (b), we utilized the identity \begin{equation}\label{eq:identity}
        \beta\xi_k+[(1-\beta\gamma)c^+-c^-]s_{k+1}+c^-\mu -r -\beta c^+\delta = \xi_{k+1},\;\;\forall k=0,1,\dots,K-1.
    \end{equation}
    To see this, we apply an induction argument.
    For $k=0$, \Cref{eq:identity} already holds because
    \begin{equation*}
        \begin{aligned}
            &\beta\xi_0+[(1-\beta\gamma)c^+-c^-]s_1+c^-\mu-r-\beta c^+\delta \\
            =\;& \beta c^+ s_0-\frac{\beta r}{1-\beta} + [(1-\beta\gamma)c^+-c^-]s_1+c^-\mu-r-\beta c^+\delta\\
            =\;&\beta c^+(s_0-\delta) + [(1-\beta\gamma)c^+-c^-]s_1+c^-\mu - \frac{r}{1-
            \beta} \\
            =&\;\beta\gamma c^+s_1 + [(1-\beta\gamma)c^+-c^-]s_1-(c^+-c^-)\mu + c^+\mu - \frac{r}{1-
            \beta} \\
            =&\;(c^+-c^-)(s_1-s_0) + \xi_0 = \xi_1.
        \end{aligned}
    \end{equation*}
    For the induction step, assume \Cref{eq:identity} holds for 0 to $k$. Then, the induction shows
    \begin{align*}
        &\beta\xi_{k+1} + [(1-\beta\gamma)c^+-c^-]s_{k+2}+c^-\mu -r -\beta c^+\delta \\
        =\; &\beta\xi_{k} + \left(\beta c^+-\beta\frac{1-\beta^{k+1}\gamma^{k+1}}{1-\beta\gamma}c^-\right)(s_{k+1}-s_k) + [(1-\beta\gamma)c^+-c^-]s_{k+2}+c^-\mu -r -\beta c^+\delta \\
        =\;& \beta\xi_{k} + \left(\beta c^+-\beta\frac{1-\beta^{k+1}\gamma^{k+1}}{1-\beta\gamma}c^-\right)\frac{(1-\gamma)\mu-\delta}{\gamma^{k+1}} + [(1-\beta\gamma)c^+-c^-]\left(s_{k+1}+\frac{(1-\gamma)\mu-\delta}{\gamma^{k+2}}\right) +c^-\mu -r -\beta c^+\delta \\
        = \;&\xi_{k+1} + \left(\beta\gamma c^+-\beta\gamma\frac{1-\beta^{k+1}\gamma^{k+1}}{1-\beta\gamma}c^-+(1-\beta\gamma)c^+-c^-\right)\frac{(1-\gamma)\mu-\delta}{\gamma^{k+2}} \\
        =\;&\xi_{k+1} + \left(c^+-\frac{1-\beta^{k+2}\gamma^{k+2}}{1-\beta\gamma}c^-\right)\frac{(1-\gamma)\mu-\delta}{\gamma^{k+2}} \\
        =\;&\xi_{k+1} + \left(c^+-\frac{1-\beta^{k+2}\gamma^{k+2}}{1-\beta\gamma}c^-\right)(s_{k+2}-s_{k+1}) = \xi_{k+2}.
    \end{align*}

    Back to \Cref{eq:phi-small-mu-star}, we immediately see that $\Phi$ coincides with $W$ on $[s_{K-1},\mu]\cap\mathbb{R}_{\geq0}$, which is also strictly increasing.

    \paragraph{Analysis of the case when $x\in[0,s_{K-1})\cap\mathbb{R}_{\geq 0}$} This case is nonempty only when $s_{K-1}>0$. In this case, the function $W(x)$ is constant: $W(x)=\xi_{K-1}$ for all $x\in[0,s_{K-1})$. First, notice that $c^+-\frac{1-\beta^K\gamma^K}{1-\beta\gamma}c^-<0$ by the definition of $K$. Thus, $\Phi$ is strictly decreasing on $[\omega,s_{K-1})\cap\mathbb{R}_{\geq0}$.
    \begin{itemize}
        \item If $\omega\leq 0$, this implies $\Phi$ is strictly decreasing on $[0,s_{K-1})$ with the minimum attained at $x=s_{K-1}$.
        \item If $\omega > 0$, $\Phi$ is first increasing on $[0,\omega)$ and then decreasing on $[\omega,s_{K-1})$. Using the conditions of \Cref{thm:two-level-agent-strategy-small}, we notice
        \begin{equation*}
            \xi_{K-1}<\xi_0=c^+\mu-\frac{r}{1-\beta} < c^+\mu - \frac{c^-\delta}{(1-\gamma)(1-\beta\gamma)}<c^+\left(\mu-\frac{\delta}{1-\gamma}\right) < 0.
        \end{equation*}
        Thus, $\Phi(0) = \beta \xi_{K-1} > \xi_{K-1}=\Phi(s_{K-1})$.
    \end{itemize}
     Consequently, the minimum of $\Phi$ on $[0,s_{K-1})$ is $s_{K-1}$, regardless of the positivity of $\omega$. Therefore, for all $x\in[0,s_{K-1})$, we conclude that $\min_{\tilde{x}\geq x}\Phi(\tilde{x})=\Phi (s_{K-1})=\xi_{K-1}=W(s_{K-1})$.
\end{proof}
The fixed point $W$ identified by \Cref{claim:small-mu} is flat on $[0,\max\{0,s_{K-1}\})$ and strictly increasing on $[\max\{0,s_{K-1}\},\mu)$. The general shape of this function is illustrated in \Cref{fig:mu-small-proof}.
\begin{figure}[!htbp]
    \centering
    \begin{subfigure}[b]{0.45\linewidth}
        \centering
        \includegraphics[width=0.7\linewidth]{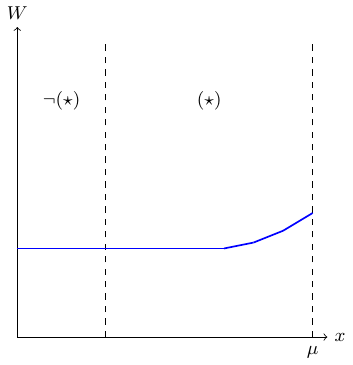}
        \subcaption{$s_{K-1}>0$}
    \end{subfigure}
    \begin{subfigure}[b]{0.45\textwidth}
        \centering
        \includegraphics[width=0.7\linewidth]{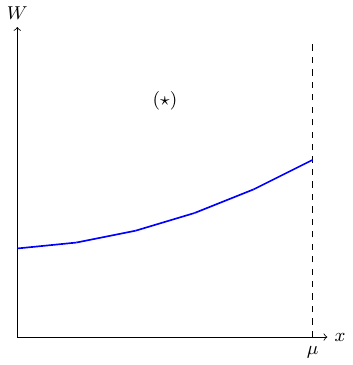}
        \subcaption{$s_{K-1}\leq 0$}
    \end{subfigure}
    \caption{Illustration of the fixed-point $W$ in \Cref{thm:two-level-agent-strategy-small}.}
    \label{fig:mu-small-proof}
\end{figure}

\subsubsection{Deriving the agent's optimal action}
 Recall $\tilde{x}^*$ is the optimizing point to \Cref{eq:two-level-w}, which is also the post-response attribute of the agent. The proof of \Cref{claim:small-mu} indicates
 \begin{equation*}
     \tilde{x}^*=\begin{cases}
     s_{K-1} & x\in[0,\max\{0,s_{K-1}\}) \\ 0 & x\in[\max\{0,s_{K-1}\},\mu)
 \end{cases}
 \end{equation*}
 Setting $x^\circ=\max\{0,s_{K-1}\}$, \Cref{thm:two-level-agent-strategy-small} can be readily recovered: the optimal improvement action can be obtained by reverting the reparameterization, i.e., $(a^+)^*=\tilde{x}^*-x$, and the optimal gaming action can be obtained from \Cref{eq:find-u-minus}.\qed

\subsection{Proof of \Cref{thm:two-level-agent-strategy-large}}\label{sec:case-I-large-mu}
All ranges of $\mu$ considered in this theorem satisfy the condition $\mu > \frac{\delta}{1-\gamma}$. By \Cref{app:two-level:preproof}, $\Phi$ is strictly increasing on $[\mu, \infty)$, which implies that $(a^+)^* \equiv 0$ for all $x \geq \mu$. Furthermore, this condition on $\mu$ ensures that $\gamma x + \delta \in [0, \mu]$ for all $x \in [0, \mu]$. Consequently, it suffices to restrict the analysis in this proof to the interval $[0, \mu]$.

\cref{tab:summary-case-I} summarizes different cases to be discussed, conditions of each case, and the corresponding solutions.
\begin{table}[!htbp]
    \centering
    \caption{Summary of different cases and the solutions of \Cref{thm:two-level-agent-strategy-large}, where $K:=\big\lceil\log_{\beta\gamma}\big(1-(1-\beta\gamma)c^+/c^-\big)\big\rceil$.}
    \vspace{-1pt}
    \resizebox{0.9\columnwidth}{!}{
    \centering
    \begin{tabular}{|c|c|c|c|c|}
        \hline
          Claim & $\mu$ & ``kink'' & $W$ has plateau & shape of $W$  \\
         \hline\hline
         &&&& \multirow{9}{*}{\includegraphics[width=0.2\linewidth]{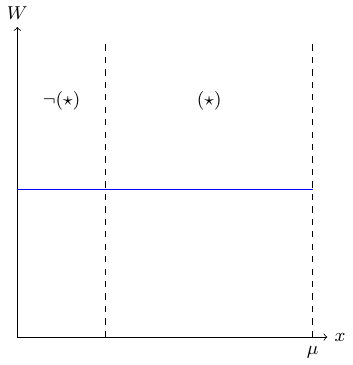}} \\
         &&&&\\
         &&&&\\
         &&&&\\
         \Cref{claim:case-A} & $\big[\frac{\delta}{1-\gamma},\;\frac{r+\beta c^+\delta}{(1-\beta\gamma)c^+}\big)$ & not exist & yes & \\
         &&&&\\
         &&&&\\
         &&&&\\
         &&&&\\
         \hline

         &&&& \multirow{9}{*}{\includegraphics[width=0.2\linewidth]{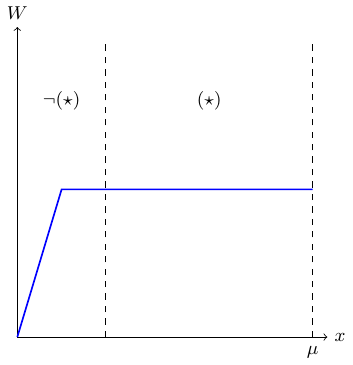}} \\
         &&&&\\
         &&&&\\
         &&&&\\
         \Cref{claim:case-B} & $\big[\max\big\{\frac{\delta}{1-\gamma},\;\frac{r+\beta c^+\delta}{(1-\beta\gamma)c^+}\big\},\;\frac{c^--(1-\beta)c^+}{\beta(1-\gamma)c^+c^-}r+\frac{\delta}{1-\gamma}\big)$ & in $\neg(\star)$ & yes & \\
         &&&&\\
         &&&&\\
         &&&&\\
         &&&&\\
         \hline

         &&&& \multirow{9}{*}{\includegraphics[width=0.2\linewidth]{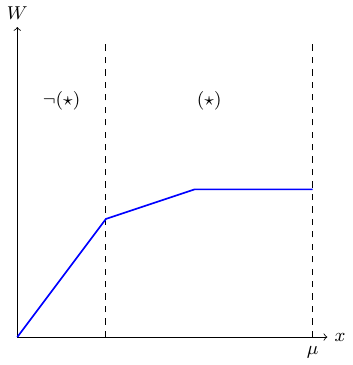}} \\
         &&&&\\
         &&&&\\
         &&&&\\
         \Cref{claim:case-C} & $\big[\frac{c^--(1-\beta)c^+}{\beta(1-\gamma)c^+c^-}r+\frac{\delta}{1-\gamma},\;\frac{r}{(1-\gamma^{K-1})c^-}+\frac{\delta}{1-\gamma}\big)$ & in $(\star)$ & yes & \\
         &&&&\\
         &&&&\\
         &&&&\\
         &&&&\\
         \hline

         &&&& \multirow{9}{*}{\includegraphics[width=0.2\linewidth]{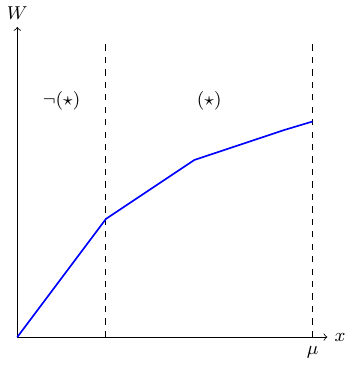}} \\
         &&&&\\
         &&&&\\
         &&&&\\
         \Cref{claim:case-D} & $\big[\max\big\{\frac{c^--(1-\beta)c^+}{\beta(1-\gamma)c^+c^-}r,\;\frac{r}{(1-\gamma^{K-1})c^-}\big\}+\frac{\delta}{1-\gamma},\;\infty\big)$ & not exist & no & \\
         &&&&\\
         &&&&\\
         &&&&\\
         &&&&\\
         \hline
    \end{tabular}
    }
    \label{tab:summary-case-I}
    \vspace{-1em}
\end{table}

\subsubsection{Identifying the fix-point solutions}
\begin{claim}\label{claim:case-A}
    When $\mu\in\big[\frac{\delta}{1-\gamma},\;\frac{r+\beta c^+\delta}{(1-\beta\gamma)c^+}\big)$, we have $W(x)\equiv G(\mu)$.
\end{claim}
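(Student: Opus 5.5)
The plan is guess-and-verify, in the style of the proof of \Cref{claim:small-mu}. I would posit that $W$ is the constant $G(\mu)$ on $[0,\mu]$ and check that the running minimum $x\mapsto\min_{\tilde{x}\geq x}\Phi(\tilde{x})$ built from this candidate reproduces the same constant, as in \Cref{fig:pic-explain}. The Bellman operator behind \Cref{eq:two-level-w} is a $\beta$-contraction in the sup norm, so its fixed point is unique. Any candidate that passes this check is therefore the $W$.

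First I would compute $G(\mu)$. At $\tilde{x}=\mu$ the gaming term in \Cref{eq:two-level-cond} vanishes. Since the candidate $W$ is flat, the bracket $W(\gamma\mu+\delta)-W(\gamma\mu)$ is zero, so $(\star)$ holds trivially. This gives
\[
G(\mu)=\frac{(1-\beta\gamma)c^+\mu-(r+\beta c^+\delta)}{1-\beta}.
\]
The upper end of the range of $\mu$ in \Cref{claim:case-A} is exactly the statement $G(\mu)<0$. The lower end, $\mu\geq\frac{\delta}{1-\gamma}$, guarantees that $\gamma x+\delta\in[0,\mu]$ and $\gamma x\in[0,\mu]$ for every $x\in[0,\mu]$, as noted at the start of \Cref{sec:case-I-large-mu}. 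Hence $\widetilde{W}(x)=G(\mu)$ on $[0,\mu]$ and $\Phi(x)=(1-\beta)G(x)+\beta G(\mu)$ there.

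Under the flat candidate, condition $(\star)$ reduces to $c^-(\mu-x)\leq r+\beta c^+\delta$. So $[0,\mu]$ splits into a $\neg(\star)$ interval $[0,\omega)$ and a $(\star)$ interval $[\omega,\mu]$, with $\omega=\mu-\frac{r+\beta c^+\delta}{c^-}$. It then suffices to show $G(x)\geq G(\mu)$ for all $x\in[0,\mu]$, which gives $\Phi(x)\geq\Phi(\mu)=G(\mu)$.

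1. On $[\omega,\mu]$, $G$ is affine with slope $\frac{(1-\beta\gamma)c^+-c^-}{1-\beta}<0$ by \Cref{asm:global}. Hence $G(x)\geq G(\mu)$, and $\Phi$ is decreasing toward its value $G(\mu)$ at $\mu$.
2. On $[0,\omega)$, $G(x)=\frac{(1-\beta\gamma)c^+x}{1-\beta}\geq0>G(\mu)$, again using the upper bound on $\mu$.

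For $\tilde{x}\geq\mu$, $\Phi$ is increasing by the remark in \Cref{sec:case-I-large-mu}, so nothing beyond $\mu$ lowers the running minimum. Together these show $\min_{\tilde{x}\geq x}\Phi(\tilde{x})=\Phi(\mu)=G(\mu)$ for every $x\in[0,\mu]$, which is the claimed fixed point.

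The step I expect to be the main obstacle is making the verification self-consistent at the right endpoint. The candidate is only specified on $[0,\mu]$, but $\Phi$ on $[\mu,\infty)$ involves $W$ beyond $\mu$. I would extend the candidate as $W(x)=c^+(x-\mu)+G(\mu)$ for $x\geq\mu$, which is what the reparameterization $W=V+c^+x$ suggests when the agent simply coasts. I would then check two things: that this extension is reproduced by $\Phi$ on $[\mu,\infty)$, since $\gamma x+\delta\leq x$ there, and that it is continuous at $\mu$ with $\Phi(\mu)=G(\mu)$. The inequalities themselves are routine once the regions $(\star)$ and $\neg(\star)$ are identified.
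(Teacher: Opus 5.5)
Your treatment of $[0,\mu]$ is the paper's argument. Because $\mu\ge\frac{\delta}{1-\gamma}$, the flat candidate gives $\widetilde W\equiv G(\mu)$ and $\Phi=(1-\beta)G+\beta G(\mu)$. The upper end of the range is exactly $G(\mu)<0$, and $\Phi$ is minimized on $[0,\mu]$ at $\mu$; the paper phrases this as $\Phi(\mu)<\Phi(0)$ between the two candidate local minima.

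The step that would fail is the one you flag as the main obstacle: the extension $W(x)=c^+(x-\mu)+G(\mu)$ for $x\ge\mu$ is not reproduced by $\Phi$. Suppose $\mu>\frac{\delta}{1-\gamma}$. Then the interval $(\mu,\frac{\mu-\delta}{\gamma}]$ is nonempty, and on it $\gamma x+\delta\in[0,\mu]$, so
\[
\Phi(x)=(1-\beta\gamma)c^+x-(r+\beta c^+\delta)+\beta G(\mu)=G(\mu)+(1-\beta\gamma)c^+(x-\mu).
\]
This is increasing, and $\Phi$ is larger still for larger arguments, so the running minimum at $x$ equals this value. It falls short of your extension by $\beta\gamma c^+(x-\mu)>0$. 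Your formula amounts to $V(x)=V(\mu)$ for $x>\mu$, which gives surplus attribute no value. In fact a coasting agent with $x>\mu$ saves $\beta\gamma c^+(x-\mu)$ of improvement in the next period. The formula is self-consistent only in the boundary case $\mu=\frac{\delta}{1-\gamma}$.

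No explicit extension is needed, and the paper uses none. The repair has two ingredients:
\begin{itemize}
\item For $\tilde x\in[0,\mu]$, both $\gamma\tilde x$ and $\gamma\tilde x+\delta$ lie in $[0,\mu]$, so $\Phi$ on $[0,\mu]$ reads only $W|_{[0,\mu]}$.
\item Since the true $W$ is non-decreasing, $\Phi(\cdot\mid W)$ is non-decreasing on $[\mu,\infty)$. Hence for $x\le\mu$ the running minimum is $\min_{\tilde x\in[x,\mu]}\Phi(\tilde x\mid W)$.
\end{itemize}
Therefore $W|_{[0,\mu]}$ is the unique fixed point of the $\beta$-contraction $F\mapsto\min_{\tilde x\in[x,\mu]}\Phi(\tilde x\mid F)$ on bounded functions on $[0,\mu]$. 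Your two-case computation shows that the constant $G(\mu)$ is a fixed point of this restricted map, which finishes the proof. If you prefer a global candidate, set $W:=\Phi(\cdot\mid W)$ on $(\mu,\infty)$, i.e.\ $W(x)=(1-\beta\gamma)c^+x-(r+\beta c^+\delta)+\beta W(\gamma x+\delta)$, which is non-decreasing and self-consistent.
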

\vspace{-10pt}
\begin{proof}
    For $\mu\geq \frac{\delta}{1-\gamma}$, we have $\gamma x+\delta\in[0,\mu]$, $\forall x\in[0,\mu]$. Thus, $\widetilde{W}(x)\equiv G(\mu)$. This implies $\Phi(\mu)$ is a local minimum of $\Phi$, while the other local minimum, if it exists, would be $\Phi(0)$. When $\mu<\frac{r}{(1-\beta\gamma)}c^++\frac{\beta\delta}{1-\beta\gamma}$, we have $\Phi(\mu)<\Phi(0)$, implying $\Phi(\mu)$ is the global minimum of $\Phi$ on $[0,\mu]$. Therefore, $W(x)=\min_{\tilde{x}\geq x}\Phi(\tilde{x})=\Phi(\mu)=G(\mu)$. By uniqueness of the fixed point, we have $W(x)\equiv G(\mu)$.
\end{proof}

\begin{claim}\label{claim:case-B}
    When $\mu\in\big[\max\big\{\frac{\delta}{1-\gamma},\;\frac{r+\beta c^+\delta}{(1-\beta\gamma)c^+}\big\},\;\frac{c^--(1-\beta)c^+}{\beta(1-\gamma)c^+c^-}r+\frac{\delta}{1-\gamma}\big)$, we have $$W(x)=c^+x\mathbf{1}\big\{0\leq x < \frac{G(\mu)}{c^+}\big\} + G(\mu)\mathbf{1}\big\{\frac{G(\mu)}{c^+}\leq x \leq \mu\big\}.$$
\end{claim}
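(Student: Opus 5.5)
The plan is to guess and verify, as in the proof of \Cref{claim:small-mu} and following the recipe of \Cref{app:two-level:preproof}. Take the candidate
\[
W^\dagger(x)=c^+x\,\mathbf{1}\{x<x^\sharp\}+G(\mu)\,\mathbf{1}\{x\geq x^\sharp\},\qquad x^\sharp:=G(\mu)/c^+ .
\]
Substitute $W^\dagger$ into $\Phi$ from \Cref{eq:two-level-min-u} and show that the running minimum $x\mapsto\min_{\tilde x\ge x}\Phi(\tilde x)$ reproduces $W^\dagger$ on $[0,\mu]$. Uniqueness of the fixed point then identifies $W=W^\dagger$.

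Since $\mu\ge\frac{\delta}{1-\gamma}$, we have $\gamma x+\delta\in[0,\mu]$ for every $x\in[0,\mu]$, so only the values of $W^\dagger$ on $[0,\mu]$ are needed, exactly as in \Cref{claim:case-A}.

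\textbf{Step 0: well-posedness.} At $x=\mu$ the gaming gap is zero. If $(\star)$ holds there, then
\[
G(\mu)=\frac{(1-\beta\gamma)c^+\mu-r-\beta c^+\delta}{1-\beta}.
\]
The lower bound $\mu\ge\frac{r+\beta c^+\delta}{(1-\beta\gamma)c^+}$ gives $x^\sharp\ge0$. The condition $x^\sharp\le\mu$ reduces to $\beta(1-\gamma)\mu\le r/c^++\beta\delta$, that is, $\mu\le\frac{r}{\beta(1-\gamma)c^+}+\frac{\delta}{1-\gamma}$. This follows from the stated upper bound because $\frac{c^--(1-\beta)c^+}{c^-}<1$.

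I will also check that $\gamma\mu+\delta\ge x^\sharp$, so that $W^\dagger(\gamma\mu+\delta)=G(\mu)$. Then $h(\mu)=\beta[G(\mu)-W^\dagger(\gamma\mu)]\le\beta c^+\delta\le r+\beta c^+\delta$, so $(\star)$ does hold at $\mu$. Plugging back in gives $\Phi(\mu)=(1-\beta)G(\mu)+\beta G(\mu)=G(\mu)$.

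\textbf{Step 1: locate the $(\star)$ region.} As in \Cref{claim:small-mu}, let $h(x)=c^-(\mu-x)+\beta[W^\dagger(\gamma x+\delta)-W^\dagger(\gamma x)]$. Its right derivative is at most $-c^-+\beta\gamma c^+$. I would first try to show this is negative (or at least that $h$ crosses the level $r+\beta c^+\delta$ once). That gives a threshold $\omega$ with $\neg(\star)$ on $[0,\omega)$ and $(\star)$ on $[\omega,\mu]$.

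The key inequality is $\omega\ge x^\sharp$, meaning the kink of $W^\dagger$ lies in $\neg(\star)$, as \Cref{tab:summary-case-I} indicates. To prove it, evaluate $h(x^\sharp)$ explicitly using $W^\dagger(\gamma x^\sharp)=\beta\gamma c^+x^\sharp/\beta$ and show $h(x^\sharp)>r+\beta c^+\delta$. After substituting $G(\mu)$, this should reduce precisely to $\mu<\frac{c^--(1-\beta)c^+}{\beta(1-\gamma)c^+c^-}r+\frac{\delta}{1-\gamma}$. This is where the claim's upper bound on $\mu$ comes from.

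\textbf{Step 2: verify the running minimum piece by piece.}
\begin{itemize}
\item On $[0,\omega)$, where $\neg(\star)$ holds, $\Phi(x)=(1-\beta\gamma)c^+x+\beta W^\dagger(\gamma x)$. For $x<x^\sharp$ we have $\gamma x<x^\sharp$, so $\Phi(x)=c^+x$, which coincides with $W^\dagger$ and is increasing. For $x\in[x^\sharp,\omega)$ we have $\Phi(x)\ge c^+x\ge G(\mu)$, using $W^\dagger(y)\ge \min\{c^+y,G(\mu)\}$ and a short comparison.
\item On $[\omega,\mu]$, where $(\star)$ holds, $\Phi(x)=[(1-\beta\gamma)c^+-c^-]x+c^-\mu-r-\beta c^+\delta+\beta W^\dagger(\gamma x+\delta)$. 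The first term has negative slope by \Cref{asm:global}. Combined with the at most $\beta\gamma c^+$ slope of the $W^\dagger$ term, and the plateau once $\gamma x+\delta\ge x^\sharp$, this shows $\Phi\ge\Phi(\mu)=G(\mu)$ on $[\omega,\mu]$.
\end{itemize}
Together these give $\min_{\tilde x\ge x}\Phi(\tilde x)=c^+x$ for $x<x^\sharp$ and $=G(\mu)$ for $x\in[x^\sharp,\mu]$, which is $W^\dagger$.

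I expect Step 1 to be the main obstacle: pinning down $\omega$ relative to $x^\sharp$ and confirming the monotone or single-crossing structure of $h$. A secondary obstacle is the lower bound $\Phi\ge G(\mu)$ on $[\omega,\mu]$ near $\omega$, where $\gamma x+\delta$ may still fall below $x^\sharp$. For that part I would fall back on a direct comparison with the endpoint value $\Phi(\mu)$.
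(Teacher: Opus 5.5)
Your skeleton is the paper's: guess $W^\dagger=\min\{c^+x,G(\mu)\}$ on $[0,\mu]$, place the kink in $\neg(\star)$, then check the running minimum branch by branch. However, Step 1 has a genuine gap, exactly where you anticipated trouble. At the kink, $W^\dagger(\gamma x^\sharp+\delta)-W^\dagger(\gamma x^\sharp)=c^+\min\{\delta,(1-\gamma)x^\sharp\}$, so $h(x^\sharp)$ takes two different forms. Your expectation that $h(x^\sharp)>r+\beta c^+\delta$ reduces precisely to the claim's upper bound is right for only one of them.

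If $(1-\gamma)x^\sharp\geq\delta$, the inequality is $c^-(\mu-x^\sharp)>r$, which is indeed equivalent to $\mu<\frac{c^--(1-\beta)c^+}{\beta(1-\gamma)c^+c^-}r+\frac{\delta}{1-\gamma}$.

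If $(1-\gamma)x^\sharp<\delta$, i.e.\ $\mu<\frac{r}{(1-\beta\gamma)c^+}+\frac{\delta}{1-\gamma}$, the situation changes. For $\delta>0$ this sub-range contains the left endpoint of the claim's interval, so it cannot be ignored. Here the leg-up lifts $\gamma x^\sharp$ onto the plateau, so the increment is only $c^+(1-\gamma)x^\sharp<c^+\delta$. A direct computation gives
\[
h(x^\sharp)-(r+\beta c^+\delta)=\frac{c^--(1-\beta\gamma)c^+}{1-\beta}\Bigl(\frac{r+\beta c^+\delta}{c^+}-\beta(1-\gamma)\mu\Bigr).
\]
The second factor is positive iff $x^\sharp<\mu$. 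The sign of the first factor is exactly \Cref{asm:global}, which your Step 1 never invokes; without it the kink is not shown to lie in $\neg(\star)$ on this sub-range. The paper treats this as a separate sub-case (its case (1), $\gamma\omega+\delta\geq\omega$), using $(1-\beta\gamma)c^+<c^-$ together with $\mu<\frac{r+\beta c^+\delta}{\beta(1-\gamma)c^+}$.

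Three further repairs are needed.

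First, your slope bound $-c^-+\beta\gamma c^+$ for $h$ need not be negative under this claim's hypotheses. The condition $c^-\geq\beta\gamma c^+$ is assumed only in \Cref{thm:two-level-agent-strategy-small}, and \Cref{asm:global} does not imply it when $\beta\gamma>1/2$. What makes $h$ strictly decreasing is concavity of $W^\dagger$, as in the remark opening the proof of \Cref{claim:case-C}. Concavity makes $y\mapsto W^\dagger(y+\delta)-W^\dagger(y)$ non-increasing, so $h$ has slope at most $-c^-$.

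Second, in Step 2 the inequality $\Phi(x)\geq c^+x$ on the no-promotion branch fails once $\gamma x>x^\sharp$, since there $\Phi(x)=(1-\beta\gamma)c^+x+\beta G(\mu)<c^+x$. Use instead that this branch is non-decreasing and equals $G(\mu)$ at $x^\sharp$.

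Third, where $(\star)$ holds but $\gamma x+\delta<x^\sharp$, $\Phi(x)=(c^+-c^-)x+c^-\mu-r$ is increasing. So $\Phi\geq G(\mu)$ there must come from the left end, where it amounts to $c^-(\mu-x^\sharp)\geq r$, not from $\Phi(\mu)$. The paper summarizes this as $\Phi$ increasing and then linearly decreasing on $[x^\sharp,\mu]$, so that $\Phi\geq\min\{\Phi(x^\sharp),\Phi(\mu)\}=G(\mu)$.
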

\vspace{-10pt}
\begin{proof}
    We first show that the ``kink'', attained at $x=\omega:=\frac{G(\mu)}{c^+}$, lies in the region $\neg(\star)$. This is further divided into two cases: (1) when $\gamma\omega+\delta\geq \omega$, we notice $\mu\leq
    \frac{r}{(1-\beta\gamma)c^+}+\frac{\delta}{1-\gamma}<\frac{r+\beta c^+\delta}{\beta(1-\gamma)c^+}$. We then plug it into the LHS of \Cref{eq:two-level-cond} at $\omega$:
    \begin{multline*}
        c^-(\mu-\omega) + \beta(G(\mu) - c^+\gamma\omega) = \frac{\beta(1-\gamma)\left[(1-\beta\gamma)c^+-c^-\right]\mu}{1-\beta} - \frac{\left[\beta(1-\gamma)c^+-c^-\right]\left(r+\beta c^+\delta\right)}{(1-\beta)c^+} \\
        \underbrace{>}_{\textrm{since }(1-\beta\gamma)c^+<c^-} \frac{\beta(1-\gamma)\left[(1-\beta\gamma)c^+-c^-\right]}{1-\beta}\cdot\frac{r+\beta c^+\delta}{\beta(1-\gamma)c^+} - \frac{\left[\beta(1-\gamma)c^+-c^-\right]\left(r+\beta c^+\delta\right)}{(1-\beta)c^+} = r+\beta c^+\delta;
    \end{multline*}
    and (2) when $\gamma w+\delta < \omega$, we apply the condition of Case B to the LHS of \Cref{eq:two-level-cond} at $\omega$:
    \begin{multline*}
        c^-(\mu-\omega) + \beta c^+\delta = c^-\cdot\frac{r+\beta c^+\delta}{(1-\beta)c^+} - c^-\cdot\frac{ \beta(1-\gamma)c^+\mu}{(1-\beta)c^+} + \beta c^+\delta \\
        \underbrace{>}_{\textrm{Case B condition}} c^-\cdot\frac{r+\beta c^+\delta}{(1-\beta)c^+} - c^-\cdot\frac{ \beta(1-\gamma)c^+}{(1-\beta)c^+}\left(\frac{c^--(1-\beta)c^+}{\beta(1-\gamma)c^+c^-}r+\frac{\delta}{1-\gamma}\right) + \beta c^+\delta \\
        =\frac{c^-r + \beta c^+c^-\delta - \left[(c^--(1-\beta)c^+)r + \beta c^+c^-\delta\right]}{(1-\beta)c^+} + \beta c^+\delta = r + \beta c^+\delta.
    \end{multline*}
    Second, notice that the LHS of \Cref{eq:two-level-cond} is strictly decreasing. So, $\exists x^\circ\in[0,\mu]$ such that $\{x\leq \mu\; \&\; \neg(\star)\}=[0,x^\circ)$ and $\{x\leq \mu\; \&\; (\star)\}=[x^\circ, \mu]$. The above analysis shows $x^\circ>\frac{G(\mu)}{c^+}$.
    This implies $\Phi(x)=c^+x$, $\forall x\in\big[0,\frac{G(\mu)}{c^+}\big)$.

    We next show that $\Phi(x)\geq G(\mu)$, $\forall x\in\big[\frac{G(\mu)}{c^+},\mu\big]$. Notice $\Phi$ is increasing in $\big[\frac{G(\mu)}{c^+},\max\big\{\frac{G(\mu)-c^+\delta}{\gamma c^+},x^\circ\big\}\big)$ (e.g., if $\frac{G(\mu)-c^+\delta}{\gamma c^+}>x^\circ$, then, according to \Cref{eq:two-level-min-u}, $\Phi(x)=\begin{cases}
        c^+x & \frac{G(\mu)}{c^+}\leq x < x^\circ \\
        (c^+-c^-)x + c^-\mu - r & x^\circ \leq x < \frac{G(\mu)-c^+\delta}{\gamma c^+}
    \end{cases}$ is strictly increasing). Besides, it is easy to verify that $\Phi$ is linearly decreasing in $\big[\max\big\{\frac{G(\mu)-c^+\delta}{\gamma c^+},x^\circ\big\},\mu\big]$. Therefore, we have, for $x\in\big[\frac{G(\mu)}{c^+},\mu\big]$,
    \begin{multline*}
        \Phi(x) = \Phi(x)\mathbf{1}\big\{x\in\big[\frac{G(\mu)}{c^+},\max\big\{\frac{G(\mu)-c^+\delta}{\gamma c^+},x^\circ\big\}\big)\big\} + \Phi(x)\mathbf{1}\big\{x\in\big[\max\big\{\frac{G(\mu)-c^+\delta}{\gamma c^+},x^\circ\big\},\mu\big]\big\} \\
        \geq \Phi(\frac{G(\mu)}{c^+})\mathbf{1}\big\{x\in\big[\frac{G(\mu)}{c^+},\max\big\{\frac{G(\mu)-c^+\delta}{\gamma c^+},x^\circ\big\}\big)\big\} + \Phi(\mu)\mathbf{1}\big\{x\in\big[\max\big\{\frac{G(\mu)-c^+\delta}{\gamma c^+},x^\circ\big\},\mu\big]\big\} = G(\mu).
    \end{multline*}
    Then, it is easy to see $W(x)=c^+x\mathbf{1}\big\{0\leq x < \frac{G(\mu)}{c^+}\big\} + G(\mu)\mathbf{1}\big\{\frac{G(\mu)}{c^+}\leq x \leq \mu\big\}$.
\end{proof}

\begin{claim}\label{claim:case-C}
    Suppose $\mu\in\big[\frac{c^--(1-\beta)c^+}{\beta(1-\gamma)c^+c^-}r+\frac{\delta}{1-\gamma},\;\frac{r}{(1-\gamma^{K-1})c^-}+\frac{\delta}{1-\gamma}\big)$, where $K:=\big\lceil\log_{\beta\gamma}\big(1-(1-\beta\gamma)c^+/c^-\big)\big\rceil$.
    Define the sequence $\{s_k,\xi_k\}_{k=0}^K$ such that $s_0=\xi_0=0$, $s_k=\frac{(1-\gamma)(c^-\mu-r)-(1-\gamma^{k-1})c^-\delta}{\gamma^{k-1}(1-\gamma)c^-}$, $\xi_k=\left(c^+-\frac{1-\beta^{k-1}\gamma^{k-1}}{1-\beta\gamma}c^-\right)(s_{k}-s_{k-1})+\xi_{k-1}$, $ \forall k=1,2,\dots,K-1$,
        and the last elements determined by
        \begin{multline*}
            \xi_K=\min\bigg\{G(\mu),\;\; \left(c^+-\frac{\beta\gamma(1-\beta^{K-1}\gamma^{K-1})}{1-\beta\gamma}c^-\right)\mu - \left(r + \frac{\beta(1-\beta^{K-1}\gamma^{K-1})}{1-\beta\gamma}c^-\delta\right) \\
            +\beta\left(\xi_{K-1} - \left(c^+-\frac{1-\beta^{K-1}\gamma^{K-1}}{1-\beta\gamma}c^-\right)s_{K-1}\right)\bigg\}
        \end{multline*}
        and $s_K=\frac{(1-\beta\gamma)(\xi_K-\xi_{K-1})}{(1-\beta\gamma)c^+-(1-\beta^{K-1}\gamma^{K-1})c^-}+s_{K-1}$.
        Then,
        \begin{equation}
            W(x)=\sum_{k=0}^{K-1}\left[\left(c^+-\frac{1-\beta^k\gamma^k}{1-\beta\gamma}c^-\right)(x-s_k)+\xi_k\right]\mathbf{1}\left\{s_k\leq x<s_{k+1}\right\} + \xi_K\mathbf{1}\left\{s_{K}\leq x\leq  \mu\right\}.
        \end{equation}
\end{claim}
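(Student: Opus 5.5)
The plan is a guess-and-verify argument, the same as in the proofs of \Cref{claim:small-mu,claim:case-B}. We take the candidate piecewise-linear $W$ from \Cref{claim:case-C} and check that the running minimum $x\mapsto\min_{\tilde{x}\geq x}\Phi(\tilde{x}\mid W)$ reproduces it on $[0,\mu]$. Uniqueness of the fixed point of the contraction in \Cref{eq:two-level-w} (modulus $\beta$) then identifies it as the true $W$. Because $\mu\geq\frac{\delta}{1-\gamma}$, both $\gamma x$ and $\gamma x+\delta$ stay in $[0,\mu]$ for $x\in[0,\mu]$, so only the candidate on $[0,\mu]$ is needed. On $[\mu,\infty)$, $\Phi$ is increasing with $\Phi(\mu)=G(\mu)$.

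\textbf{Step 1: locate the region $(\star)$.} Let $h(x):=c^-(\mu-x)+\beta[W(\gamma x+\delta)-W(\gamma x)]$. Every slope of the candidate lies in $[0,c^+]$, and $c^-\ge \beta\gamma c^+$ (implicit in the regime, as in \Cref{claim:case-B}), so $h$ is strictly decreasing. Hence there is a point $\omega$ with $\neg(\star)=[0,\omega)$ and $(\star)=[\omega,\mu]$. The lower end of the $\mu$-range, $\mu\geq\frac{c^--(1-\beta)c^+}{\beta(1-\gamma)c^+c^-}r+\frac{\delta}{1-\gamma}$, is exactly the reverse of the inequality used in case (2) of the proof of \Cref{claim:case-B}. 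It should therefore give $\omega\le s_1$: the first kink now falls inside $(\star)$. Indeed $s_1=\mu-r/c^-$ is the point where pure gaming costs exactly $r$.

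\textbf{Step 2: match $\Phi$ to $W$ piece by piece.} On $[0,\omega)$ we have $\Phi(x)=(1-\beta\gamma)c^+x+\beta W(\gamma x)$. Since $\omega\le s_1$, $\gamma x<s_1$ there, so this is just $c^+x$ (slope $c^+$, $\xi_0=0$). On $(\star)$ we have $\Phi(x)=[(1-\beta\gamma)c^+-c^-]x+c^-\mu-(r+\beta c^+\delta)+\beta W(\gamma x+\delta)$. The affine map $x\mapsto\gamma x+\delta$ sends $[s_k,s_{k+1})$ onto $[s_{k-1},s_k)$, as one checks from the closed form of $s_k$. So on $[s_k,s_{k+1})$ the slope of $\Phi$ is $(1-\beta\gamma)c^+-c^-+\beta\gamma\bigl(c^+-\frac{1-\beta^{k-1}\gamma^{k-1}}{1-\beta\gamma}c^-\bigr)=c^+-\frac{1-\beta^k\gamma^k}{1-\beta\gamma}c^-$. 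This is the recursion in the claim. The intercepts match through an identity analogous to \Cref{eq:identity}, which I would prove by the same induction on $k$. The base case uses $\xi_0=0$ and $s_1=\mu-r/c^-$.

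\textbf{Step 3: the plateau and the last piece.} By the choice of $K$, the slope first becomes negative at index $K$, so $\Phi$ decreases after $s_{K-1}$. There are two candidate minimizers: $\Phi(\mu)=G(\mu)$ (reach $\mu$ by improvement) and the value at the end of the last increasing piece. The running minimum from $s_K$ onward is the smaller of the two, which is exactly the $\min$ defining $\xi_K$. The formula for $s_K$ is the point where the slope-$(K-1)$ line reaches height $\xi_K$. The upper bound $\mu<\frac{r}{(1-\gamma^{K-1})c^-}+\frac{\delta}{1-\gamma}$ is equivalent to $s_{K-1}>0$, so all $K$ pieces are nonempty. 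Finally I must show that $\Phi\ge\xi_K$ on $[s_K,\mu]$ and that $\Phi$ is increasing on $[0,s_K)$, so that the running minimum equals $\Phi$ there.

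\textbf{Main obstacle.} I expect Step 3 to be the hardest part. It requires comparing the two branches of the $\min$ and proving monotonicity on each piece near $s_K$; this is where the exact constants, including the $\beta$-weighted second branch of $\xi_K$, must be verified. Step 1's claim $\omega\le s_1$ is the other delicate inequality. I would handle it by direct substitution at $x=s_1$, mirroring the computation in \Cref{claim:case-B}.
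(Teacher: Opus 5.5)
Your route matches the paper's. You guess the candidate and verify that its running minimum reproduces it. You match the sloped pieces through the identity $\beta\xi_k+[(1-\beta\gamma)c^+-c^-]s_{k+1}+c^-\mu-r-\beta c^+\delta=\xi_{k+1}$, which here starts at $k=1$ because $\gamma s_1+\delta\neq s_0$. Steps 1--2 are essentially right, with two small repairs:
\begin{itemize}
\item The strict decrease of $h$ should come from concavity of the candidate: its slopes are nonincreasing, so $h'\le -c^-$. It should not rest on $c^-\ge\beta\gamma c^+$, which is not a hypothesis here.
\item You need $\omega=s_1$ exactly, not just $\omega\le s_1$. Your substitution does give this, since $\gamma s_1+\delta\le s_1$ yields $h(s_1)=r+\beta c^+\delta$.
\end{itemize}

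\textbf{The genuine gap is Step 3}, where the mechanism is misdescribed rather than just left unchecked.
\begin{itemize}
\item $\Phi(\mu)$ is not $G(\mu)$ in general; it equals $(1-\beta)G(\mu)+\beta W(\gamma\mu+\delta)$.
\item The two branches of the $\min$ defining $\xi_K$ are not values at two different points. Both are candidate values of $\Phi(\mu)$ itself. The first is $G(\mu)$, when $\gamma\mu+\delta$ falls on the plateau. The second is $(1-\beta)G(\mu)+\beta g(\gamma\mu+\delta)$, when it falls on the last sloped piece; here $g$ is the line of slope $c^+-\frac{1-\beta^{K-1}\gamma^{K-1}}{1-\beta\gamma}c^-$ through $(s_{K-1},\xi_{K-1})$.
\item The running minimum just left of $\mu$ is $\Phi(\mu)$. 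So the crux is the self-referential equation $\Phi(\mu)=\xi_K$, since $W(\gamma\mu+\delta)$ itself depends on $\xi_K$ and $s_K$.
\end{itemize}
The paper resolves this by splitting on the position of $\gamma\mu+\delta$:
\begin{itemize}
\item If $\gamma\mu+\delta\ge s_K$, then $g(\gamma\mu+\delta)\ge\xi_K$, so the second branch is at least $(1-\beta)G(\mu)+\beta\xi_K$. This forces $\xi_K=G(\mu)=\Phi(\mu)$.
\item If $\gamma\mu+\delta<s_K$, the second branch is strictly below $(1-\beta)G(\mu)+\beta\xi_K\le G(\mu)$. So $\xi_K$ equals it, and $\Phi(\mu)=\xi_K<G(\mu)$, provided $\gamma\mu+\delta\ge s_{K-1}$.
\end{itemize}
In this second case your assertion $\Phi(\mu)=G(\mu)$ would make the running minimum near $\mu$ differ from $\xi_K$.

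You also misassign the role of the upper bound on $\mu$ and the shape of $\Phi$.
\begin{itemize}
\item The bound $\mu<\frac{r}{(1-\gamma^{K-1})c^-}+\frac{\delta}{1-\gamma}$ is not equivalent to $s_{K-1}>0$. Positivity and monotonicity of the $s_k$ follow from the lower bound, via $s_1\ge\frac{\delta}{1-\gamma}$. The upper bound is equivalent to $\mu>\tilde s_K:=(s_{K-1}-\delta)/\gamma$.
\item $\Phi$ does not turn down at $s_{K-1}$. It keeps the positive index-$(K-1)$ slope on $[s_{K-1},\tilde s_K)$ and is nonincreasing only on $[\tilde s_K,\mu]$.
\end{itemize}
The missing inequality is $s_K\le\tilde s_K$. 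The paper obtains it from $\xi_K-\tilde\xi_K\le\bigl(c^+-\frac{1-\beta^K\gamma^K}{1-\beta\gamma}c^-\bigr)(\mu-\tilde s_K)\le 0$, where $\tilde\xi_K:=g(\tilde s_K)$, using exactly the upper bound and the definition of $K$. Combined with the case analysis above, this inequality gives three things:
\begin{itemize}
\item $\Phi=W$ on all of $[s_{K-1},s_K)$, which is your ``$\Phi$ increasing on $[0,s_K)$''.
\item $\Phi$ rises then falls on $[s_K,\mu]$, so $\Phi\ge\min\{\Phi(s_K),\Phi(\mu)\}=\xi_K$ there.
\item $\gamma\mu+\delta-s_{K-1}=\gamma(\mu-\tilde s_K)>0$, which is exactly the proviso needed in the second case.
\end{itemize}
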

\vspace{-10pt}
\begin{proof}
    Remark that $W$ has decreasing slope and thus the LHS of \Cref{eq:two-level-cond} is strictly decreasing. This implies the existence of $x^\circ\in[0,\mu]$ such that $\{x\leq \mu\; \&\; \neg(\star)\}=[0,x^\circ)$ and $\{x\leq \mu\; \&\; (\star)\}=[x^\circ, \mu]$. We first show that $x^\circ=s_1$.

    When Case C holds, $s_1=\mu-\frac{r}{c^-}\geq \frac{c^--(1-\beta\gamma)c^+}{\beta(1-\gamma)c^+c^-}r+\frac{\delta}{1-\gamma}\geq \frac{\delta}{1-\gamma}$. This implies $\gamma s_1+\delta \leq s_1$. Plugging this into the LHS of \Cref{eq:two-level-cond} at $s_1$, we obtain $c^-(\mu-s_1) + \beta c^+\delta = r+\beta c^+\delta$, where the equality is attained. Thus, we conclude $x^\circ=s_1$. This implies $s_1,\dots,s_K$ are all on the $(\star)$ region on which $G(x)$ is decreasing.

    \paragraph{Analysis of the case when $x\in[0,s_1)$} For $x\in[0,s_1)$, it is easy to see that $\Phi(x)=W(x)=c^+x$.

    \paragraph{Analysis of the case when $x\in[s_1,\mu]$.} For $x\in[s_1,\mu]$, We start with the following identity
    \begin{equation}\label{eq:identity-second}
        \beta\xi_k+[(1-\beta\gamma)c^+-c^-]s_{k+1}+c^-\mu -r -\beta c^+\delta = \xi_{k+1},\;\;\forall k=1,\dots,K-2.
    \end{equation}
    Note that this identity has the same form as \Cref{eq:identity}, differing only in the initial condition. For $k=1$, we have:
    \begin{equation*}
        \beta\xi_1+[(1-\beta\gamma)c^+-c^-]s_2+c^-\mu-r-\beta c^+\delta=c^+s_1+(c^+-c^-)\frac{(1-\gamma)s_1-\delta}{\gamma}=\xi_2.
    \end{equation*}
    We omit the full induction as it can be readily reproduced using the steps in \Cref{eq:identity}.

    Then, using the identity $\gamma s_{k+1}+\delta=s_k$, $\forall k=1,\dots,K-2$, and \Cref{eq:identity-second}, we can derive, for $x\in[s_1,\mu]$,
    \begin{align*}
        \Phi(x) &= (1-\beta)G(x) + \beta W(\gamma x+\delta) \\
            &= \beta\sum_{k=0}^{K-1}\left[\left(c^+-\frac{1-\beta^k\gamma^k}{1-\beta\gamma}c^-\right)(\gamma x+\delta-s_k)+\xi_k\right]\mathbf{1}\left\{s_k\leq \gamma x+\delta<s_{k+1}\right\} + \beta\xi_K\mathbf{1}\left\{s_{K}\leq \gamma x+\delta\leq  \mu\right\} \\
            &\quad \quad + \left[(1-\beta\gamma)c^+-c^-\right]x+c^-\mu - (r+\beta c^+\delta) \\
            &=\sum_{k=0}^{K-2}\bigg[\bigg(\beta\gamma c^+-\beta\gamma\frac{1-\beta^k\gamma^k}{1-\beta\gamma}c^-\bigg)(x-s_{k+1})+\big[(1-\beta\gamma)c^+-c^-\big](x-s_{k+1})\\
            &\quad\quad +\beta \xi_k+[(1-\beta\gamma)c^+-c^-]s_{k+1}+c^-\mu - (r+\beta c^+\delta)\bigg]\mathbf{1}\left\{s_{k+1}\leq x<\tilde{s}_{k+2}\right\} + \Gamma(x) \\
            &=\sum_{k=0}^{K-2}\left[\left(c^+-\frac{1-\beta^{k+1}\gamma^{k+1}}{1-\beta\gamma}c^-\right)(x-s_{k+1})+\xi_{k+1}\right]\mathbf{1}\left\{s_{k+1}\leq x<\tilde{s}_{k+2}\right\} + \Gamma(x),
    \end{align*}
    where $\Gamma(x)$ hides the remaining terms, $\tilde{s}_k=s_k$, $\forall k=1,\dots,K-1$, and $\tilde{s}_K=\frac{s_{K-1}-\delta}{\gamma}$.

    Next, we show that $s_K<\tilde{s}_K$. Let $\tilde{\xi}_K=\xi_{K-1}+\left(c^+-\frac{1-\beta^{K-1}\gamma^{K-1}}{1-\beta\gamma}c^-\right)(\tilde{s}_K-s_{K-1})$. Then, $(\xi_{K-1},\tilde{s}_{K},\tilde{\xi}_K)$ satisfies \Cref{eq:identity}. Notice $(s_K,\xi_K)$ and $(\tilde{s}_K,\tilde{\xi}_K)$ lie on the same increasing linear segment. Thus, it is sufficient to show $\xi_K<\tilde{\xi}_K$.
    \begin{align*}
        \xi_K-\tilde{\xi}_K\leq &\left(c^+-\frac{\beta\gamma(1-\beta^{K-1}\gamma^{K-1})}{1-\beta\gamma}c^-\right)\mu - \left(r + \frac{\beta(1-\beta^{K-1}\gamma^{K-1})}{1-\beta\gamma}c^-\delta\right) + \\
            &\quad \quad \beta\left(\xi_{K-1} - \left(c^+-\frac{1-\beta^{K-1}\gamma^{K-1}}{1-\beta\gamma}c^-\right)s_{K-1}\right)-\tilde{\xi}_K \\
        \overset{\textrm{by \Cref{eq:identity}}}{=}&\left(c^+-\frac{\beta\gamma(1-\beta^{K-1}\gamma^{K-1})}{1-\beta\gamma}c^-\right)\mu - \left(r + \frac{\beta(1-\beta^{K-1}\gamma^{K-1})}{1-\beta\gamma}c^-\delta\right) + \\
            &\quad \quad \beta\left(\xi_{K-1} - \left(c^+-\frac{1-\beta^{K-1}\gamma^{K-1}}{1-\beta\gamma}c^-\right)s_{K-1}\right) - \beta\xi_{K-1} - [(1-\beta\gamma)c^+-c^-]\tilde{s}_{K} - c^-\mu + r + \beta c^+\delta \\
            = & \left(c^+-\frac{1-\beta^K\gamma^K}{1-\beta\gamma}c^-\right)\mu + \beta\left(c^+-\frac{1-\beta^{K-1}\gamma^{K-1}}{1-\beta\gamma}c^-\right)(\delta-s_{K-1}) - [(1-\beta\gamma)c^+-c^-]\tilde{s}_{K} \\
            =&\left(c^+-\frac{1-\beta^K\gamma^K}{1-\beta\gamma}c^-\right)(\mu-\tilde{s}_K).
    \end{align*}
    By definition of $K$, we have $c^+-\frac{1-\beta^K\gamma^K}{1-\beta\gamma}c^-\leq 0$. Besides, since $\mu<\frac{r}{(1-\gamma^{K-1})c^-}+\frac{\delta}{1-\gamma}$ for Case C, we have
    \begin{equation}\label{eq:sk-inequality}
            \mu-\tilde{s}_K=\frac{-(1-\gamma)(1-\gamma^{K-1})c^-\mu+(1-\gamma)r+(1-\gamma^{K-1})c^-\delta}{\gamma^{K-1}(1-\gamma)c^-} > 0,
    \end{equation}
    where the inequality is obtained by plugging in $\mu<\frac{r}{(1-\gamma^{K-1})c^-}+\frac{\delta}{1-\gamma}$.
    Thus, we have $\xi_K<\tilde{\xi}_K$ and $s_K<\tilde{s}_K=\frac{s_{K-1}-\delta}{\gamma}$.

    The analysis so far implies that $\Phi(x)$ coincides with $W(x)$ on $[0,s_{K})$, where both functions are strictly increasing: by definition of $K$, $c^+-\frac{1-\beta^k\gamma^k}{1-\beta\gamma}c^->0$, $\forall k\leq K-1$. However, $\Phi(x)$ keeps increasing in $[s_K,\tilde{s}_K)$, while $W(x)$ is flat on this interval. Using this fact, it is easy to see that $\Phi(x)$ decreases with the same rate as $(1-\beta)G(x)$ over $[\tilde{s}_K,\mu]$.
    As long as $\Phi(\mu)=\Phi(s_K)$, we know $\Phi(x)\geq \Phi(\mu)$, $\forall x\in[s_K,\mu]$, which further concludes $\min_{\tilde{x}\geq x}\Phi(\tilde{x})=W(x)$. Then, $W$ being the fixed point follows from the uniqueness theorem.

    To show $\Phi(\mu)=\Phi(s_K)=W(s_K)$. first notice $\Phi(s_K)=W(s_K)=\xi_K$.
    Denote
    \begin{equation*}
        g(x):=\left(c^+-\frac{1-\beta^{K-1}\gamma^{K-1}}{1-\beta\gamma}c^-\right)(x-s_{K-1})+\xi_{K-1}
    \end{equation*}
    (i.e., the linear segment of $W$ on $[s_{K-1},s_K)$) and it follows $g(s_K)=\xi_K$. Observe that
    \begin{equation*}
        \xi_K=\min\left\{G(\mu), (1-\beta)G(\mu) + \beta g(\gamma\mu+\delta)\right\}.
    \end{equation*}
    When $\gamma\mu+\delta\geq s_K$, by monotonicity of $g$, $\xi_K\geq (1-\beta)G(\mu)+\beta g(\gamma\mu+\delta)\geq (1-\beta)G(\mu)+\beta\xi_K\implies \xi_K=G(\mu)$. In this case, it is easy to see that $\Phi(\mu)=(1-\beta)G(\mu)+\beta W(\gamma x+\delta)=(1-\beta)G(\mu)+\beta\xi_K=\xi_K$.

    When $\gamma\mu+\delta < s_K$, by monotonicity of $g$, $(1-\beta)G(\mu)+\beta g(\gamma\mu+\delta)<(1-\beta)G(\mu)+\beta g(s_K)=(1-\beta)G(\mu)+\beta\xi_K$. This can only occur when $\xi_K=(1-\beta)G(\mu)+\beta g(\gamma\mu+\delta)<G(\mu)$. Additionally, we notice $\gamma\mu+\delta - s_{K-1} = \gamma(\mu-\tilde{s}_K)>0$ by \Cref{eq:sk-inequality}. This means $\gamma\mu+\delta\in[s_{K-1},s_K)$ and we have $\Phi(\mu) = (1-\beta)G(\mu)+\beta W(\gamma\mu+\delta)=(1-\beta)G(\mu)+\beta g(\gamma\mu+\delta)=\xi_K$, which completes the proof.
\end{proof}

\begin{claim}\label{claim:case-D}
    Suppose $\mu\in\big[\max\big\{\frac{c^--(1-\beta)c^+}{\beta(1-\gamma)c^+c^-}r,\;\frac{r}{(1-\gamma^{K-1})c^-}\big\}+\frac{\delta}{1-\gamma},\;\infty\big)$, where $K$ is defined in \Cref{claim:case-C}. Recall the sequence $\{s_k,\xi_k\}_{k=0}^{K-1}$ constructed in \Cref{claim:case-C} (ignoring the last element). Then,
\begin{equation}\label{eq:w-inc}
    W(x) = \sum_{k=0}^{K-1}\left[\left(c^+-\frac{1-\beta^k\gamma^k}{1-\beta\gamma}c^-\right)(x-s_k)+\xi_k\right]\mathbf{1}\big\{s_k\leq x < \min\{s_{k+1},\mu\}\big\}.
\end{equation}
\end{claim}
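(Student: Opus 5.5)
The plan is to follow the same guess-and-verify template used for \Cref{claim:case-C}. Take the candidate $W$ in \Cref{eq:w-inc}, check that the running minimum $x\mapsto\min_{\tilde x\geq x}\Phi(\tilde x\mid W)$ reproduces it on $[0,\mu]$, and then conclude by uniqueness of the fixed point of \Cref{eq:w}.

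Note that $\mu\geq\frac{\delta}{1-\gamma}$. So, as in the proof of \Cref{thm:two-level-agent-strategy-large}, $\Phi$ is increasing on $[\mu,\infty)$ and $\gamma x+\delta\in[0,\mu]$ for every $x\in[0,\mu]$. Hence only $[0,\mu]$ needs to be analyzed.

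\textbf{Step 1: the region $(\star)$.} The candidate $W$ is piecewise linear with slopes $c^+-\frac{1-\beta^k\gamma^k}{1-\beta\gamma}c^-$. For $k\leq K-1$ these slopes lie in $(0,c^+]$ and decrease in $k$. Hence the left-hand side of \Cref{eq:two-level-cond} is strictly decreasing, so there is a single switch point $x^\circ$ with $(\star)$ holding exactly on $[x^\circ,\mu]$.

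To locate it, I would repeat the computation from \Cref{claim:case-C}. Because $\mu\geq\frac{c^--(1-\beta)c^+}{\beta(1-\gamma)c^+c^-}r+\frac{\delta}{1-\gamma}$, we have $s_1=\mu-\frac{r}{c^-}\geq\frac{\delta}{1-\gamma}$, so $\gamma s_1+\delta\leq s_1$. Plugging in gives equality in \Cref{eq:two-level-cond} at $s_1$, so $x^\circ=s_1$. On $[0,s_1)$ this yields $\Phi(x)=c^+x=W(x)$.

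\textbf{Step 2: the region $[s_1,\mu]$.} I would reuse identity \Cref{eq:identity-second}, $\beta\xi_k+[(1-\beta\gamma)c^+-c^-]s_{k+1}+c^-\mu-r-\beta c^+\delta=\xi_{k+1}$, together with the shift relation $\gamma s_{k+1}+\delta=s_k$. Substituting into $\Phi(x)=(1-\beta)G(x)+\beta W(\gamma x+\delta)$ then shows, segment by segment, that $\Phi=W$ on each $[s_{k+1},\min\{s_{k+2},\mu\})$ for $k\leq K-2$. This is exactly the computation in \Cref{claim:case-C}, with the last segment now truncated by $\mu$ rather than by a plateau.

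\textbf{Step 3: where the plateau disappears (main obstacle).} The second lower bound, $\mu\geq\frac{r}{(1-\gamma^{K-1})c^-}+\frac{\delta}{1-\gamma}$, reverses inequality \Cref{eq:sk-inequality}, giving $\mu\leq\tilde s_K=\frac{s_{K-1}-\delta}{\gamma}$. Equivalently, $\gamma\mu+\delta\leq s_{K-1}$. Consequently $\gamma x+\delta$ never reaches the segment indexed $K$, and the segments of $W$ whose indices could make the slope nonpositive are never used. The last segment used has index $K-1$, and its slope $c^+-\frac{1-\beta^{K-1}\gamma^{K-1}}{1-\beta\gamma}c^-$ is positive by the definition of $K$. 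Thus $\Phi$ is strictly increasing on all of $[0,\mu]$ and coincides with $W$ there, so no running-minimum flattening occurs and $\min_{\tilde x\geq x}\Phi(\tilde x)=\Phi(x)=W(x)$.

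The care needed is in the bookkeeping at the boundary. One must show that the truncation $\min\{s_{k+1},\mu\}$ aligns with $\gamma x+\delta$ sweeping only over the indices $0,\dots,K-2$, and one must handle the equality case $\mu=\tilde s_K$. I would treat both by the monotonicity of $s_k$ in $k$ and by continuity of $W$ at the breakpoints. Uniqueness of the fixed point then gives \Cref{eq:w-inc}.
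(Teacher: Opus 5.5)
Your proposal is correct and is essentially the paper's proof. It reuses Case C's identification of the $(\star)$ region as $[s_1,\mu]$ and its segment-by-segment formula for $\Phi$, then uses the reversal of \Cref{eq:sk-inequality}, i.e.\ $\mu\le\tilde{s}_K$, to see that only slopes of index at most $K-1$ occur, so $\Phi$ is strictly increasing on $[0,\mu]$, coincides with $W$ there, and uniqueness of the fixed point finishes. The one piece of wording to tighten is in Step 3: $\gamma\mu+\delta\le s_{K-1}$ keeps $\gamma x+\delta$ in the segments of $W$ with index at most $K-2$ (not merely below index $K$), and this is exactly what keeps every slope of $\Phi$ at index at most $K-1$.
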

\vspace{-10pt}
\begin{proof}
    Denote $\tilde{K}:=\max\{k:s_k<\mu\}$. Notice that $\tilde{K}\leq K-1$ because, when Case D holds, the inequality of \Cref{eq:sk-inequality} is reversed, implying $\mu\leq \tilde{s}_K$. In addition, the following results in the proof of Case C still applies: (1) $\big\{x\leq \mu\;\&\;\neg(\star)\big\}=[0,s_1)$ and $\big\{x\leq\mu\;\&\;(\star)\big\}=[s_1,\mu]$; and (2) for $x\in[0,s_1)$, $\Phi(x)=c^+x$ (which is exactly $W(x)$ on $[0,s_1)$), and for $x\in[s_1,\mu]$,
    \begin{multline}\label{eq:phi-inc}
        \Phi(x)=\sum_{k=0}^{\tilde{K}-2}\left[\left(c^+-\frac{1-\beta^{k+1}\gamma^{k+1}}{1-\beta\gamma}c^-\right)(x-s_{k+1})+\xi_{k+1}\right]\mathbf{1}\left\{s_{k+1}\leq x<{s}_{k+2}\right\} \\
        + \left[\left(c^+-\frac{1-\beta^{\tilde{K}}\gamma^{\tilde{K}}}{1-\beta\gamma}c^-\right)(x-s_{\tilde{K}})+\xi_{\tilde{K}}\right]\mathbf{1}\left\{s_{\tilde{K}}\leq x<\mu\right\}.
    \end{multline}
    By definition of $K$, $c^+-\frac{1-\beta^k\gamma^k}{1-\beta\gamma}c^->0$, $\forall k\leq K-1$. Thus, $\Phi(x)$ is strictly increasing on $[0,\mu]$, and thus $W(x)=\Phi(x)$, $\forall x\in[0,\mu]$. The proof is completed by noticing \Cref{eq:phi-inc} is exactly the same as \Cref{eq:w-inc} on $[s_1,\mu]$.
\end{proof}

\subsubsection{Deriving the agent's optimal action}\label{sec:find-a}
\begin{itemize}
    \item \Cref{thm:two-level-agent-strategy-large}-(1) is a combination of Case A and B, with $\underline{\mu}:=\frac{c^--(1-\beta)c^+}{\beta(1-\gamma)c^+c^-}r$ and $\underline{x}:=\max\big\{0,\;\frac{G(\mu)}{c^+}\big\}$;
    \item \Cref{thm:two-level-agent-strategy-large}-(2) corresponds to Case C, with $\overline{\mu}:=\max\big\{\frac{c^--(1-\beta)c^+}{\beta(1-\gamma)c^+c^-}r,\;\frac{r}{(1-\gamma^{K-1})c^-}\big\}$ and $\overline{x}=s_K$;
    \item \Cref{thm:two-level-agent-strategy-large}-(3) corresponds to Case D.
\end{itemize}
Similar to \Cref{proof:two-level-mu-small}, the optimal improvement action and gaming action can be easily obtained by inverting the reparameterization of $\tilde{x}$ and \Cref{eq:find-u-minus}, respectively.\qed

\section{Principal's Optimal Design}\label{app:proof-4}
We start with the necessary conditions for a threshold sequence $\vec{\mu}$ to be feasible to the problem $\mathbf{P}(M,r)$:
\begin{lemma}\label{lemma:feasible-cond}
    If $\vec{\mu}$ is feasible to $\mathbf{P}(M,r)$, then
    \begin{enumerate}[topsep=-1pt, itemsep=-2pt]
        \item[(1)] (Large Final Threshold) $\mu_L\geq \frac{\delta(L-1)}{1-\gamma}$.
        \item[(2)] (Promotion Region) $W_l(x)\equiv \Phi_l^{(5)}(\mu_{l+1})$, $\forall x\in[\max\{\mu_l,\frac{\delta(l-1)}{1-\gamma}\},\mu_{l+1}]$, $\forall l\in[L-1]$.
        \item[(3)] (Relegation Region) For all $l\in[L-1]\setminus\{1\}$, if $\mu_l\geq\frac{\delta(l-1)}{1-\gamma}$, then $$W_l(x)\equiv \Phi_l^{(5)}(\mu_{l+1}),\quad \forall x\in\big[\min\big\{\gamma\mu_l+\delta(l-1),\max\big\{\mu_{l-1},\frac{\delta(l-2)}{1-\gamma}\big\}\big\},\mu_{l+1}\big].$$ In addition, if $\mu_L\geq\frac{\delta(L-1)}{1-\gamma}$, then $W_L(x)\equiv \Phi_l^{(2)}(\mu_L)$, $\forall x\in[\min\{\gamma\mu_{L-1}+\delta(L-2),\max\{\mu_{L-1},\frac{\delta(L-2)}{1-\gamma}\}\},\mu_L]$.
    \end{enumerate}
\end{lemma}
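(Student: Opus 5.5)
Each part will be argued by contradiction against the three constraints of $\mathbf{P}(M,r)$ (no gaming, $\liminf x_{t_+}\ge M$, $l_t\to L$), using the Bellman decomposition \Cref{eq:Wl-fp}--\Cref{eq:W-combined}. The $\Phi_l^{(i)}$ from \Cref{eq:phi-l} tell us which behaviour is optimal at each state, and \Cref{eq:a-plus-l} together with the gaming rule read off the action.

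\textbf{Part (1).} Suppose $\mu_L<\frac{\delta(L-1)}{1-\gamma}$. Take an agent that reaches level $L$ (which feasibility requires) with attribute at least $\mu_{L-1}$, or close to $\mu_L$. At level $L$ with zero effort, the attribute map $x\mapsto\gamma x+\delta(L-1)$ has fixed point $\frac{\delta(L-1)}{1-\gamma}>\mu_L$. So from any $x$ near $\mu_L$ the attribute drifts upward and stays above $\mu_L$ forever without effort. The only way to reach $L$ without improvement is via gaming from $L-1$. I would exhibit an initial state at level $L-1$ with $x\in\big((\mu_L-\delta(L-1))/\gamma,\mu_L\big)$ and, mimicking the small-threshold analysis of \Cref{thm:two-level-agent-strategy-small}, show that gaming the small gap $\mu_L-x$ strictly beats improving it. Gaming costs $c^-(\mu_L-x)$. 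Improving costs $c^+(\mu_L-x)$ and buys no future benefit, because the leg-up already guarantees staying. Since $c^-<c^+$, gaming is chosen, which violates the first constraint. The main care point is to make sure such a state is actually reached along some feasible trajectory. For that I would vary $x_0$, using that the constraints must hold for all initial $x_0$; by continuity of $W$, some trajectory hits that interval.

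\textbf{Part (2).} Fix $l<L$ and $x\in[\max\{\mu_l,\frac{\delta(l-1)}{1-\gamma}\},\mu_{l+1}]$. Feasibility requires that the agent is eventually promoted out of level $l$ and never games, so the optimal choice must be honest promotion. I will argue that this forces $W_l(x)=\Phi_l^{(6)}(\mu_{l+1})$, and observe that this equals $\Phi_l^{(5)}(\mu_{l+1})$ because the two expressions coincide at $\tilde x=\mu_{l+1}$. Staying is not an option on this interval. Since $x\ge\frac{\delta(l-1)}{1-\gamma}$, staying with no effort keeps $x$ in the stay region forever (the fixed point argument again), so $l_t\not\to L$ from that state. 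Therefore the minimizer in \Cref{eq:W-combined} must be $\tilde x^*=\mu_{l+1}$ with $u=0$ for every such $x$. Because the optimum is the same point for the whole interval, $W_l$ is constant there and equals $\Phi_l^{(5)}(\mu_{l+1})$.

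\textbf{Part (3).} This extends the constancy in (2) downward to attributes reachable after a relegation. The key step is to check that states $x\in[\min\{\gamma\mu_l+\delta(l-1),\max\{\mu_{l-1},\tfrac{\delta(l-2)}{1-\gamma}\}\},\mu_{l+1}]$ at level $l$ are visited by some feasible trajectory; I will use $\mu_l\ge\frac{\delta(l-1)}{1-\gamma}$ and the post-promotion attribute $\gamma\mu_l+\delta(l-1)$ to show this. From such states the agent must again choose honest promotion to $\mu_{l+1}$: staying would require either gaming or a drift below $\mu_l$ followed by relegation, and either contradicts the constraints as in part (2). The same "common minimizer, hence constant" argument then gives $W_l\equiv\Phi_l^{(5)}(\mu_{l+1})$. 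The level-$L$ statement follows the same way with $\Phi_L^{(2)}(\mu_L)$, where the required action is improving back up to $\mu_L$ to avoid relegation.

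I expect the main obstacle to be the reachability bookkeeping: showing that every state in these intervals is actually on a feasible trajectory or can serve as a legitimate initial condition, so that the constraints really bind there.
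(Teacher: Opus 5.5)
Part (1) is essentially the paper's argument: start at level $L-1$ just below $\mu_L$ with $\gamma x+\delta(L-1)\ge\mu_L$; gaming and improving buy the same future, so gaming wins. You should also compare against not being promoted at all, which is settled by taking $\mu_L-x$ small enough that $c^-(\mu_L-x)<r$.

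The genuine gap is in (2) and (3). The inference ``feasibility $\Rightarrow$ at every $x$ in the interval the minimizer of \eqref{eq:W-combined} is $\tilde x^*=\mu_{l+1}$'' is not established, for two reasons.
First, the constraints of $\mathbf{P}(M,r)$ bind only along trajectories from admissible initial states. A given state $(l,x)$ with $x\in[\max\{\mu_l,\frac{\delta(l-1)}{1-\gamma}\},\mu_{l+1}]$ need not lie on any of them. An agent climbing honestly enters level $l$ at $\gamma\mu_l+\delta(l-1)$, which lies strictly below this interval whenever $\mu_l>\frac{\delta(l-1)}{1-\gamma}$. So the reachability bookkeeping you postpone is not a technicality that a pointwise argument can close.
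Second, even at a visited state, excluding ``no effort forever'' does not exclude staying now (with partial improvement) and promoting later, which violates no constraint. Moreover, your claim that effortless staying keeps the attribute in the stay region is false when $\mu_l>\frac{\delta(l-1)}{1-\gamma}$: the attribute then decays toward $\frac{\delta(l-1)}{1-\gamma}<\mu_l$ and the agent is relegated.

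The missing idea is the paper's downward-propagation (trapping) argument, which fixes both problems at once.
Suppose $W_l(x_0)<W_l(\mu_{l+1})$ for a single $x_0$ in the interval, and let $y_0<\mu_{l+1}$ be the right end of the flat piece of $W_l$ through $x_0$. Since $W_l$ is non-decreasing, \eqref{eq:a-plus-l} gives the following:
an agent at level $l$ with $x_t\le y_0$ has post-response attribute at most $y_0$;
an agent at a lower level never improves past $\max\{x_t,\mu_{l_t+1}\}\le\max\{x_t,\mu_l\}\le y_0$;
with no gaming, no agent passes level $l$;
and because $y_0\ge\frac{\delta(l-1)}{1-\gamma}$, the next attribute satisfies $\gamma x_{t_+}+\delta(l_{t+1}-1)\le y_0$.
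Thus $\{l_t\le l,\ x_t\le y_0\}$ is invariant and contains admissible initial states (e.g.\ the lowest level with $x_0=0$), contradicting $l_t\to L$. Your observation $\Phi_l^{(6)}(\mu_{l+1})=\Phi_l^{(5)}(\mu_{l+1})$ then finishes (2).

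For (3), your sketch inherits the same problems. In addition, a relegation by itself does not violate $\lim_t l_t=L$, so ``drift below $\mu_l$ followed by relegation contradicts the constraints'' is false as stated. The paper handles the two parts of the interval separately.
On the lower part, it uses $\Phi_{l-1}^{(4)}=\Phi_l^{(1)}$ and $\Phi_{l-1}^{(5)}=\Phi_l^{(2)}$. Part (2) at level $l-1$ then bounds both branches below by $\Phi_l^{(2)}(\mu_l)$. Combined with $\Phi_l^{(2)}(\mu_l)=\Phi_l^{(4)}(\mu_l)\ge W_l(\mu_l)=\Phi_l^{(5)}(\mu_{l+1})$, which is part (2) at level $l$ under the hypothesis $\mu_l\ge\frac{\delta(l-1)}{1-\gamma}$, this handles that region.
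On $[\gamma\mu_l+\delta(l-1),\mu_{l+1}]$, it uses a recurrence argument. Every honest entry into level $l$ lands at attribute at most $\gamma\mu_l+\delta(l-1)$. So if $W_l$ dips there below $W_l(\mu_l)$, the agent is relegated on every visit and level $l-1$ is recurrent.
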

\paragraph{Implications} \Cref{lemma:feasible-cond}-(1) implies that for $\vec{\mu}$ to be feasible, the last threshold cannot be smaller than $\frac{\delta(L-1)}{1-\gamma}$. \Cref{lemma:feasible-cond}-(2) implies that, under a feasible threshold sequence $\vec{\mu}$, an agent whose current attribute is higher than the threshold of the current level should have an incentive to honestly improve its attribute to the next threshold. \Cref{lemma:feasible-cond} further requires that the agent whose attribute is larger than the previous level should always improve.

\begin{proof}
    For \Cref{lemma:feasible-cond}-(1), suppose the opposite (i.e., $\mu_L<\frac{\delta (L-1)}{1-\gamma}$) holds. Then, there exist $x_0<\mu_L$ such that $\gamma x_0+\delta (L-1)\geq \mu_L$. That means, for an agent with attribute $x_0$, the attribute dynamic of level $L$ will keep it in the same level, even though no actions are taken. Thus, the optimal strategy of an agent with initial level $l_0=L-1$ and attribute $x_0$ is to choose $(a_0^+)^*=0$ and $(a_0^-)^*=\mu_L-x_0$ at $t=0$, and exert no efforts in all subsequent time steps, since the attribute dynamics will maintain its attribute above $\mu_L$ in all future time steps. This violates the no-gaming constraint of $\mathbf{P}(M,r)$, and therefore we must have $\mu_L\geq\frac{\delta (L-1)}{1-\gamma}$.

    For \Cref{lemma:feasible-cond}-(2), let $\mathcal{D}\coloneqq[\max\{\mu_{l},\frac{\delta (l-1)}{1-\gamma}\},\mu_{l+1}]$.
    We first show that $W_{l}(x)\equiv W_{l}(\mu_{l+1})$, $\forall x\in\mathcal{D}$ by contradiction.
    Suppose there exists $x_0\in \mathcal{D}$ such that $W_{l}(x_0)< W_{l}(\mu_{l+1})$ (notice $W_{l}$ is non-decreasing). We observe that an agent with state $l_t\leq l$ and $x_t\leq x_0$ will never improve its attribute beyond $x_0$ due to two separate cases:
    \begin{itemize}
        \item If $l_t=l$, this is directly implied by $W_{l}(x_0)< W_{l}(\mu_{l+1})$ and \Cref{eq:a-plus-l}.
        \item If $l_t<l$, we know by the non-decreasing monotonicity of $\Phi_l^{(6)}$ in \Cref{eq:phi-l} that the agent never improves its attribute beyond $\max\{x_t,\mu_{l_t}\}$, which is also no greater than $x_0$.
    \end{itemize}
    Given $\vec{\mu}$ is feasible, this optimal strategy does not involve gaming actions. Consequently, its next state satisfies $l_{t+1} \leq l$ and $x_{t+1}=\gamma x_{t_+}+\delta(l-1) \le \gamma x_{0}+\delta(l-1)$. Since $x_0>\frac{\delta (l-1)}{1-\gamma}$, we have $x_{t+1}\leq x_0$. Thus, the agent is trapped in the region $l_t\leq l$, violating the constraint $\lim\limits_{t\to\infty}l_t=L$ of $\mathbf{P}(M,r)$.

    Next, we show that $W_l(\mu_{l+1})=\Phi_l^{(5)}(\mu_{l+1})$. Since $\Phi_l$ is strictly increasing on $[\mu_{l+1},\infty)$, we have $W_l(\mu_{l+1})=\Phi_l(\mu_{l+1})$. Combining this with the fact that $W_l(x)$ is constant on $\mathcal{D}$, we obtain:
    \begin{equation*}
        W_l(x)\equiv\Phi_l(\mu_{l+1})=\min\;\{\Phi_l^{(4)}(\mu_{l+1}),\;\Phi_l^{(5)}(\mu_{l+1})\},\;\;\forall x\in\mathcal{D}.
    \end{equation*}
    This implies that $\mu_{l+1}$ minimizes $\Phi_l(x)$ over the interval $\mathcal{D}$.
    Finally, since $\Phi_l^{(4)}$ is strictly increasing, the minimum must be determined by $\Phi_l^{(5)}(\mu_{l+1})$. If it were determined by $\Phi_l^{(4)}$, then $\Phi_l$ would take smaller values at the left end of $\mathcal{D}$, contradicting the constancy of $W_l$. Thus, $W_l(\mu_{l+1}) = \Phi_l^{(5)}(\mu_{l+1})$.

    For \Cref{lemma:feasible-cond}-(3), we first consider $l\in[L-1]\setminus\{1\}$. From \Cref{eq:phi-l}, we observe that $\Phi_{l-1}^{(4)}=\Phi_l^{(1)}$ and $\Phi_{l-1}^{(5)}=\Phi_l^{(2)}$. Besides, \Cref{lemma:feasible-cond}-(2) on level $l-1$ implies $$
    \min\big\{\min_{\tilde{x}\in\mathcal{D}}\Phi_{l-1}^{(4)}(\tilde{x}),\;\min_{\tilde{x}\in\mathcal{D}}\Phi_{l-1}^{(5)}(\tilde{x})\big\}=\Phi_{l-1}^{(5)}(\mu_l).
    $$
    This allows us to simplify \Cref{eq:W-combined} for $W_l$ into
    \begin{equation}\label{eq:w-l-demotion}
        W_l(x) = \min\left\{\Phi_{l-1}^{(5)}(\mu_l),\;\min_{\tilde{x}\in[x,\mu_{l+1}]}\Phi_l^{(5)}(\tilde{x})\right\},\;\;\forall x\in\bigg[\max\bigg\{\mu_{l-1},\frac{\delta (l-2)}{1-\gamma}\bigg\},\mu_{l}\bigg].
    \end{equation}
    As $\mu_{l}\geq\frac{\delta(l-1)}{1-\gamma}$, \Cref{lemma:feasible-cond}-(2) on level $l$ implies $\Phi_l^{(4)}(\mu_{l})\geq W_l(\mu_l)=\Phi_l^{(5)}(\mu_{l+1})$. Observe $\Phi_l^{(2)}(\mu_l)=\Phi_l^{(4)}(\mu_l)$. We thus obtain
    \begin{equation}\label{eq:relativity-phi}
        \Phi_{l-1}^{(5)}(\mu_l)=\Phi_l^{(2)}(\mu_l)\geq\Phi_l^{(5)}(\mu_{l+1}).
    \end{equation}
    Plugging this into \Cref{eq:w-l-demotion},  we derive $W_l(x)\equiv\Phi_l^{(5)}(\mu_{l+1})$, $\forall x\in[\max\{\mu_{l-1},\frac{\delta(l-2)}{1-\gamma}\},\mu_{l+1}]$ and $ l\in[L-1]\setminus\{1\}$.

    For the last level, observe that $\Phi_{L-1}(x)=\Phi_L(x)$, $\forall x\in[\mu_{L-1},\mu_L)$. Thus,  by \Cref{lemma:feasible-cond}-(2), we have $W_L(x)=W_{L-1}(x)=\Phi_{l-1}^{(5)}(\mu_L)=\Phi_l^{(2)}(\mu_L)$, $\forall x\in[\max\{\mu_{L-1},\frac{\delta(L-2)}{1-\gamma}\},\mu_L)$.

    Finally, we show that $W_l(x)=W_l(\mu_{l+1})$ for $l\in[\gamma\mu_l+\delta(l-1),\mu_{l+1}]$ for all $l\geq 2$ (we treat $\mu_{L+1}=\mu_L$ here to simplify notations). From the discussion above, we know that an agent who reaches $l_{t}=l$ from $l_{t-1}=l-1$ would have an attribute $x_{t}\leq\gamma\mu_l+\delta(l-1)$, since its post-response attribute in the previous timestep (i.e., $x_{(t-1)_+}$) never exceeds $\mu_l$. If $W_{l}(\gamma\mu_l+\delta(l-1))<W_l(\mu_{l+1})$, \Cref{lemma:feasible-cond}-(2) implies $W_{l}(\gamma\mu_l+\delta(l-1))<W_l(\mu_{l})$ as $\mu_{l}\geq\frac{\delta(l-1)}{1-\gamma}$. Then, by the no-gaming constraint of $\mathbf{P}(M,r)$, the agent's state variables would be $z_t=x_{t_+}<\mu_l$, meaning the agent would be demoted back to $l_{t+1}=l-1$ with the next-state attribute $x_{t+1}\leq \gamma\mu_l+\delta(l-1)$. Thus, the level $l-1$ is recurrent, which violates the constraint $\lim_{t\to\infty}l_t=L$ of $\mathbf{P}(M,r)$.
\end{proof}

\subsection{Proof of \Cref{thm:multilevel-impossibility}}
A direct consequence of $\vec{\mu}$ being feasible to $\mathbf{P}(M,r)$ is that $(L, \gamma \mu_L + \delta)$ is an \emph{absorbing state} for the state process $\{(l_t,x_t)\}_{t\geq0}$. In other words, once the agent is in this state, the optimal strategy ensures the agent remains at the final level $L$ indefinitely. In this proof, we introduce the vector-form notation of the action $\vec{a}:=[a^+,a^-]^\top$.

We proceed by contradiction. Suppose the state $(L,\gamma \mu_L)$ is an absorbing state under the optimal strategy $\vec{a}^*$. Then, it must satisfy $\vec{a}^*(L,\gamma\mu_L)=[(1-\gamma)\mu_L,0]^\top$:  the agent has to restore its depreciated attribute  ($\gamma\mu_L$) to $\mu_L$ in the subsequent timestep.
Now, consider the alternative strategy $\vec{a}'$ such that
\begin{equation*}
    \vec{a}'(l,x)=\begin{cases}
        [\gamma\mu_L-x,0]^\top & \textrm{if }x<\gamma\mu_L \\
        [0,0]^\top & \textrm{if } x\geq\gamma\mu_L
    \end{cases},\;\;\forall l\in[L].
\end{equation*}
This strategy always restores the agent's attribute to $\gamma\mu_L$ regardless of the level. It is straightforward to see that $\vec{a}'\neq \vec{a}^*$. Let $U'(l,x)$ and $U^*(l,x)$ denote the agent's utilities with the \textbf{initial state $(l,x)$} under the strategies $\vec{a}'$ and $\vec{a}^*$, respectively.
It is easy to see that $U^*(L,\gamma\mu_L)=\sum_{t=0}^\infty\beta^t(rL-c^+(1-\gamma)\mu_L)=\frac{rL-c^+(1-\gamma)\mu_L}{1-\beta}$.

For $U'(L,\gamma\mu_L)$, define $K\in[L-2]$ as the integer such that $\mu_{L-K-1}\leq \gamma\mu_L<\mu_{L-K}$. Thus, we have
\begin{equation*}
    \begin{aligned}
    U'(L,\gamma\mu_L)&=r\bigg[\sum_{t=0}^{K-1}\beta^t(L-t-1)+(L-K-1)\sum_{t=K}^\infty\beta^t\bigg] - \sum_{t=0}^\infty\beta^tc^+(1-\gamma)\gamma\mu_L \\
    &\geq r\bigg[\frac{L}{1-\beta}-\sum_{t=0}^{\infty}\beta^t(t+1)\bigg] - \frac{c^+(1-\gamma)\gamma\mu_L}{1-\beta}.
    \end{aligned}
\end{equation*}
Then, we observe
\begin{equation*}
    \begin{aligned}
        U^*(L,\gamma\mu_L)-U'(L,\gamma\mu_L) &\leq  r\sum_{t=0}^\infty\beta^t(t+1) -\frac{c^+(1-\gamma)^2}{1-\beta}\mu_L = \frac{r}{(1-\beta)^2}-\frac{c^+(1-\gamma)^2}{1-\beta}\mu_L <0
    \end{aligned}
\end{equation*}
where the equality applies $\sum_{t=0}^\infty\beta^t(t+1)=\frac{\textrm{d}}{\textrm{d}\beta}\left(\sum_{t=0}^\infty\beta^{t+1}\right)=\frac{\textrm{d}\big[\beta(1-\beta)^{-1}\big]}{\textrm{d}\beta}=\frac{1}{(1-\beta)^2}$, and the last inequality utilized the condition of \Cref{thm:multilevel-impossibility}. This is a contradiction to $\vec{a}^*$ being an optimal Markov policy, implying $(L,\gamma\mu_L)$ can never be an absorbing state if $\mu_L>\frac{r}{(1-\beta)(1-\gamma)^2c^+}$. Therefore, the problem $\mathbf{P}(M,r)$ with $M\geq \frac{r}{(1-\beta)(1-\gamma)^2c^+}$ can never be feasible.
\qed

\subsection{Proof of \Cref{thm:multilevel-legup}}\label{app:proof:multilevel-legup}

\subsubsection{Proof of \Cref{thm:multilevel-legup}-(1)}
A consequence of \Cref{lemma:feasible-cond} is that each $\mu_l$ in a feasible sequence $\vec{\mu}$ should be sufficiently large, as established below.
\begin{lemma}\label{lemma:stepwise-larger}
    If $\vec{\mu}$ is feasible to $\mathbf{P}(M,r)$ and $r<\frac{1-\beta}{1-\gamma}c^+\delta$, then $\mu_l>\frac{\delta(l-1)}{1-\gamma}$, $\forall l\in[L-1]$.
\end{lemma}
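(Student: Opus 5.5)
I would argue by induction on $l$, using \Cref{lemma:feasible-cond} as the main tool and contradicting the hypothesis $r<\frac{1-\beta}{1-\gamma}c^+\delta$. The base case $l=1$ is not literally covered, since $\mu_1\equiv0=\frac{\delta\cdot 0}{1-\gamma}$. So the claim is really about $l\ge2$, and for $l=1$ one interprets the statement as $\mu_1\ge\frac{\delta(l-1)}{1-\gamma}$, which is all the induction needs. Suppose then that $\mu_{l'}>\frac{\delta(l'-1)}{1-\gamma}$ (or $\ge$ at $l'=1$) for all $l'<l$, and assume for contradiction that $\mu_l\le\frac{\delta(l-1)}{1-\gamma}$ for some $2\le l\le L-1$.

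\textbf{Step 1: an agent reaching level $l$ never needs to improve again at $l$.} Consider an agent at level $l-1$ whose post-response attribute equals $\mu_l$; such an agent exists along feasible trajectories, since it must pass through level $l$ honestly. After promotion its next attribute is $\gamma\mu_l+\delta(l-1)\ge\mu_l$, because $\mu_l\le\frac{\delta(l-1)}{1-\gamma}$. More generally, the interval $I:=[\mu_l,\frac{\delta(l-1)}{1-\gamma}]$ is forward-invariant under the zero action at level $l$. By \Cref{lemma:feasible-cond}-(2) applied at level $l$, $W_l$ is constant, equal to $\Phi_l^{(5)}(\mu_{l+1})$, on $[\frac{\delta(l-1)}{1-\gamma},\mu_{l+1}]$. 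Feasibility also requires that from the point $x^\dagger:=\frac{\delta(l-1)}{1-\gamma}$ the agent improves honestly all the way to $\mu_{l+1}$.

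\textbf{Step 2: compare this with the "stay forever" strategy.} At $x^\dagger$ and level $l$, the zero action keeps the agent at $(l,x^\dagger)$ forever. In $W$-coordinates this gives the value
\[
W_l^{\mathrm{stay}}=\frac{(1-\beta\gamma)c^+x^\dagger-(r+\beta c^+\delta)(l-1)}{1-\beta}.
\]
The required honest path instead raises the attribute to $\mu_{l+1}$ and proceeds upward. I would lower-bound the cost of every honest climb by a per-level comparison. Each extra level yields $r$ per period, i.e.\ a gain of at most $r/(1-\beta)$ in discounted reward. To keep the attribute in place against depreciation at the higher level, the agent must raise its stationary attribute by at least $\frac{\delta}{1-\gamma}$ per level (this uses the induction hypothesis that thresholds are at or above the natural equilibria). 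That increase costs at least $c^+\frac{\delta}{1-\gamma}$ in the $V$-coordinates, since $W=V+c^+x$ absorbs the retained part. The net change per level is therefore at most $\frac{r}{1-\beta}-\frac{c^+\delta}{1-\gamma}<0$ under the hypothesis. Summing over levels shows that staying strictly beats climbing, so the climbing required by \Cref{lemma:feasible-cond}-(2) is not optimal. This contradicts the constancy of $W_l$ on $[x^\dagger,\mu_{l+1}]$ with value $\Phi_l^{(5)}(\mu_{l+1})$ together with honest improvement from there, and hence contradicts feasibility.

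\textbf{Main obstacle.} The hard step is Step 2's lower bound on the cost of every honest climbing path. The attribute does not simply need to grow by $\frac{\delta}{1-\gamma}$ at once: leg-up contributes over time, and the agent may oscillate. I would handle this with a telescoping/potential argument. Define $\psi(l,x):=c^+x-\frac{r(l-1)}{1-\beta}\cdot\frac{(1-\gamma)}{\delta}\cdot\frac{\delta}{1-\gamma}$, choosing constants so that along any no-gaming trajectory the per-step cost $c^+a_t^+-r(l_{t+1}-1)$ dominates $\psi(l_{t+1},x_{t+1})$ minus $\beta^{-1}$ times $\psi(l_t,x_t)$, up to terms that are nonnegative exactly when $r\le\frac{1-\beta}{1-\gamma}c^+\delta$. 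Then the discounted sum along the climb is bounded below by the stationary value at $x^\dagger$, with strict inequality whenever the level increases. Getting these constants right, using $x_{t+1}=\gamma(x_t+a_t^+)+\delta(l_{t+1}-1)$, is where I expect most of the work.
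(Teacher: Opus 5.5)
Your core comparison is the paper's: staying forever at the natural equilibrium $x^\dagger=\frac{\delta(l-1)}{1-\gamma}$ versus the honest climb forced by \Cref{lemma:feasible-cond}-(2), with net per-level change $\frac{r}{1-\beta}-\frac{c^+\delta}{1-\gamma}<0$. However, your induction runs in the wrong direction, so the hypothesis you invoke in Step~2 is not available. The per-level cost bound needs $\mu_k\ge\frac{\delta(k-1)}{1-\gamma}$ for the levels $k=l+1,\dots,L$ that the agent climbs \emph{through}. Your hypothesis only controls $\mu_{l'}$ for $l'<l$, and nothing in your proposal bounds $\mu_{l+1},\dots,\mu_{L-1}$; you also never invoke \Cref{lemma:feasible-cond}-(1) for $\mu_L$.

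Without such bounds, the claim that the stationary attribute must rise by $\frac{\delta}{1-\gamma}$ per level is false. If some $\mu_k$ with $k>l$ lies below its natural equilibrium, then once the agent reaches $\mu_k$ the leg-up dynamics raise its attribute to $\frac{\delta(k-1)}{1-\gamma}$ at no cost. That level is then bought for less than $\frac{c^+\delta}{1-\gamma}$. Worse, if $\mu_{l+1}\le x^\dagger$, the zero action at $x^\dagger$ already promotes, so ``stay forever'' is not even an available alternative.

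The paper therefore inducts \emph{downward} from $L-1$, which is equivalent to arguing at the largest violating index. The base case uses \Cref{lemma:feasible-cond}-(1), i.e.\ $\mu_L\ge\frac{\delta(L-1)}{1-\gamma}$. At each step, the hypothesis on all higher levels gives, via \Cref{lemma:feasible-cond}-(3) and \Cref{eq:relativity-phi}, the chain $m_{k-1}\ge m_k$ for $m_k:=W_k(\mu_{k+1})$. The post-promotion value is then bounded below by the closed-form top value $m_L=\frac{(1-\beta\gamma)c^+\mu_L-(r+\beta c^+\delta)(L-1)}{1-\beta}$, which replaces your unfinished potential argument.

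Your potential route can also be completed once the right hypothesis is in place. In $W$-coordinates an honest step costs $(1-\beta\gamma)c^+\bigl(\tilde x_t-\frac{\delta(l_{t+1}-1)}{1-\gamma}\bigr)+(l_{t+1}-1)\bigl(\frac{1-\beta}{1-\gamma}c^+\delta-r\bigr)$. The invariant $x_t\ge\frac{\delta(l_t-1)}{1-\gamma}$ is preserved precisely because every threshold above $l$ is at or above its natural equilibrium, which is again a downward-induction input. The $\psi$ you sketch, whose extra factors cancel, is not yet such a certificate.

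Second, your reinterpretation of $l=1$ discards the lemma's actual content. Since $\mu_1\equiv 0$, the case $l=1$ asserts $0>0$. The lemma is true only because its hypotheses are contradictory, and this is exactly how it is used to prove \Cref{thm:multilevel-legup}-(1). In the downward induction, $l=1$ is the final step: the contrary assumption $\mu_1\le 0$ holds automatically, and the agent at $(1,0)$ strictly prefers staying forever (value $0$) to the forced climb, contradicting feasibility. By weakening the $l=1$ case to ``$\ge$'' and using it as a base, your argument could at best give the bound for $2\le l\le L-1$. That produces no contradiction and does not prove the statement as written.
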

\begin{proof}
    For simplicity, we use the notation $m_l:=W_l(\mu_{l+1})$ in the rest of the proof and $\mu_{L+1}=\mu_L$ to handle the boundary case automatically. By the hypothesis and \Cref{eq:relativity-phi}, we have, for all $k\geq l$, $m_{k-1}=\Phi_{k-1}^{(5)}(\mu_{k})\geq \Phi_{k}^{(5)}(\mu_{k+1})=m_k$. In particular, we have $m_L=m_{L-1}$ as $\Phi_{L-1}(x)=\Phi_L(x)$, $\forall x\in[\mu_{L-1},\mu_L]$.

    We construct the proof by induction from level $L-1$. For the base case, suppose the opposite $\mu_{L-1}\leq \frac{\delta(L-2)}{1-\gamma}$ holds. Then,
    \begin{multline*}
        \Phi_{l-1}^{(5)}(\mu_L)-\Phi_{l-1}^{(4)}\left(\frac{\delta(L-2)}{1-\gamma}\right) =\\ (1-\beta\gamma)c^+\left(\mu_L-\frac{\delta(L-2)}{1-\gamma}\right)-(r+\beta c^+\delta) + \beta\left[W_L(\gamma\mu_L+\delta(L-1))-W_{L-1}\left(\gamma\frac{\delta(L-2)}{1-\gamma}+\delta(L-2)\right)\right].
    \end{multline*}
    Since $\vec{\mu}$ is feasible to $\mathbf{P}(M,r)$, \Cref{lemma:feasible-cond}-(1) implies $\mu_L\geq\frac{\delta(L-1)}{1-\gamma}$. Thus, both $\gamma\mu_L+\delta(L-1)$ and $\gamma\frac{\delta(L-2)}{1-\gamma}+\delta(L-2)$ lie in $[\mu_{L-1},\mu_L]$, on which $W_L$ and $W_{L-1}$ are exactly the same. Therefore, we have
    \begin{equation*}
        \Phi_{L-1}^{(5)}(\mu_L)-\Phi_{L-1}^{(4)}\left(\frac{\delta(L-2)}{1-\gamma}\right)\geq (1-\beta\gamma)c^+\frac{\delta}{1-\gamma}-(r+\beta c^+\delta) = \frac{1-\beta}{1-\gamma}c^+\delta-r>0.
    \end{equation*}
    This implies $W_{L-1}\left(\frac{\delta(L-2)}{1-\gamma}\right)\leq \Phi_{L-1}^{(4)}\left(\frac{\delta(L-2)}{1-\gamma}\right)<\Phi_{L-1}^{(5)}(\mu_L)$, which is a contradiction to \Cref{lemma:feasible-cond}-(2). Therefore, we must have $\mu_{L-1}>\frac{\delta(L-2)}{1-\gamma}$.

    For the hypothesis, suppose $\mu_k>\frac{\delta(k-1)}{1-\gamma}$, $\forall k\geq l$, for some $l\in[L-1]\setminus\{1\}$.
    For the induction step, suppose the opposite (i.e., $\mu_{l-1}\leq\frac{\delta(l-2)}{1-\gamma}$) holds. Consider an agent with state $l_t=l-1$ and $x_t=\frac{\delta(l-2)}{1-\gamma}$. Since $\vec{\mu}$ is feasible, \Cref{lemma:feasible-cond}-(2) implies that the agent will improve its attribute up to $\mu_{l}$. This results in the agent's value (negative utility) of
    \begin{align*}
        v^*&=c^+(\mu_l-x_t) - r(l-1) + \beta V_l(\gamma \mu_l+\delta(l-1)) \\
        &=  - c^+x_t + (1-\beta\gamma)c^+\mu_l - (r+\beta c^+\delta) (l-1) + \beta W_l(\gamma \mu_l+\delta(l-1)) \\
        &\geq   - c^+x_t +\underbrace{(1-\beta\gamma)c^+\mu_l - (r+\beta c^+\delta) (l-1)}_{=:T_a} + \beta m_L.
    \end{align*}
    where the last inequality utilized \Cref{lemma:feasible-cond}-(3), the hypothesis $\mu_l\geq\frac{\delta(l-1)}{1-\gamma}$, and the monotonicity of $m_l$.

    To determine the value of $m_L$, notice that the expression of $\Phi_l^{(2)}$ involves $W_L$ itself and $m_L= \Phi_l^{(2)}(\mu_{L})$ by \Cref{lemma:feasible-cond}-(3). Thus, $m_L$ can be obtained by solving the equation
    \begin{equation*}
        m_L = (1-\beta\gamma)c^+\mu_L - (r+\beta c^+\delta)(L-1) + \beta m_L\implies m_L=\frac{(1-\beta\gamma)c^+\mu_L - (r+\beta c^+\delta)(L-1)}{1-\beta}.
    \end{equation*}

    Now consider an alternative strategy where the agent exerts nothing at time $t$ (i.e., $\vec{a}(l-1,\frac{\delta(l-2)}{1-\gamma})=\vec{0}$). In this case, the agent will remain at level $l-1$ where the attribute dynamic will keep its attribute at the same level: $\gamma \cdot \frac{\delta(l-2)}{1-\gamma}+\delta(l-2)=\frac{\delta(l-2)}{1-\gamma}$. The agent's value under this strategy is $v^\textrm{alt}=-\frac{r(l-2)}{1-\beta}$. Expanding this term, we obtain
    \begin{align*}
        v^\textrm{alt}=-c^+x_t+\underbrace{\frac{1-\beta}{1-\gamma}c^+\delta(l-2) - r(l-2)}_{=:T_b} + \beta\underbrace{\left(\frac{\frac{1-\beta}{1-\gamma}c^+\delta(l-2)-r(l-2)}{1-\beta}\right)}_{=:\tilde{m}}.
    \end{align*}
    By the hypothesis where $\mu_l\geq \frac{\delta(l-1)}{1-\gamma}$, we observe
    \begin{equation*}
        T_a\geq (1-\beta\gamma)c^+\frac{\delta(l-1)}{1-\gamma} - (r+\beta c^+\delta)(l-1) = \frac{1-\beta}{1-\gamma}c^+\delta(l-1) - r(l-1)> T_b,
    \end{equation*}
    where the last inequality is due to the condition $r<\frac{1-\beta}{1-\gamma}c^+\delta$. For the last term, consider
    \begin{equation*}
        m_L-\tilde{m} \geq \frac{1}{1-\beta}\left(\frac{1-\beta}{1-\gamma}c^+\delta - r\right)(L-l+1)>0,
    \end{equation*}
    where the first inequality applies the hypothesis $\mu_L\geq \frac{\delta(L-1)}{1-\gamma}$. Therefore, we obtain $v^*> v^\textrm{alt}$, which conflicts with the constraints of $\mathbf{P}(M,r)$ because it is not incentive compatible for this agent to improve its attribute for a promotion, who would rather stay at the level $l-1$ indefinitely. Thus, it has to hold $\mu_{l-1}\geq\frac{\delta(l-2)}{1-\gamma}$.
\end{proof}

Under the conditions of \Cref{thm:multilevel-legup}-(1), \Cref{lemma:stepwise-larger} implies $\mu_1>0$, which conflicts with the setup $\mu_1=0$. Therefore, $\vec{\mu}$ can never be feasible to $\mathbf{P}(M,r)$ when $r<\frac{1-\beta}{1-\gamma}c^+\delta$.\qed

\subsubsection{Proof of \Cref{thm:multilevel-legup}-(2)}\label{app:multi-level-legup-2}
Notice that each threshold $\mu_l=\frac{\delta(l-1)}{1-\gamma}$ is the steady state of the attribute dynamic when an agent stays in level $l$ forever. This means if an agent reaches level $l$ with post-response attribute $x_{t_+}=\mu_l$, it will not lose any attribute in the next stage, or specifically, $x_{t+1}=x_{t_+}$. Thus, it is sufficient to show that $W_l(x)\equiv W_l(\mu_{l+1})$, $\forall x\in[\mu_l,\mu_{l+1}]$ for $\vec{\mu}$ to be feasible.

\paragraph{Designing the complete metric space} Consider the extended threshold sequence $\{\mu_{l}\}_{l=0}^{L+1}$ where $\mu_l=\frac{\delta(l-1)}{1-\gamma}$, $\forall l\in[L]$, $\mu_0=\mu_1$, and $\mu_{L+1}=\mu_L$.
Define $\mathcal{F}$ as the set of non-decreasing continuous functions on $[L]\times[0,\mu_{L}]$ such that, $\forall F\in\mathcal{F}$,
\begin{enumerate}
    \item[(\textsc{Pa})] $F(l,x)$ is $(1-\beta^k\gamma^k)c^+$-Lipschitz on $x\in[\mu_{l-k-1},\mu_{l-k}]$, $\forall k\in[l-1]$, $\forall l\in[L]$.
    \item[(\textsc{Pb})] $F(l,\mu_l)\geq F(l+1,\mu_{l+1})$, $\forall l\in[L-1]$.
\end{enumerate}
We show that $\mathcal{F}$ is closed under supremum norm of functions (i.e., $\lVert f-g\lVert_\infty:=\sup_{x\in[0,\mu_L]}|f(x)-g(x)|$). Let $\{F_n\}_n$ be a convergent sequence of functions in $\mathcal{F}$, and $F^*$ be the limit of $\{F_n\}_n$ under $\lVert\cdot\lVert_\infty$.
\begin{enumerate}
    \item[(\textsc{Pa})] For any $x_1,x_2\in[\mu_{l-k-1},\mu_{l-k}]$, as the absolute value is continuous everywhere,
    \begin{equation*}
        |F^*(l,x_1)-F^*(l,x_2)|=\lim_{n\to\infty}|F_n(l,x_1)-F_n(l,x_2)|\leq (1-\beta^k\gamma^k)c^+.
    \end{equation*}
    Thus, $F^*$ is also $(1-\beta^k\gamma^k)c^+$-Lipschitz on $x\in[\mu_{l-k-1},\mu_{l-k}]$.
    \item[(\textsc{Pb})] $F^*(l,\mu_l)\geq F^*(l+1,\mu_{l+1})$ directly follows from the pointwise convergence and that weak inequalities are preserved under limits of real numbers.
\end{enumerate}
Notice that all functions in $\mathcal{F}$ are bounded. Thus, $(\mathcal{F},\;\lVert\cdot\lVert)$ is a complete metric space.

\paragraph{Application of the Banach fixed-point theorem} Denote $T$ as the Bellman operator of \Cref{eq:Wl-fp}. $T$ is clearly a contraction mapping because $\beta\in(0,1)$. Let $F_n\in\mathcal{F}$ and $F_{n+1}:=TF_n$. We next show that $F_{n+1}\in\mathcal{F}$. The monotonicity and continuity of $F_{n+1}$ are obvious. Observe\footnote{We drop the notational dependence of $\Phi_l$ on $n$ (or fundamentally on $F_n$), which should be cleared in the context.}
\begin{align*}
        \Phi_l^{(2)}(\mu_{l})-\Phi_l^{(1)}(\mu_{l-1}) =\;&(1-\beta\gamma)c^+(\mu_{l}-\mu_{l-1})-(r+\beta c^+\delta)+\beta\left[F_{n}(l,\mu_l) - F_{n}(l-1,\mu_{l-1})\right] \\
        =\;&(1-\beta\gamma)c^+\frac{\delta}{1-\gamma}-(r+\beta c^+\delta) + \beta[F_{n}(l,\mu_l) - F_{n}(l-1,\mu_{l-1})] \leq 0,
\end{align*}
where the inequality is due to the condition $r>\frac{1-\beta}{1-\gamma}c^+\delta$ and (\textsc{Pb}). Besides, (\textsc{Pa}) implies that $F_{n}(l,x)$ is constant on $x\in[\mu_{l-1},\mu_l]$, which further implies that $\Phi_{l-1}^{(2)}(x)$ is strictly decreasing on the same interval. As $\Phi_l^{(1)}$ is strictly increasing, we conclude that $\min\limits_{\tilde{x}\in[x,\mu_l]}\min\{\Phi_l^{(1)}(\tilde{x}),\;\Phi_l^{(2)}(\tilde{x})\}=\Phi_l^{(2)}(\mu_l)$, $\forall x\in[\mu_{l-1},\mu_l]$. Noticing that $\Phi_l^{(4)}$ and $\Phi_l^{(5)}$ are the same as $\Phi_{l+1}^{(1)}$ and $\Phi_{l+1}^{(2)}$, respectively, we can directly conclude $F_{n+1}(l,x)\equiv\Phi_l^{(5)}(\mu_{l+1})$, $\forall x\in[\mu_l,\mu_{l+1}]$.

Besides, using the above discussion and the formulation \Cref{eq:W-combined}, we can simplify the expression of $F_{n+1}$ on $x\in[\mu_{l-1},\mu_{l}]$ into
\begin{equation*}
    F_{n+1}(l,x) = \min\left\{\Phi_l^{(2)}(\mu_l),\;\min_{\tilde{x}\in[x,\mu_{l+1}]}\Phi_l^{(5)}\right\}.
\end{equation*}
From the above discussion, we know $\Phi_l^{(5)}=\Phi_{l+1}^{(2)}$ is strictly decreasing on $[\mu_{l},\mu_{l+1}]$, so the minimum of $\Phi_l^{(5)}$ on this interval is attained at $x=\mu_{l+1}$. Denote $x^*$ as the minimum of $\Phi_l^{(5)}$ on $[\mu_{l-1},\mu_l]$. Then,
\begin{align*}
    \Phi_l^{(5)}(\mu_{l+1}) - \Phi_l^{(5)}(x^*) &= \Phi_l^{(5)}(\mu_{l+1}) - \Phi_l^{(5)}(\mu_l) + \Phi_l^{(5)}(\mu_l) - \Phi_l^{(5)}(x^*) \\
    &\leq  \Phi_l^{(5)}(\mu_{l+1}) - \Phi_l^{(5)}(\mu_l) + [(1-\beta\gamma)c^+-c^-+\beta\gamma(1-\beta\gamma)c^+](\mu_l-\mu_{l-1}) \\
    &= [(2+\beta\gamma)(1-\beta\gamma)c^+-2c^-]\frac{\delta}{1-\gamma},
\end{align*}
where the inequality utilized the $(1-\beta\gamma)c^+$-Lipschitz property of $F_n(l+1,\cdot)$ on $[\mu_{l-1},\mu_l]$. Under the condition $c^-\geq (1+\frac{\beta\gamma}{2})(1-\beta\gamma)c^+$, we have $\Phi_l^{(5)}(\mu_{l+1})\leq \Phi_l^{(5)}(x^*)$, which implies $F_{n+1}(l,x)=\min\{\Phi_l^{(2)}(\mu_l),\;\Phi_l^{(5)}(\mu_{l+1})\}$. Observe that $\Phi_l^{(2)}(\mu_l)=\Phi_l^{(4)}(\mu_l)$ and from the discussion above on $\Phi_l^{(1)}$ and $\Phi_l^{(2)}$, we know $\Phi_l^{(4)}(\mu_l)\geq\Phi_l^{(5)}(\mu_{l+1})$. Therefore, we conclude that $F_{n+1}(l,x)\equiv \Phi_l^{(5)}(\mu_{l+1})$, $\forall x\in[\mu_{l-1},\mu_{l+1}]$, implying $F_{n+1}(l,\cdot)$ is 0-Lipschitz on $[\mu_{l-1},\mu_l]$.

Let $H_{l}$ denote the Lipschitz constant of $F_{n+1}(l,\cdot)$, and $H_{l}^{(a)}$, $H_{l}^{(b)}$, and $H_{l}^{(c)}$ denote the Lipschitz constants of $\Phi_l^{(1)}$,  $\Phi_l^{(2)}$, and  $\Phi_l^{(3)}$, respectively, the index $n$ and the interval with which this Lipschitz constant is associated are omitted for clarity . Then, it satisfies
\begin{equation*}
    H_l\leq \max\{H_l^{(a)},\;H_l^{(b)},\;H_l^{(c)}\}.
\end{equation*}
By (\textsc{Pa}), we obtain, for $x\in[\mu_{l-k-1},\mu_{l-k}]$, $\forall k\in[l-1]\setminus\{1\}$,
\begin{equation*}
    H_l^{(a)} = (1-\beta\gamma)c^+ + \beta\gamma(1-\beta^{k-1}\gamma^{k-1})c^+ = (1-\beta^k\gamma^k)c^+.
\end{equation*}
Using $c^-\geq (1-\beta\gamma)c^+$, we have
\begin{align*}
    H_l^{(b)} &= (1-\beta\gamma)c^+ - c^- + \beta\gamma(1-\beta^k\gamma^k)   \\
              &= (1-\beta^{k+1}\gamma^{k+1})c^+ - c^- \\
              &= (1-\beta^k\gamma^k)c^+ + [\beta^k\gamma^k(1-\beta\gamma)c^+ - c^-] \leq H_l^{(a)}.
\end{align*}
When $c^-\geq \beta\gamma(1-\beta^2\gamma^2)c^+$, we further have
\begin{equation*}
    H_l^{(c)} = (1-\beta^k\gamma^k)c^+ + [\beta^k\gamma^k(1-\beta^2\gamma^2)c^+ - c^-] \leq H_l^{(a)}.
\end{equation*}
Therefore, $F_{n+1}(l,\cdot)$ is also $(1-\beta^k\gamma^k)c^+$-Lipschitz on $[\mu_{l-k-1},\mu_{l-k}]$.

For (\textsc{Pb}), from the above discussion, we have obtained $F_{n+1}(l,\mu_l)=\Phi_l^{(5)}(\mu_{l+1})$. Since $\Phi_l^{(5)}(\mu_{l+1})=\Phi_{l+1}^{(2)}(\mu_{l+1})=\Phi_{l+1}^{(4)}(\mu_{l+1})$, it directly follows that $F_{n+1}(l,\mu_l)$ is non-increasing in $l$.

We have shown that $T$ is a contraction mapping on $\mathcal{F}\to\mathcal{F}$. As $\mathcal{F}$ is a complete metric space, the Banach fixed-point theorem implies that the fixed point, $\{W_l\}_{l\in[L]}$, is also in $\mathcal{F}$, and thereby completes the proof.

\paragraph{Optimality of the sequence} \Cref{lemma:stepwise-larger} can be modified to show the following: If $\vec{\mu}$ is feasible to $\mathbf{P}(M,r)$ and $r=\frac{1-\beta}{1-\gamma}c^+\delta$, then $\mu_l\geq\frac{\delta(l-1)}{1-\gamma}$, the proof of which is omitted since the reasoning steps are quite similar. Suppose there exists a sequence $\{\mu_l\}_{l\in[L]}$ with $L<\big\lceil{(1-\gamma)M}/{\delta}\big\rceil$ that is feasible to $\mathbf{P}(M,r)$. Let $k\in[L]$ be the highest level such that $\mu_k=\frac{\delta(k-1)}{1-\gamma}$. Notice that $k$ always exists and satisfies $k\geq1$ and $k\leq L-1$. Then, similar to the proof of \Cref{lemma:stepwise-larger}, an agent at level $l_t=k$ with attribute $x_t=\mu_k$ has no incentive to improve up to $\mu_{k+1}$ because exerting no actions in all times yields strictly higher utility (or strictly lower negative utility).
This is in contradiction to the fact that $\vec{\mu}$ is feasible. Therefore, the optimal solution to $\mathbf{P}(M,r)$ is $L^*=\lceil{(1-\gamma)M}/{\delta}\big\rceil$.
\qed

\subsection{Level-Dependent Leg-Up Effect}\label{app:delta-l}

We consider a generalization of the attribute dynamics in which the leg-up bonus depends arbitrarily on the level attained:

\begin{equation}\label{eq:delta-l-dynamics}
    x_{t+1} = \gamma\, x_{t_+} + \delta_{l_{t+1}},
\end{equation}

where $\{\delta_l\}_{l\geq1}$ is a non-negative, non-decreasing sequence (with $\delta_1 = 0$). The original model is the special case $\delta_l = \delta_{l-1}$. The following conditions, which generalize \Cref{thm:multilevel-legup}, are obtained by replacing $\delta l$ with $\delta_l$ in the feasibility analysis and applying the same proof technique as in \Cref{app:proof:multilevel-legup}.

\begin{proposition}\label{prop:delta-l}
    Suppose the attribute dynamics follow \Cref{eq:delta-l-dynamics}. Define the average leg-up increment between levels $l$ and $k$ as $\bar\delta_{lk} := \frac{\delta_l - \delta_k}{l - k}$ for $l \neq k$.
    \begin{enumerate}[topsep=-1pt,itemsep=-2pt]
        \item[(a)] \textbf{Infeasibility.} If $r < \frac{1-\beta}{1-\gamma}\,\min_{l > k \geq 1}\bar\delta_{lk}$, then $\mathbf{P}(M,r)$ is infeasible for all $M$.
        \item[(b)] \textbf{Feasibility.} If $r \geq \frac{1-\beta}{1-\gamma}\,\max_{l > k \geq 1}\bar\delta_{lk}$ and the cost condition of \Cref{thm:multilevel-legup}-(2) holds($c^-\geq\max\{(1+\frac{\beta\gamma}{2})(1-\beta\gamma)c^+,\;\beta\gamma(1-\beta^2\gamma^2)c^+\}$), then $\mathbf{P}(M,r)$ is feasible for all $M < \frac{\delta_\infty}{1-\gamma}$, where $\delta_\infty := \sup_{l\geq1}\delta_l$. A feasible threshold sequence is $\mu_l = \frac{\delta_l}{1-\gamma}$ for $l\in[L]$.
    \end{enumerate}
\end{proposition}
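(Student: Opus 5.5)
The plan is to rerun the proof of \Cref{thm:multilevel-legup} with the linear leg-up $\delta(l-1)$ replaced everywhere by $\delta_l$. The only structural changes are in the Bellman reformulation of \Cref{app:prelim}. Writing $W_l(x):=V(l,x)+c^+x$ now produces, in place of $(r+\beta c^+\delta)(l'-1)$ for the level $l'$ reached, the term $r(l'-1)+\beta c^+\delta_{l'}$. The arguments of $W$ become $\gamma\tilde{x}+\delta_{l'}$. The functions $\Phi_l^{(1)},\dots,\Phi_l^{(6)}$ are redefined accordingly, and \Cref{lemma:feasible-cond} carries over verbatim once $\frac{\delta(l-1)}{1-\gamma}$ is replaced by the level-$l$ steady state $\frac{\delta_l}{1-\gamma}$. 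The proof of that lemma used only two facts, and both survive: this point is a fixed point of $x\mapsto\gamma x+\delta_l$, and trapping arguments apply below it.

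For (a), I would generalize \Cref{lemma:stepwise-larger} by backward induction on levels. The base case compares $\Phi_{L-1}^{(5)}(\mu_L)$ with $\Phi_{L-1}^{(4)}(\frac{\delta_{L-1}}{1-\gamma})$. The gap becomes $(1-\beta\gamma)c^+\frac{\delta_L-\delta_{L-1}}{1-\gamma}-r-\beta c^+(\delta_L-\delta_{L-1})=\frac{1-\beta}{1-\gamma}c^+(\delta_L-\delta_{L-1})-r$. In the induction step, the comparison of staying forever at level $k$ against climbing to levels $l>k$ produces gaps of the form $\frac{1-\beta}{1-\gamma}c^+(\delta_l-\delta_k)-r(l-k)$, the analogues of $T_a-T_b$ and $m_L-\tilde m$. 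Each such gap is positive whenever $r<\frac{1-\beta}{1-\gamma}c^+\bar\delta_{lk}$, and in particular under the hypothesis of (a). I read the statement's bound as carrying the factor $c^+$, matching \Cref{thm:multilevel-legup}. This forces $\mu_1>\frac{\delta_1}{1-\gamma}=0$, contradicting $\mu_1=0$. The care needed here is that consecutive gaps $\delta_l-\delta_{l-1}$ alone need not bound the min over all pairs; I would therefore pair the stay-forever deviation at level $k$ directly with the eventual level $L$ (or with any $l$), which is exactly what the min over pairs controls.

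For (b), I would reuse the contraction argument of \Cref{app:multi-level-legup-2} with thresholds $\mu_l=\frac{\delta_l}{1-\gamma}$ and $L$ chosen so that $\frac{\delta_L}{1-\gamma}\ge M$. Such an $L$ exists because $M<\frac{\delta_\infty}{1-\gamma}$. I would define $\mathcal F$ by the same properties (\textsc{Pa}) and (\textsc{Pb}); the Lipschitz constants $(1-\beta^k\gamma^k)c^+$ do not depend on interval widths, so they are unchanged. The key inequality $\Phi_l^{(2)}(\mu_l)-\Phi_l^{(1)}(\mu_{l-1})\le 0$ becomes $\frac{1-\beta}{1-\gamma}c^+(\delta_l-\delta_{l-1})-r+\beta[F(l,\mu_l)-F(l-1,\mu_{l-1})]\le0$, which follows from $r\ge\frac{1-\beta}{1-\gamma}c^+\max\bar\delta$ and (\textsc{Pb}).

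The main obstacle is the step bounding $\Phi_l^{(5)}(\mu_{l+1})-\Phi_l^{(5)}(x^*)$. Previously it gave $[(2+\beta\gamma)(1-\beta\gamma)c^+-2c^-]\frac{\delta}{1-\gamma}$ because consecutive gaps were equal. Now the two pieces carry the widths $\delta_{l+1}-\delta_l$ and $\delta_l-\delta_{l-1}$ respectively. I would bound the first piece, which is strictly decreasing at slope $(1-\beta\gamma)c^+-c^-$, by its width times that slope, and the second by $[(1+\beta\gamma)(1-\beta\gamma)c^+-c^-]$ times its width. Both coefficients are nonpositive under the cost condition, since $c^-\ge(1+\frac{\beta\gamma}{2})(1-\beta\gamma)c^+$ is weaker than what each factor alone needs only if I split carefully. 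If that fails for unequal widths, I would fall back on using (\textsc{Pa}) more tightly on the single interval. The remaining steps, namely the Lipschitz bookkeeping for $H^{(a)},H^{(b)},H^{(c)}$, (\textsc{Pb}) closure and Banach's theorem, transfer verbatim. This gives $W_l\equiv$ const on $[\mu_l,\mu_{l+1}]$, hence honest climbing to $L$ with $\liminf x_{t_+}\ge\mu_L\ge M$.
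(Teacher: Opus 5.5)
Your overall route coincides with the paper's. The paper proves \Cref{prop:delta-l} only by asserting that the proof of \Cref{thm:multilevel-legup} goes through with $\delta(l-1)$ replaced by $\delta_l$. Part (a) works as you describe, and reading the bound with the factor $c^+$ is the reading consistent with \Cref{thm:multilevel-legup}. Also, an average of consecutive increments is at least their minimum, so $\min_{l>k}\bar\delta_{lk}$ equals the smallest consecutive increment $\delta_l-\delta_{l-1}$. The pairs $(l,l-1)$ and $(L,l-1)$ you use are therefore automatically controlled, and no extra care is needed. In (b), three pieces also transfer: the Bellman rewrite, the inequality $\Phi_l^{(2)}(\mu_l)\le\Phi_l^{(1)}(\mu_{l-1})$, and the $H^{(a)},H^{(b)},H^{(c)}$ bookkeeping. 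They transfer because for nondecreasing $\delta_l$ the maps $x\mapsto\gamma x+\delta_{l'}$ only shift the intervals $[\mu_j,\mu_{j+1}]$ upward.

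The gap is exactly the step you flag, and the resolution you sketch rests on a false claim. Put $A:=c^--(1-\beta\gamma)c^+>0$, $B:=(1+\beta\gamma)(1-\beta\gamma)c^+-c^-$ and $d_j:=\mu_j-\mu_{j-1}=\frac{\delta_j-\delta_{j-1}}{1-\gamma}$. Your split gives $\Phi_l^{(5)}(\mu_{l+1})-\Phi_l^{(5)}(x^*)\le -A\,d_{l+1}+B\,d_l$.

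It is not true that both coefficients are nonpositive. $B\le 0$ holds iff $c^-\ge(1-\beta^2\gamma^2)c^+$. The stated condition $c^-\ge(1+\frac{\beta\gamma}{2})(1-\beta\gamma)c^+$ is precisely $B\le A$, i.e.\ it only makes the \emph{equally weighted} sum nonpositive. That is all the equal-spacing proof of \Cref{thm:multilevel-legup} uses. For example, $\beta=\gamma=0.8$, $c^+=1$, $c^-=0.48$ meets both cost conditions, yet $B\approx0.11>0$.

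So whenever $d_{l+1}<d_l$ (exactly the concave or saturating leg-ups the proposition is meant to cover), you cannot conclude that $F_{n+1}(l,\cdot)$ is constant on $[\mu_{l-1},\mu_l]$. Without that, invariance of $\mathcal F$ is not established, and the Banach argument rests on it.

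Using (\textsc{Pa}) more carefully helps only partly. The point $\gamma\tilde x+\delta_{l+1}$ stays in the flat region $[\mu_l,\mu_{l+1}]$ as long as $\mu_l-\tilde x\le\frac{1-\gamma}{\gamma}d_{l+1}$. With that refinement, the Lipschitz bound certifies $\Phi_l^{(5)}(\mu_{l+1})\le\Phi_l^{(5)}(\tilde x)$ on $[\mu_{l-1},\mu_l]$ if and only if
\[
\big(A-\beta\gamma(1-\beta\gamma)c^+\big)\,d_l+\big(A+\beta(1-\gamma)(1-\beta\gamma)c^+\big)\,d_{l+1}\ \ge\ 0 .
\]
At the boundary of the stated cost condition, this holds iff $d_{l+1}/d_l\ge\gamma/(2-\gamma)$. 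That ratio is violated, e.g., by $\delta_l\propto\sqrt{l-1}$ at $l=2$ when $\gamma=0.8$, since $d_3/d_2=\sqrt2-1<2/3$.

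So a genuinely new ingredient is needed. One option is to strengthen the cost condition to $c^-\ge(1-\beta^2\gamma^2)c^+$. Then $B\le0$, your split works for arbitrary widths, and the $\beta\gamma(1-\beta^2\gamma^2)c^+$ requirement is implied. A second option is to impose a ratio condition on consecutive increments. A third, which keeps the stated hypotheses, is to enlarge $\mathcal F$ so that it records extra flatness of $F(l+1,\cdot)$ below $\mu_l$. That flatness would come from the slack $r-\frac{1-\beta}{1-\gamma}c^+(\delta_{l+1}-\delta_l)>0$, which is available precisely when increments shrink; you would still have to prove it is preserved by the operator. The paper's one-line proof does not supply this step either.
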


\paragraph{Connection to original model.}
When $\delta_l = \delta_{l-1}$, we have $\bar\delta_{lk}=\delta$ for all $l>k$, so $\min=\max=\delta$, conditions (a)/(b) reduce to \Cref{thm:multilevel-legup}-(1)/(2), and the feasible threshold sequence $\mu_l = \frac{\delta_{l-1}}{1-\gamma}$ matches \Cref{thm:multilevel-legup}-(2).

\paragraph{Concave and saturating leg-up.}
The key implication of \Cref{prop:delta-l} depends on whether $\delta_l$ saturates:

\begin{itemize}[topsep=-1pt,itemsep=-2pt]
\item \textbf{Non-saturating} ($\delta_\infty=\infty$, e.g., $\delta_l=\delta\sqrt{l}$ or $\delta\log(1+l)$): \Cref{prop:delta-l}-(b) gives feasibility for all $M$, matching the original linear case. Although threshold gaps $\mu_{l+1}-\mu_l = \frac{\delta_{l+1}-\delta_l}{1-\gamma}$ shrink as $l$ grows (reflecting diminishing leg-up increments), the principal can always design an arbitrarily long feasible ladder.

\item \textbf{Saturating} ($\delta_\infty < \infty$, e.g., $\delta_l=\delta(1-e^{-l})$, or $\delta_l=\delta$ constant): the achievable target attribute is bounded by $\frac{\delta_\infty}{1-\gamma}$. Once the threshold sequence reaches the saturation ceiling, no additional level can provide meaningful leg-up, and the mechanism cannot survive future improvement beyond that point.
\end{itemize}

\paragraph{Numerical illustration.}
\Cref{fig:exp11_shapes} illustrates these theoretical predictions under four leg-up shapes: constant ($\delta_l=\delta$), linear ($\delta_l=\delta l$), square root ($\delta_l=\delta\sqrt{l}$), and log ($\delta_l=\delta\log(1+l)$). The base parameters are $\beta=0.8$, $\gamma=0.8$, $c^+=0.8$, $c^-=0.7$, $\delta=0.01$, and thresholds are optimized via CMA-ES for each shape separately. Utilities across all shapes remain within a narrow range (643--650), confirming that the mechanism is robust to the specific form of the leg-up function.

\begin{figure}[!h]
    \centering
    \includegraphics[width=0.46\textwidth]{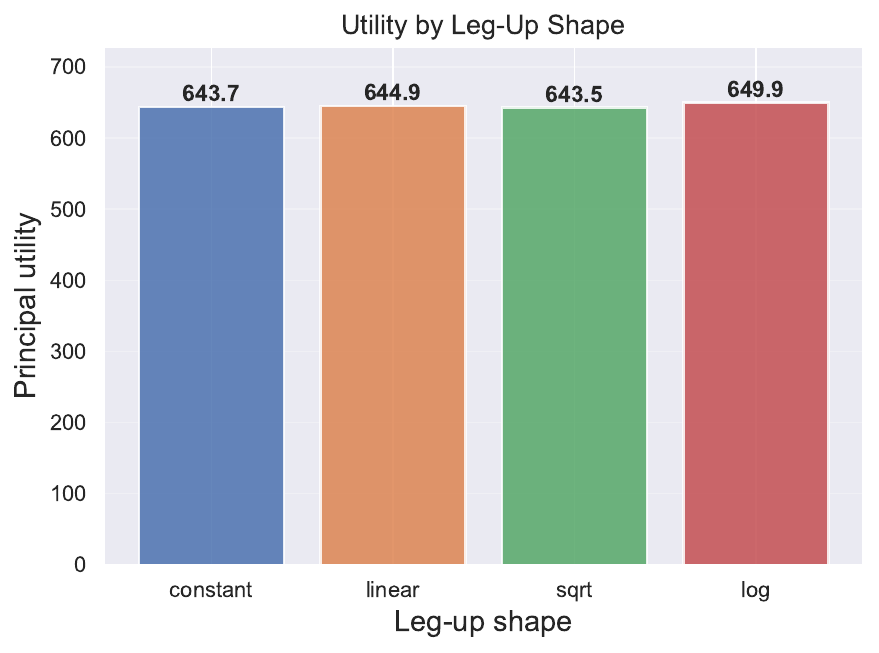}
        \caption{Principal utility by leg-up shape.}
    \caption{Robustness of the mechanism to alternative leg-up shapes.}
    \label{fig:exp11_shapes}
\end{figure}

\section{Numerical Results}

\subsection{Interpolated Value Iteration: Computing Optimal Agent Strategy}\label{app:valueiterate}
We use value iteration to solve the fixed-point problem of \Cref{eq:Wl-fp} and derive the optimal strategy via \Cref{eq:a-plus-l}. As the attribute space is continuous, we truncate the attribute space by a sufficiently large maximum attribute $\bar{X}>0$ and discretize the attribute space with a grid step $\Delta x$. In each iteration, we linearly interpolate the $W$ functions on the grid. The details of the algorithm are provided in \Cref{alg:vi}.
\begin{algorithm}[!h]
    \caption{Interpolated Value Iteration ({\sc ValueIterate})}
    \label{alg:vi}
    \begin{algorithmic}[1]
        \STATE {\bfseries Input:} thresholds $\vec{\mu}$, discrete attribute space $\mathcal{X}$, and tolerance $\epsilon$
        \STATE {\bfseries Output:} discretized optimal impr action $a^+(l,x)$ and gaming action $a^-(l,x)$
        \STATE $L\gets |\vec{\mu}|$
        \STATE  $\tilde{r} \gets r + \beta c^+ \delta$
        \STATE $\tilde{c}^+ \gets (1-\beta\gamma)c^+$
        \STATE $W^{(0)}(l,x) \gets 0,\; \forall l \in[L], x \in \mathcal{X}$
        \FOR{$n=1,2,\dots$ {\bfseries until} $\lVert W^{(n)}-W^{(n-1)}\rVert\leq \epsilon$}
            \FOR{$l \in [L],\; x \in \mathcal{X}$}
                \STATE $\Omega^{(n)}(l,\cdot) \gets $ linear interpolation of $W^{(n-1)}(l,\cdot)$
                \STATE $v_{\textrm{rel}}^{(n)} \gets \tilde{c}^+x - \tilde{r}\max\{1,l-1\} + \beta\Omega^{(n)}(\max\{1,l-1\}, \gamma x + \delta\max\{1,l-1\})$
                \STATE $v_{\textrm{stay}}^{(n)} \gets \tilde{c}^+x + c^-(\mu_l - x)_+ - \tilde{r} l + \beta \Omega^{(n)}(l,\gamma x + \delta l)$
                \STATE $v_{\textrm{pr}}^{(n)} \leftarrow \tilde{c}^+x + c^-(\mu_{\min\{l+1,L\}} - x)_+- \tilde{r}\min\{l+1,L\} + \beta \Omega^{(n)}(\min\{l+1,L\},\gamma x + \delta\min\{l+1,L\})$
                \STATE $\Phi^{(n)}(l,x) \gets \min \{ v^{(n)}_{\textrm{rel}}, v^{(n)}_{\textrm{stay}}, v^{(n)}_{\textrm{pr}} \}$
            \ENDFOR
            \STATE $W^{(n+1)}(l, x) \gets \min\limits_{\tilde{x} \in \mathcal{X}, \tilde{x} \ge x} \Phi(l, \tilde{x})$, $\forall l\in[L],x\in\mathcal{X}$
        \ENDFOR
        \STATE $a^+(l,x)\gets \max\{\tilde{x}\in\mathcal{X}:W^{(n)}(l,\tilde{x})=W^{(n)}(l,x)\}$, $\forall l\in[L],x\in\mathcal{X}$
        \FOR{$l\in[N],\;x\in\mathcal{X}$}
            \STATE $\tilde{x}\gets x + a^+(l,x)$ \COMMENT{Post-response attribute}
            \IF{$\Phi^{(n)}(l,x)=v_{\textrm{rel}}^{(n)}$}
                \STATE $a^-(l,x) \gets 0$ \COMMENT{No effort is made if the result is relegation}
            \ELSIF{$\Phi^{(n)}(l,x)=v_{\textrm{stay}}^{(n)}$}
                \STATE $a^-(l,x)=\max\{0,\;\mu_l-\tilde{x}\}$
            \ELSE
                \STATE $a^-(l,x)=\max\{0,\;\mu_{l+1}-\tilde{x}\}$
            \ENDIF
        \ENDFOR
        \STATE Return $a^+$ and $a^-$
    \end{algorithmic}
\end{algorithm}

\begin{theorem}\label{thm:vi}
Let $\{W^{(n)}\}_{n\geq 0}$ be the sequence generated by \Cref{alg:vi}. The following properties hold:
\begin{enumerate}
    \item[(a)] \textbf{Convergence:} The sequence converges linearly to a unique fixed point $W^\dagger$. The number of iterations $n$ required to reach a tolerance $\varepsilon$ is bounded by:
    \begin{equation*}
        n = O\left(\frac{\log(\lVert W^{(0)}-W^\dagger\lVert_\infty/\varepsilon)}{|\log\beta|}\right).
    \end{equation*}
    \item[(b)] \textbf{Approximation Accuracy:} Let $W^*$ be the true fixed point of \Cref{eq:Wl-fp} and $\Omega^\dagger$ be the linear interpolation of the solution $W^\dagger$ produced by \Cref{alg:vi}. Then, $\Omega^\dagger$ approximates $W^*$ with the following error bound:
    \begin{equation*}
        \lVert W^* - \Omega^\dagger \rVert_\infty \leq \frac{ c^+ \Delta x}{2(1-\beta)}.
    \end{equation*}
\end{enumerate}
\end{theorem}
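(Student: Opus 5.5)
The plan is to treat \Cref{alg:vi} as iteration of a discretized Bellman operator $\widehat T$ on the finite-dimensional space of grid functions $\{W(l,x)\}_{l\in[L],x\in\mathcal{X}}$ with the sup norm. We will compare $\widehat T$ with the exact operator $T$ of \Cref{eq:Wl-fp}, which acts on continuous functions on $[L]\times[0,\bar X]$.

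For part (a), we show $\widehat T$ is a $\beta$-contraction. Take two grid functions $W,W'$ with interpolants $\Omega,\Omega'$. Linear interpolation is a convex combination of grid values, so $\lVert\Omega-\Omega'\rVert_\infty\le\lVert W-W'\rVert_\infty$. Each candidate $v_{\textrm{rel}},v_{\textrm{stay}},v_{\textrm{pr}}$ depends on $W$ only through the term $\beta\Omega(\cdot,\cdot)$; all other terms are fixed. Hence each candidate changes by at most $\beta\lVert W-W'\rVert_\infty$. The operations $\min$ over the three candidates and running minimum over $\tilde x\ge x$ are non-expansive, so $\lVert\widehat TW-\widehat TW'\rVert_\infty\le\beta\lVert W-W'\rVert_\infty$. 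Banach's theorem then gives a unique fixed point $W^\dagger$ and the bound $\lVert W^{(n)}-W^\dagger\rVert_\infty\le\beta^n\lVert W^{(0)}-W^\dagger\rVert_\infty$. Solving $\beta^n\lVert W^{(0)}-W^\dagger\rVert_\infty\le\varepsilon$ gives the stated $n$.

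For part (b), write $\Pi$ for sampling on the grid followed by linear interpolation. The standard perturbation argument gives
\[
\lVert W^*-\Omega^\dagger\rVert_\infty \le \lVert W^*-\Pi W^*\rVert_\infty + \lVert \Pi W^* - \Omega^\dagger\rVert_\infty .
\]
Since $W^*=TW^*$ and $\Omega^\dagger=\Pi\widehat{T}$-type iterates, the second term is at most $\beta\lVert W^*-\Omega^\dagger\rVert_\infty$. This uses the fact that $\widehat T$ at grid points equals $T$ applied to the interpolant, restricted to grid minimizers. Rearranging yields $\lVert W^*-\Omega^\dagger\rVert_\infty\le\frac{1}{1-\beta}\lVert W^*-\Pi W^*\rVert_\infty$.

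It then remains to bound the interpolation error of $W^*$. Here $V=W-c^+x$ together with the structure of \Cref{eq:Wl-fp} shows each $W_l^*$ is non-decreasing and $c^+$-Lipschitz. The Lipschitz bound comes from $W_l(x)\le W_l(y)+c^+(y-x)$ for $x\le y$: an agent at $x$ can first improve to $y$ at cost $c^+(y-x)$ in $V$-terms. For a monotone $c^+$-Lipschitz function on a cell of width $\Delta x$, the gap between the function and its chord is at most $c^+\Delta x/2$. Combining these gives the claimed $\frac{c^+\Delta x}{2(1-\beta)}$.

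The main obstacle is the grid restriction in the running minimum. \Cref{alg:vi} minimizes over $\tilde x\in\mathcal{X}$ rather than over the continuum, so $\widehat T$ is not exactly $\Pi T\Pi$. We must argue that this discrepancy is absorbed into the same $c^+\Delta x/2$ local error per step rather than adding an extra term. The approach is to use the piecewise-linear structure of $\Phi$ between grid points. If that fails, we fall back on bounding the discrepancy by the Lipschitz constant of $\Phi$, and the constant may need adjusting. The truncation at $\bar X$ is handled by taking $\bar X\ge\mu_L+\delta L/(1-\gamma)$. On $x\ge\mu_L$ the functions are affine with slope $c^+$ and the minimizer is $\tilde x=x$, so truncation introduces no error.
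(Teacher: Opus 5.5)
Your part (a) is the paper's argument: the iteration is $\tilde B\circ\Phi\circ D$, interpolation and the running minimum are non-expansive, $\Phi$ is a $\beta$-contraction, and Banach plus the $\beta^n$ estimate finish it.

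Part (b) has a genuine gap, exactly at the step you flag as the obstacle, and the gap costs you the constant. The claim $\lVert \Pi W^*-\Omega^\dagger\rVert_\infty\le\beta\lVert W^*-\Omega^\dagger\rVert_\infty$ is false. At a grid point $x$, $\Pi W^*(l,x)=\min_{\tilde x\in[x,\bar X]}\Phi_l(\tilde x\mid W^*)$ is a continuum minimum. By contrast, $\Omega^\dagger(l,x)=\min_{\tilde x\in\mathcal X,\,\tilde x\ge x}\Phi_l(\tilde x\mid \Omega^\dagger)$ is a grid minimum. So even if $W^*$ were substituted for $\Omega^\dagger$ inside $\Phi$, the two sides would still differ by the grid-minimization error $(\tilde B-B)(\Phi W^*)\ge 0$.

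Your piecewise-linear rescue does not work. $\Phi_l(\cdot\mid\Omega)$ evaluates $\Omega$ at $\gamma\tilde x+\delta(\cdot)$ and contains the threshold terms, so its kinks sit at $(g-\delta(\cdot))/\gamma$ for grid points $g$ and at the $\mu_l$, not on the grid. If you instead add this error (up to $c^+\Delta x/2$ at grid points) to your interpolation error $c^+\Delta x/2$, you get $c^+\Delta x/(1-\beta)$. Even the sharper $c^+\Delta x/4$ interpolation bound for monotone Lipschitz functions only gives $\tfrac{3c^+\Delta x}{4(1-\beta)}$.

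The paper never separates the two error sources. It lifts the algorithm's operator to continuous functions, $\tilde T:=D\circ\tilde B\circ\Phi$, and observes that $\Omega^\dagger=\tilde T\Omega^\dagger$ and that $\tilde T$ is a $\beta$-contraction. This gives $\lVert W^*-\Omega^\dagger\rVert_\infty\le\frac{1}{1-\beta}\lVert (B-D\circ\tilde B)(\Phi W^*)\rVert_\infty$. That single local error is then bounded by $c^+\Delta x/2$ directly, using the $c^+$-Lipschitz property of $\Phi_l(\cdot\mid W^*)$. The continuum minimizer $z$ over $[x,\bar X]$ has a grid neighbour within $\Delta x/2$. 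The interpolated grid running minima at the endpoints of the cell containing $x$ exceed $\Phi_l(z\mid W^*)=W^*(x)$ by at most $c^+\Delta x/2$, and fall below it by at most $c^+\Delta x/4$. A short case split on whether $z$ lies inside that cell or beyond it makes this precise.

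Separately, your justification of the Lipschitz constant runs in the wrong direction. Improving first from $x$ to $y$ gives $V(l,x)\le V(l,y)+c^+(y-x)$, which in $W$-terms is just $W_l(x)\le W_l(y)$, i.e.\ monotonicity. The bound you need is $W_l(y)-W_l(x)\le c^+(y-x)$, which is equivalent to $V(l,y)\le V(l,x)$ (a higher attribute never hurts). The improve-first argument does not show this: a higher feature can change the classifier outcome and hence the level. The paper obtains the bound from the Bellman structure instead. If $W$ is $K$-Lipschitz, each branch of $\Phi_l$ has slope at most $(1-\beta\gamma)c^++\beta\gamma K$, and the running minimum preserves this upper slope bound. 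Hence $K\le(1-\beta\gamma)c^++\beta\gamma K$, i.e.\ $K\le c^+$. The same computation shows $\Phi_l(\cdot\mid W^*)$ is itself $c^+$-Lipschitz, which is the property the combined local estimate actually requires.
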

\paragraph{Implication} \Cref{thm:vi} implies a \textbf{linear convergence rate} of \Cref{alg:vi} and the approximation accuracy of the solution found by \Cref{alg:vi} is increasing in the improvement cost $c^+$, grid step $\Delta x$, and the agent's farsightedness $\beta$.
\begin{proof}
    According to the monotonicity of $\Phi_l^{(6)}$ in \Cref{eq:phi-l}, there is no effort involved if $x\geq \mu_L$. Thus, we can set $X$ large enough such that the minimum of \Cref{eq:w} is attained within $\tilde{x}\in[0,\bar{X}]$, i.e., $W(l,x)=\min\limits_{\tilde{x}\in[x,\bar{X}]}\;\Phi_l(\tilde{x}\mid W)$.

    By discretization, each iterate $W^{(n)}$ can be viewed as a function on the grid $[L]\times \mathcal{X}$. Let $D$ be the linear interpolation operator such that  ${D}\circ W^{(n)}$ is a function that maps $[L]\times[0,\bar{X}]$ to $\mathbb{R}$, obtained by linearly interpolating $W^{(n)}$.
    With a slight abuse of notation, let $\Phi$ denote the operator of \Cref{eq:phi-l}
    $(\Phi\circ F)(l,x)=\Phi_l(x\mid F)$, $\forall l\in[L],\;x\in[0,\bar{X}]$.
    Let $B$ be the running minimum operator such that
    $$(B\circ F)(l,x) = \min_{\tilde{x}\in[x,\bar{X}]} F(l,\tilde{x}),\quad \forall (l,x)\in[L]\times [0,\bar{X}],$$
    and $\tilde{B}$ be its discretized version on the grid $\mathcal{X}$, i.e.,
    \begin{equation*}
        (\tilde{B}\circ F)(l,x) = \min_{\tilde{x}\in\mathcal{X},\tilde{x}\geq x}\;F(l,x),\;\;\forall (l,\tilde{x})\in[L]\times \mathcal{X}.
    \end{equation*}
    It follows that the original (non-discretized) Bellman operator is $T=B\circ \Phi$.
    The iteration in \Cref{alg:vi} can be written as $W^{(n+1)}=\tilde{B}\circ \Phi\circ D\circ W^{(n)}$. For the running minimum operator, observe
    \begin{equation*}
        | \min_{x}f(x)-\min_x g(x)|\leq \max_x|f(x)-g(x)|
    \end{equation*}
    because otherwise the distance $|f(z)-g(z)|$ where $z:=\argmin_x f(x)$ would be strictly larger than $\max_x|f(x)-g(x)|$. This implies that the $B$ operators are non-expansive:
    \begin{equation*}
        \lVert B\circ F-B\circ G\lVert_\infty\leq \lVert F-G\lVert_\infty,\;\forall F,G:[L]\times[0,\bar{X}]\to\mathbb{R},
    \end{equation*}
    and so does the $\tilde{B}$ operator.
    It is easy to verify that the linear interpolation operator is also non-expansive because linear interpolation is a convex combination of neighboring grid values. Thus, we obtain
    \begin{equation*}
        \lVert W^{(n+1)}-W^{(n)}\lVert_\infty\leq \beta\lVert W^{(n)}-W^{(n-1)}\lVert_\infty,
    \end{equation*}
    due to the $\Phi$ operator being a $\beta$-contraction. Since $0<\beta<1$, the sequence $\{W^{(n)}\}_{n\geq0}$ will converge to a unique limit. Thus,
    \begin{equation*}
        \lVert W^{(n)}-W^\dagger\lVert_\infty = \lVert (\tilde{B}\circ \Phi\circ D) (W^{(n-1)}-W^\dagger)\lVert_\infty \leq \beta \lVert W^{(n-1)}-W^\dagger\lVert_\infty \leq \beta^n \lVert W^{(0)}-W^\dagger\lVert_\infty.
    \end{equation*}
    Thus, to achieve $\varepsilon$ approximation error, we require at least $n\geq \frac{\log\varepsilon-\log\lVert W^{(0)}-W^\dagger\lVert_\infty}{\log\beta}=\frac{\log(\lVert W^{(0)}-W^\dagger\lVert_\infty/\varepsilon)}{|\log\beta|}$ iterations.

    To show the approximation accuracy, we first show that $\Phi_l(\cdot\mid W^*)$ is $c^+$-Lipschitz.
    We claim that $W^*(l,\cdot)$ is Lipschitz without proof, which should be obvious following the induction argument of \Cref{app:multi-level-legup-2}.
    Let this Lipschitz constant be $K$. Then, by \Cref{eq:phi-l}, we have the left derivative of $\Phi_l$, denoted by $\Phi_l'$, is bounded by
    \begin{equation}\label{eq:phi-lip-ineq}
        \Phi_l'\leq (1-\beta\gamma)c^+ + \beta\gamma K.
    \end{equation}
    As $W^*(l,x)=\min_{\tilde{x}\geq x}\Phi_l(\tilde{x})$, we have $K\leq (1-\beta\gamma)c^+ + \beta\gamma K\implies K\leq c^+$. Thus, $W^*(l,\cdot)$ is $c^+$-Lipschitz. It follows from \Cref{eq:phi-lip-ineq} that $\Phi_l(\cdot\mid W^*)$ is also Lipschitz with constant $(1-\beta\gamma)c^+ + \beta\gamma c^+ =c^+$.

    Let $\tilde{T}:=D\circ \tilde{B}\circ \Phi$ be the iteration operator used in \Cref{alg:vi} such that $\Omega^{(n+1)}=\tilde{T}\Omega^{(n)}$. Then, we can decompose the difference between the true fixed point $W^*$ and the interpolation of the convergence limit $\Omega^\dagger$ as follows:
    \begin{equation}\label{eq:accuracy-decompose}
        \lVert W^*-\Omega^\dagger\lVert_\infty = \lVert TW^*-\tilde{T} \Omega^\dagger\lVert_\infty \leq \underbrace{\lVert TW^*-\tilde{T} W^*\lVert_\infty}_{=:\varepsilon_A} + \underbrace{\lVert \tilde{T} W^*-\tilde{T} \Omega^\dagger\lVert_\infty}_{=:\varepsilon_B},
    \end{equation}
    Thus, using the contraction property of $\tilde{T}$, we have $\varepsilon_B\leq \beta \lVert W^*-\Omega^\dagger \lVert_\infty$. Plugging this back to \Cref{eq:accuracy-decompose}, we have
    \begin{equation}\label{eq:ca-unresolved}
        \lVert W^*-\Omega^\dagger \lVert_\infty\leq \frac{1}{1-\beta}\lVert TW^*-\tilde{T} W^*\lVert_\infty
    \end{equation}
    For $\varepsilon_A$, let $i,j\in\mathcal{X}$ denotes any two neighboring points in the grid $\mathcal{X}$ such that $j>x$. For $p\in[0,1]$, denote $(i,j)_p:=pi+(1-p)j\in[i,j]$ as a point in the interval. Then, we have
    \begin{multline*}
        \varepsilon_A = \lVert (B-D\circ \tilde{B})(\Phi W^*) \lVert_\infty \\= \max_{l\in[L]}\;\max_{i,j\in\mathcal{X}}\;\max_{p\in[0,1]}\;\left|\min_{\tilde{x}\in[(i,j)_p,\bar{X}]}\;\Phi_l(\tilde{x}\mid W^*) - \left(p\min_{\substack{\tilde{x}\in\mathcal{X},\tilde{x}\geq i}}\Phi_l(\tilde{x}\mid W^*)+(1-p)\min_{\substack{\tilde{x}\in\mathcal{X},\tilde{x}\geq j}}\Phi_l(\tilde{x}\mid W^*)\right)\right|
    \end{multline*}

    Let $z\in[0,\bar{X}]$ be the minimizer of $\min_{\tilde{x}\in[(i,j)_p,\bar{X}]}\Phi_l(\tilde{x}\mid W^*)$ and $i_z$ and $j_z$ be two neighbors in $\mathcal{X}$ such that $i_z\leq z<j_z$. Then, we observe
    \begin{equation*}
        \left|\min\limits_{\tilde{x}\in[(i,j)_p,\bar{X}]}\Phi_l(\tilde{x}\mid W^*) - \min\limits_{\tilde{x}\in\mathcal{X},\tilde{x}\geq i}\Phi_l(\tilde{x}\mid W^*)\right|
        \leq \min_{\tilde{x}\in\{i_z,j_z\}}\left|\Phi_l(\tilde{x}\mid W^*)-\Phi_l(z\mid W^*)\right|\leq c^+\frac{\Delta x}{2},
    \end{equation*}
    where the last inequality utilized the Lipschitz constant of $\Phi_l(\cdot,W^*)$ and the fact that $\min\{|i_z-z|,|j_z-z|\}\leq \frac{|i-j|}{2}$.
    Similarly, the distance between $\min_{\tilde{x}\in[(i,j)_p,\bar{X}]}\Phi_l(\tilde{x}\mid W^*)$ and $\min_{\tilde{x}\in\mathcal{X},\tilde{x}\geq j}\Phi_l(\tilde{x}\mid W^*)$ is also bounded above by $c^+\frac{\Delta x}{2}$. Therefore, we obtain $\varepsilon_A\leq \frac{c^+\Delta x}{2}$.
   Combining this with \Cref{eq:ca-unresolved}, we obtain $\lVert W^*-\Omega^\dagger \lVert_\infty\leq \frac{c^+\Delta x}{2(1-\beta)}$.
\end{proof}

\subsection{The Greedy Heuristic}\label{app:greedy-alg}
The proposed greedy heuristic is shown in \Cref{alg:greedy_thresholds}.

\subsubsection{Proof of \Cref{thm:greedy_alg}}

\paragraph{Notations} Let $\vec{\mu}^{(l)}$ be the threshold sequence generated in the $(l-1)$-th iteration. Define $\Theta^{(l)}$ as the system with threshold sequence $\vec{\mu}^{(l)}$.
We use superscripts to denote quantities defined in each system $\Theta^{(l)}$ (e.g., $W^{(l)}$ and $\Phi^{(l)}$).

\begin{lemma}
    In  $\Theta^{(l)}$, an agent at state $s_0=(l_0,x_0)=(1,0)$, under its optimal long-term strategy, takes the shortest path to reach level $l$ using improvement efforts only (i.e., improve to reach level $l$ in exactly $l$ steps).
\end{lemma}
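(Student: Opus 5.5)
I will argue by induction on $l$, using the acceptance test that \Cref{alg:greedy_thresholds} applies to every accepted threshold. The base case $l=2$ holds because $\mu_2$ was accepted only after checking three things for the agent at $(1,\gamma x_1+\delta\cdot 0)$ in $\Theta^{(2)}$, with $x_1=0$: its post-response attribute equals its feature ($x_l=z$, so there is no gaming), $z\geq\mu_2$ (so it is promoted), and it improves again at level $2$. Starting from $(1,0)$, the agent therefore improves to $\mu_2$ and is promoted in one step.

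For the inductive step, I take $\vec{\mu}^{(l+1)}=\vec{\mu}^{(l)}\cup\{\mu_{l+1}\}$ and need two facts.

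First, adding a top level does not change the agent's behaviour on the path below it. I plan to show this with the monotone structure of \Cref{eq:Wl-fp}. In the earlier system, the agent at level $l$ on the path stays put by improving to $\mu_l$; this is exactly what the acceptance test's condition $a^+(l,\gamma m+\delta l)>0$ records. Adding level $l+1$ only adds the option $\Phi_l^{(5)},\Phi_l^{(6)}$ and raises the reward attainable at the top. As in \Cref{lemma:feasible-cond}-(2),(3), the agent's strategy at levels $k<l$ on the path is then still to improve up to $\mu_{k+1}$. Concretely, I will argue that $W^{(l+1)}_k$ coincides with $W^{(l)}_k$ up to the constant shift induced by the new continuation value, which is the same at every point the path visits. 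Hence the argmin in \Cref{eq:a-plus-l} at $(k,x_k)$ is unchanged for $k<l$. Lower thresholds are never below $\frac{\delta(k-1)}{1-\gamma}$, by the initialisation $a\gets\max\{\mu_{l-1},\frac{\delta(l-1)}{1-\gamma}\}$. This guarantees that the path attributes satisfy $\gamma\mu_k+\delta(k-1)\le\mu_k$, so the agent never skips a level and is never relegated.

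Second, at level $l$ the agent jumps to $\mu_{l+1}$ by pure improvement. This is exactly the bisection acceptance criterion in $\Theta^{(l+1)}$: the lines computing $x_l$ and $z$ evaluate the optimal action at the state $(l,\gamma x_{l-1}+\delta(l-1))$ reached along the path, and require $x_l=z\geq m$.

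Combining the two facts, the agent reaches level $l+1$ at time $l$, i.e.\ by the shortest path, using improvement only. No shorter path exists, since the level changes by at most one per step.

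The main obstacle is the first fact, because adding level $l+1$ changes the value functions at all levels, so the inductive hypothesis does not transfer automatically. I would first try to show that, along the path states, the differences $\Phi_k^{(5)}(\mu_{k+1})-\Phi_k^{(j)}(\cdot)$ only become more favourable to promotion when a level above is added. The reason is that $W_{k+1}$ weakly decreases when the reward opportunities above increase. If this comparison fails, the fallback is to note that \Cref{alg:greedy_thresholds} calls \textsc{ValueIterate} on the full current sequence $\vec{\mu}\cup\{m\}$. One can then re-verify the path conditions numerically at each step and treat them as part of the acceptance test, which is what the algorithm's state recursion through $x_{l-1}$ implicitly assumes.
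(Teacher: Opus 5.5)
\textbf{Gap: Fact 1 is unproven, and the inductive step depends on it.} The argument you give for it does not go through.

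First, invoking \Cref{lemma:feasible-cond}-(2),(3) is not available. That lemma derives the flat structure of $W_l$ from the hypothesis that $\vec{\mu}$ is already feasible for all initial states. For $\vec{\mu}^{(l+1)}$ that is what is being proved. Your inductive hypothesis is a statement about the single start $(1,0)$, so it does not supply this even for $\vec{\mu}^{(l)}$.

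Second, the claim that $W^{(l+1)}_k$ equals $W^{(l)}_k$ up to a shift that is constant along the path is unsupported and false in general. The extra value of level $l+1$ reaches a state only after a state-dependent number of discounted steps, and not at all from states where inaction stays optimal. So the shift differs across levels and across attributes.

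Third, even a shift that is constant at the on-path points would not preserve the argmin in \Cref{eq:a-plus-l}. That argmin compares improving to $\mu_{k+1}$ (continuation $W_{k+1}(\gamma\mu_{k+1}+\delta k)$) with off-path options:
\begin{itemize}
\item the stay and relegation branches, which involve $W_k$ and $W_{k-1}$;
\item gaming-assisted promotion $\Phi_k^{(5)}(\tilde{x})$ with $\tilde{x}<\mu_{k+1}$, whose continuation $W_{k+1}(\gamma\tilde{x}+\delta k)$ sits at a lower, off-path attribute.
\end{itemize}

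Your monotone fallback (``$W_{k+1}$ decreases, so promotion becomes more attractive'') cannot separate $\Phi_k^{(5)}$ from $\Phi_k^{(6)}$, since both promote. What has to be controlled is improvement versus gaming, i.e.\ whether the new level lowers $W_{k+1}$ by more at lower entry attributes. The numerical fallback changes the algorithm rather than proving the lemma.

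Fact 2 inherits the gap. The acceptance test in $\Theta^{(l+1)}$ evaluates the level-$l$ action at the state computed in the previous iteration, i.e.\ along the path of $\Theta^{(l)}$. Separately, $\gamma\mu_k+\delta(k-1)\le\mu_k$ does not show the agent is never relegated. It only says the entry attribute is at most $\mu_k$, so remaining or advancing requires action.

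\textbf{The missing idea is the paper's decomposition,} which never compares whole value functions across systems.
\begin{itemize}
\item For any strategy $\pi$, write the value from $(1,0)$ as $C(\pi)+\beta^{T}V(l,x_T)$. Here $T$ is the first entry time into level $l$ and $C(\pi)$ is the discounted path cost up to $T$.
\item Before first reaching level $l$ the agent never interacts with $\mu_{l+1}$. So $C(\pi)$ is identical in $\Theta^{(l)}$ and $\Theta^{(l+1)}$, and the new level enters only through one continuation term.
\item The paper then argues that a strategy arriving later, or with entry attribute below that of $s^*=(l,\gamma\mu_l+\delta(l-1))$, is dominated by following $\pi^{(l)}$ to $s^*$. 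It uses the inductive optimality of $\pi^{(l)}$ in $\Theta^{(l)}$, monotonicity of $V=W-c^+x$ in $x$, and the sign of the continuation value.
\item Only after that does it apply the acceptance test for the last step, as in your Fact 2.
\end{itemize}

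When you carry this out, the clean sufficient condition is
$\beta^{T^*}\big[V^{(l)}(s^*)-V^{(l+1)}(s^*)\big]\ge\beta^{T}\big[V^{(l)}(l,x_T)-V^{(l+1)}(l,x_T)\big]$,
where $T^*$ is the arrival time of $\pi^{(l)}$.
\begin{itemize}
\item For later arrival at the same attribute, this follows from $V^{(l+1)}\le V^{(l)}$ (couple the two systems under identical actions) together with $\beta^{T^*}\ge\beta^{T}$.
\item For a lower entry attribute, it requires the gain from the new level not to be larger at $x_T$ than at $s^*$. This is exactly the improvement-versus-gaming comparison above.
\end{itemize}
Monotonicity of $V$ in $x$ alone does not supply that second comparison. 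The paper's short argument also only treats the case $T>l$ explicitly.
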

\begin{proof}

We construct the proof by induction.

\paragraph{Base case} When $l=2$, from \Cref{app:proof-3}, we know that $W_1^{(2)}\equiv W^{(2)}_2$. Thus, line 11 ensures that the least competent agent with $x_0=0$ has an incentive to improve for staying in level 2.

\paragraph{Induction step} Assume the statement holds for $\vec{\mu}^{(k)}$ for all $k\leq l$.  Denote the entry state of this agent to level $l$ by $s^*$, and it is easy to see that $s^*=(l,\gamma\mu_l+\delta(l-1))$. Let this strategy be $\pi^{(l)}$. Thus, the agent's optimal value (negative utility) in $\Theta^{(l)}$ can be written as
\begin{equation*}
    V^{(l),*}(s_0) =V^{(l)}_{\pi^{(l)}}(s_0) = C(\pi^{(l)}) + \beta^{l} V^{(l)}_{\pi^{(l)}}(s^*),
\end{equation*}
where $C(\pi)$
denotes the \emph{path cost} of the agent from $s_0=(1,0)$ to the first time reaching $s^*$ following the strategy $\pi$. In cases when $\pi$ involves levels higher than $l$, $C$ takes them all as staying at level $l$ via improvement only. Notice that this path cost under the same strategy is the same for all systems $\Theta^{(\tilde{l})}$ as long as $\tilde{l}\geq l$.

Consider the sequence $\vec{\mu}^{(l+1)}$ where $\mu^{(l+1)}_{l+1}$ is obtained by \Cref{alg:greedy_thresholds} given prior thresholds $\vec{\mu}^{(l)}$. Define $\pi^{(l+1)}$ as the optimal agent's strategy.
\begin{itemize}
    \item If $\pi^{(l+1)}$ never leads this agent to level $l$, this is dominated by $\pi^{(l)}$ according to the optimality of $\pi^{(l)}$ in system $\Theta^{(l)}$.
    \item Suppose $\pi^{(l+1)}$ first leads the agent to level $l$ in $T>l$ steps and its entrance attribute is $x_T\leq \gamma\mu+\delta(l-1)$. Notice that $x_T$ can never be larger because the agent has no incentive to improve its attribute beyond $\mu_l$ in the previous stage, according to \Cref{eq:phi-l}. Particularly,  $x_T<\gamma\mu+\delta(l-1)$ indicates a non-zero gaming action is involved in the previous stage. In this case, the agent's value satisfies (notice that by construction of $\mu^{(l+1)}$ in \Cref{alg:greedy_thresholds}, the agent at level $l$ will get promoted to $l+1$ and remain in $l+1$ via improvement actions indefinitely)
    \begin{equation}\label{eq:pi-l-plus-alt}
        V_{\pi^{(l+1)}}^{(l+1)}(s_0) = C(\pi^{(l+1)}) + \beta^TV_{\pi^{(l+1)}}^{(l+1)}(l,x_T).
    \end{equation}
    By optimality of $\pi^{(l)}$ in $\Theta^{(l)}$, we notice
    \begin{equation*}
        C(\pi^{(l)}) + \beta^{l} V^{(l)}_{\pi^{(l)}}(s^*) \leq C(\pi^{(l+1)}) + \beta^{T} V^{(l)}_{\pi^{(l)}}(s^*) \implies C(\pi^{(l)}) < C(\pi^{(l+1)})
    \end{equation*}
    because $T>l$. With this observation, \Cref{eq:pi-l-plus-alt} implies
    \begin{equation*}
        V_{\pi^{(l+1)}}^{(l+1)}(s_0) \geq  C(\pi^{(l+1)}) + \beta^TV_{\pi^{(l+1)}}^{(l+1)}(s^*) \geq C(\pi^{(l)}) + \beta^lV_{\pi^{(l+1)}}^{(l+1)}(s^*).
    \end{equation*}
    In the first inequality, we utilized the fact that $V$ is decreasing in the attribute $x$ because $W_l$ is $c^+$-Lipschitz in $x$ by the proof of \Cref{app:valueiterate} and $V(l,x)=W(l,x)-c^+x$; in the second inequality, we utilized the fact that, by the optimality of $\pi^{(l+1)}$, $V_{\pi^{(l+1)}}^{(l+1)}(s^*)$ must be negative or else doing nothing at all times would dominate $\pi^{(l+1)}$. The overall inequality means that $\pi^{(l+1)}$ is dominated by an alternative strategy where the agent reaches level $l$ in the minimum number of steps, after which it follows the original $\pi^{(l+1)}$.
\end{itemize}
 Therefore, in $\Theta^{(l+1)}$, the agent starting from $(1,0)$ will reach level $l$ following the shortest path and via improvement actions only, and then, according to the construction of $\mu_{l+1}$ in \Cref{alg:greedy_thresholds}, it exerts a one-step improvement to reach the last level $l+1$ in which it will stay indefinitely.
\end{proof}
To show that this holds for all agents with $l_0=1$ and $x_0\geq 0$, observe that the attribute dynamics in level 1 involve depreciation only. Thus, either an agent reaches the last level directly without gaming, or its state enters the region $\{0\}\times [0,\mu_2]$ after a finite number of steps, from which the agent would follow the ``shortest-promotion path'' described above to reach the last level without gaming.\qed

\subsubsection{Ablation tests on thresholds with the greedy algorithm}\label{app:ablation}

To evaluate the sensitivity of the multi-level mechanism to environmental parameters, we conduct a series of ablation tests using the greedy heuristic (\Cref{alg:greedy_thresholds}). Here, we fix L=5 and evaluate the sensitivity of the optimal threshold sequence $\mu^* = \{\mu_1, \mu_2, \dots, \mu_5\}$ as we vary the retention rate ($\gamma$), discount factor ($\beta$), improvement cost ($c^+$), and gaming cost ($c^-$). We define a base environment with the following settings: $\beta = 0.8$, $\gamma = 0.9$, $c^+ = 1.0$,  $c^- = 0.7$, and $r = 1.0$. Notably, we set the leg-up factor $\delta = 0$ for these tests to isolate the effects of the primary costs. For each sweep, we hold the other four parameters constant and vary the target parameter. At each step, we use the {\sc ValueIterate} subroutine from \Cref{app:valueiterate} with a discretized attribute space. The algorithm iteratively identifies the largest possible thresholds that satisfy the incentive compatibility constraints for promotion at each level.

\Cref{fig:appendix_sensitivity_ablation} displays the results of these ablation tests. As seen in \Cref{fig:sens_gamma}, as $\gamma$ increases, all thresholds in the $L=5$ system demonstrate a strictly increasing trend. This behavior aligns with the intuition that higher retention rates effectively reduce the long-term unit cost of honest improvement. This also leads to an increased capacity for higher thresholds for the principal, because when a greater portion of an agent's effort is retained over time, the principal can set significantly higher qualification standards while still maintaining the agent's incentive to choose improvement over gaming. It is also shown that the gap between successive levels also expands as retention rate increases. Because the effort is less wasteful due to slower depreciation, the principal can demand larger jumps between promotions without discouraging the agent.

\begin{figure}[!h]
    \centering

    \begin{subfigure}[b]{0.48\textwidth}
        \centering
        \includegraphics[width=.8\textwidth]{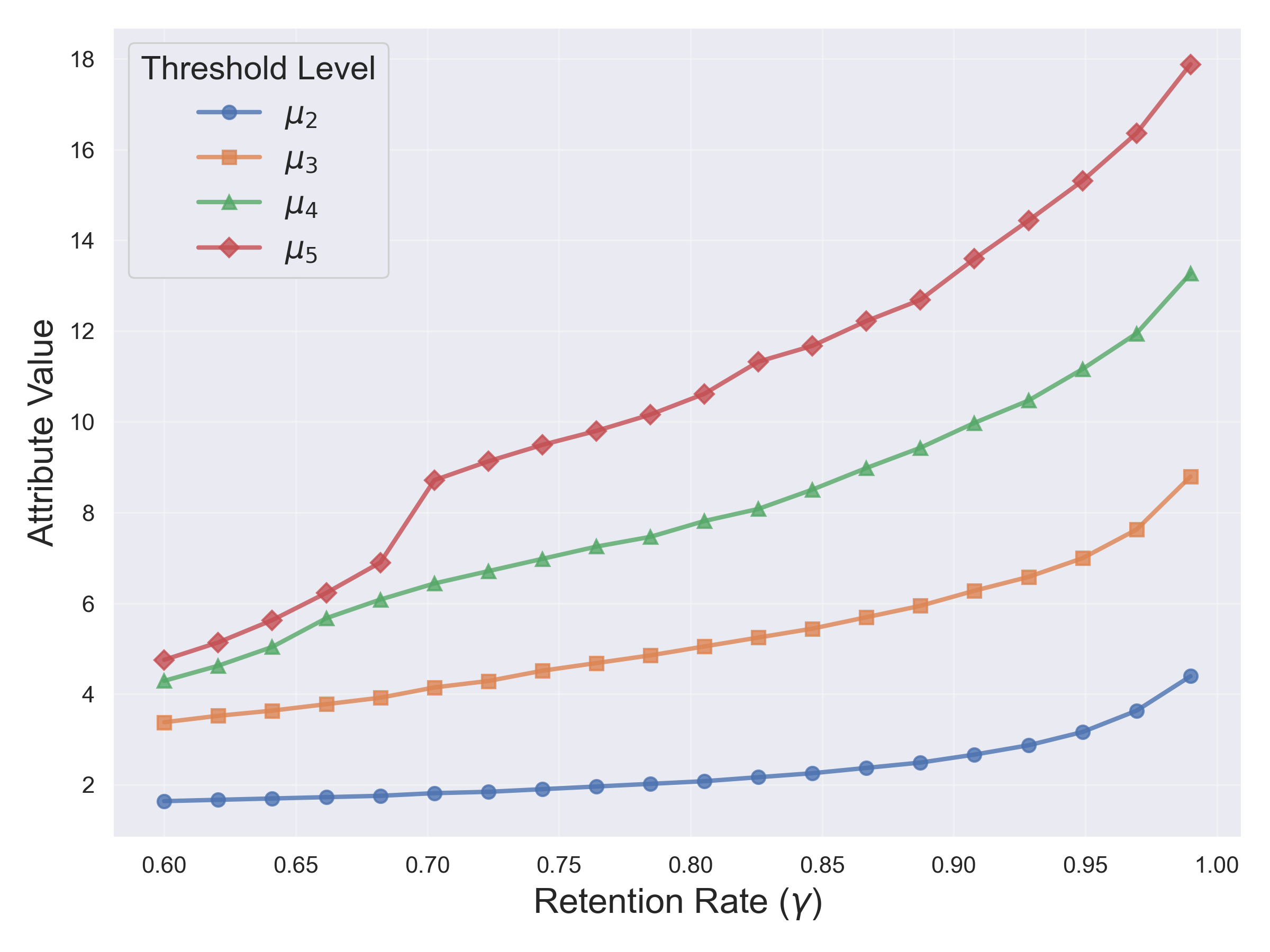}
        \caption{ Thresholds increase with higher retention ($\gamma$).}
        \label{fig:sens_gamma}
    \end{subfigure}
    \hfill
    \begin{subfigure}[b]{0.48\textwidth}
        \centering
        \includegraphics[width=.8\textwidth]{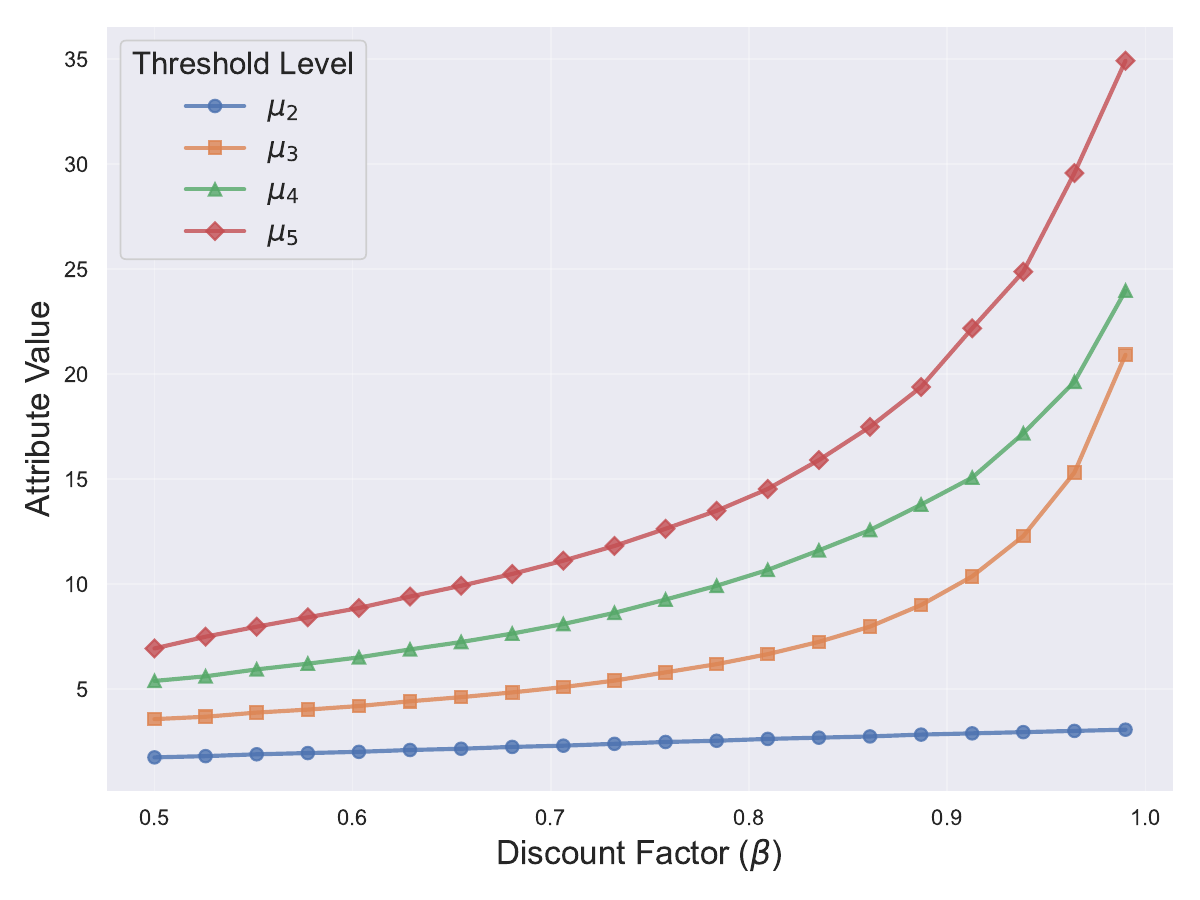}
        \caption{Thresholds increase with higher discount factor ($\beta$).}
        \label{fig:sens_beta}
    \end{subfigure}

    \vspace{0.5cm}

    \begin{subfigure}[b]{0.48\textwidth}
        \centering
        \includegraphics[width=.8\textwidth]{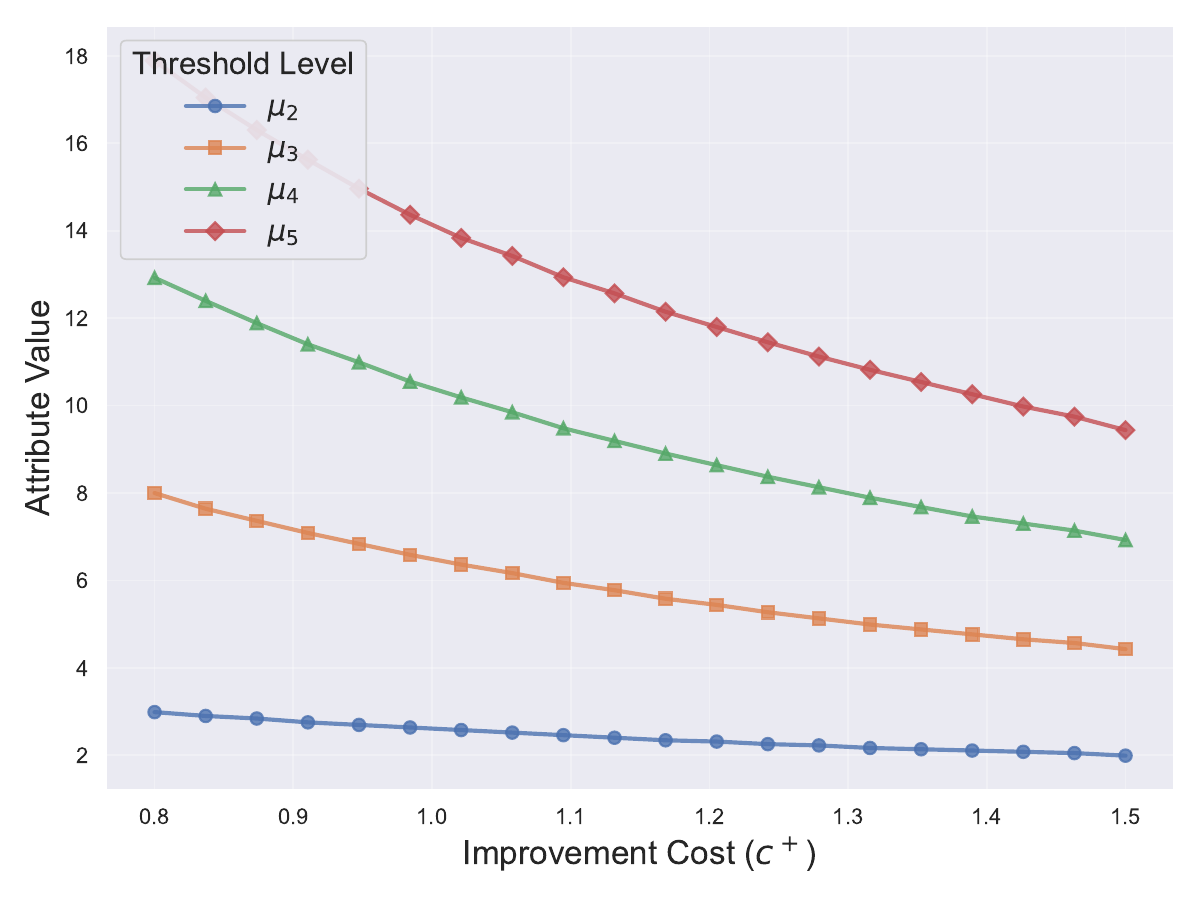}
        \caption{Thresholds decrease as improvement becomes costly ($c^+$).}
        \label{fig:sens_cplus}
    \end{subfigure}
    \hfill
    \begin{subfigure}[b]{0.48\textwidth}
        \centering
        \includegraphics[width=.8\textwidth]{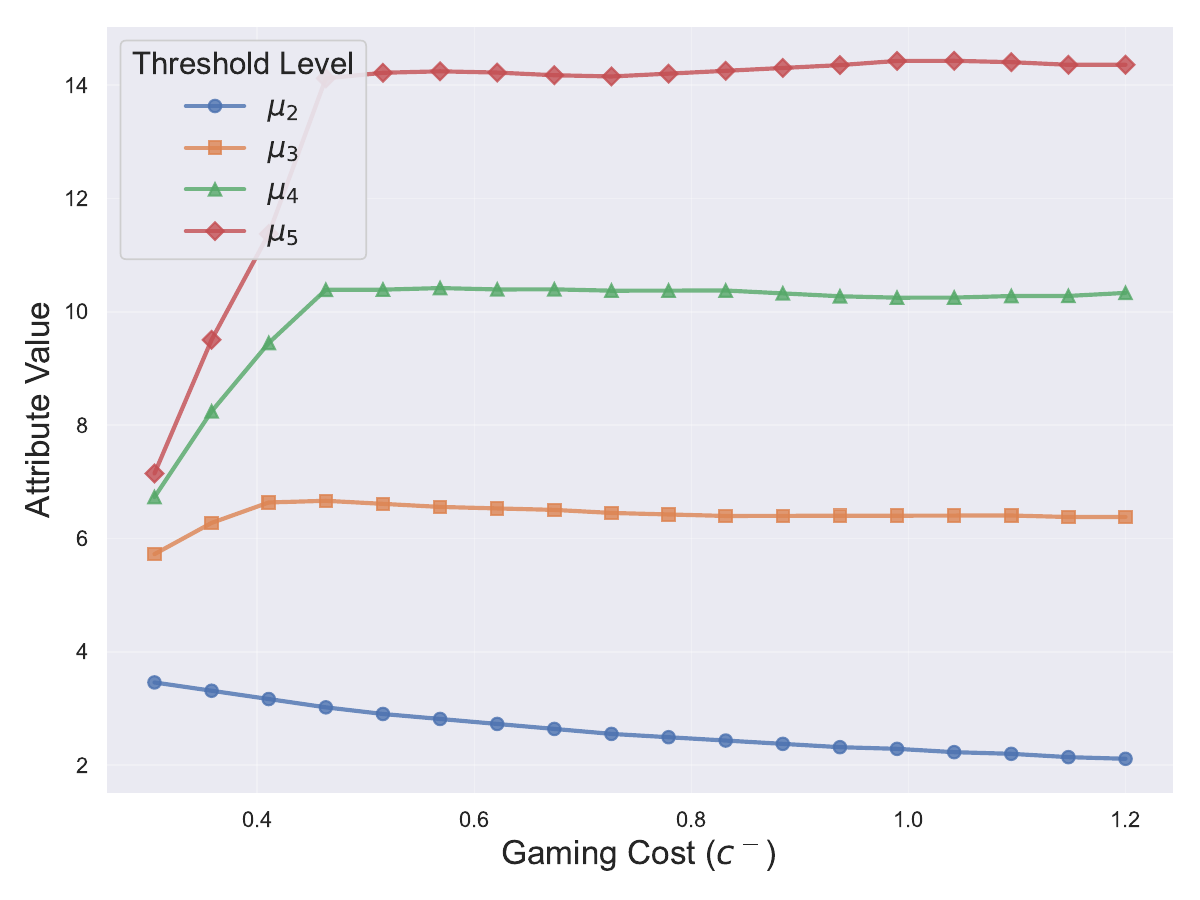}
        \caption{Thresholds stabilize as gaming cost ($c^-$) increases.}
        \label{fig:sens_cminus}
    \end{subfigure}
    \caption{The effect of varying environment parameters on each threshold in the L=5 level case.}

    \label{fig:appendix_sensitivity_ablation}
\end{figure}

\Cref{fig:sens_beta} displays the impact of the agent's farsightedness, represented by the discount factor $\beta$, on the optimal threshold sequence. Similar to the effects of $\gamma$, higher values of $\beta$ allow the principal to maintain higher thresholds across all levels. As $\beta$ increases, the optimal thresholds for all levels grow, indicating that a more patient agent is more willing to undertake costly efforts for future rewards. In \Cref{fig:sens_cplus}, we see that there is a clear downward trend across all thresholds as $c^+$ increases. As honest effort becomes more expensive for the agent, the principal must adjust their thresholds to remain feasible. If thresholds were held constant while improvement cost rose, agents would eventually find gaming to be the only viable economic path to promotion. This suggests that in high improvement cost environments, promotion steps should be more incremental to prevent the total cost of reaching the next level from exceeding the discounted future rewards.

Finally, \Cref{fig:sens_cminus} shows the effect of the unit cost of gaming. The results show a divergence in how the principal sets standards for entry-level versus high-level promotions. At higher levels ($\mu_3, \mu_4, \mu_5$), the thresholds initially rise and plateau as $c^-$ increases. A higher gaming cost provides the principal with more room to set higher thresholds because the relative attractiveness of cheating to reach high levels will decrease. Once $c^-$ is sufficiently high, standards are no longer limited by the threat of gaming but by the agent's long-term capacity to sustain their attribute against depreciation. In contrast to higher levels, $\mu_2$ shows a gradual decrease as gaming becomes more costly. This suggests that at the lowest level of promotion, the principal may prioritize keeping the "first step" accessible. As the penalty for gaming increases, the principal can slightly lower the entry bar to encourage early honest effort without fearing that the agent will easily exploit the system through cheap manipulation.

\subsubsection{How the Interplay Between Farsightedness ($\beta$) and Retention ($\gamma$) Determines the Limits of Incentivizability}\label{app:param-sensitivity}

Next, we investigate the joint influence of an agent's farsightedness ($\beta$) and attribute retention ($\gamma$) on the multi-level mechanism. We perform a 2D grid sweep over $\beta$ and $\gamma$. For each coordinate, we construct an optimal sequence of thresholds using the greedy heuristic (\Cref{alg:greedy_thresholds}). The system parameters are fixed at: $c^+ = 1.0, c^- = 0.7, r = 1.0$, and $\delta = 0$. We set a maximum $L=50$. Across the grid, we measure two primary metrics:

\begin{enumerate}
    \item \textbf{Maximum incentivizable levels:} The total number of levels the algorithm successfully constructs before it is not longer possible to find a next threshold that satisfies the incentive compatibility constraints.
    \item \textbf{Maximum reachable attribute:} The value of the highest threshold $\mu_L$ achieved.
\end{enumerate}

\begin{figure}[htbp]
    \centering
    \begin{subfigure}[b]{0.48\textwidth}
        \centering
        \includegraphics[width=.9\textwidth]{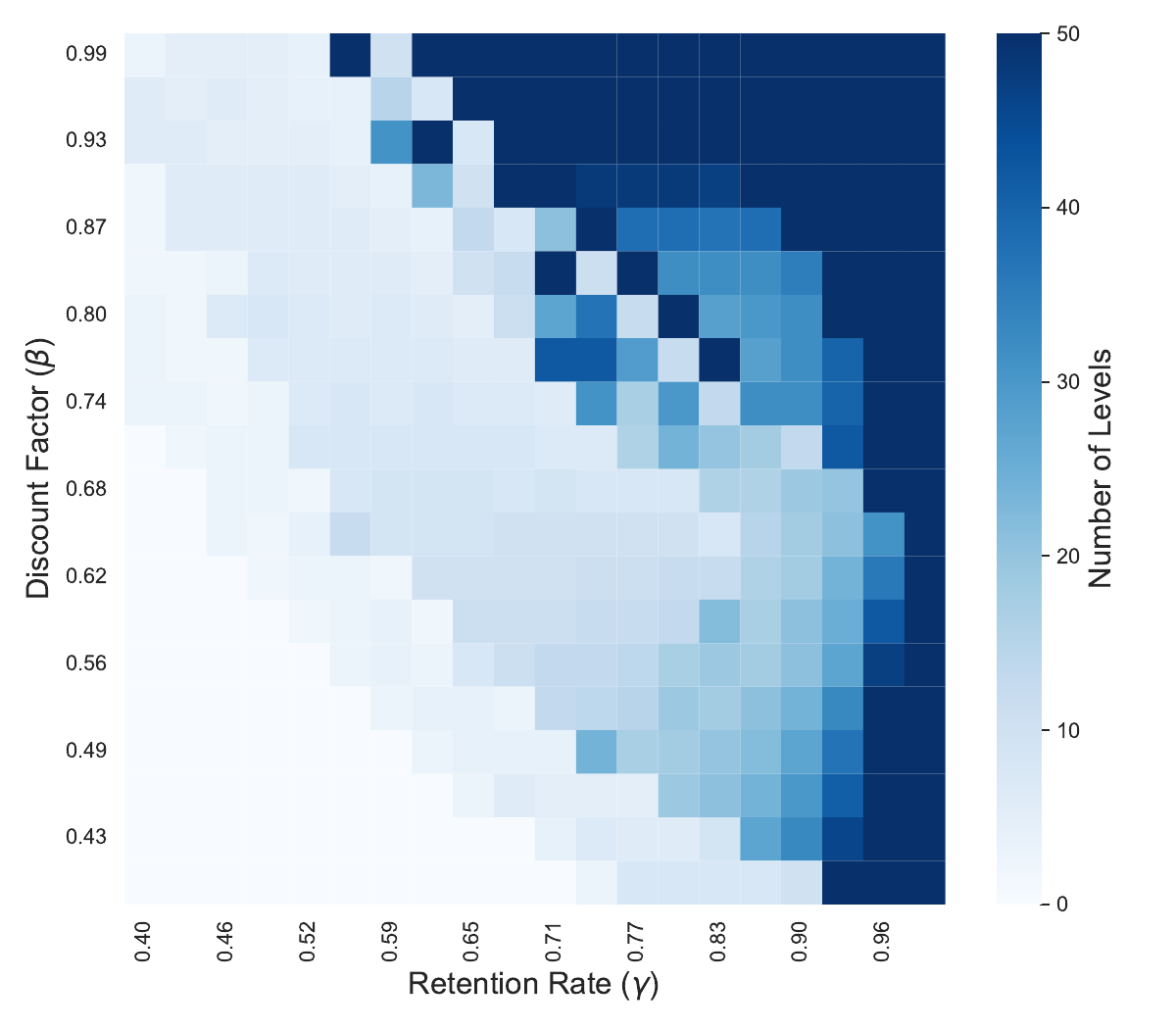}
        \caption{Maximum incentivizable levels.}
        \label{fig:heatmap_levels}
    \end{subfigure}
    \hfill
    \begin{subfigure}[b]{0.48\textwidth}
        \centering
        \includegraphics[width=.9\textwidth]{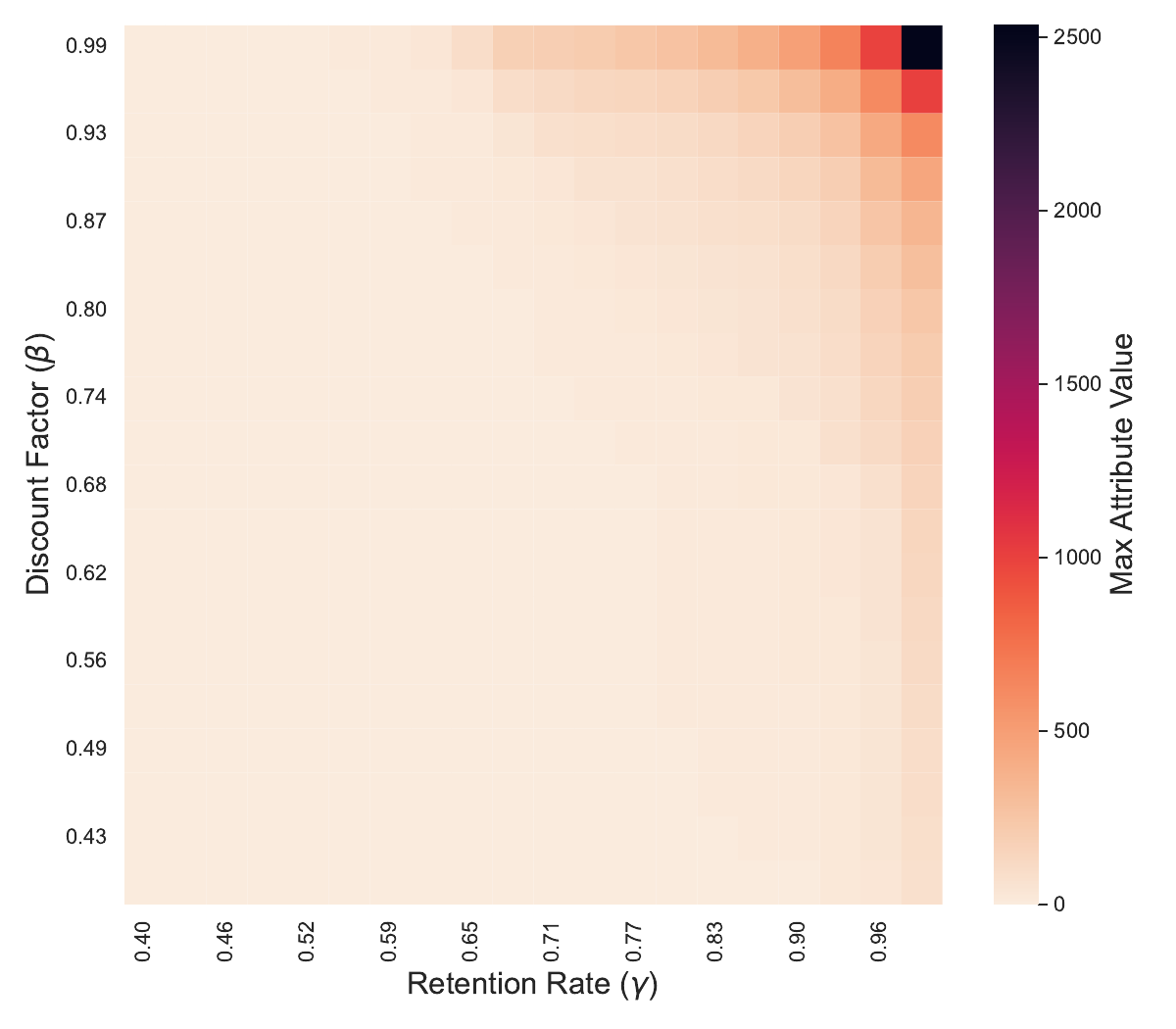}
        \caption{Maximum reachable attribute value.}
        \label{fig:heatmap_attribute}
    \end{subfigure}

    \caption{Heatmaps across varying retention ($\gamma$) and discount factors ($\beta$). Darker regions indicate higher values.}
    \label{fig:exp_2_heatmaps}
    \vspace{-10pt}
\end{figure}

\Cref{fig:heatmap_levels} presents the number of levels ($L$) successfully constructed by the greedy algorithm across varying values of $\gamma$ and $\beta$. The results display the transition in feasibility. A significant portion of the parameter space, particularly where both $\gamma$ and $\beta$ are low, remains entirely non-incentivizable (the white region). In this region, even the lowest promotion threshold cannot be sustained because the cost of honest effort, combined with high attribute depreciation outweighs the discounted future rewards. We also note that the transition from infeasibility to feasibility is non-linear. \Cref{fig:heatmap_levels} also shows that high retention ($\gamma \to 1$) can compensate for low farsightedness, and vice versa. \Cref{fig:heatmap_attribute} complements this analysis by visualizing the maximum incentivizable attribute reached by the mechanism across the grid. The heatmap shows a concentrated spike in the top-right corner, where both $\beta$ and $\gamma$ approach unity. In this region, the principal can incentivize arbitrarily high qualification standards. Otherwise, the vast majority of the grid displays a maximum incentivizable attribute below 500. This confirms that without a leg-up effect ($\delta =0$), the mechanism is capped by the agent's capacity to maintain an attribute against depreciation.

\subsubsection{On the maximum incentivizable attribute}\label{app:max-attr}

This experiment sweeps the gaming cost ($c^-$) to empirically validate the bound in \cref{prop:two-level-impossibility}. We configure this experiment with the following parameters: $c^+ = 1.0, \beta = 0.8, \gamma =0.9,  r = 1.0$, and $\delta = 0$. Here, we measure the maximum incentivizable attribute for the agent.

\begin{figure}[!htbp]
    \centering
    \includegraphics[width=0.6\linewidth]{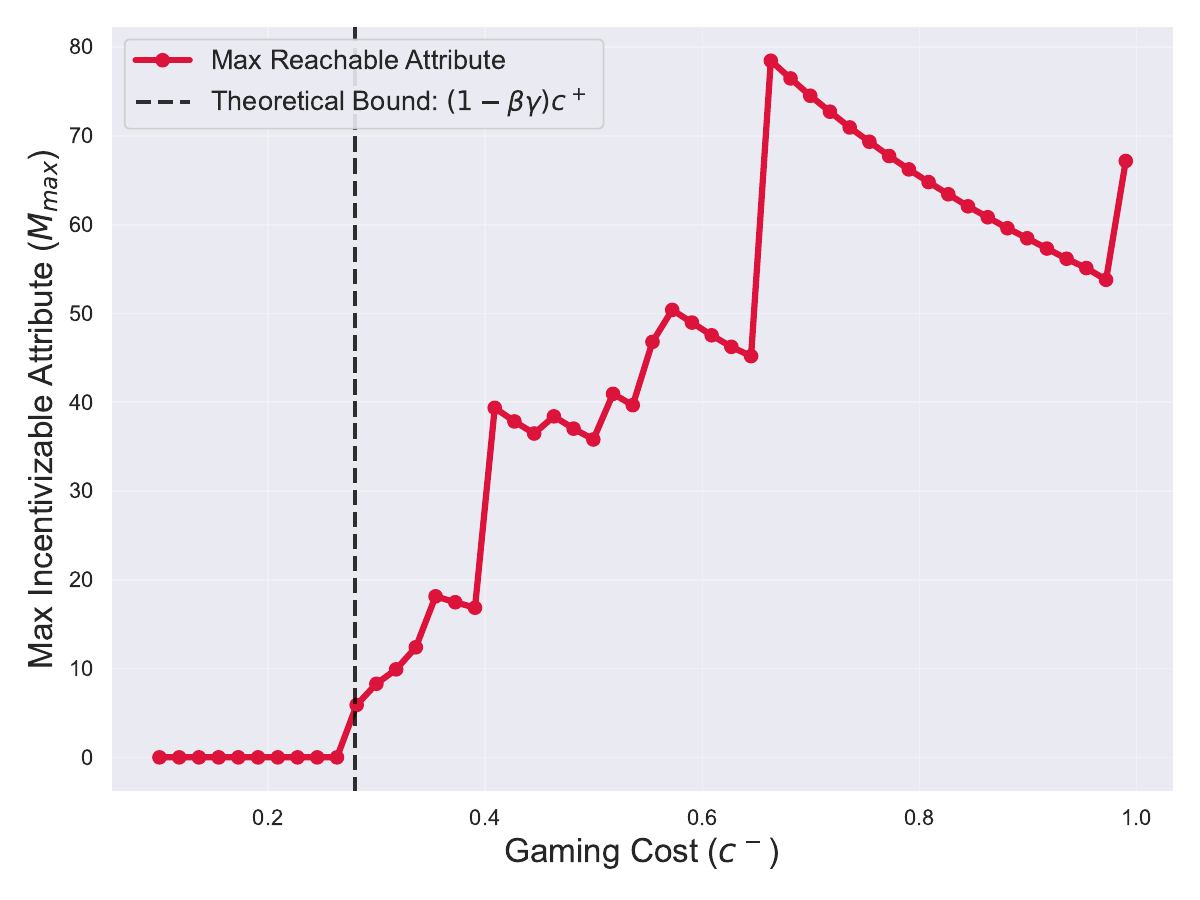}
    \caption{Maximum incentivizable attribute for agents versus cost of gaming $c^-$.}
    \label{fig:exp_4}
\end{figure}

\Cref{fig:exp_4} illustrates the transition in the system's ability to incentivize honest effort based on the gaming cost $c^-$. The empirical results align with \cref{prop:two-level-impossibility}, as we see that the system fails to incentivize any improvement when $c^-$ is below $(1-\beta\gamma)c^+$. In this region, gaming is so inexpensive that no sequence of thresholds can deter manipulation. Immediately after crossing the threshold $(1-\beta\gamma)c^+$, we observe that the maximum incentivizable attribute becomes nonzero and begins increasing. Beyond this initial increase, the non-monotonicity observed arises from the sub-optimality of the greedy algorithm. Since the algorithm maximizes each threshold locally, it may select a value that constrains the feasible search space for subsequent levels. However, the greedy algorithm remains a valuable tool due to its computational efficiency.

\subsection{A Relaxed Objective Function and Experiment}

\subsubsection{Implementation of the CMA-ES Algorithm}\label{app:detail-cma-es}
We maximize the principal's objective using the Covariance Matrix Adaptation Evolution Strategy (CMA-ES), configured with a \textbf{population size of 10} and a \textbf{30-iteration limit} for computational efficiency, subject to non-negativity constraints on the decision variables. To solve the principal's problem \Cref{eqn:relaxed_util}, we iterate for each $L=2$ to $L=8$, run the CMA-ES algorithm on the fixed-length non-negative vector $\{\mu_1,\dots,\mu_L\}$, obtain the principal's utility on the solution by CMA-ES, and choose the one (length $L$ and threshold vector $\{\mu_1,\dots,\mu_L\}$) associated with the highest principal's utility. This gives rise to the solution recorded in \Cref{tab:optimal_policy}.

\subsubsection{Parameter Setups and Additional Results for \Cref{fig:trajectory_outcomes}}\label{app:agent-traj-param}

\Cref{fig:traj_ideal,fig:traj_failure} utilized the same parameter setups as Cases I and IV in \Cref{tab:optimal_policy}, respectively. Specifically, other than $c^+$, $c^-$, $r^*$, and $\vec{\mu}^*$ (the latter two were solved by the CMA-ES algorithm), the rest parameters are set at $\beta = 0.8, \gamma = 0.8, \delta = 0.01, \alpha =0.95, \xi = 0.01,$ and $\lambda = 5$. For parameter setups of Cases II and III, we show the agent's trajectories in \Cref{fig:traj_2,fig:traj_3}, respectively. It is easy to see that both cases demonstrate similar patterns as \Cref{fig:traj_ideal} where the agent continuously improves until it stabilizes at the highest level and attribute.

\begin{figure}[!h]
    \centering
    \begin{subfigure}[b]{0.45\textwidth}
        \centering
        \includegraphics[width=\linewidth]{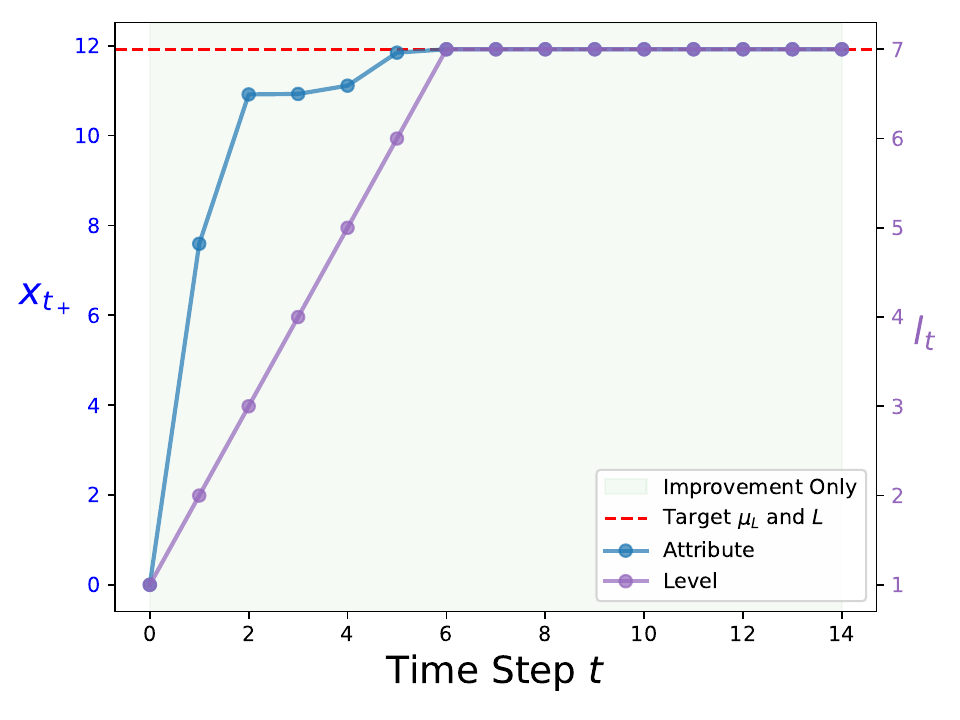}
        \caption{Case II of \Cref{tab:optimal_policy}.}
        \label{fig:traj_2}
    \end{subfigure}
    \begin{subfigure}[b]{0.45\textwidth}
        \centering
        \includegraphics[width=\linewidth]{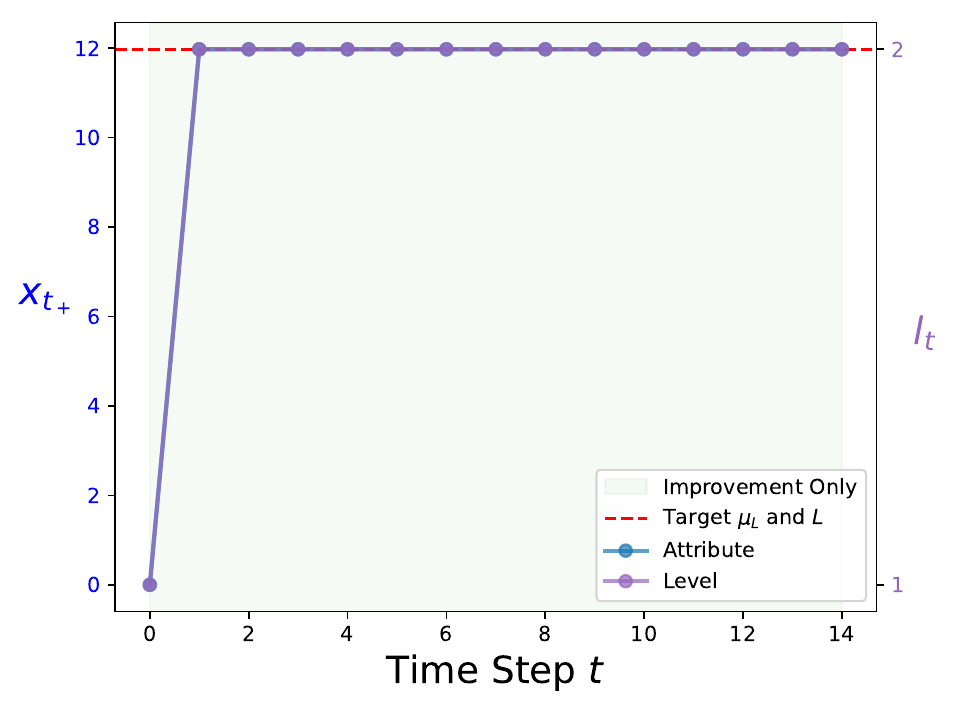}
        \caption{Case III of \Cref{tab:optimal_policy}.}
        \label{fig:traj_3}
    \end{subfigure}
    \caption{Agent state trajectory and strategy types at the principal's optimal design in Case II (a) and Case III (b) of \Cref{tab:optimal_policy}.}
\end{figure}
\Cref{fig:traj_naive} represents the scenario where the threshold sequence defined in \Cref{thm:multilevel-legup}-(2) is infeasible for $\mathbf{P}(M,r)$ because the cost of gaming is too low. We adopt the following parameters: $\beta=0.8$, $\gamma=0.8$, $c^+=1.0$, $c^-=0.365$, $r=1$, and $\delta=0.8$. With these values, one can verify that while the condition $r>\frac{1-\beta}{1-\gamma}c^+\delta$ holds, the condition for $c^-$ fails (specifically, $c^-\not\geq \max\{(1+\frac{\beta\gamma}{2})(1-\beta\gamma)c^+,\;\beta\gamma(1-\beta^2\gamma^2)c^+\}$). However, the global assumption in \Cref{asm:global} remains valid as $c^->(1-\beta\gamma)c^+$.

\begin{figure*}[t]
    \centering
    \begin{subfigure}[b]{0.32\textwidth}
        \centering
        \includegraphics[width=\textwidth,height=0.8\textwidth]{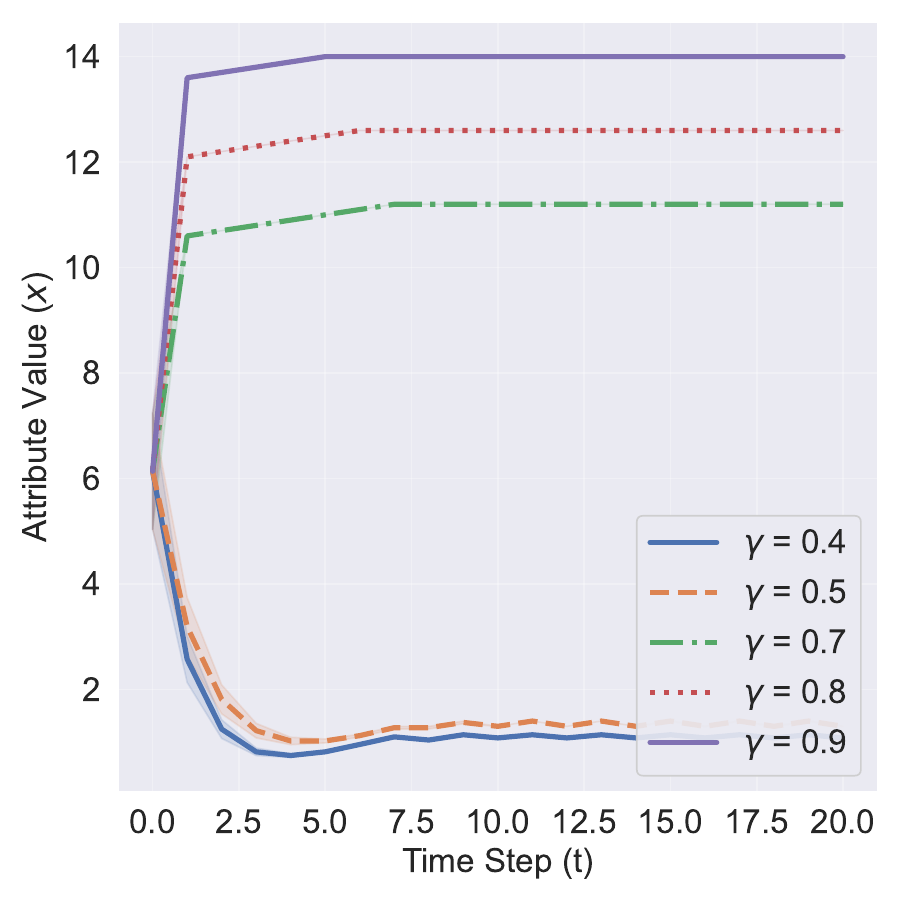}
        \caption{Retention Rate ($\gamma$)}
        \label{fig:exp_6_gamma}
    \end{subfigure}
    \hfill
    \begin{subfigure}[b]{0.32\textwidth}
        \centering
        \includegraphics[width=\textwidth,height=0.8\textwidth]{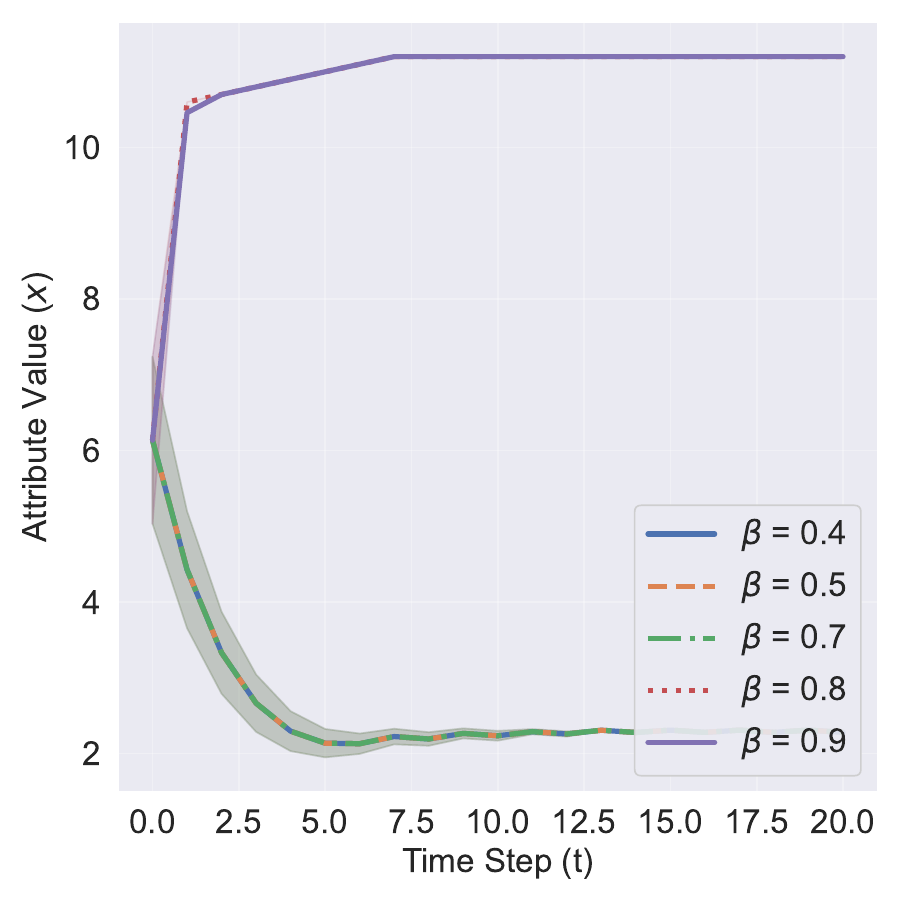}
        \caption{Discount Factor ($\beta$)}
        \label{fig:exp_6_beta}
    \end{subfigure}
    \hfill
    \begin{subfigure}[b]{0.32\textwidth}
        \centering
        \includegraphics[width=\textwidth,height=0.8\textwidth]{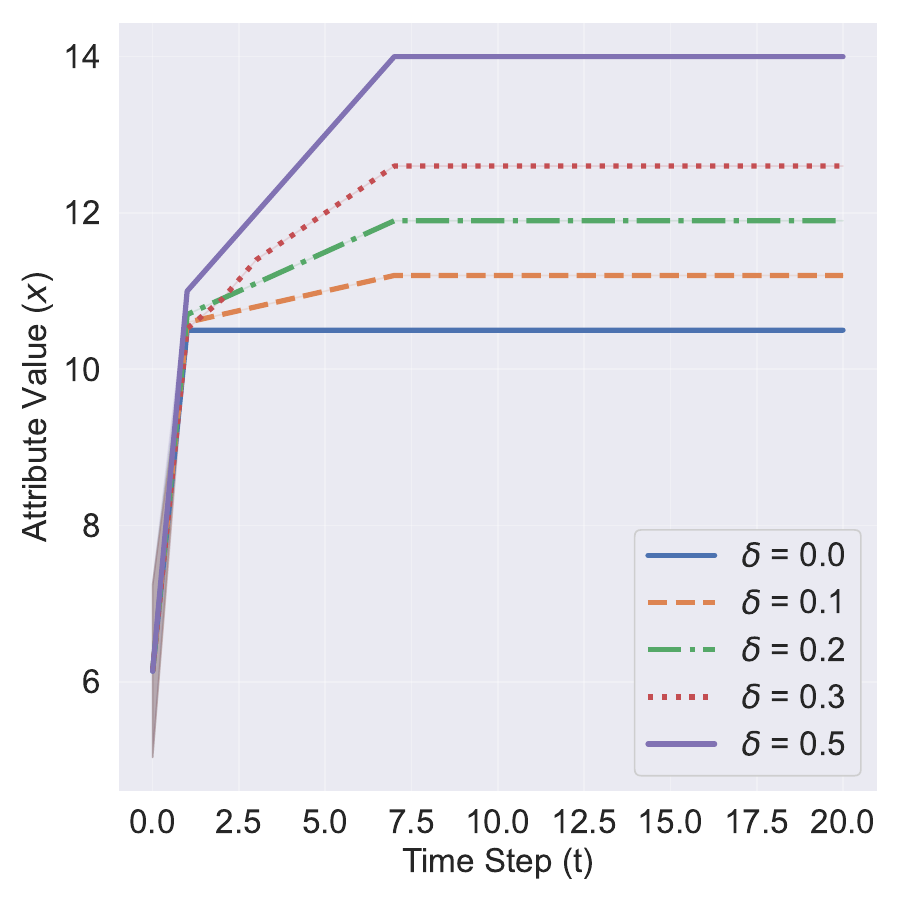}
        \caption{Leg-up Factor ($\delta$)}
        \label{fig:exp_6_delta}
    \end{subfigure}

    \caption{Empirical mean attribute trajectory ($\mathbb{E}_{x_0}[x_{t_+}^*]$) under the principal's optimal design with $\pm 1$ standard deviation.}
    \label{fig:exp_6_ablation_plots}
\end{figure*}

\FloatBarrier

\subsubsection{Influence of System Parameters on Agent Behavior}\label{app:agent-traj-ablation}

We present the agent's trajectory and steady-state under the principal's optimal design in \Cref{fig:exp_6_ablation_plots}. To ensure that the condition in \Cref{asm:global} is met, the control variables in each experiment are as follows: $\gamma = 0.7, \beta = 0.8, c^+ = 1.0, c^- = 0.5, \alpha = 0.95, \lambda = 5.0,$ and $\xi=0.01$.  \Cref{fig:exp_6_gamma,fig:exp_6_beta} show that while both higher retention rate and discount factor can incentivize improvement efforts and higher stead-state attributes, the impact of the retention rate is more significant, consistent with the theoretical results in \Cref{sec:two-level}. \Cref{fig:exp_6_delta} shows that moderately higher leg-up effect is more effective in incentivizing improvement efforts than no leg-up effect at all.

We also examine the agent's strategic actions, specifically the improvement fraction. \Cref{fig:exp_6_action} displays the mean improvement fraction, defined as $\frac{a^+}{a^+ + a^-}$, over the 20-step horizon. These plots isolate the behavioral/action response of the agents to the mechanism.

\Cref{fig:exp_6_gamma_action} demonstrates that the stability of honest effort greatly depends on skill retention. When agents retain most of their effort ($\gamma = 0.9$), agents maintain a perfect improvement fraction (1.0) throughout the entire horizon. In this case, the maintenance cost of their qualification is low enough that they never find it optimal to resort to gaming. For moderate retention ($\gamma = 0.7, 0.8$), as retention decreases, agents initially frontload honest effort but eventually stabilize at a lower improvement fraction between 0.1 and 0.4. This indicates a shift to a mixture of improvement and gaming as the burden of countering attribute depreciation increases. For low retention ($\gamma =0.4, 0.5$), the agents exhibit highly volatile behavior, alternating sharply between pure improvement and pure gaming. This oscillatory pattern suggests a recurrent struggle where agents repeatedly lose their qualifications and must "burst" effort to regain status.

A similar trend is seen in \Cref{fig:exp_6_beta_action}, agents who highly value future rewards ($\beta = 0.9$) exert moderate honest effort, which then stabilizes at an improvement fraction of approximately 0.15. As patience decreases, the strategy becomes less stable. For $\beta = 0.8$, the improvement fraction plateaus higher (around 0.2) than at $\beta = 0.9$, but for $\beta = 0.7$, the population eventually collapses into an oscillatory pattern, indicating a failure to maintain consistent honest effort. For low farsightedness ($\beta = 0.4, 0.5$), agents again maintain a pure improvement strategy (1.0) for a short initial period but quickly descend into oscillations between pure gaming and pure improvement. This suggests that without sufficient long-term incentive, agents are unable to sustain the qualifications required by the multi-level progression.

Finally, \Cref{fig:exp_6_delta_action} illustrates the impact of the leg-up factor $\delta$ on the long-term strategic behavior of the agents. This parameter represents the intrinsic attribute boost an agent receives upon reaching a higher level, simulating enhanced learning resources. In the absence of the leg-up effect, agents converge to a relatively high improvement fraction of approximately 0.4. In this case, agents must rely entirely on costly manual improvement to counter the effects of attribute depreciation. When there is moderate leg-up influence ($\delta = 0.1,0.2,0.3$), as $\delta$ increases, the steady state improvement fraction decreases, stabilizing between 0.15 and 0.25. Because the level itself contributes to attribute growth, the agent can maintain their qualification with lower ongoing investment of effort. At the high value of $\delta=0.5$, the improvement effort drops to its lowest steady state level. This suggests that the natural boost provided by the leg-up is nearly sufficient to sustain the agent's position, requiring only minimal supplemental honest effort to offset the remaining depreciation.

\begin{figure}[H]
    \centering
    \begin{subfigure}[b]{0.32\textwidth}
        \centering
        \includegraphics[width=\textwidth]{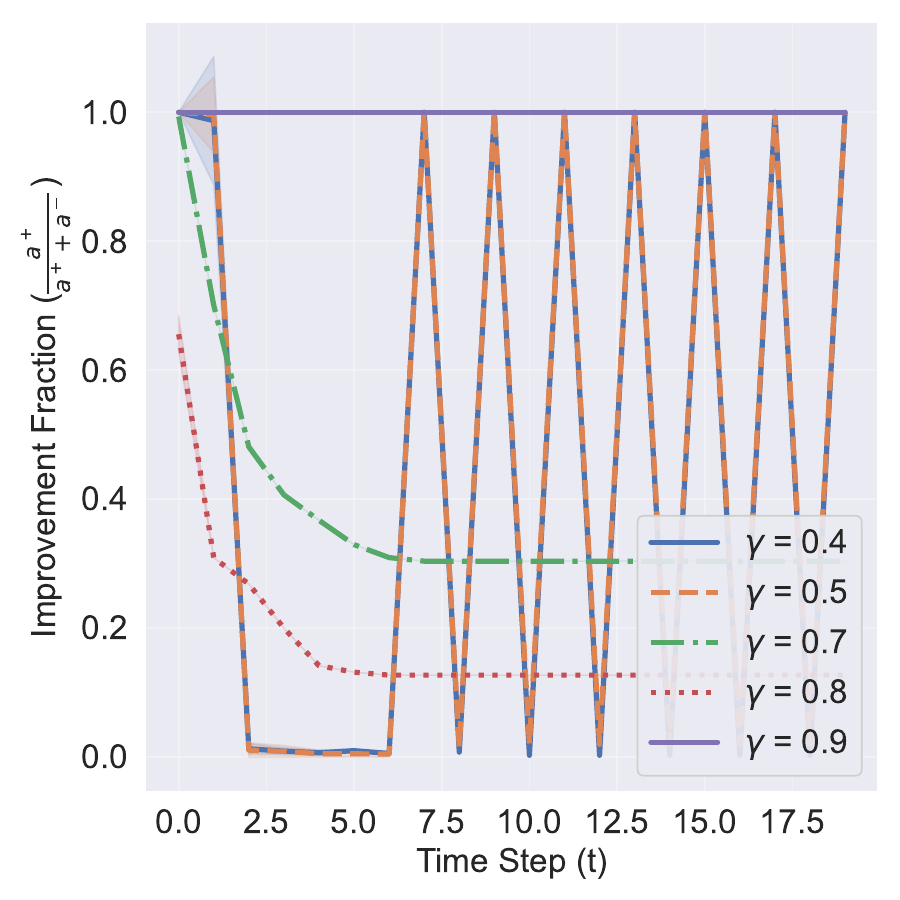}
        \caption{Retention Rate ($\gamma$)}
        \label{fig:exp_6_gamma_action}
    \end{subfigure}
    \hfill
    \begin{subfigure}[b]{0.32\textwidth}
        \centering
        \includegraphics[width=\textwidth]{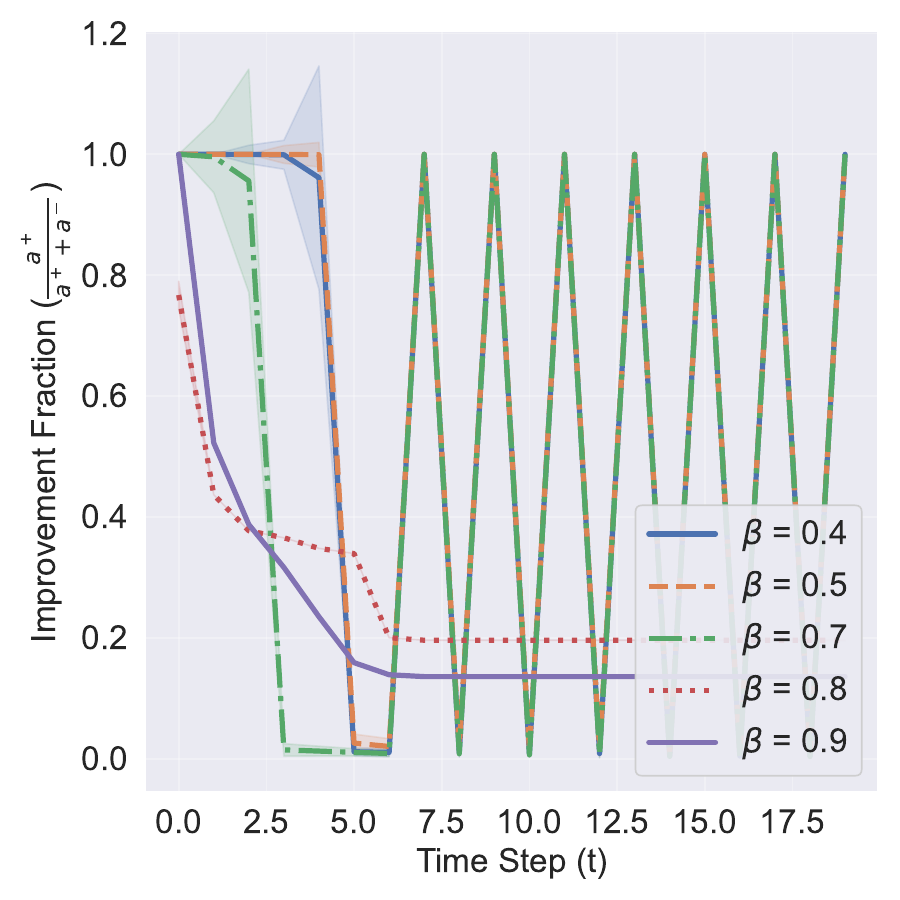}
        \caption{Discount Factor ($\beta$)}
        \label{fig:exp_6_beta_action}
    \end{subfigure}
    \hfill
    \begin{subfigure}[b]{0.32\textwidth}
        \centering
        \includegraphics[width=\textwidth]{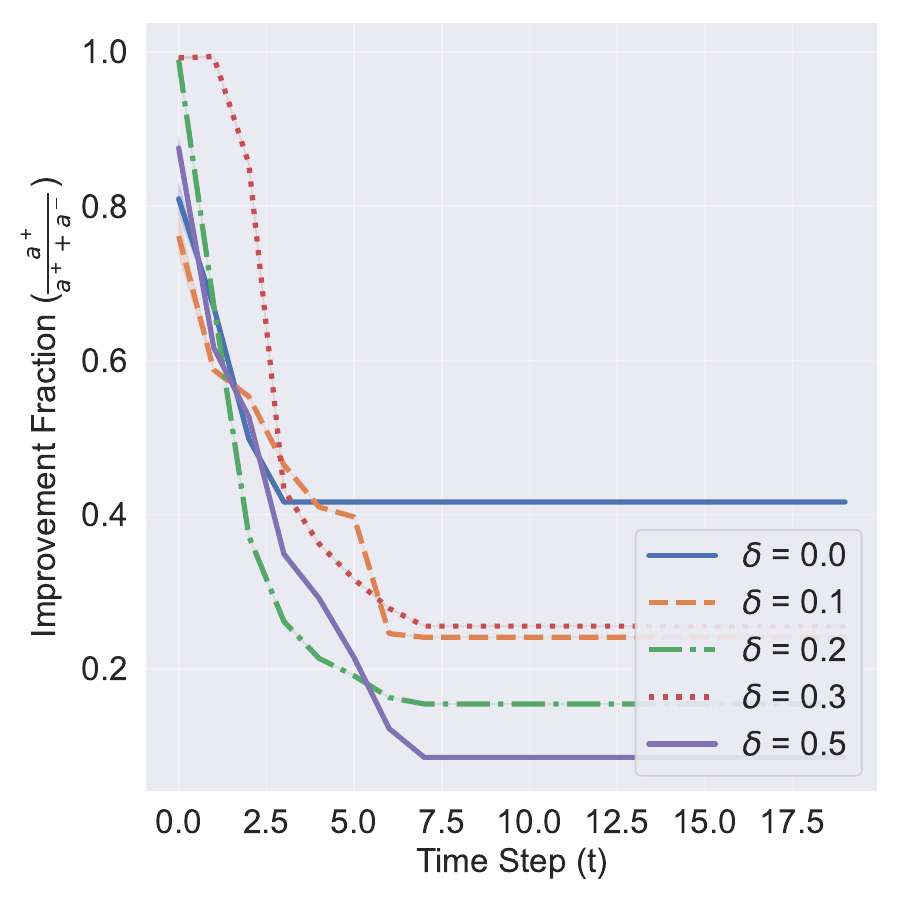}
        \caption{Leg-up Factor ($\delta$)}
        \label{fig:exp_6_delta_action}
    \end{subfigure}

    \caption{Each plot shows improvement fraction with a shaded region representing $\pm 1$ standard deviation.}
    \label{fig:exp_6_action}
\end{figure}

\subsection{Restrictiveness of Feasibility Conditions}\label{app:bounds}

We analyze how restrictive the sufficient condition of \Cref{thm:multilevel-legup}--(2) is relative to \Cref{asm:global} across a range of parameter settings. Both conditions place a lower bound on the gaming cost $c^-$; the question is how much stricter the sufficient condition is in practice.

For a given $(\beta, \gamma)$ and $c^+ = 1$, the two lower bounds on $c^-$ are:

\begin{itemize}[topsep=-2pt,itemsep=-2pt]
    \item \Cref{asm:global} (necessary): $c^- > (1-\beta\gamma)c^+$
    \item \Cref{thm:multilevel-legup}-(2) (sufficient): $c^- \geq \max\!\left\{(1+\beta\gamma^2)(1-\beta\gamma),\;\beta\gamma(1-\beta^2\gamma^2)\right\}c^+$
\end{itemize}

\paragraph{Analytical bound comparison.}
Figure \ref{fig:exp12_partA} plots both the absolute sufficient lower bound (left) and the gap ratio (right), defined as the sufficient bound divided by the necessary bound, over a $30\times30$ grid of $(\beta, \gamma)\in[0.4, 0.99]^2$. The gap ratio ranges from $1.06$ to $1.97$ across the entire grid, remaining strictly below $2$. This confirms that the sufficient condition is at most approximately $2$ times as strict as necessary, so the principal retains $30-50\%$ tolerance for parameter misspecification.

\begin{figure}[!h]
    \centering
    \begin{subfigure}[b]{0.48\textwidth}
        \centering
        \includegraphics[width=\textwidth]{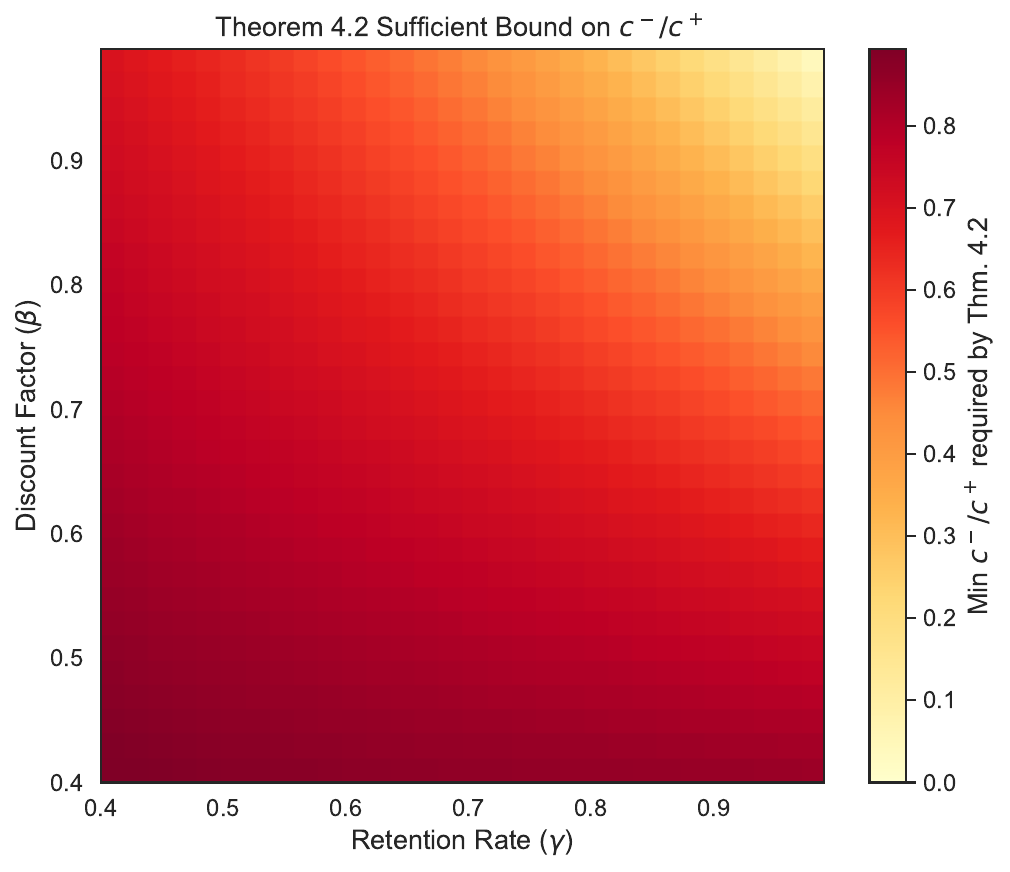}
        \caption{Sufficient lower bound on $c^-$ from \Cref{thm:multilevel-legup}-(2).}
        \label{fig:exp12_thm42}
    \end{subfigure}
    \hfill
    \begin{subfigure}[b]{0.48\textwidth}
        \centering
        \includegraphics[width=\textwidth]{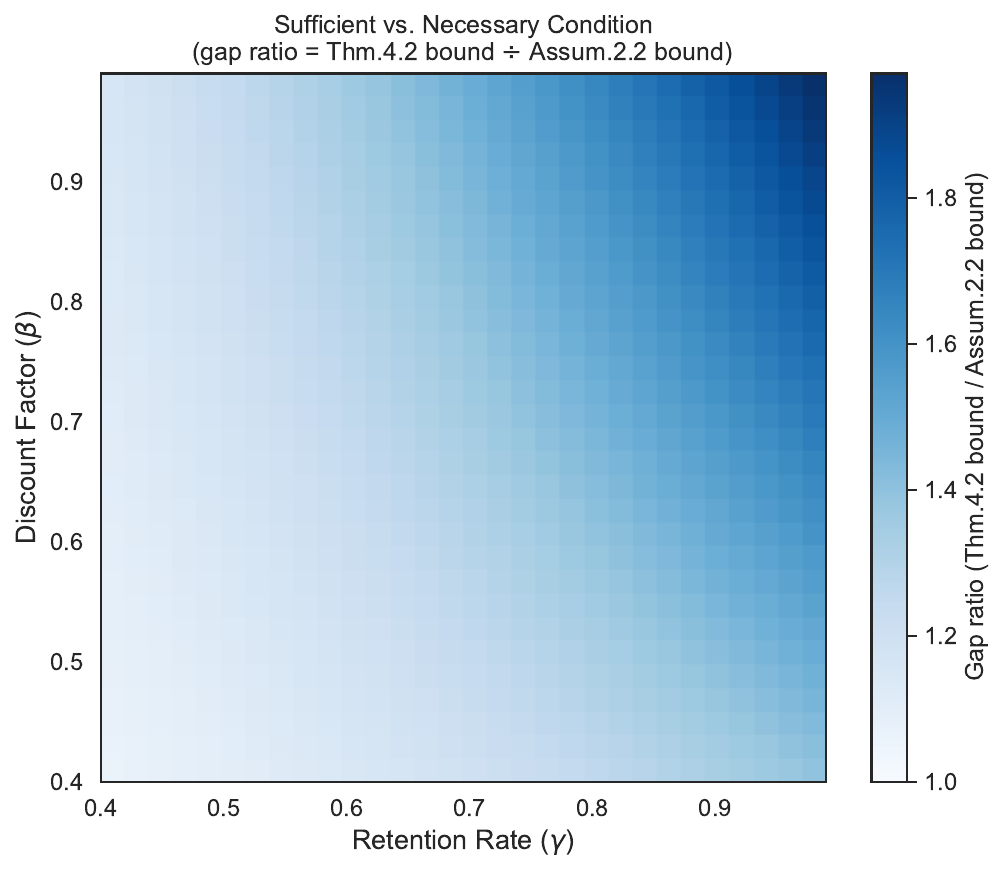}
        \caption{Gap ratio (sufficient / necessary bound).}
        \label{fig:exp12_gap}
    \end{subfigure}
    \caption{Analytical comparison of feasibility conditions over $(\beta,\gamma)$ with $c^+=1$. The gap ratio is bounded between 1.06 and 1.97 across the full grid.}
    \label{fig:exp12_partA}
\end{figure}

\subsection{Ternary vs.\ Binary Decision Rule}\label{app:ternary}

Our model uses a ternary selective classifier: promote ($z_t\geq\mu_{l+1}$), stay ($\mu_l\leq z_t<\mu_{l+1}$), or relegate ($z_t<\mu_l$). Here, we consider the following question: What does the abstention/stay option add relative to a purely binary promote-or-relegate design? \citet{alkarmi_when_2025} show in a one-shot strategic classification setting that abstention can serve as a deterrent to manipulation, making it costlier for less qualified agents to game; here we examine whether this benefit extends to our sequential multi-level setting. We implement a binary mechanism in which a single threshold separates promotion from relegation (no stay zone) and compare this with our ternary mechanism. We optimize both with CMA-ES under identical conditions, and compare outcomes.

The binary decision rule at level $l$ is: promote if $z_t\geq\tau_l$, relegate otherwise. The agent's value function and best-response are re-derived under this rule (removing the stay branch from the Bellman operator). Both mechanisms are optimized with CMA-ES over $L\in\{2,\ldots,8\}$, using the same FICO-based initial population and principal utility function.
\Cref{fig:exp9_sweep} sweeps $c^-\in[0.1,\,0.9]$ while fixing all other parameters at their Case~I values. This reveals a sharp critical regime around $c^-\approx0.3$--$0.4$ where abstention becomes essential. In the infeasibility region ($c^- < 0.3$), both mechanisms perform similarly. However, once we move into the feasibility region, when $c^-$ is low (0.3--0.4), the binary mechanism achieves 100\% gaming, but the ternary mechanism maintains 0\% gaming. This indicates that our abstention or stay zone in the ternary classifier provides a buffer, preventing the binary (promote/relegate) pressure that often makes gaming the only rational path.  The key takeaway is that abstention is \emph{essential} near the feasibility boundary: when $c^-$ is just above $(1-\beta\gamma)c^+$, the stay zone provides the critical buffer that discourages gaming without forcing relegation, allowing the mechanism to maintain incentive compatibility where the binary design fails. Thus, these results are consistent with \citet{alkarmi_when_2025}.

\begin{figure}[!h]
    \centering
    \includegraphics[width=0.75\textwidth]{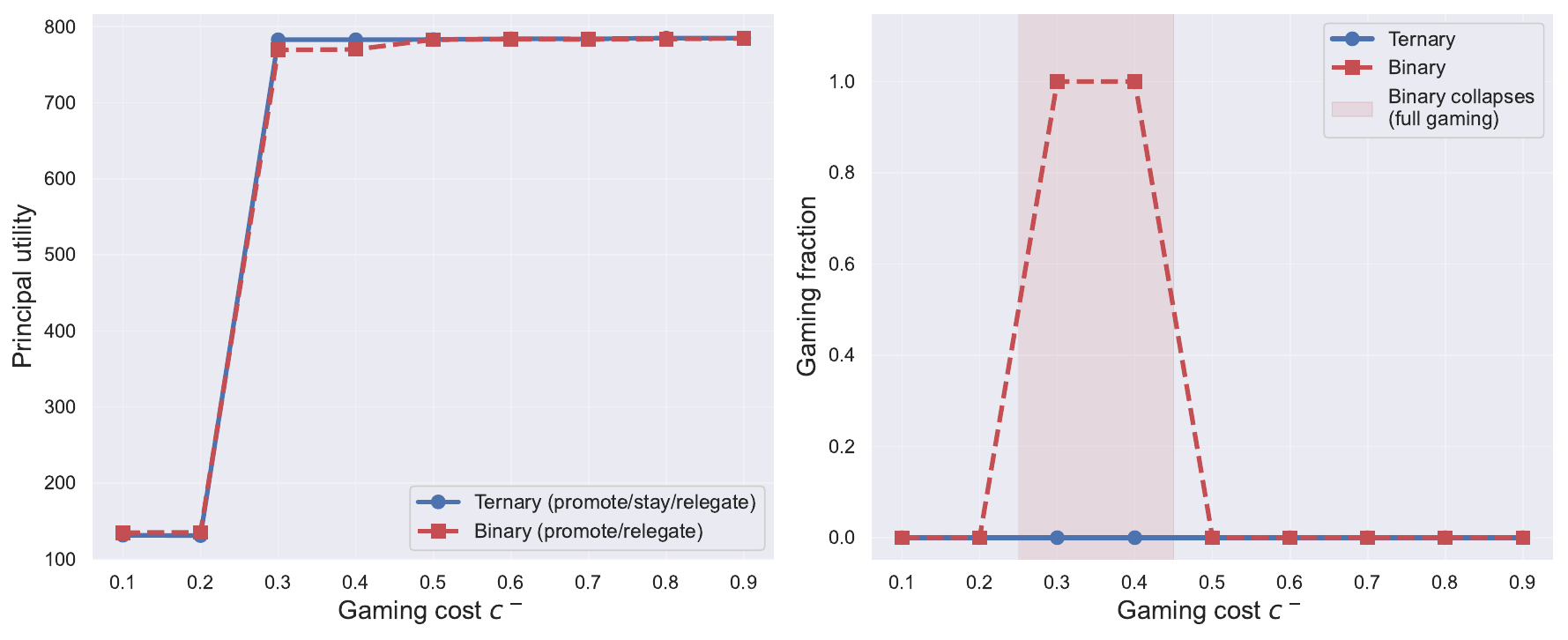}
    \caption{Ternary vs.\ binary utility and gaming fraction as $c^-$ varies. Prior to $c^- = 0.3$, the mechanism is in the infeasibility region. At low but feasible gaming costs, around $c^-\approx0.3$--$0.4$, the ternary mechanism achieves $0\%$ gaming (utility $\approx783$) while the binary mechanism collapses to $100\%$ gaming (utility $\approx769$). Both mechanisms converge as $c^-$ increases above $0.5$.}
    \label{fig:exp9_sweep}
\end{figure}

\subsection{Numerical Sensitivity Analysis}\label{app:sensitivity}
Both value iteration and the greedy threshold construction (Algorithm \ref{alg:greedy_thresholds}) rely on discretizing the attribute space with step size $\Delta x$ and truncation limit $\bar{X}$. We assess robustness to these choices along two axes using parameters $\beta=0.8$, $\gamma=0.9$, $c^+=1.0$, $c^-=0.7$, $\delta=0$.

\paragraph{Grid resolution ($\Delta x$ sensitivity, $\bar{X}=30$ fixed).}

We vary $\Delta x \in \{0.15,\,0.10,\,0.075,\,0.05,\,0.03,\,0.02,\,0.01\}$. As shown in \Cref{fig:exp7_partA}, greedy thresholds $\mu_2$--$\mu_5$ and CMA-ES utility are essentially flat across all resolutions: Utility ranges from $1459.17$ to $1459.21$ ($<0.003\%$ variation) and thresholds shift by at most $2.5\%$ between the coarsest and finest grids, well within the binary-search precision $\varepsilon=0.05$.

\begin{figure}[!h]
    \centering
    \begin{subfigure}[b]{0.48\textwidth}
        \centering
        \includegraphics[width=\textwidth]{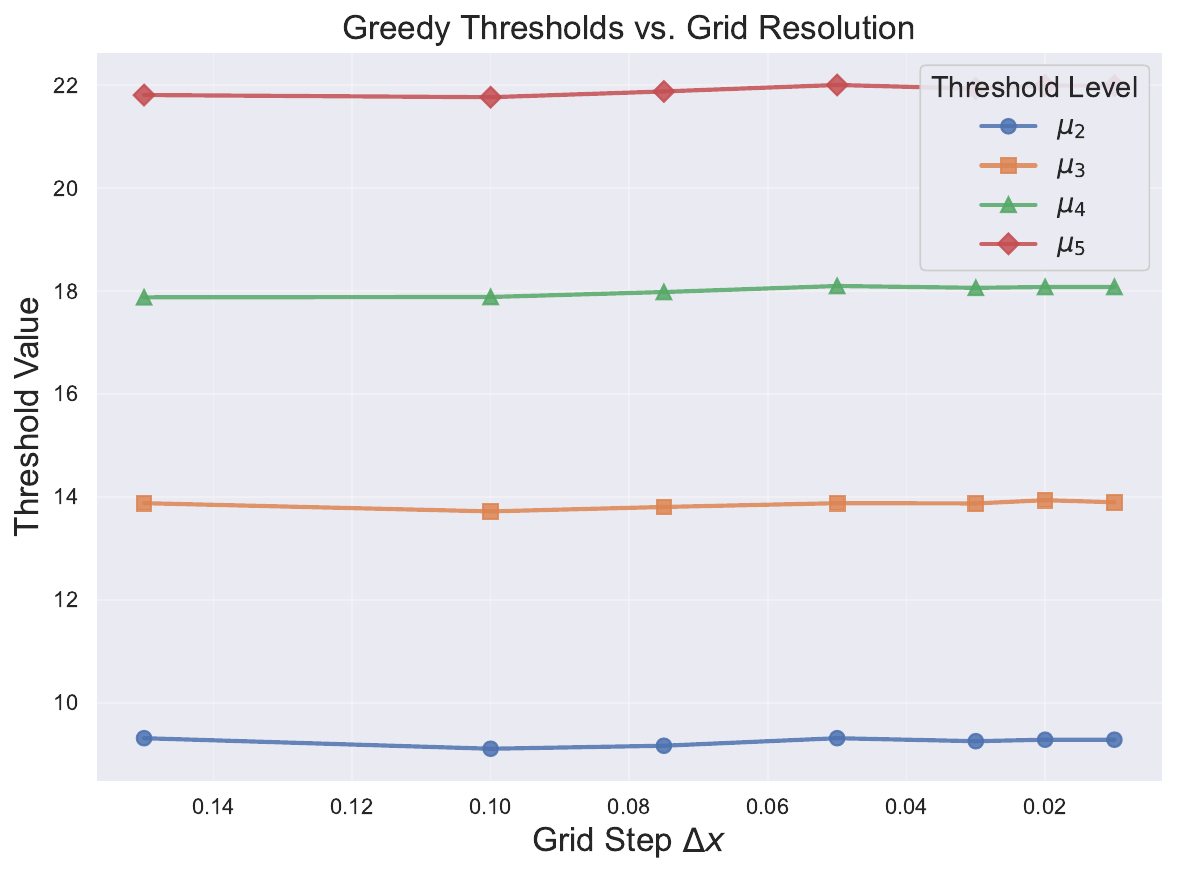}
        \caption{Greedy thresholds vs.\ $\Delta x$.}
        \label{fig:exp7_A_greedy}
    \end{subfigure}
    \hfill
    \begin{subfigure}[b]{0.48\textwidth}
        \centering
        \includegraphics[width=\textwidth]{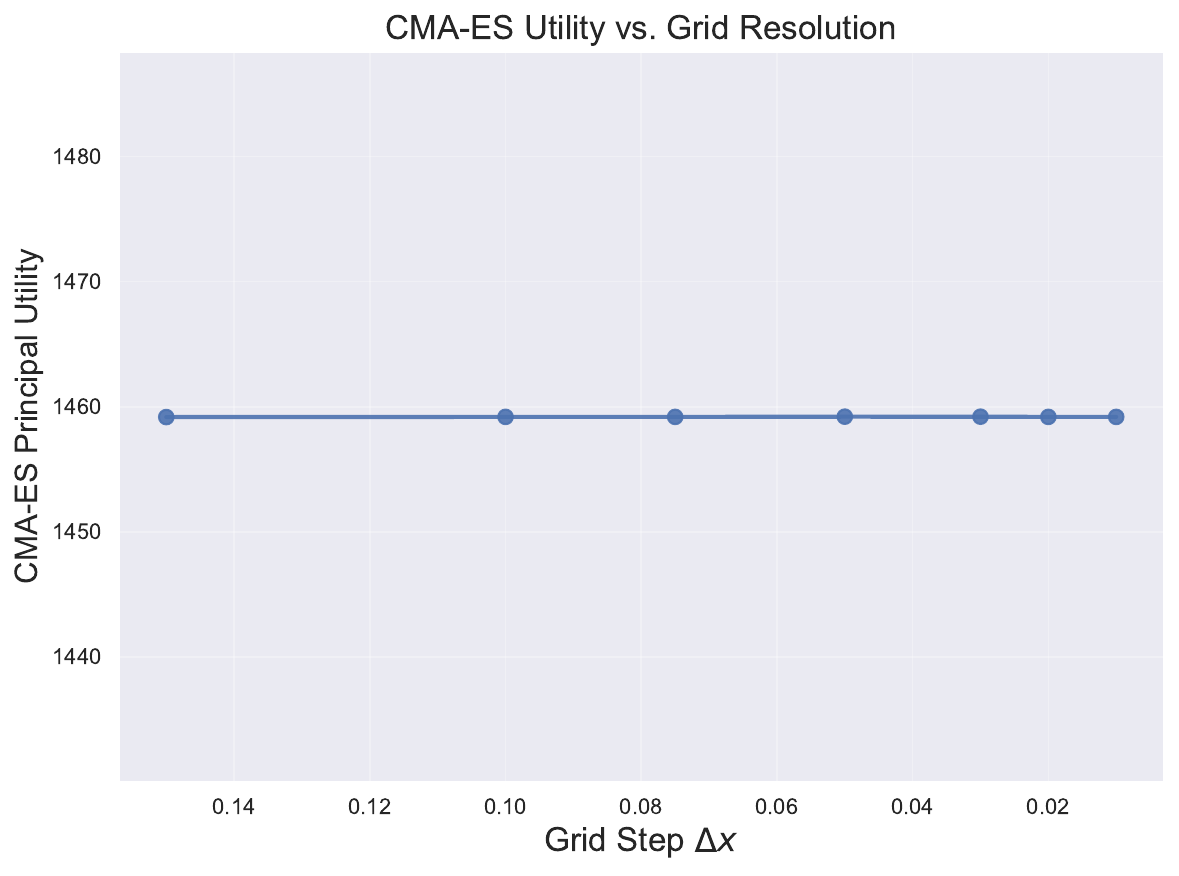}
        \caption{CMA-ES utility vs.\ $\Delta x$.}
        \label{fig:exp7_A_cmaes}
    \end{subfigure}
    \caption{Greedy thresholds and CMA-ES utility are stable across all grid resolutions ($\bar{X}=30$ fixed). Utility varies by less than $0.003\%$.}
    \label{fig:exp7_partA}
\end{figure}

\paragraph{Truncation limit ($\bar{X}$ sensitivity, $\Delta x=0.05$ fixed).}
We vary $\bar{X}\in\{15,\,20,\,30,\,50,\,80,\,120,\,180,\,250,\,350\}$. As shown in \Cref{fig:exp7_partB}, greedy thresholds stabilize for $\bar{X}\ge30$ (within $\pm0.15$ across all larger values), confirming that the default $\bar{X}=30$ captures all relevant agent behavior. For $\bar{X}<30$, the grid clips $\mu_5\approx22$, artificially limiting attainable levels. The monotone growth of CMA-ES utility with $\bar{X}$ reflects a wider simulation horizon, not instability, since a larger grid allows agents to accumulate attribute up to higher values over the finite horizon.

\begin{figure}[!h]
    \centering
    \begin{subfigure}[b]{0.48\textwidth}
        \centering
        \includegraphics[width=\textwidth]{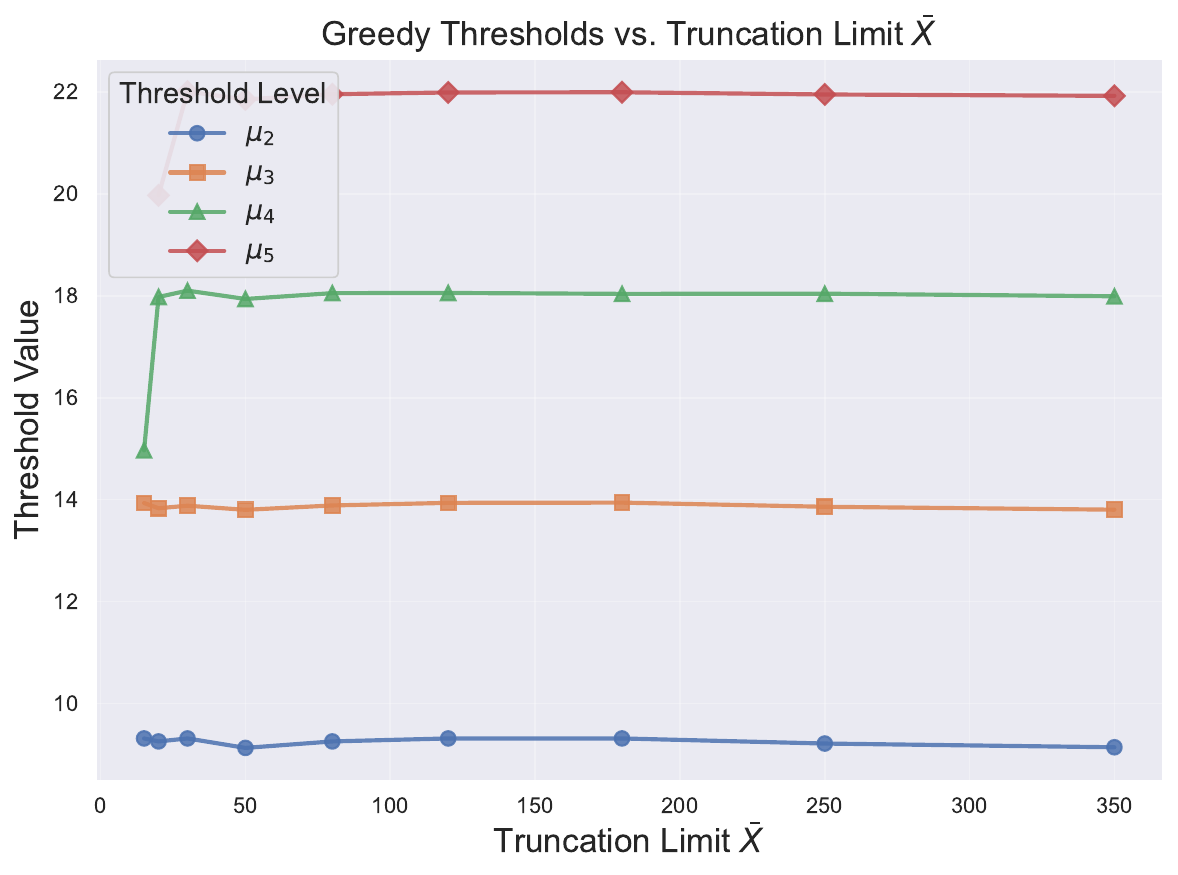}
        \caption{Greedy thresholds vs.\ $\bar{X}$.}
        \label{fig:exp7_B_greedy}
    \end{subfigure}
    \hfill
    \begin{subfigure}[b]{0.48\textwidth}
        \centering
        \includegraphics[width=\textwidth]{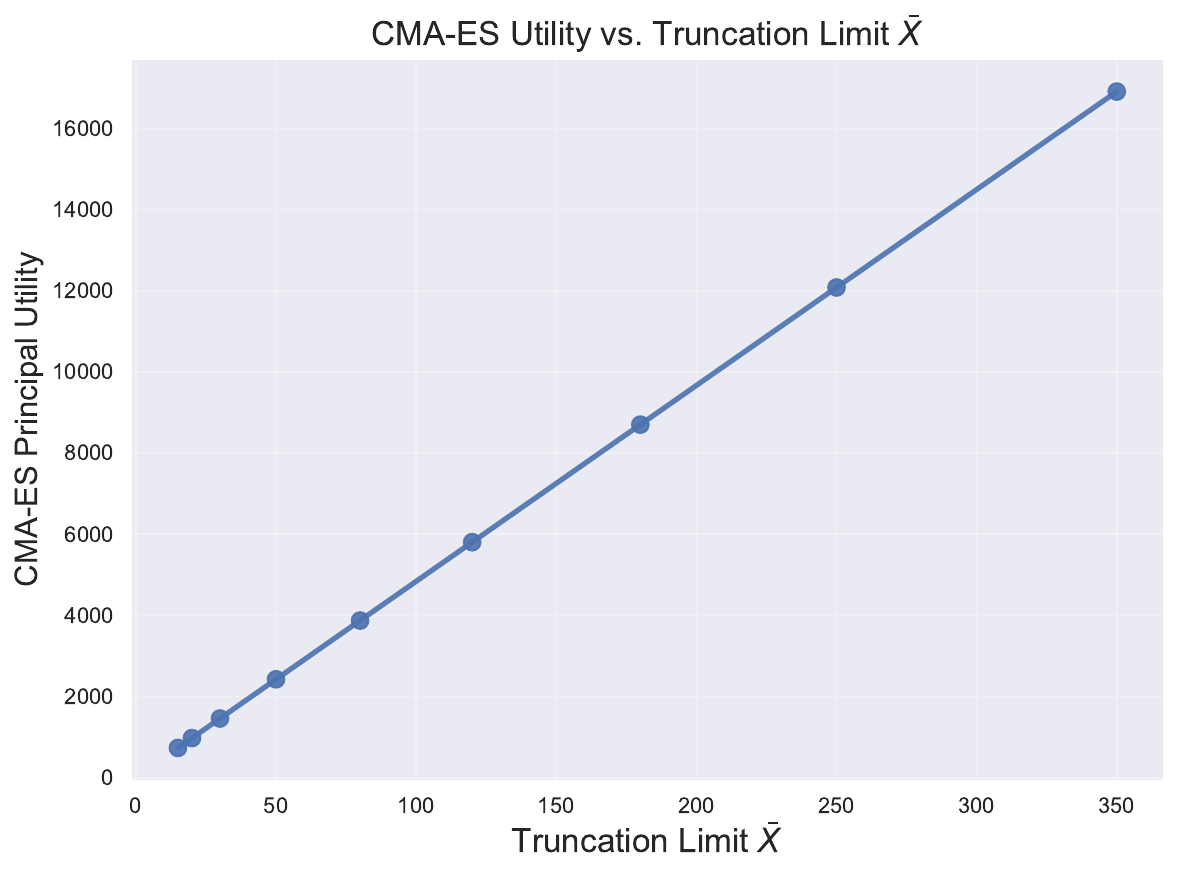}
        \caption{CMA-ES utility vs.\ $\bar{X}$.}
        \label{fig:exp7_B_cmaes}
    \end{subfigure}
    \caption{Greedy thresholds stabilize for $\bar{X}\ge30$ ($\Delta x=0.05$ fixed). CMA-ES utility grows monotonically with $\bar{X}$ due to the wider simulation horizon, while the threshold structure remains consistent.}
    \label{fig:exp7_partB}
\end{figure}

\end{document}